\documentclass{article}
\usepackage[letterpaper, left=1in, right=1in, top=1in, bottom=1in]{geometry}
\usepackage{authblk}

\PassOptionsToPackage{hypertexnames=false}{hyperref}  % compatible with cref for multiple algo
\usepackage{parskip}

\usepackage[colorlinks=true, linkcolor=blue!70!black, citecolor=blue!70!black,urlcolor=black,breaklinks=true]{hyperref}
\usepackage{microtype}
\usepackage{hhline}

\makeatletter
\newcommand{\neutralize}[1]{\expandafter\let\csname c@#1\endcsname\count@}
\makeatother

\usepackage{algorithm}

\usepackage{natbib}
\bibpunct{(}{)}{;}{a}{,}{,}

\usepackage{amsthm}
\usepackage{mathtools}
\usepackage{amsmath}
\usepackage{bbm}
\usepackage{amsfonts}
\usepackage{amssymb}

\usepackage{xpatch}

\newtheorem*{theorem*}{Theorem}
\newtheorem*{lemma*}{Lemma}
\newtheorem{theorem}{Theorem}
\newtheorem{lemma}{Lemma}
\newtheorem{proposition}[theorem]{Proposition}

\newtheorem{example}{Example}
\theoremstyle{remark}

\theoremstyle{definition}

\newtheorem{remark}{Remark}

\AtBeginEnvironment{remark}{%
  \pushQED{\qed}%
}
\AtEndEnvironment{remark}{\popQED\endexample}

\makeatletter
  \renewenvironment{proof}[1][Proof]%
  {%
   \par\noindent{\bfseries\upshape {#1.}\ }%
  }%
  {\qed\newline}
  \makeatother

\xpatchcmd{\proof}{\itshape}{\normalfont\proofnameformat}{}{}
\newcommand{\proofnameformat}{\bfseries}

\usepackage[nameinlink,capitalize]{cleveref}

\crefformat{equation}{#2(#1)#3}
\Crefformat{equation}{#2(#1)#3}
\usepackage{mdframed}
\usepackage{lipsum}
\newcommand{\Pp}{\mathbb P}

\Crefformat{figure}{#2Figure #1#3}
\Crefname{assumption}{Assumption}{Assumptions}
\Crefformat{assumption}{#2Assumption #1#3}
\usepackage[customcolors]{hf-tikz}
\usepackage{crossreftools}
\Crefformat{figure}{#2Figure #1#3}
\Crefname{assumption}{Assumption}{Assumptions}
\Crefformat{assumption}{#2Assumption #1#3}
\usepackage[customcolors]{hf-tikz}
\usepackage{crossreftools}
\usepackage{xparse}

\ExplSyntaxOn
\DeclareDocumentCommand{\XDeclarePairedDelimiter}{mm}
 {
  \__egreg_delimiter_clear_keys: % reset to the default
  \keys_set:nn { egreg/delimiters } { #2 }
  \use:x % we want to expand the values of the token variables set with the keys
   {
    \exp_not:n {\NewDocumentCommand{#1}{sO{}m} }
     {
      \exp_not:n { \IfBooleanTF{##1} }
       {
        \exp_not:N \egreg_paired_delimiter_expand:nnnn
         { \exp_not:V \l_egreg_delimiter_left_tl }
         { \exp_not:V \l_egreg_delimiter_right_tl }
         { \exp_not:n { ##3 } }
         { \exp_not:V \l_egreg_delimiter_subscript_tl }
       }
       {
        \exp_not:N \egreg_paired_delimiter_fixed:nnnnn 
         { \exp_not:n { ##2 } }
         { \exp_not:V \l_egreg_delimiter_left_tl }
         { \exp_not:V \l_egreg_delimiter_right_tl }
         { \exp_not:n { ##3 } }
         { \exp_not:V \l_egreg_delimiter_subscript_tl }
       }
     }
   }
 }

\keys_define:nn { egreg/delimiters }
 {
  left      .tl_set:N = \l_egreg_delimiter_left_tl,
  right     .tl_set:N = \l_egreg_delimiter_right_tl,
  subscript .tl_set:N = \l_egreg_delimiter_subscript_tl,
 }

\cs_new_protected:Npn \__egreg_delimiter_clear_keys:
 {
  \keys_set:nn { egreg/delimiters } { left=.,right=.,subscript={} }
 }

\cs_new_protected:Npn \egreg_paired_delimiter_expand:nnnn #1 #2 #3 #4
 {% Fix the spacing issue with \left and \right (D. Arsenau, P. Stephani and H. Oberdiek)
  \mathopen{}
  \mathclose\c_group_begin_token
   \left#1
   #3
   \group_insert_after:N \c_group_end_token
   \right#2
   \tl_if_empty:nF {#4} { \c_math_subscript_token {#4} }
 }
\cs_new_protected:Npn \egreg_paired_delimiter_fixed:nnnnn #1 #2 #3 #4 #5
 {
  \mathopen{#1#2}#4\mathclose{#1#3}
  \tl_if_empty:nF {#5} { \c_math_subscript_token {#5} }
 }
\ExplSyntaxOff

\XDeclarePairedDelimiter{\supnorm}{
  left=\lVert,
  right=\rVert,
  subscript=\infty
  }
\usepackage[utf8]{inputenc} % allow utf-8 input
\usepackage[T1]{fontenc}    % use 8-bit T1 fonts
\usepackage{url}            % simple URL typesetting
\usepackage{booktabs}       % professional-quality tables
\usepackage{amsfonts}       % blackboard math symbols
\usepackage{nicefrac}       % compact symbols for 1/2, etc.
\usepackage{microtype}      % microtypography
\usepackage{authblk}
\usepackage{tocloft}            % TOC spacing

\usepackage{enumitem}

\usepackage{breakcites}
\usepackage{dsfont}
\usepackage{etoolbox}
\usepackage{comment}
\newtoggle{draft}
\togglefalse{draft}

\usepackage{color-edits}
\usepackage{mathrsfs}

\usepackage{algorithm}
\usepackage{verbatim}
\usepackage[noend]{algpseudocode}

\usepackage{multicol}

\usepackage{colortbl}
\usepackage{bbm}
\usepackage{setspace}

\usepackage{transparent}

\usepackage{inconsolata}
\usepackage[scaled=.90]{helvet}
\usepackage{xspace}

\usepackage{pifont}

\usepackage{tikz-cd}

    \newcommand{\E}{{\mathbb E}}
    
    \newcommand{\R}{{\mathbb R}}

\newcommand{\X}{X}

    \renewcommand{\b}[1]{\boldsymbol{\mathbf{#1}}}

    \newcounter{rcnt}[section]

    \def\argmin{\mathop{\rm argmin}}

    \newcommand{\sign}{\operatorname{sign}}

\newcommand{\ST}{\operatorname{ST}}

    \newcommand{\ud}{\,\mathrm d}

\newcommand\blfootnote[1]{
    \begingroup
    \renewcommand\thefootnote{}\footnote{#1}
    \addtocounter{footnote}{-1}
    \endgroup
}

    \def\ddefloop#1{\ifx\ddefloop#1\else\ddef{#1}\expandafter\ddefloop\fi}
    \def\ddef#1{\expandafter\def\csname c#1\endcsname{\ensuremath{\mathcal{#1}}}}
    \ddefloop ABCDEFGHIJKLMNOPQRSTUVWXYZ\ddefloop
\title{Grokking through the Lens of Minimum-Norm Interpolation}
\author[1]{Gil Kur*}
\author[2]{Ileana Rugina}
\author[2,3]{Clémentine Carla Juliette Dominé}
\author[2]{Marco Mondelli*}
\affil[1]{ETH Z\"urich}
\affil[2]{Institute of Science and Technology Austria}
\affil[3]{Harvard University}
\begin{document}
\date{}
\maketitle

\begin{abstract}
Grokking shows that fitting the training data and learning the underlying
signal can occur at very different stages. However, existing theories offer limited quantitative insight into how this delayed generalization depends on inductive bias and signal structure. Our work addresses the gap by developing a statistical theory that characterizes how regularization geometry and signal sparsity govern generalization near interpolation. In particular, we focus on the prototypical setting of high-dimensional regression and identify regimes in which sparsity-promoting regularization makes exact interpolation much more
accurate than approximate fitting. In strongly overparameterized noiseless
problems, we prove a zero--one generalization law and construct a
family of convex norms whose interpolators transition from the trivial risk of
the all-zero predictor to exact recovery, while keeping the training error equal to $0$. Furthermore,
when feature dimension and sample size are proportional, we provide a precise characterization of training and generalization errors along $\ell_r$-regularization paths. This in turn allows us to quantify the
generalization gain that remains near interpolation: we show that this gain increases as the norm becomes more sparsity-promoting
and as the target becomes sparser, with a sharp drop in generalization reached for noiseless data and $\ell_1$ regularization. Experiments on diagonal linear networks and transformers trained on modular arithmetic demonstrate the generality of our theoretical predictions. Finally, beyond grokking, our work reveals a statistical instability in minimum-norm
interpolation: small perturbations in the regularization strength can lead to drastically different generalization, while preserving small training error.\blfootnote{* $=$ Equal contribution}\blfootnote{Emails:} \blfootnote{1. \texttt{gilkur@mit.edu}},\blfootnote{2. \texttt{Ileana.Rugina@ist.ac.at}} \blfootnote{3. \texttt{cdomine@fas.harvard.edu}}\blfootnote{4. \texttt{marco.mondelli@ist.ac.at}}
% Numerical experiments in regularized regression and diagonal linear networks support these orderings. Transformers trained on modular arithmetic show qualitatively similar changes in late-stage generalization when their output scale is varied. These results connect generalization near interpolation to norm geometry and signal structure, offering a statistical account of an important aspect of grokking.
\end{abstract}

\section{Introduction}

Overparameterized models can fit their training data exactly, even in
the presence of noise, and nevertheless generalize well
\citep{belkin2018understand,muthukumar2020harmless}.
This phenomenon, known as benign overfitting, has been rigorously
studied for minimum-norm interpolators (MNIs) in linear regression
\citep{bartlett2020benign,tsigler2023benign}
and reproducing kernel Hilbert spaces
\citep{liang2020just,liang2020multiple},
clarifying conditions under which interpolation is compatible with accurate prediction. Grokking provides an extreme manifestation of this compatibility
between interpolation and generalization: a model attains
near-zero training error yet generalizes poorly for many additional
training steps; then, the test error drops sharply while the
training error remains close to  zero \citep{power2022grokking,nanda2023progress}.
Initially observed in small algorithmic problems, the phenomenon has been reported in image, language, and
molecular prediction tasks \citep{liu2023omnigrok,humayun2024deep}.
This raises the question of which properties of the
data and the learning procedure allow solutions with almost
indistinguishable training errors to have dramatically different generalization.

Existing theoretical work explains grokking by analyzing linear estimators and classification
\citep{levi2024grokking,beck2024grokking},
transitions between lazy and rich training
\citep{kumar2024grokking,lyu2024dichotomy,mohamadi2024why}, and slow norm
minimization under weight decay \citep{boursier2025theoretical}. However, 
turning these explanations into an end-to-end guarantee
requires proving early fitting, a prolonged period of poor
generalization, and a final improvement in the test error.
\citet{xu2026grok} establish such guarantees, together with
quantitative bounds on generalization
delay.
Their analysis nevertheless concerns over-parameterized linear
regression with $\ell_2$ regularization.

This restriction is significant since nonlinear parameterization
fundamentally changes the bias of training.
For example, in diagonal linear networks, the scale and shape of
initialization determine an implicit regularizer that is a hybrid  between ridge $\ell_2$-like and sparsity-promoting $\ell_1$-like
geometries \citep{woodworth2020kernel,azulay2021implicit}.
Furthermore, existing analyses of gradient flow in diagonal linear networks with
vanishing initialization establish convergence to the
$\ell_1$-MNI \citep{pesme2023saddle}
and connect training dynamics to
$\ell_1$-regularization paths \citep{berthier2023incremental,berthier2025diagonal}. Recently, \citet{tikeng2025grokking} have connected grokking to
sparsity- and low-rank-promoting regularization.
These results establish the importance of matching regularization
to the structure of the target, but leave open how this match
governs the sharpness of generalization near interpolation.
In particular, a quantitative account of how much generalization
can improve for a small reduction in training error, and how
this improvement varies with the regularizing norm and signal
sparsity, is still missing.

To address this gap, we develop a statistical theory that studies generalization
along regularization paths induced by different norms, and we  compare
near-interpolating solutions with the corresponding MNI. More precisely, we consider a linear model and the regularized square-root loss, which allows the optimum to be an exact interpolator when the regularization is below a critical threshold (see \eqref{eq:minprob} for details). Our results establish when approaching exact interpolation
produces a sharp drop in generalization error, and
show how this behavior depends on the
geometry of the norm and the sparsity of the problem. These theoretical predictions then shed light into late-stage generalization under non-linear training of diagonal linear networks and transformer models. We summarize our main contributions as follows:

\begin{itemize}[itemsep=2pt,leftmargin=*]

 \item \textbf{A zero--one law for generalization, under exact interpolation.}
 When the number of features significantly exceeds the sample size, we
 construct a family of convex $\ell_1$--$\ell_\infty$ hybrid norms
 for noiseless sparse regression. As the norm geometry varies, we show that
 the generalization performance exhibits a zero--one
 transition between exact recovery and trivial error,
 while every minimizer keeps interpolating exactly
 (Theorem~\ref{thm:cube-grokking}). %The construction builds on a corresponding zero--one law for ordinary $\ell_1$ regularization (Lemma~\ref{thm:0-1grok}). 
 This gives a sharp separation between
 fitting the training data and recovering the underlying signal.

 \item \textbf{Quantifying the roles of norm geometry and sparsity.}
 When the number of features scales proportionally to the sample
 size, we use the convex Gaussian min--max theorem (CGMT) to
 characterize training and generalization error of $\ell_r$-regularized
 square-root regression (Lemma~\ref{thm:main-nonint}), as well as the generalization error of $\ell_r$-MNIs
 (Lemma~\ref{thm:min-norm-interpolator}). Using this characterization, we provide quantitative conditions leading to a sharp drop in generalization near interpolation. In particular, we show that this drop is more prominent \emph{(i)} as the norm passes from $\ell_2$ to $\ell_1$ (Theorem~\ref{thm:sparse-decreasing-slope}), with the most extreme example for $\ell_1$ regularization and noiseless data (Theorem \ref{thm:r1-critical-infinite-slope}), and \emph{(ii)} as the regression problem becomes sparser (Theorem~\ref{prop:sparsity-monotonicity}). %The construction builds on a short Gaussian-comparison proof at the leading basis-pursuit weak-recovery scale \citep{donoho2009}.

\item \textbf{Evidence from training neural networks.}
We test these predictions in %norm-regularized regression and 
two-layer diagonal linear networks, whose nonlinear parameterization
induces an initialization-dependent implicit bias.
The results support the predicted ordering of late-stage
generalization gains: stronger sparsity-promoting bias and sparser
targets yield larger improvements near interpolation.
Experiments with transformers trained on modular arithmetic also show larger gains in generalization at smaller output
scales, qualitatively consistent with this picture (Figure~\ref{fig:dln_gain_monotone}).

\end{itemize}

Beyond grokking, our results highlight a surprising statistical instability of the regularized square-root loss around its corresponding MNI: small perturbations of the norm geometry yield a zero--one generalization transition while preserving an exact fit (Theorem \ref{thm:cube-grokking}) and, along the $\ell_1$ regularization path, generalization changes much faster than training error near interpolation (Theorem \ref{thm:r1-critical-infinite-slope}).

  %Furthermore, our result shows that both the $\ell_1$ and the $\ell_2$ MNIs behave differently from the $\ell_r$ MNI when $r \in (1,2)$, a surprising result. Specifically, in the proportional regime, grokking with high noise can occur for the $\ell_r$ MNI only when $r \in (1,2)$. \gk{marco please add a remark on this for the $\ell_1$-MNI if the reviewers would ask a prove we can do that in the rebutall.}
%The construction further implies that arbitrarily small, fixed
%multiplicative perturbations of the regularizing norm can change
%the asymptotic normalized risk from zero to one on the same data.
%This reveals a statistical instability of minimum-norm interpolation:
%

\section{Related work}

\paragraph{Grokking, implicit bias, and regularization.}
Initialization scale, layer imbalance, and target or output
rescaling shape feature learning
\citep{chizat2019lazy,saxe2013exact, domine2024lazy,jarvis2025theory,anguita2026theory,kumar2024grokking,nam2025position}.
In particular, \citet{kunin2024get, kumar2024grokking} show that unbalanced
initializations and output rescaling  can accelerate feature learning and reduce
grokking delays in transformers. 
Data availability also influences learning regime and grokking profile: a ``Goldilocks zone'' allows
eventual structured generalization while retaining an initial
memorization phase \citep{liu2022towards,kumar2024grokking}.
For modular addition, \citet{mohamadi2024why} establish a
sample-complexity separation between permutation-equivariant
kernel methods and quadratic networks. 
Early oscillations in the training loss have been used to predict
grokking \citep{notsawo2023predicting}, while \citet{jeffares2024deep}
connect its emergence empirically to benign overfitting. Related work has also focused on sparse subnetworks \citep{merrill2023tale}, norm-efficient circuits \citep{varma2023explaining}, and
low-rank
weights \citep{yunis2024approaching}.
Focusing on regularization, \citet{tikeng2025grokking} analyze
an initial loss-minimization phase followed by a slower
dynamics driven by regularization, with applications to sparse recovery regularized via the
$\ell_1$ norm and matrix factorization regularized via the nuclear norm. %Additional related work on grokking focuses on sparse subnetworks \citep{merrill2023tale}, %norm-efficient circuits \citep{varma2023explaining}, and
%low-rank
%weights \citep{yunis2024approaching} and the comparison of 
$\ell_1$ and $\ell_2$ regularization are compared by \cite{zunkovic2024grokking} in the context of linear classification.
%\citep{zunkovic2024grokking}. \citet{xu2024benign} prove early overfitting followed by near-optimal generalization in ReLU networks trained on noisy XOR
%cluster data.
Most recently, \citet{xu2026grok} quantify how initialization
and weight decay affect the generalization delay for ridge regression.
Our work goes beyond ridge regularization: motivated by explicit
characterizations of implicit bias
\citep{woodworth2020kernel,azulay2021implicit},
we study regularization paths in high-dimensional regression
and quantify how generalization gains near interpolation
depend on norm geometry and signal sparsity.

% and signal sparsity. Diagonal-network experiments support the predicted orderings, while Transformers trained on modular arithmetic exhibit qualitatively consistent behavior.

%, which provide a simple and analytically tractable setting for examining how changes in the learning regime affect implicit bias. While lazy training favors dense minimum-\(\ell_2\)-norm solutions, rich training favors sparse minimum-\(\ell_1\)-norm solutions \cite{azulay2021implicit,woodworth2020kernel}.
%Expressing this implicit bias as an explicit regularizer \cite{anguita2026theory} reveals a smooth transition between these regimes, allowing us to control the onset of grokking in our regression problem. This transition is governed by the overall scale of the weights at initialization, as illustrated in Fig.~\ref{fig:LABEL}.

%\cite{gunasekar2018implicit}

\paragraph{Precise asymptotics for regularized regression and minimum-norm interpolators.} A rich line of work has studied high-dimensional
regularized estimators and MNIs via different techniques. Approximate message passing yields exact asymptotics for Lasso
\citep{bayati2012lasso}, robust $M$-estimation
\citep{donoho2016high}, $\ell_1$-MNI
\citep{li2021minimum} and $\ell_r$-regularized least squares
\citep{weng2018overcoming}. In particular, \citet{weng2018overcoming} study the effects of norm
exponent and sparsity on the optimally tuned risk in the
low-noise limit; in contrast, we focus on the generalization
gain near interpolation at a fixed noise level.
\cite{kur2024minimum,kur2026minimum} relate the generalization of MNIs to
$2$-uniform convexity via % a geometric approach based on 
the local theory of Banach spaces. \citet{zhou2024optimistic}
derive uniform bounds connecting population risk to training error
and model complexity, including for near-interpolating predictors.
Our strategy builds on Gordon's Gaussian comparison
inequalities and the convex Gaussian min--max theorem (CGMT) \citep{gordon1985,stojnic2013framework,thrampoulidis2015gaussian}.
These methods provide a characterization of the square-root
Lasso and its generalizations
\citep{oymak2013squared,thrampoulidis2015lasso},
convex regularized $M$-estimators \citep{thrampoulidis2018precise},
as well as uniform distributional guarantees for the Lasso
\citep{miolane2021distribution}.
Gaussian comparison methods also establish the generalization error of interpolators minimizing separable convex penalties
\citep{varma2025optimal} and yield sharp rates for $\ell_r$-MNIs \citep{wang2022tight,donhauser2022fast}. 
Our contribution is a unified characterization of $\ell_r$-MNIs and $\ell_r$-regularized regression with the square-root loss, which we then use to understand the conditions causing a sharp drop in the generalization error near interpolation.

\section{Preliminaries}
For $a,b\in\R^p$ and $r\ge1$, we  write $\langle a,b\rangle := \sum_{j=1}^p a_jb_j$ and $\|b\|_{r}:=\left(\sum_{j=1}^p|b_j|^r\right)^{1/r}.$
We consider a linear regression model
\begin{equation}\label{eq:model}
y=X\beta_*+\varepsilon,
\end{equation}
where $\beta_*\in \mathbb R^p$ is the vector of regression coefficients, $X\in \mathbb R^{n\times p}$ is the design matrix with independent $N(0,1/n)$ entries, $y\in \mathbb R^n$ is the vector of observations, and the noise vector $\varepsilon\sim N(0,\sigma^2I_n)$ is independent of $X$. We study estimators optimizing the $\ell_r$-regularized square-root loss
\begin{equation}
 \widehat\beta_{\kappa,r}
  \in \operatorname*{arg\,min}_{b\in\mathbb R^p}
  \left\{
    \|y-Xb\|_2+
  \frac{\kappa \lambda(n, p)}{\sqrt{n}}\|b\|_{r}
  \right\}.
  \label{eq:minprob}
\end{equation}
For $r=1$, this estimator is the square-root Lasso
\citep{belloni2011square}. We note that, when the coefficient $\kappa \lambda(n, p)$ in front of the $\ell_r$ regularization is smaller than a critical value (see e.g.\ Theorem 4.2 in \cite{friedlander2008exact}), $\widehat\beta_{\kappa,r}$ coincides with the $\ell_r$-MNI given by
\begin{equation}
 \widehat\beta_r
 \in
 \argmin_{b\in\mathbb R^p}
 \left\{
\|b\|_r :Xb=y
 \right\}.
 \label{eq:min-norm-interpolator}
\end{equation}
For convenience, we express this coefficient as  $\kappa\lambda(n, p)$, so the critical value of the regularization leading to interpolation is achieved when $\kappa$ is a universal constant that does not scale with $n, p$. The precise form of $\lambda(n, p)$ depends on the scaling between $n$ and $p$, and it is clarified later. 
%Note that $\kappa$ may depend on $n,p,r$, the cannoical scaling is usually $\kappa = \min\{p^{1-1/r},(2\log(p/n))^{-1/2}\}$.

We measure the generalization error of an estimator via its mean squared error (MSE). Thus, the training and generalization error of an estimator $\beta$ are respectively given by 
\begin{equation}
 \mathcal E(\beta):= %\frac{1}{\sqrt{n}}\|X\beta-X\beta_*\|_2=
 \frac{1}{\sqrt{n}}\|y-X\beta\|_2,\qquad
\mathsf{MSE}(\beta):=\frac{1}{p}\|\beta-\beta_*\|_2^2.
 \label{eq:lt-errors}
\end{equation}
%with $\beta_*$ being the ground-truth vector of regression coefficients in \eqref{eq:model}. 
Let $0_p\in\mathbb R^p$ be the all-$0$ vector and $\mathsf{MSE}(0_p)$ its corresponding MSE. We denote normalized MSE and training error by $\mathsf{MSE}_{s}(\beta):=\mathsf{MSE}(\beta)/\mathsf{MSE}(0_p)$ and $\mathcal E_{s}(\beta):=\mathcal E(\beta)/\mathcal E(0_p)$. Throughout the paper, $c,C \in (0,1)$ are absolute constants and $c(\alpha),C(\alpha)$ constants that only depends on $\alpha$. %, which may change from line to line. 
We use %$\asymp,\lesssim\gtrsim $ denote equality/inueqlaity up to mulitplicative aboslute constnat. Here 
$O(\cdot),o(\cdot),\Omega(\cdot),\omega(\cdot)$ for asymptotic notation in $n, p$, and $\mathrm{conv}\{\cdot\}$ denotes the convex hull operator.
% We denote by $B_r^m$ the unit $\ell_r$ ball in $\R^m$ and by $S^{m-1}$ the unit sphere in $m$ dimensions \mm{do we need this in the body?}.  

%We denote by $\delta_p=n/p\to\delta$, here $\rightarrow$  means as $n$ and $p$ grow together. We also $g_m\sim N(0,I_m)$ to an instorpic guassian verctor in $\R^m$, and use that $\E \|g_m\|_{2} = \sqrt{m} + O(1/\sqrt{m})$. $S^{d-1}$ is the unit sphere. We use the notation of $L = \log(p/n)$, and throughout this work $\lambda > 0$ and a \textbf{fixed} constant that deoes not depend on $p,n$.
%We consider \(1\le r<\infty\). As $n,p\to\infty$, let $\delta_p=n/p\to\delta\in(0,\infty)$, and suppose
%\[
% y=X\beta+\varepsilon, \qquad X_{ij}\stackrel{\mathrm{iid}}\sim N(0,1/n), \qquad \varepsilon\sim N(0,\sigma^2I_n),
%\]
%where $X$ and $\varepsilon$ are independent.   
\section{Main results}

Our results cover two asymptotic regimes.
In the strongly overparameterized regime ($p\gg n$), Section~\ref{sec:01} establishes that, while  the training error remains negligible, the MSE exhibits a zero-one transition.
In the proportional regime ($p\asymp n$), Section~\ref{sec:lp-norm-extension} gives precise characterizations of $\ell_r$-regularized estimators and minimum-norm interpolators.
Building on these characterizations, Section~\ref{sec:mse-slope} then shows that smaller norm exponents and sparser regression problems yield sharper generalization gains near interpolation.  

\subsection{A zero--one law for \texorpdfstring{$\ell_1$}{l1}
regularization when \texorpdfstring{$p\gg n$}{p >> n}}\label{sec:01}

We consider the noiseless Gaussian model in \eqref{eq:model}, with
$y=\X\beta_*$ and
\begin{equation}
 n,p\to\infty,\qquad
 p/n\longrightarrow\infty,\qquad
 \log p=o(n).
 \label{eq:zero-one-dimensions}
\end{equation}
Set $L:=2\log(p/n)$ and $\lambda(n,p)=\sqrt L$ in
\eqref{eq:minprob}. Set $\nu=\nu_{n,p}>0$ with
$\nu^2=L-\log L+O(1)$ and deterministic $\eta=\eta_{n,p}\in(0,1)$
such that
\begin{equation}
 k_*:=\left\lfloor(1-\eta)\frac n{\nu^2}\right\rfloor \approx \frac{n}{2\log(p/n)},
 \qquad \eta=o(1),\qquad
 L^{-1}+\frac Ln=o(\eta).
 \label{eq:signal}
\end{equation}
Let $\beta_*\in\R^p$ be any fixed $k_*$ sparse vector (i.e. $\|\beta_*\|_0 = k_*$) with $+1$ or $-1$ entries.
We remark that $k_*$ is also known as the leading weak-recovery sparsity
scale \citep{donoho2009}.
% The assumptions on $\eta$ are compatible with the entire dimensional
% range; for example, $\eta=\sqrt{L^{-1}+L/n}$ is admissible for all
% sufficiently large $n,p$.

Our first result shows a discontinuity in the MSE along the regularization path in the noiseless case. 

\begin{lemma}[Zero--one generalization]
\label{thm:0-1grok}
Under the assumptions above, fix $\kappa>0$, $\kappa\ne1$, and consider
\eqref{eq:minprob} with $r=1$ and $\sigma=0$. Then, the minimizer $\widehat\beta_{\kappa,1}$ satisfies
% \mm{this is the main result}
\begin{equation}
 \left(
\mathsf{MSE}_s(\widehat\beta_{\kappa,1}),
\mathcal E_s(\widehat\beta_{\kappa,1})
 \right)
 \xrightarrow{\Pp}
 \left(\mathbf 1_{\{\kappa>1\}},\mathbf 1_{\{\kappa>1\}}\right).
 \label{eq:zero-one}
\end{equation}
\end{lemma}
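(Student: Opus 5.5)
We analyse the square-root Lasso objective $F_\kappa(b)=\|X(\beta_*-b)\|_2+\frac{\kappa\sqrt L}{\sqrt n}\|b\|_1$ with $\sigma=0$. Since $y=X\beta_*$ with $\|\beta_*\|_1=k_*$, we have $\mathcal E(0_p)=\|X\beta_*\|_2/\sqrt n\approx\sqrt{k_*/n}$ and $\mathsf{MSE}(0_p)=k_*/p$. The plan is to compare the value $F_\kappa(\beta_*)=\kappa\sqrt L\,k_*/\sqrt n$ with $F_\kappa(0_p)=\|X\beta_*\|_2\approx\sqrt{k_*}$. This ratio is $\kappa\sqrt{Lk_*/n}\approx\kappa\sqrt{L(1-\eta)/\nu^2}\to\kappa$, because $\nu^2/L\to1$. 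The threshold $\kappa=1$ therefore arises from the two candidate solutions having equal objective. We then need to show that nothing else does better, i.e.\ that the minimizer concentrates near $\beta_*$ when $\kappa<1$ and near $0_p$ when $\kappa>1$.

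\textbf{Case $\kappa<1$.} We show $\widehat\beta_{\kappa,1}=\beta_*$ with high probability. Writing $h=b-\beta_*$, we need $\|Xh\|_2+\frac{\kappa\sqrt L}{\sqrt n}(\|\beta_*+h\|_1-\|\beta_*\|_1)\ge 0$, with strict inequality for $h\neq0$. Using $\|\beta_*+h\|_1-\|\beta_*\|_1\ge \langle \mathrm{sign}(\beta_*),h_S\rangle+\|h_{S^c}\|_1$, it suffices to lower bound $\min_{h\in\mathcal C,\|h\|_2=1}\|Xh\|_2$ on the cone $\mathcal C=\{h:\|Xh\|_2\le -\frac{\kappa\sqrt L}{\sqrt n}(\langle s,h_S\rangle+\|h_{S^c}\|_1)\}$. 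We would do this by Gordon's comparison: $\min_h\|Xh\|_2+\ldots\gtrsim\frac1{\sqrt n}\min_h\{\|h\|_2\|g\|_2-\langle \tilde g,h\rangle+\kappa\sqrt L(\ldots)\}$. Equivalently, we check exact recovery via a dual certificate. We need $w=X^\top u/\|u\|$ structure with $w_S=\kappa\sqrt L\,s/\sqrt n$ and $\|w_{S^c}\|_\infty<\kappa\sqrt L/\sqrt n$, and $u$ of norm at most $1$. Take $u$ as the least-norm solution of $X_S^\top u=\kappa\sqrt{L/n}\,s$, so $\|u\|_2\approx\kappa\sqrt{Lk_*/n}\cdot(1+o(1))<1$. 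The off-support entries $X_{S^c}^\top u$ are Gaussian with variance $\|u\|^2/n$, and their maximum is $\approx\|u\|\sqrt{2\log p}/\sqrt n$. That maximum must be compared with $\kappa\sqrt L/\sqrt n$, which needs $\|u\|\sqrt{2\log p}<\kappa\sqrt L$. This crude union bound fails because $\log p\neq\log(p/n)$, so the comparison must use Gordon's inequality rather than a naive certificate. The certificate's slack should be allowed to exploit the $\|u\|<1$ room (the factor $\kappa<1$) together with the freedom to choose $u$ not of minimal norm. This is exactly the Donoho–Tanner weak threshold at scale $n/(2\log(p/n))$, and it is where the $\eta$ margin and the $-\log L$ correction in $\nu^2$ are consumed. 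Once $\widehat\beta_{\kappa,1}=\beta_*$, both normalized errors are $0$.

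\textbf{Case $\kappa>1$.} We show $\|\widehat\beta\|_2^2=o(k_*)$ and $\|X\widehat\beta\|_2=o(\sqrt{k_*})$, which gives both limits equal to $1$; exact $\widehat\beta=0$ is not needed. Zero is optimal iff $\|X^\top X\beta_*\|_\infty\le\frac{\kappa\sqrt L}{\sqrt n}\|X\beta_*\|_2$. On $S$, however, $(X^\top X\beta_*)_j\approx s_j$, while $\kappa\sqrt{L k_*}/\sqrt n\approx\kappa$, so this KKT condition holds only marginally. We therefore instead use the Gordon/CGMT scalarization of $\min_b F_\kappa(b)$. This reduces the problem to a low-dimensional min–max over $(\|b-\beta_*\|_2,\text{projection onto }\beta_*)$ involving the soft-thresholding function $\mathbb E(|Z|-\tau)_+^2$ with tails $\approx e^{-\tau^2/2}$. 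In this scalarization, with $\tau=\kappa\sqrt L$ and $\kappa>1$, the $p$ noise coordinates contribute $p\,e^{-\kappa^2L/2}\ll n$. The minimizer of the scalar problem is then the trivial point, by a one-dimensional convexity argument. The scalarized solution concentrates, and convexity of $F_\kappa$ transfers this to $\widehat\beta$ via the standard CGMT deviation argument.

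The main obstacle is the sharp-threshold analysis at the boundary scale $k_*\approx n/L$. Both cases sit within a $(1\pm\eta)$ window, and every second-order term ($\log L$, $L/n$, $1/L$) must be tracked to show that the scalar Gordon objective changes minimizer exactly at $\kappa=1$. We would first write the Gordon lower and upper bounds explicitly, using Mills-ratio expansions for $\mathbb E(|Z|-\tau)_+^2$ at $\tau=\kappa\sqrt L$, and verify the sign of the resulting gap under the assumption $L^{-1}+L/n=o(\eta)$ in \eqref{eq:signal}.
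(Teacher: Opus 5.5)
\textbf{The $\kappa<1$ step fails as stated.} Your overall shape matches the paper: exact recovery below $\kappa=1$ via Gaussian comparison, and comparison with $0_p$ above it. But you propose to apply Gordon's comparison with the penalty at its true level $\tau_\kappa=\kappa\sqrt{L/n}$. That is, you lower-bound $\min_{\|h\|_2=1}\{\|Xh\|_2+\tau_\kappa D(h)\}$, where $D(h)=\langle s,h_S\rangle+\|h_{S^c}\|_1$, by $n^{-1/2}\bigl(\E\|g_n\|_2-\E\operatorname{dist}(g_p,\kappa\sqrt L\,\partial\|\beta_*\|_1)\bigr)$, and you expand $h_0(t)=\E(|G|-t)_+^2$ at $t=\kappa\sqrt L$. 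For every fixed $\kappa<1$ this bound is negative:
$\E\operatorname{dist}^2(g_p,\kappa\sqrt L\,\partial\|\beta_*\|_1)\ge(p-k_*)h_0(\kappa\sqrt L)\ge c_\kappa\, nL^{-3/2}e^{(1-\kappa^2)L/2}\gg n$,
because $p\,e^{-\kappa^2L/2}=n\,e^{(1-\kappa^2)L/2}$. The sphere is not convex, so Gordon's bound is one-sided and simply loose here; the claim itself is still true.

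\textbf{The missing idea for $\kappa<1$.} Run the comparison at the optimal dilation $\nu$, not at $\kappa\sqrt L$. There, $\E\operatorname{dist}^2(g_p,\nu\,\partial\|\beta_*\|_1)=k_*(1+\nu^2)+(p-k_*)h_0(\nu)\le n(1-\eta/2)$, using $k_*\nu^2\le(1-\eta)n$ and $(p/n)h_0(\nu)=O(L^{-1})=o(\eta)$. This is exactly where $\nu^2=L-\log L+O(1)$ and the $\eta$ margin are consumed. Gordon then gives $\|Xh\|_2+\frac{\nu}{\sqrt n}D(h)\ge c\eta\|h\|_2$ for all $h$, with probability at least $1-e^{-cn\eta^2}$. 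With $\alpha=\kappa\sqrt L/\nu\to\kappa<1$, the identity $\|Xh\|_2+\tau_\kappa D(h)=\alpha\bigl(\|Xh\|_2+\frac{\nu}{\sqrt n}D(h)\bigr)+(1-\alpha)\|Xh\|_2$ transfers the bound down to the actual penalty level. In your certificate language: a certificate $u_0$ with $\|u_0\|_2\le1$ and $X^\top u_0\in\frac{\nu}{\sqrt n}\partial\|\beta_*\|_1$ rescales to $\alpha u_0$ at level $\tau_\kappa$. That rescaling, not a cleverer non-minimal-norm construction, is how the ``$\kappa<1$ room'' is used.

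This also shows your diagnosis of the main obstacle is misplaced. Since $\kappa\ne1$ is fixed, there is a constant margin $|\kappa-1|$ on the regularization side. Second-order precision is needed only for the recovery calibration at $\nu$; the $\kappa>1$ side needs only $k_*L/n\to1$ and $k_*/n\to0$.

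\textbf{The $\kappa>1$ step is only asserted.} A non-asymptotic CGMT analysis is a conceivable alternative route, but you would have to solve the scalarized problem quantitatively, since its minimizer is only approximately trivial. The usual deviation argument is also formulated for proportional limits. So you would need to prove explicitly that the auxiliary objective on $\{b:|\mathsf{MSE}_s(b)-1|\ge\epsilon\}$ exceeds its minimum by more than its Gaussian fluctuations as $p/n\to\infty$.

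The paper avoids all of this with an elementary argument:
\begin{itemize}
\item Cauchy--Schwarz gives $\|y-Xb\|_2\ge R-\langle q,b\rangle$ with $q=X^\top y/R$, so $F_\kappa(b)\le R$ forces $\tau_\kappa\|b\|_1\le\langle q,b\rangle$.
\item Clipping $q$ at $\gamma=\frac{1+\kappa}{2}\sqrt{L/n}<\tau_\kappa$ gives $\langle q,b\rangle\le\gamma\|b\|_1+Z_\gamma\|b\|_2$. The bound $Z_\gamma\le3\sqrt{k_*/n}$ follows from the representation $q=(R/k_*)\beta_*+g/\sqrt n$, with $g\sim N(0,P)$ independent of $y$ and $P$ the projection onto $\beta_*^\perp$.
\item This yields the effective-sparsity bound $\|b\|_1\lesssim_\kappa (k_*/\sqrt n)\|b\|_2$.
\item Squaring the residual comparison and using the restricted lower bound $\|Xb\|_2^2\ge\frac14\|b\|_2^2-C\frac Ln\|b\|_1^2$ gives $\|b\|_2^2+\|b\|_1\lesssim_\kappa k_*^2/n$ on the whole sublevel set $\{F_\kappa\le R\}$.
\end{itemize}
Hence both normalized errors differ from $1$ by $O_\kappa(L^{-1})$.
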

% \mm{formula below is more of a comment}
%More precisely, with probability $1-\exp(cn\eta)$, it holds
%\begin{equation}
% \begin{cases}
%  \widehat\beta_{\kappa,1}=\beta_*, & 0<\kappa<1,\\[4pt]
%  \displaystyle\frac{\|\widehat\beta_{\kappa,1}\|_2^2}{k_*}
%  \le C_\kappa L^{-1}, & \kappa>1.
% \end{cases}
% \label{eq:zero-one-regimes}
%\end{equation}
%This result is illustrated numerically in Figure \mm{ADD}. 
In words, \eqref{eq:zero-one} shows that a discontinuity in both training and generalization error is possible despite the convexity of the norm. 
% This is surprising, since such a discontinuity is not expected to occur when the norm is $r$-uniform smooth, i.e., it has some curvature properties, such as in the $\ell_r$ case. \mm{could be justified better}
The proof combines Gaussian comparison inequalities with the connection between symmetric Gaussian polytopes $Q_{n,p}:=\operatorname{conv}\{\pm X_j:1\le j\le p\}\subset\mathbb R^n$ and $\ell_1$-MNI \citep{donoho2005neighborliness,kurpathak2026gaussianity}.
The Minkowski functional of $Q_{n,p}$ gives the minimum $\ell_1$ budget needed to interpolate the data, and a sharp estimate of its spherical mean guides the calibration of the sparsity $k_*$; see \eqref{eq:sketch-norm-representation}.
For $\kappa<1$, this calibration leaves a strict margin in Gordon's comparison theorem, ensuring that every nonzero perturbation of the ground truth increases the objective  in \eqref{eq:minprob}, hence yielding unique exact recovery.
For $\kappa>1$, a clipping argument and properties of the Gaussian distribution give an $\ell_1$--$\ell_2$ norm comparison for $\widehat\beta_{\kappa,1}$.
Then, a restricted-isometry argument and a comparison of the objective in \eqref{eq:minprob} at $\widehat\beta_{\kappa,1}$ and at the zero estimator show that the normalized $\ell_2$ energy of $\widehat\beta_{\kappa,1}$ vanishes.

\paragraph{Zero--one generalization with zero training error.}
The preceding lemma does not separate fitting from generalization:
training and generalization error undergo the same transition. To keep exact
interpolation while generalization exhibits a zero--one law, we slightly modify the $\ell_1$ norm.
Specifically, we replace $\ell_1$ by its infimal convolution
with a scaled $\ell_\infty$ norm:
\begin{equation}
 N_\kappa(b):=\inf_{u+v=b}
 \left\{\|u\|_1+
 \frac{p}{\kappa}\sqrt{\frac{2}{\pi L}}\,\|v\|_\infty\right\}.
 \label{eq:cube-hybrid-norm}
\end{equation}
Its unit ball is
$\operatorname{conv}\left\{B_1^p,\frac{\kappa}{p}
\sqrt{\pi L/2}\,B_\infty^p\right\}$,
and its geometric distance to the $\ell_1$-norm is within a factor $O_\kappa(\sqrt L)$, as
\begin{equation}
 \frac1\kappa\sqrt{\frac{2}{\pi L}}\,\|b\|_1
 \le N_\kappa(b)\le\|b\|_1,
 \qquad \forall b\in\R^p.
 \label{eq:cube-hybrid-comparison}
\end{equation}
Leveraging this norm allows us to prove  the following extreme example of grokking. 

%In words, as $\kappa$ varies along the regularization path, the idea is to make the solution ``dense'' (non$\|\cdotTheta(d)$-effective sparsity) and , and.

\begin{theorem}[Zero--one generalization at interpolation]
\label{thm:cube-grokking}
Under the same assumptions, fix $\kappa>0$, $\kappa\ne1$, and let
\begin{equation}
 \widetilde\beta_\kappa=
 \operatorname*{arg\,min}_{b\in\R^p}
 \left\{\|y-\X b\|_2+
 \frac{\kappa\sqrt L}{2\sqrt n}\,N_\kappa(b)\right\}.
 \label{eq:cube-hybrid-estimator}
\end{equation}
Then, it holds
\begin{equation}
\left(\mathsf{MSE}_s(\widetilde\beta_\kappa),
\mathcal E_{s}(\widetilde\beta_\kappa)\right)
 \xrightarrow{\Pp}
 \left(\mathbf 1_{\{\kappa>1\}},0\right).
 \label{eq:cube-zero-one}
\end{equation}
\end{theorem}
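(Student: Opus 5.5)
The plan is to read both regimes off the dual description of $N_\kappa$ and to reuse the Gaussian-comparison machinery behind Lemma~\ref{thm:0-1grok}. Since $N_\kappa$ is an infimal convolution, its dual norm is
\[
N_\kappa^*(z)=\max\Bigl\{\|z\|_\infty,\ \tfrac{\kappa}{p}\sqrt{\tfrac{\pi L}{2}}\,\|z\|_1\Bigr\}.
\]
The optimality condition for \eqref{eq:cube-hybrid-estimator} at an interpolating $b$ (where $y-Xb=0$ and $\partial\|\cdot\|_2$ is the unit ball) then reads
\[
\exists\, w,\ \|w\|_2\le 1:\quad X^\top w\in \frac{\kappa\sqrt L}{2\sqrt n}\,\partial N_\kappa(b).
\]
So the proof has three parts. \emph{(a)} Show that for every fixed $\kappa\ne 1$ the minimizer interpolates exactly, which gives $\mathcal E_s(\widetilde\beta_\kappa)=0$. \emph{(b)} Identify the minimum-$N_\kappa$ interpolator. \emph{(c)} Compute its MSE. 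The factor $\tfrac12$ in the penalty is there so that, after the calibration of $k_*$, the regularization sits strictly below the critical interpolation level for all $\kappa$, rather than crossing it at $\kappa=1$ as in Lemma~\ref{thm:0-1grok}.

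For (a) and (b) I would compare two candidate interpolators via their costs.
\begin{itemize}
\item The sparse candidate is $\beta_*$ itself. It has $N_\kappa(\beta_*)=\|\beta_*\|_1=k_*$, since moving mass into the $\ell_\infty$ part costs $\tfrac p\kappa\sqrt{2/(\pi L)}\gg 1$ per unit of sup-norm, while $\beta_*$ has only $k_*\ll p$ entries.
\item The dense candidate is $v$ minimizing $\|v\|_\infty$ subject to $Xv=y$. By Gordon's theorem applied to $\min\{\|v\|_\infty : Xv=y\}$, whose dual is $\max_w \langle w,y\rangle/\|X^\top w\|_1$ with $\|X^\top w\|_1\approx p\sqrt{2/(\pi n)}\,\|w\|_2$, one gets $\|v\|_\infty\approx \|y\|_2\sqrt n/(p\sqrt{2/\pi})$. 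Since $\|y\|_2\approx\sqrt{k_*}$, the cube cost is $\approx \sqrt{k_* n}/(\kappa\sqrt L)$.
\end{itemize}
By the calibration \eqref{eq:signal}, $k_*\approx n/L$, so this cube cost is $\approx k_*/\kappa$, up to $1+O(\eta)$ corrections.

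Hence for $\kappa<1$ the sparse candidate is cheaper, and for $\kappa>1$ the cube part wins by a constant factor. For exact interpolation I would use the same Gordon/CGMT scalarization as in Lemma~\ref{thm:0-1grok}, now with $N_\kappa$ in place of $\ell_1$. The reduced problem involves $\mathbb E N_\kappa^*(g)$-type quantities: the $\ell_\infty$ branch gives $\sqrt{2\log p}$ and the $\ell_1$ branch gives $\kappa\sqrt{L}$ after normalization. One then checks that the halved penalty never pays for a nonzero residual, because reducing the residual costs at most the dual-norm budget. This yields $\mathcal E_s\to 0$ in probability.

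For $\kappa<1$ I would show that $\beta_*$ is the unique minimizer. Write any perturbation $h$ with $Xh=0$ and split $h=u+v$ in the infimal convolution. The ℓ1 strict-margin argument from Lemma~\ref{thm:0-1grok} covers the $u$-part. The cube part has dual certificate norm $\tfrac{\kappa}{p}\sqrt{\pi L/2}\,\|X^\top w\|_1\approx\kappa<1$, so it costs strictly more than it can save. This gives $\mathsf{MSE}_s=0$ with high probability. For $\kappa>1$, the dense interpolator has energy $p\|v\|_\infty^2\approx k_* n/p=o(k_*)$, and any near-optimal solution must put all but an $o(1)$ fraction of its $N_\kappa$-budget into the $\ell_\infty$ component. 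Then, via the $\ell_1$ component bound (the clipping argument of Lemma~\ref{thm:0-1grok}) together with \eqref{eq:cube-hybrid-comparison}, $\|\widetilde\beta_\kappa\|_2^2=o(k_*)$, so $\mathsf{MSE}_s\to1$.

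The main obstacle is the $\kappa>1$ case. It is not enough that the cube interpolator is cheaper; I need \emph{every} minimizer to have vanishing $\ell_2$ energy, which rules out hybrid $u+v$ decompositions where $u$ carries a nontrivial fraction of $\beta_*$. I would first try a Gordon lower bound on $N_\kappa$ restricted to $\{b : Xb=y,\ \|b\|_2^2\ge \epsilon k_*\}$, showing that this restricted minimum exceeds the cube cost $k_*/\kappa$ by a margin. The key input is that any $\ell_1$ mass aligned with $\beta_*$ is priced at rate $1$, whereas the cube prices it at rate $1/\kappa<1$.
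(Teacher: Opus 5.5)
Your exact-interpolation step and your $\kappa<1$ step are sound in outline. For interpolation, the input you need is the uniform cube estimate
$(1-\epsilon_{n,p})\|z\|_2\le a\min_{\X v=z}\|v\|_\infty\le(1+\epsilon_{n,p})\|z\|_2$ for all $z\in\R^n$, with $a=\sqrt{2/\pi}\,p/\sqrt n$ and $\epsilon_{n,p}=O(\sqrt{n/p})$. Equivalently, $\|\X^\top w\|_1=(1\pm\epsilon_{n,p})a\|w\|_2$ uniformly in $w$; you use this only to price the dense interpolator of $y$. Since $\tau_\kappa m_\kappa=a$, with $\tau_\kappa=\kappa\sqrt{L/n}$ and $m_\kappa=\frac p\kappa\sqrt{2/(\pi L)}$, completing any residual $e$ by a cube vector lowers the halved objective by at least $\frac{1-\epsilon_{n,p}}2\|e\|_2$.

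A Gordon bound through $\E N_\kappa^*(g)$ on the whole unit ball of $N_\kappa$ would not give this. The reason is that $B_1^p$ has Euclidean radius $1$ and $\sqrt n\gg\sqrt{2\log p}$.

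For $\kappa<1$, read as a dual-certificate argument, your step is the dual of the paper's primal one. The margin bound behind Lemma~\ref{thm:0-1grok} yields an $\ell_1$ certificate $\X^\top w\in\partial\|\beta_*\|_1$ with $\|w\|_2\le\sqrt n/\nu$. Its cube-branch dual norm is at most $(1+\epsilon_{n,p})\kappa\sqrt L/\nu$, which is eventually $<1$.

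The genuine gap is $\kappa>1$. You leave it open, and the tools you name would not close it.
\begin{itemize}
\item That near-optimal solutions put all but an $o(1)$ fraction of their $N_\kappa$-budget in the cube component is asserted, not derived.
\item Even granting $\|u\|_1=o(k_*)$, this controls $\langle\beta_*,u\rangle$ but not $\|u\|_2^2$.
\item The comparison \eqref{eq:cube-hybrid-comparison} only yields $\|b\|_1\lesssim\sqrt L\,k_*$, which is far too lossy.
\item You invoke the clipping argument, but it applies to an objective of the form $\|y-\X u\|_2+\tau_\kappa\|u\|_1$, and you never produce one.
\item Your restricted Gordon bound is unexecuted. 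Its key input, the pricing of mass aligned with $\beta_*$, handles only the direction of $\beta_*$. The competitors that matter mix planted coordinates with coordinates whose correlations $q_j=\langle \X_j,y\rangle/\|y\|_2$ are atypically large.
\end{itemize}

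The missing idea is to eliminate the $\ell_\infty$ component exactly. One has $\min_{\X b=y}N_\kappa(b)=\tau_\kappa^{-1}\min_u H_\kappa(u)$, where $H_\kappa(u)=f(y-\X u)+\tau_\kappa\|u\|_1$ and $f(z)=a\min_{\X v=z}\|v\|_\infty$. Every minimizer is then $u+v$, with $u\in\operatorname*{arg\,min}H_\kappa$ and $v$ a minimum-$\ell_\infty$ completion of $y-\X u$.

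By the uniform cube estimate, $H_\kappa$ is the square-root Lasso objective of Lemma~\ref{thm:0-1grok} at the same, unhalved $\kappa$. This holds up to additive errors $O(\epsilon_{n,p}\|y\|_2)$, with $\epsilon_{n,p}=o(k_*/n)$, and it is why the transition is inherited at $\kappa=1$.

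The null-side argument then transfers:
\begin{itemize}
\item clipping at $\gamma=\frac{1+\kappa}2\sqrt{L/n}$;
\item the global bound $\|\X b\|_2^2\ge\frac14\|b\|_2^2-C\frac Ln\|b\|_1^2$;
\item the strict inequality $H_\kappa(w)>H_\kappa(0_p)$ on the sphere $\|w\|_2=Mk_*/\sqrt n$;
\item convexity of $H_\kappa$.
\end{itemize}
Together these confine every minimizer to $\|u\|_2^2+\|u\|_1\lesssim k_*^2/n$, while $\|v\|_2\lesssim\sqrt{n/p}\,\|y-\X u\|_2$. Hence $|\mathsf{MSE}_s(u+v)-1|\lesssim k_*/n+\sqrt{n/p}\to0$.

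The same reduction also gives $\kappa<1$ at once, via $H_\kappa(\beta_*+h)-H_\kappa(\beta_*)\ge(1-\epsilon_{n,p})\|\X h\|_2+\tau_\kappa D(h)$. Here $D$ is the directional derivative of $\|\cdot\|_1$ at $\beta_*$.
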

The idea is that the solution $\widetilde\beta_\kappa$ in \eqref{eq:cube-hybrid-estimator} is \emph{(i)} $k_*$-sparse when $\kappa<1$, and \emph{(ii)} dense (all entries have roughly the same size) and interpolating  when $\kappa>1$. The result is illustrated in the left plot of Figure \ref{fig:lr_monotone_curves}, which clearly displays how the normalized MSE jumps from $0$ to $1$ while the training error remains extremely small.     Full proofs, as well as sketches, of the results in this subsection appear in  Appendix \ref{app:zero-one-proof}, where we also discuss  the technical novelty of our approach (see Remarks \ref{R:WR}-\ref{R:CGMT} therein).

\subsection{Precise characterization of estimators when $p\asymp n$}
\label{sec:lp-norm-extension}

We next study the regime in which the sample size $n$ and the feature dimension $p$ grow at the same rate. Our aim is to characterize both training error and MSE for the family of \(\ell_r\) regularizers, and Lemmas \ref{thm:main-nonint}-\ref{thm:min-norm-interpolator} below give a scalar description of these two regimes in terms of a pair of fixed point equations. This provides the starting point to establish under what conditions the MSE exhibits a sharp drop once the training error is already close to $0$, as done in Section \ref{sec:mse-slope}.

More formally, throughout this subsection and Section~\ref{sec:mse-slope}, we consider the model in \eqref{eq:model} and assume
$n,p\to\infty$ with $\delta_p=n/p\to\delta\in(0,\infty)$. We set $\lambda(n, p) = p^{1-1/r}$ in \eqref{eq:minprob}, so that $\widehat\beta_{\kappa,r}$ coincides with the MNI $\widehat\beta_{r}$ in \eqref{eq:min-norm-interpolator} when $\kappa$ is smaller than a universal constant independent of $n, p$.
The empirical
distribution of $\beta_*$ is assumed to converge to $P_B$ in $W_q$, with $q=\max\{2,r\}$.
%Thus, $P_B$ describes the limiting distribution of the signal coordinates; the coordinates themselves need not be random.
%
The scalar description given by Lemmas \ref{thm:main-nonint}-\ref{thm:min-norm-interpolator} is expressed through
\begin{equation}\label{eq:sdes}
 f_{\tau,\alpha}
 :=\tau\eta_{\alpha,r}\left(Z\right),
 \qquad Z = Z(\tau,B):=\frac B\tau+G,\qquad B\sim P_B,\quad G\sim N(0,1),
\end{equation}
where $B$ and $G$ are independent and the proximal operator $\eta_{\alpha,r}(z)$ is given by $\argmin_{u\in\R}
 \left\{\frac12(u-z)^2+\frac\alpha r|u|^r\right\}$ (for $r=1$, $\eta_{\alpha,1}(z):=\operatorname{sign}(z)(|z|-\alpha)_+$). We also  define
\begin{equation}
 H_r(\tau,\alpha)
 \hspace{-.1em}:=\hspace{-.1em}\E\left[\hspace{-.1em}\left(\hspace{-.1em}\eta_{\alpha,r}(Z)
 \hspace{-.1em}-\hspace{-.1em}\frac B\tau\right)^2\right],\quad %\label{eq:Hdef}\\
 D_r(\tau,\alpha)
 \hspace{-.1em}:=\hspace{-.1em}\E\left[\eta'_{\alpha,r}(Z)\right],\quad %\label{eq:Ddef}\\
 U_r(\tau,\alpha)
 \hspace{-.1em}:=\hspace{-.1em}\E\left[\left|\eta_{\alpha,r}(Z))\right|^r\right],
 \label{eq:Hdef}
\end{equation}
%For $r=1$, we set %the derivative in $\eta'_{\alpha,r}$  is understood almost everywhere, or equivalently, 
with $D_1(\tau,\alpha)=\Pp(|Z|>\alpha)$. The scalar estimator in \eqref{eq:sdes} applies
shrinkage to a signal coordinate observed with effective Gaussian
noise of standard deviation $\tau$. The functions in
\eqref{eq:Hdef} summarize its behavior:
$\tau^2H_r(\tau,\alpha)=\mathbb E(f_{\tau,\alpha}-B)^2$ is the
MSE, $\tau^rU_r(\tau,\alpha)
=\mathbb E|f_{\tau,\alpha}|^r$ is the $r$-th absolute moment and, as we will see, $D_r(\tau,\alpha)$ determines the training error $\cE(\widehat\beta_{\kappa,r})$ via the expression $\tau(1-D_r(\tau,\alpha)/\delta)$. The parameters $\tau$ and $\alpha$ will be identified via a set of two fixed point equations.

\paragraph{Regularized regression with square-root loss.} 
 We start by characterizing the estimator in \eqref{eq:minprob} when
its training error $\cE(\widehat\beta_{\kappa,r})$ has a strictly positive limit.

\begin{lemma}[Precise characterization for nonzero training error]
\label{thm:main-nonint}
Fix $\kappa>0$. Let $\tau_\kappa,\alpha_\kappa>0$ satisfy
\begin{align}
 \tau_\kappa^2
 &=\sigma^2+\frac{\tau_\kappa^2}{\delta}
 H_r(\tau_\kappa,\alpha_\kappa),\qquad
 \kappa
 =\begin{cases}
 \alpha_\kappa,&r=1,\\
 \alpha_\kappa U_r(\tau_\kappa,\alpha_\kappa)^{(r-1)/r},
 &1<r<\infty.
 \end{cases}
 \label{eq:se1}
\end{align}
Define
\begin{equation}
 R_\kappa:=\tau_\kappa
 \left(1-\frac{D_r(\tau_\kappa,\alpha_\kappa)}{\delta}\right).
 \label{eq:Rdef}
\end{equation}
If $R_\kappa>0$, then, almost surely, the minimizer
$\widehat\beta_{\kappa,r}$ in \eqref{eq:minprob} satisfies
\begin{align}
 \mathcal{E}(\widehat\beta_{\kappa,r})
 &\to R_\kappa,\qquad\qquad
\mathsf{MSE}(\widehat\beta_{\kappa,r})
 \to \tau_\kappa^2H_r(\tau_\kappa,\alpha_\kappa)
 =\delta(\tau_\kappa^2-\sigma^2).\label{eq:resclaim}
\end{align}
\end{lemma}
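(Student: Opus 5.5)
We prove the lemma with the convex Gaussian min--max theorem (CGMT), following the square-root Lasso analysis of \citet{thrampoulidis2015lasso,thrampoulidis2018precise}. First we center at the truth. With $w:=b-\beta_*$, the objective in \eqref{eq:minprob} becomes
\[
\|\varepsilon-Xw\|_2+\tfrac{\kappa p^{1-1/r}}{\sqrt n}\|\beta_*+w\|_r .
\]
Writing $\|v\|_2=\max_{\|u\|_2\le1}\langle u,v\rangle$ gives the primary min--max problem
\[
\min_w\max_{\|u\|_2\le1}\; u^\top Xw-u^\top\varepsilon+\tfrac{\kappa p^{1-1/r}}{\sqrt n}\|\beta_*+w\|_r .
\]
This is convex--concave and bilinear in the Gaussian matrix $X$. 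Its Gordon auxiliary problem is
\[
\min_w\max_{\|u\|_2\le1}\; \tfrac{1}{\sqrt n}\bigl(\|w\|_2\,g^\top u+\|u\|_2\,h^\top w\bigr)-u^\top\varepsilon+\tfrac{\kappa p^{1-1/r}}{\sqrt n}\|\beta_*+w\|_r ,
\]
with $g\sim N(0,I_n)$ and $h\sim N(0,I_p)$. To apply CGMT we need a compact feasible set for $w$. We obtain one by a standard a priori bound: comparing the objective at the minimizer with its value at $b=0$ controls $\|\widehat\beta_{\kappa,r}\|_r$, and hence $\|w\|_2/\sqrt p=O(1)$ with high probability.

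Next we scalarize the auxiliary problem. Optimizing over the direction of $u$ with $\|u\|_2=\theta\in[0,1]$ gives
\[
\theta\,\bigl\|\,\|w\|_2 g/\sqrt n-\varepsilon\bigr\|_2+\theta\,h^\top w/\sqrt n .
\]
By the law of large numbers, $\bigl\|\,\|w\|_2g/\sqrt n-\varepsilon\bigr\|_2\approx\sqrt{\|w\|_2^2+n\sigma^2}\approx\sqrt n\,\sqrt{\|w\|_2^2/n+\sigma^2}$. We then linearize the square root with the identity $\sqrt{x}=\min_{\tau>0}\{\tfrac{x}{2\tau}+\tfrac\tau2\}$ and the $\ell_r$ norm with its variational form. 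This form is $\|x\|_r=\|x\|_r^r/(r t^{r-1})+(1-1/r)t$ minimized over $t$; for $r=1$ it is not needed.

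After these two steps the minimization over $w$ separates across coordinates. Each coordinate subproblem is a scalar proximal problem: denoise $B$ observed in Gaussian noise of level $\tau$, with threshold $\alpha$. Its solution is exactly $f_{\tau,\alpha}$ from \eqref{eq:sdes}. The empirical averages converge to $H_r$, $D_r$ and $U_r$ by the $W_q$ convergence of the empirical distribution of $\beta_*$, with $q=\max\{2,r\}$ chosen so that the $r$-th moments converge. The result is a deterministic scalar min--max over $(\theta,\tau,t)$.

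The stationarity conditions of this scalar problem are exactly \eqref{eq:se1}. The $\tau$-equation gives $\tau^2=\sigma^2+\tau^2H_r/\delta$. The $t$-equation ties the effective threshold $\alpha$ to $\kappa$ through $U_r^{(r-1)/r}$; for $r=1$ it gives $\alpha=\kappa$ directly. The optimal value, divided by $\sqrt n$, is the training residual. Stein's lemma, $\E[G\eta_{\alpha,r}(Z)]=D_r$, turns the correlation term $h^\top w$ into $-\tau D_r/\delta$, which produces $R_\kappa=\tau(1-D_r/\delta)$. The hypothesis $R_\kappa>0$ is exactly what places the optimal $\theta$ at the boundary $\theta=1$, so the maximization over $u$ is attained at the sphere. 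When $R_\kappa\le0$ the optimum is interpolating and the analysis changes; that case is handled later by Lemma~\ref{thm:min-norm-interpolator}.

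The MSE of the auxiliary solution is $\tau^2H_r$, and the $\tau$-equation rewrites this as $\delta(\tau^2-\sigma^2)$. To transfer the MSE and training-error claims from the auxiliary problem to $\widehat\beta_{\kappa,r}$ we use the deviation form of CGMT, as in \citet{thrampoulidis2018precise}. For any $\epsilon>0$ let $S_\epsilon$ be the set of $w$ whose MSE or residual differs from the prediction by more than $\epsilon$. The scalar objective is strictly convex in the relevant parameters, so its value restricted to $S_\epsilon$ exceeds the unrestricted optimum by a margin with high probability. CGMT then carries this margin back to the primary problem. The primary optimum therefore lies outside $S_\epsilon$ with high probability, and Borel--Cantelli upgrades this to almost sure convergence.

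The main obstacle is the uniqueness and strict-convexity step for $1<r<\infty$. Here the coupling $\kappa=\alpha U_r^{(r-1)/r}$ is implicit, and the scalar min--max is not obviously jointly convex in $(\tau,t)$. We would first reduce to a one-dimensional function of $\tau$ after eliminating $t$ in closed form. We would then prove strict monotonicity of $\tau\mapsto H_r(\tau,\alpha(\tau))/\delta$ to obtain uniqueness of the fixed point, using the strong convexity of the proximal map.
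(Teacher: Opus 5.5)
Your overall architecture (center at $\beta_*$, dualize the Euclidean norm, pass to Gordon's auxiliary problem, transfer deviations with the two-sided CGMT and Borel--Cantelli) matches the paper's. However, two steps fail as written.

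The first is localization. Comparing with $b=0$ does give $\|\widehat\beta_{\kappa,r}\|_{r,p}=O(1)$. But for $1\le r<2$, the regime the paper cares about, this does not give $\|\widehat w\|_{2,p}=O(1)$: a vector with a single nonzero coordinate equal to $p^{1/r}$ has $\|\cdot\|_{r,p}=1$ and $\|\cdot\|_{2,p}=p^{1/r-1/2}\to\infty$. The paper closes this with the strict margin $\gamma_r(\kappa)<\sqrt\delta$ of Lemma~\ref{lem:margin}. It derives this margin from the fixed-point equations and $D_r\le\delta$, with a separate contradiction argument when $\sigma=0$. It then uses it, via CGMT, to bound below the recession slope $\inf_{\|v\|_{2,p}=1}\{\frac1n\|G_Xv\|_2+\frac{\kappa}{\delta_p}\|v\|_{r,p}\}$ of the primary objective, and to make the auxiliary objectives coercive in $\|\cdot\|_{2,p}$. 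For the primary problem alone you could instead combine $\|X\widehat w\|_2=O(\sqrt n)$ with an escape-through-the-mesh bound on $\{\|v\|_{2,p}=1,\ \|v\|_{r,p}\le\epsilon\}$. Either way, some such Gaussian argument has to be supplied.

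The second is the training error. The set of $w$ whose residual $\|\varepsilon-Xw\|_2/\sqrt n$ deviates depends on $X$, so it is not an admissible deviation set for CGMT. Also, the limiting optimal value is $\ell_\kappa=R_\kappa+\frac{\kappa}{\delta}\|f_\kappa\|_r$, not the residual. The paper proves $\min_wP_p(w)\to\ell_\kappa$. It then uses the deterministic deviation set built from $\|b\|_{r,p}^r$, together with the MSE and the $W_2$ distance, to get $\|\widehat\beta\|_{r,p}\to\|f_\kappa\|_r$, and subtracts.

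The step you flag as the main obstacle is left open, and your plan does not close it. Once $w$ is minimized out, $t$ enters through the threshold of the proximal map, so $t$ cannot be eliminated in closed form. You also give no reason why $\tau\mapsto H_r(\tau,\alpha(\tau))$ should be monotone. Uniqueness of the fixed point is not what is needed anyway, since the lemma assumes a solution exists. What is needed is that near-minimizers of the auxiliary problem concentrate, and for $\sigma=0$ this cannot come from strict convexity of the square-root term.

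The paper avoids scalarizing altogether. It drops the positive part to obtain the convex functional $\mathcal{Q}(f)=\sqrt{\sigma^2+\delta^{-1}\E(f-B)^2}-\delta^{-1}\E[G(f-B)]+\frac{\kappa}{\delta}\|f\|_r$ on $L_2\cap L_r$, and checks that $f_\kappa$ satisfies its first-order conditions. It then proves uniqueness of the minimizer case by case in Lemma~\ref{prop:population}: strict convexity of the square-root term when $\sigma>0$ and $r>1$; equality cases in the $L_2$ and strictly convex $L_r$ triangle inequalities when $\sigma=0$ and $r>1$; and a scalar profile $\Phi_1(\tau)$ with a unique minimizer when $r=1$. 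Stability of near-minimizers comes from tightness of the empirical laws of $((\beta_*)_j,h_j,b_j)$, conditional Jensen, and uniqueness (Lemma~\ref{lem:stability}). This delivers the MSE, the $\ell_r$ moment and the $W_2$ limit at once. Finally, $C_p(b^\circ)\to R_\kappa\ge0$ at the plug-in candidate $b^\circ$ shows that the dropped positive part costs nothing asymptotically.
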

In words, the first
equation in \eqref{eq:se1} determines the effective noise level from
the noise variance and the MSE. The second equation
calibrates the shrinkage parameter $\alpha_\kappa$ to the
regularization strength $\kappa$.
Solving for $(\tau_\kappa, \alpha_\kappa)$ determines both
limiting training error and the MSE via \eqref{eq:resclaim}.

The argument uses the CGMT framework to compare \eqref{eq:minprob} with an auxiliary
optimization involving independent Gaussian vectors. We first show that its population counterpart has the unique minimizer $f_{\tau_\kappa,\alpha_\kappa}$,
characterized by the fixed-point equations \eqref{eq:se1}.
We then prove the stability of the minimizers of the auxiliary problem. This in turn allows us to use the CGMT to transfer
their asymptotic behavior to the original estimator, thus yielding \eqref{eq:resclaim}. The full proof is deferred to Appendix~\ref{app:pfnon-int}, with an overview in Appendix~\ref{app:sketchnonint}. Beyond the limits in \eqref{eq:resclaim}, the argument also gives convergence in $W_2$ of the joint empirical distribution of $(\widehat\beta_{\kappa,r}, \beta_*)$, which implies convergence of the empirical averages of functions with quadratic growth, see Remark \ref{rem:joint-empirical-laws} in Appendix~\ref{app:pfnon-int}. 

\paragraph{Minimum-norm interpolation.}
For $\delta<1$, we then provide a similar characterization for the minimum-$\ell_r$-norm
solution that fits the observations exactly. 

\begin{lemma}[Precise characterization for the MNI]
\label{thm:min-norm-interpolator}
Let $\delta\in(0,1)$. Suppose that
$\tau_{\mathrm{int}},\alpha_{\mathrm{int}}>0$ satisfy
\begin{align}
 \tau_{\mathrm{int}}^2
 &=\sigma^2+
 \frac{\tau_{\mathrm{int}}^2}{\delta}
 H_r(\tau_{\mathrm{int}},\alpha_{\mathrm{int}}),\qquad
 D_r(\tau_{\mathrm{int}},\alpha_{\mathrm{int}})=\delta.
 \label{eq:int-state-mse}
\end{align}
Then, almost surely, the minimum-norm interpolator
$\widehat\beta_r$ in \eqref{eq:min-norm-interpolator} satisfies
\begin{equation}
\mathsf{MSE}(\widehat\beta_{r})
 \to
 \tau_{\mathrm{int}}^2
 H_r(\tau_{\mathrm{int}},\alpha_{\mathrm{int}})
 =
 \delta\bigl(\tau_{\mathrm{int}}^2-\sigma^2\bigr). %,\qquad  p^{-1/r}\bigl\|   \widehat\beta_r \bigr\|_{r} \to \|f_{\mathrm{int}}\|_r = \tau_{\mathrm{int}} U_r(\tau_{\mathrm{int}},\alpha_{\mathrm{int}})^{1/r}.
 \label{eq:int-mse}
\end{equation}
\end{lemma}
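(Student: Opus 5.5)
We plan to prove Lemma~\ref{thm:min-norm-interpolator} with the CGMT, in the same spirit as Lemma~\ref{thm:main-nonint}, now applied to the constrained problem $\min\{\|b\|_r: Xb=y\}$. Write $w=b-\beta_*$, so the constraint becomes $Xw=\varepsilon$. Rescale $X=A/\sqrt n$, with $A$ having i.i.d.\ $N(0,1)$ entries. Introducing a Lagrange multiplier $u\in\R^n$, the problem reads
\[
\min_{w}\max_{u}\; u^\top A w/\sqrt n - u^\top\varepsilon + p^{-1/r}\|\beta_*+w\|_r .
\]
It is convex in $w$ and concave (linear) in $u$. To apply the CGMT we need compact sets. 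We first show that, with high probability, $\|\widehat\beta_r\|_2\le C\sqrt p$; this follows from $\delta<1$ via the smallest singular value of $X^\top$ and a feasible candidate such as $\beta_*+X^\top(XX^\top)^{-1}\varepsilon$. We then truncate $\|u\|_2$ at a large constant times $\sqrt n$. Gordon's comparison gives the auxiliary problem
\[
\min_w\max_{u}\;\tfrac{1}{\sqrt n}\bigl(\|w\|_2 g^\top u + \|u\|_2 h^\top w\bigr) - u^\top\varepsilon + p^{-1/r}\|\beta_*+w\|_r .
\]
Optimizing over the direction of $u$ leaves $\beta:=\|u\|_2$ as the only scalar variable. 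The resulting condition is $\|\,\|w\|_2 g/\sqrt n-\varepsilon\|_2\le h^\top w/\sqrt n$, or its penalized version with multiplier $\beta$.

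Next we scalarize. The norm $\|\,\|w\|_2 g/\sqrt n - \varepsilon\|_2$ concentrates around $\sqrt{\|w\|_2^2+n\sigma^2}$ (after normalization). We introduce $\tau$ through the square-root trick $\sqrt{x}=\min_{\tau>0}(x/(2\tau)+\tau/2)$. The minimization over $w$ then decouples coordinatewise into a proximal problem for $|\cdot|^r$. Its solution is $\beta_*+w=\tau\eta_{\alpha,r}(B/\tau+G)$, with $G$ coming from $h$, and $\alpha$ is a ratio combining $\beta$, $\tau$ and the dual variable of the $\ell_r$ term. By the law of large numbers and the $W_q$ convergence of the empirical law of $\beta_*$, the auxiliary objective converges pointwise, and by convexity uniformly on compacts, to a deterministic function of $(\tau,\alpha)$ built from $H_r,D_r,U_r$.

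We then show that the stationarity conditions of this limit are exactly \eqref{eq:int-state-mse}. The $\tau$-derivative gives $\tau^2=\sigma^2+\tau^2H_r/\delta$. Stationarity in $\beta$ enforces the constraint being tight, $R=\tau(1-D_r/\delta)=0$, that is $D_r=\delta$; this is the formal $\kappa\to0$ limit of \eqref{eq:Rdef}. Gaussian integration by parts, $\E[G\eta_{\alpha,r}(Z)]=D_r$, connects the derivative terms. The identity $\tau^2H_r=\delta(\tau^2-\sigma^2)$ is then just algebra on the first equation.

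Finally we transfer the result from the auxiliary problem back to $\widehat\beta_r$. We show that the deterministic limit is strongly convex in $w$ around $f_{\tau_{\rm int},\alpha_{\rm int}}-B$, uniformly in $n$. The set $\{w:|\|w\|_2^2/p-\tau^2_{\rm int}H_r|>\epsilon\}$ then raises the auxiliary optimum by a margin. The two-sided CGMT, valid because the problem is convex-concave, transfers this to the primary problem, giving $\mathsf{MSE}(\widehat\beta_r)\to\tau_{\rm int}^2H_r$ in probability. A Borel--Cantelli argument using the exponential concentration bounds upgrades this to almost sure convergence.

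The main obstacle is the degenerate constraint, that is, the absence of a finite regularization weight. Compactifying the dual variable $u$ requires a lower bound on the slack that holds uniformly in the direction of $w$. We also need to check that the limit fixed point is unique and nondegenerate. This amounts to $D_r(\tau,\alpha)=\delta$ being strictly monotone in $\alpha$, with $D_r\to1>\delta$ as $\alpha\to0$ and $D_r\to0$ as $\alpha\to\infty$. For $r=1$, strong convexity fails, and we replace it by a local stability argument on the scalar Lasso-type objective, as done for Lemma~\ref{thm:main-nonint}.
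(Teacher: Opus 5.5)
Your route, applying the CGMT directly to the Lagrangian of the constrained problem, differs from the paper's. The paper never analyzes the constraint itself. It shows (Lemmas~\ref{lem:int-gaussian-quotient}--\ref{lem:int-exact-penalty}) that for fixed $L>C_{r,\delta}$ the minimizers of $\|b\|_{r,p}+\frac{L}{\sqrt n}\|y-Xb\|_2$ are exactly the MNIs. It then rewrites \eqref{eq:int-state-mse} as \eqref{eq:se1} at $\kappa_{\mathrm{int}}$ with $R_{\kappa_{\mathrm{int}}}=0$. Finally it reuses the machinery of Lemma~\ref{thm:main-nonint}, which only needs $D_r\le\delta$, via the domination $I^{\mathrm{raw}}_{p,L}\ge(\delta_p/\kappa_{\mathrm{int}})A^{\mathrm{raw}}_p$ and Lemma~\ref{lem:weighted-cgmt-transfer}.

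A direct constrained analysis can work, but yours has gaps. First, the step you call the main obstacle is left open. What it needs is a bound uniform over directions of the multiplier $u$, not of $w$: the quotient estimate $\inf_{\|u\|_2=1}p^{1/r}\|X^\top u\|_{r^*}\ge c_{r,\delta}\sqrt n$. This follows from the smallest singular value when $r\le2$, and needs a small-ball plus net argument when $r>2$. It says every residual $z$ can be corrected at $\ell_r$ cost $O(\|z\|_2/\sqrt n)$, so truncating $u$ at radius $L/\sqrt n$ is exact. Your radius $K\sqrt n$ is off by a factor $n$: it puts weight of order $n$ on $[C^{\mathrm{raw}}_p]_+$ in the AO. Then the a priori Lipschitz bound on the AO value is of order $\sqrt p$, and a plug-in candidate with $o(1)$ constraint violation no longer gives an upper bound.

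Second, your localization fails for $r<2$. The feasible candidate bounds only $\|\widehat\beta_r\|_{r,p}$, which does not control $\|\widehat\beta_r\|_{2,p}$ when $r<2$. Also, $s_{\min}(X^\top)$ controls only the row-space component of $\widehat\beta_r-\beta_*$. Excluding directions in the $(p-n)$-dimensional kernel of $X$ with bounded $\ell_r$ but large $\ell_2$ norm needs a Gordon escape-through-the-mesh bound. The paper gets this from the recession estimate $P_p^\infty(v)\ge c_0\|v\|_{2,p}$, proved by CGMT from a margin $\gamma_r(\kappa_0)<\sqrt\delta$ (Lemmas~\ref{lem:margin} and~\ref{lem:cgmt-transfer}).

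Third, the transfer step rests on a uniform strong convexity you never source, and your caveat for $r=1$ points the wrong way. The robust source of curvature is $\sqrt{\sigma^2+\|w\|_{2,p}^2/\delta}$. It is strongly convex on bounded sets with modulus of order $\sigma^2$, for every $r\ge1$ including $r=1$. When $\sigma=0$, which the lemma allows, this term becomes $\|w\|_{2,p}/\sqrt\delta$ and is flat along $w$. The term $\|\beta_*+w\|_{r,p}$ is flat along $\beta_*+w$, and has no curvature at all for $r=1$. Any growth uniform in $p$ would then have to come from the interplay of these two degenerate terms. That requires a quantitative non-degeneracy argument (for instance, that $f_{\mathrm{int}}$ and $f_{\mathrm{int}}-B$ are not collinear), which you do not supply. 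Likewise, strict monotonicity of $\alpha\mapsto D_r$ only gives $\alpha$ as a function of $\tau$. It says nothing about uniqueness of the limiting optimizer, which is what pins down the MSE.

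The paper avoids strong convexity altogether:
\begin{itemize}
\item Lemma~\ref{prop:population} proves $f_{\mathrm{int}}$ is the unique population minimizer using only $D_r\le\delta$. The ingredients are equality cases in the $L_2$ and $L_r$ triangle inequalities, $\Pp(B\ne0)>0$ via Lemma~\ref{lem:null-prior-gap}, and the scalar profile $\Phi_1$ for $r=1$.
\item Lemma~\ref{lem:margin} keeps the margin strict even when $\sigma=0$.
\item The separation gap comes from weak compactness and conditional Jensen in Lemma~\ref{lem:stability}.
\end{itemize}
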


We note that the first scalar equation in \eqref{eq:int-state-mse} coincides with the one in \eqref{eq:se1}; the second scalar equation in \eqref{eq:int-state-mse} corresponds to setting $R_\kappa=0$ in \eqref{eq:Rdef}, enforcing exact interpolation in the characterization.
The proof first shows that a sufficiently small positive
regularization parameter enforces minimum-norm interpolation,
and then adapts the CGMT analysis of Lemma~\ref{thm:main-nonint}
to the case $D_r=\delta$.
The full argument is in Appendix~\ref{app:pfint}.

\paragraph{Relation to previous work.}

Specializing Theorem 3.1 of  \citet{thrampoulidis2018precise} to the  square-root loss and $\ell_r$ regularization gives the limit in probability of $\mathsf{MSE}(\widehat\beta_{\kappa,r})$. Lemma \ref{thm:main-nonint} additionally establishes the limit $R_\kappa$ of the normalized training error, which is needed for our
analysis of the interpolation threshold in
Section~\ref{sec:mse-slope}. Furthermore, it gives almost-sure convergence
and the joint empirical $W_2$ limit discussed in Remark \ref{rem:joint-empirical-laws}. 

For minimum-norm interpolation, Lemma \ref{thm:min-norm-interpolator} extends
\citet{li2021minimum} which is restricted to $r=1$.
For $r\ge 1$, specializing Theorem~3.1 of
\citet{varma2025optimal} to isotropic features and the potential
$\psi(t)=|t|^r/r$ gives the limit in probability of $\mathsf{MSE}(\widehat\beta_{r})$ for $\beta_*$ having i.i.d.\ coordinates. 
Lemma \ref{thm:min-norm-interpolator} relaxes this i.i.d.\ assumption, and it gives almost-sure convergence.

We finally note that the application of results by \cite{thrampoulidis2018precise,varma2025optimal} still requires showing uniqueness of some parameters involved in the characterization. Our analysis directly gives uniqueness, after assuming existence of the solution of the fixed point equations; furthermore, existence at and just above the interpolation threshold follows from Lemma \ref{prop:branch-attachment} (Appendix \ref{app:aux}) when $\sigma>0$, $\delta\in (0, 1)$ and $r\in (1, 2)$. %Finally, we highour strategy presented in Appendices \ref{app:pfnon-int}-\ref{app:pfint} and summarized in the sketch of Appendix \ref{app:sketchnonint}, which follows directly from our analysis. In addition, our argument in analysis provides a unified argument for both regularized regression and MNIs, which allows to prove the continuity statement in Proposition \ref{}. This is necessary for the analysis in Proposition \ref{}, and it does not follow from existing work. They appear in Appendices~\ref{app:pfnon-int} and~\ref{app:pfint}, respectively; Appendix~\ref{app:sketchnonint} provides an overview of the regularized-estimation proof.

%When $\sigma=0$, the assumption $\tau_{\mathrm{int}}>0$ describes regimes with positive limiting MSE. The noiseless exact-recovery boundary, where the effective noise scale vanishes, is treated separately in Section~\ref{sec:mse-slope}.

%These characterizations express the limiting MSE in the common form $\delta(\tau^2-\sigma^2)$, with the parameters determined by the relevant scalar equations. In Section~\ref{sec:mse-slope}, we study how this error changes as the regularization strength approaches the interpolation threshold, and how that change depends on the norm exponent $r$ and the signal distribution $P_B$.

\subsection{Generalization near interpolation: norm geometry and sparsity}
\label{sec:mse-slope}

We now use the characterization in Section~\ref{sec:lp-norm-extension}
to study how norm geometry (captured by the parameter $r$) and sparsity of $\beta_*$ affect generalization
near interpolation. We focus on comparing the decrease in MSE and training error, establishing conditions under which the MSE decreases more rapidly having fixed the training error.  

More formally, consider $\delta\in(0,1)$ and let $\overline R_r(\kappa)$ denote the limiting
normalized training error of a global minimizer: $\overline R_r(\kappa)$ is either $0$ or it equals
$R_\kappa>0$ in \eqref{eq:Rdef}. Since $\overline R_r(\kappa)$ is
nondecreasing in $\kappa$ (Lemma~\ref{lem:residual-monotonicity} in Appendix \ref{app:aux}), we define the critical regularization leading to interpolation as
\begin{equation}\label{eq:kappastar}
 \kappa_r^\star
 :=\sup\{\kappa\geq0:\overline R_r(\kappa)=0\}
 =\inf\{\kappa\geq0:\overline R_r(\kappa)>0\}.
\end{equation}
For $\kappa>\kappa_r^\star$, we denote the limiting MSE as $\overline{\mathsf{MSE}}_r(\kappa):=\delta\{\tau_\kappa^2-\sigma^2\}$ and define
\begin{equation}
 S_r
 :=\lim_{\kappa\downarrow\kappa_r^\star}
 \frac{\dfrac{d}{d\kappa}\overline{\mathsf{MSE}}_r(\kappa)}
      {\dfrac{d}{d\kappa}R_\kappa}.
 \label{eq:slope-definition}
\end{equation}
The quantity $S_r$ measures how much
generalization changes for a given change in training error: $S_r>0$ means that an exact
interpolation outperforms approximate one, and a larger value of $S_r$ is evidence of more significant grokking.
Throughout this section, we focus on the sparse family of regression problems where the empirical distribution of $\beta_*$ approaches the law of
\begin{equation}
 B=\frac{\xi_\rho W}{\sqrt\rho},
 \qquad \xi_\rho\sim\operatorname{Bernoulli}(\rho),
 \label{eq:sparse-prior-family}
\end{equation}
where $\xi_\rho$ and $W$ are independent. This choice is rather broad, and it includes Rademacher, Gaussian, bounded, and
finite-mixture random variables.

\begin{figure}[t]
  \centering
  \includegraphics[width=\linewidth]{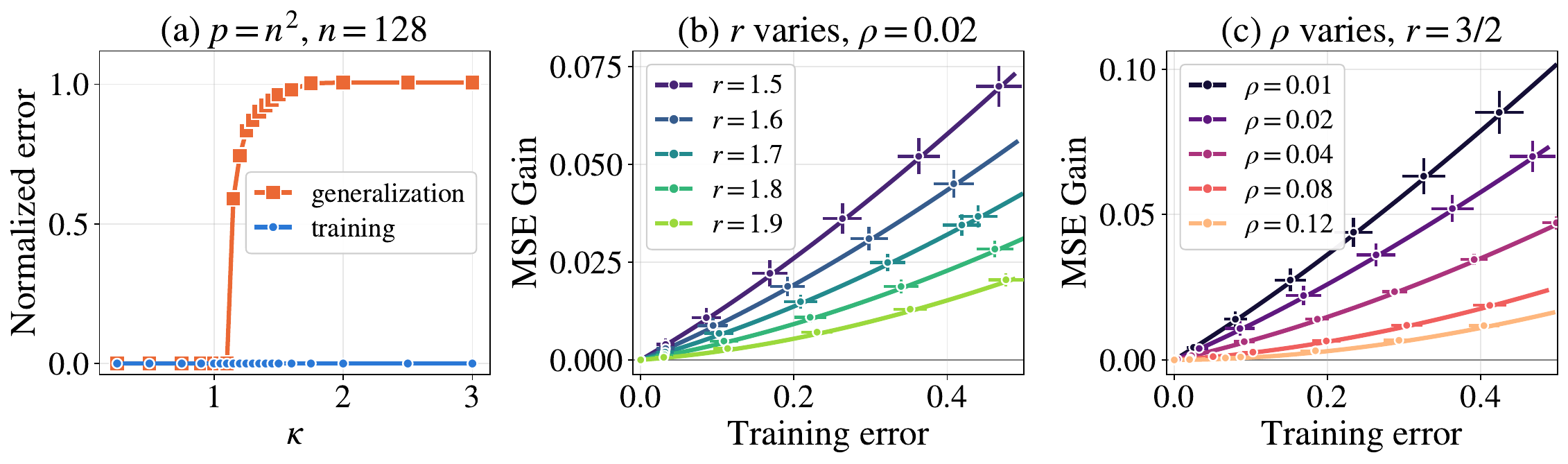}%
  \vspace{-1em}
  \caption{Training and generalization error (MSE) for regression regularized with the convex $\ell_1-\ell_\infty$ norm in \eqref{eq:cube-hybrid-norm} (Left) and with $\ell_r$ norms (Center, Right). \emph{Left:} We pick $\beta_*$ to be a $k_*$-sparse vector with uniformly distributed $\pm 1$ entries, consider the estimator in \eqref{eq:cube-hybrid-estimator} and plot normalized training (blue) and generalization (orange) errors as a function of regularization strength $\kappa$. We report the mean $\pm$ one standard error over $10$ seeds. The generalization error exhibits a zero--one transition around $\kappa=1$, while the training error remains close to $0$ (Theorem \ref{thm:cube-grokking}). \emph{Center, Right:} We pick 
 $B$ as in \eqref{eq:sparse-prior-family} with $W$ a Rademacher random variable, $\delta=0.2$, $\sigma=0.1$, $n=1000$ and $p=5000$. We consider the estimator $\widehat\beta_{\kappa,r}$ in \eqref{eq:minprob} and plot MSE gain $\overline{\mathsf{MSE}}_r(\kappa)-\overline{\mathsf{MSE}}_r(\kappa_r^\star)$ as a function of training error $\cE(\widehat\beta_{\kappa,r})$. Solid curves are theoretical predictions from Lemma~\ref{thm:main-nonint}, and dots are obtained by solving regularized regression numerically (mean $\pm$ one standard error over $20$ seeds). In the central plot, 
for a fixed $\rho=0.02$, the curves flatten as $r$ increases (Theorem~\ref{thm:sparse-decreasing-slope}). In the right plot, for a fixed $r=3/2$, the curves flatten as $\rho$ increases (Theorem~\ref{prop:sparsity-monotonicity}).}
  \label{fig:lr_monotone_curves}
  \vspace{-1em}
  \end{figure}

\paragraph{Role of norm geometry.} The result below shows that, for sparse regression problems, grokking is more pronounced as the regularization passes from $\ell_2$ to $\ell_1$, i.e., $S_r$ is decreasing in $r$.

\begin{theorem}[Smaller $r$ implies more grokking]
\label{thm:sparse-decreasing-slope}
Fix $1<r_-<r_+<2$ and $\sigma\ge 0$. For a fixed $\lambda>0$, let $B$ be given by \eqref{eq:sparse-prior-family} with $\rho=\lambda\delta$. Assume $\E W^2=1$ and $\E|W|^{2/(r_--1)}<\infty$. Then, there exist
$\lambda_0:=\lambda_0(r_-,r_+,W)>0$,
$\delta_0:=\delta_0(r_-,r_+,W,\lambda,\sigma)>0$ such that,
whenever $\lambda\in (0, \lambda_0)$ and
$\delta\in (0, \delta_0)$, $S_r$ is strictly positive and decreasing in $r$ on $(r_-,r_+)$.
\end{theorem}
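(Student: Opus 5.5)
The plan is to reduce $S_r$ to an explicit expression in the two fixed-point unknowns at the interpolation point, and then study that expression in the sparse, small-$\delta$ regime. Along the branch $\kappa\downarrow\kappa_r^\star$, Lemma~\ref{thm:main-nonint} parametrizes everything by $(\tau,\alpha)$ subject to the single constraint
\[
F(\tau,\alpha):=\tau^2-\sigma^2-\tfrac{\tau^2}{\delta}H_r(\tau,\alpha)=0 .
\]
The map $\kappa\mapsto\alpha_\kappa$ is monotone (via $\kappa=\alpha U_r^{(r-1)/r}$), so by the chain rule the ratio in \eqref{eq:slope-definition} does not depend on how the branch is parametrized. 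We may therefore differentiate in $\alpha$ and write $\tau'=-F_\alpha/F_\tau$. Since $\overline{\mathsf{MSE}}_r=\delta(\tau^2-\sigma^2)$ and $R=\tau(1-D_r/\delta)$, and since the limit point satisfies $D_r=\delta$ (this is where the branch attaches to the MNI of Lemma~\ref{thm:min-norm-interpolator}, using Lemma~\ref{prop:branch-attachment}), the term $\tau'(1-D_r/\delta)$ vanishes. This gives
\[
S_r=\frac{-2\delta^2\,\tau'}{D_{r,\alpha}+D_{r,\tau}\,\tau'}\Bigg|_{(\tau_{\mathrm{int}},\alpha_{\mathrm{int}})} .
\]
Positivity and monotonicity of $S_r$ thus become statements about the partial derivatives of $H_r$ and $D_r$ at the solution of \eqref{eq:int-state-mse}. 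These derivatives can be written as Gaussian integrals via Stein's lemma and the smoothness of $\eta_{\alpha,r}$ for $r>1$.

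Second, I will take the asymptotics $\delta\to0$ with $\rho=\lambda\delta$. The constraint $D_r=\delta$ forces $\alpha_{\mathrm{int}}\to\infty$. For large $\alpha$ the prox behaves like $\eta_{\alpha,r}(z)\approx \operatorname{sign}(z)(|z|/\alpha)^{1/(r-1)}$, and $\eta'_{\alpha,r}(z)=1/(1+\alpha(r-1)|\eta|^{r-2})$. So the null coordinates ($B=0$, a fraction $1-\lambda\delta$) contribute to $D_r$ and $H_r$ at the scale $\alpha^{-1/(r-1)}$ times Gaussian moments $\E|G|^{(2-r)/(r-1)}$ and $\E|G|^{2/(r-1)}$. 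The signal coordinates ($B=W/\sqrt{\lambda\delta}$, of size $\gg\tau$) contribute terms governed by moments of $|W|^{2/(r-1)}$; this is exactly where the assumption $\E|W|^{2/(r_--1)}<\infty$ enters, giving uniform integrability for all $r\in(r_-,r_+)$. After rescaling $\alpha=\delta^{-(r-1)}a$ (or whatever power balances $D_r=\delta$) and rescaling $\tau$, I expect $(\tau_{\mathrm{int}},a_{\mathrm{int}})$ to converge to the solution of a limiting pair of equations depending only on $(r,\lambda,W,\sigma)$. I then expect $S_r/\delta^{\gamma}$, for an appropriate power $\gamma$, to converge to an explicit limit $\mathcal S(r,\lambda)$.

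Third, I will send $\lambda\to0$ in $\mathcal S(r,\lambda)$. In this limit the signal is extremely sparse and the leading term should reduce to a closed form in $r$, built from ratios of Gaussian moments such as $\Gamma\!\big(\tfrac{1}{2}+\tfrac{1}{r-1}\big)$ and $\E|W|^{2/(r-1)}$. I will show directly that this closed form is positive and has strictly negative $r$-derivative on $[r_-,r_+]$. This fixes $\lambda_0$. For $\lambda<\lambda_0$, and then $\delta<\delta_0(\lambda,\sigma)$, strict monotonicity transfers from the limit if the convergence $S_r\to\mathcal S$ (after normalization) holds in $C^1$ in $r$, uniformly on the compact interval. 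That uniform $C^1$ convergence needs dominated convergence applied to the $r$-derivatives of the Gaussian integrals, together with the implicit function theorem for the limiting fixed-point system (nondegenerate Jacobian).

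The main obstacle is this last step: obtaining expansions that are uniform in $r$ and that also control $\partial_r$ of $(\tau_{\mathrm{int}},\alpha_{\mathrm{int}})$, while keeping the lower-order corrections in $\delta$ and $\lambda$ small relative to a leading-order derivative gap. The sign of $\partial_r\mathcal S$ must also be verified from the explicit Gamma-function expression. I would first attempt this via log-convexity of $s\mapsto\E|G|^{s}$, and fall back on a monotone-ratio argument if that fails.
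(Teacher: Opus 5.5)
Your reduction is sound. Substituting $\tau'=-\partial_\alpha H_r/(\partial_\tau H_r-2\delta\sigma^2/\tau^3)$ into your formula gives exactly \eqref{eq:mse-slope-formula}. The rescaling $a=\delta^{r-1}\alpha$, $t=\sqrt\delta\,\tau$, followed by the implicit-function theorem for the limiting system (Jacobian $\operatorname{diag}(2,-s/a)$ with $s=1/(r-1)$), is also what the paper does.

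\textbf{The gap is in your final step: you place the source of monotonicity in $r$ in the wrong object.} First, a correction to your heuristic. Since $t\to1$, we have $\tau_{\mathrm{int}}\asymp\delta^{-1/2}$, so on the support $B/\tau=W/(t\sqrt\lambda)$ is of order one at fixed $\lambda$, not $\gg1$. The $\delta\to0$ limit is
\[
\delta^{-3/2}S_r\to\mathcal S(r,\lambda)=\frac{2\sqrt\lambda}{\E|G|^{s+1}}\,\E\Bigl[W\Bigl(\frac{W}{\sqrt\lambda}+G\Bigr)\Bigl|\frac{W}{\sqrt\lambda}+G\Bigr|^{s-1}\Bigr]-\frac{2\E|G|^{2s}}{(\E|G|^{s+1})^2}.
\]
As $\lambda\to0$, this becomes
\[
\mathcal S(r,\lambda)=\frac{2\E|W|^{s+1}}{\E|G|^{s+1}}\,\lambda^{-(s-1)/2}\bigl(1+o_{C^1}(1)\bigr).
\]
So there is no finite limiting ``closed form in $r$''. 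The leading term blows up with an $r$-dependent exponent, and the monotonicity comes entirely from that exponent:
\[
\frac{d}{dr}\log\lambda^{-(s-1)/2}=-\frac{s^2}{2}\log(1/\lambda).
\]
This beats the bounded logarithmic $r$-derivative of the moment prefactor once $\lambda<\lambda_0(r_-,r_+,W)$. That is exactly why the theorem requires small $\lambda$.

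Your plan to prove that a Gamma/moment expression has negative $r$-derivative (via log-convexity of $s\mapsto\E|G|^s$) would fail in general. Take Rademacher $W$: the prefactor is $2/\E|G|^{s+1}$. The map $p\mapsto\E|G|^p$ is increasing on $[2,\infty)$, while $s+1=r/(r-1)$ decreases in $r$. Hence this prefactor is \emph{increasing} in $r$. Normalizing $\mathcal S$ by $\lambda^{-(s-1)/2}$ to get a finite limit does not help either, because that normalization itself depends on $r$.

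What you actually need is the following.
\begin{itemize}
\item The $C^1$-in-$r$ convergence, as $\lambda\to0$, of
\[
\lambda^{s/2}\E[WF(W/\sqrt\lambda)]=\E\bigl[W(W+\sqrt\lambda G)|W+\sqrt\lambda G|^{s-1}\bigr]\to\E|W|^{s+1},
\]
where $F(x)=\E[(x+G)|x+G|^{s-1}]$. The $r$-derivatives produce $\log|W|$ factors, which $\E|W|^{2/(r_--1)}<\infty$ controls. Note that the limit involves $\E|W|^{s+1}$, not $\E|W|^{2/(r-1)}$.
\item The observation that the $-\frac{s^2}{2}\log(1/\lambda)$ term dominates all $O(1)$ remainders uniformly on $[r_-,r_+]$.
\end{itemize}
With that replacement, the rest of your plan (fix $\lambda$, then transfer in $C^1$ as $\delta\to0$) goes through as in the paper.
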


In words, as $\delta\to 0$ (i.e., as the model gets more over-parameterized) and for a sufficiently sparse problem (i.e., $\rho\lesssim \delta$), exact
interpolation generalizes better than approximate fitting, and smaller norm exponents
yield a sharper decrease in the MSE. Remarkably, the result holds regardless of the amount of noise $\sigma$ in the data. We note that 
$\rho/\delta=\lambda$ is small, so the number of nonzero entries in $\beta_*$ is small compared to $n$, but the relative sparsity $\rho$ does not vanish in $n$  ($\rho=\Theta(1)$). The result is illustrated in the central plot of Figure \ref{fig:lr_monotone_curves}, and its proof is deferred to
Appendix~\ref{app:proof-sparse-decreasing-slope}. %) also establishes the existence of the limit in \eqref{eq:slope-definition}, as well as the continuous differentiability of the map $r\mapsto S_r$. 

We next show that, for noiseless data ($\sigma=0$) and $\ell_1$ regularization ($r=1$), the MSE improvement for a given change in the training error may be 
unbounded. This result aligns with Lemma \ref{thm:0-1grok} above, albeit in a different regime of $n, p$, and it is proved in Appendix~\ref{app:proof-r1-critical-infinite-slope}. 
\begin{theorem}[Unbounded improvement in MSE for $\ell_1$ regularization]
\label{thm:r1-critical-infinite-slope}
Fix $0<\rho<1$, and let $B$ be given by
\eqref{eq:sparse-prior-family}. Assume that $\E W^2=1$ and, as $t\downarrow 0$,
\begin{equation}
 \Pp(|W|\leq t)=o(t^2).
 \label{eq:r1-critical-small-ball}
\end{equation}
Set $r=1$, $\sigma=0$, and
\begin{equation}
 \delta=\delta_c(\rho)
 :=\min_{\alpha\geq0}
 \left\{\rho(1+\alpha^2)
 +(1-\rho)\E(|G|-\alpha)_+^2\right\},
 \qquad G\sim N(0,1).
 \label{eq:r1-critical-delta}
\end{equation}
Let $\alpha_c>0$ be the minimizer of
\eqref{eq:r1-critical-delta}. Then $\kappa_1^\star=\alpha_c$,
and the limit in \eqref{eq:slope-definition} satisfies $S_1=+\infty$.
\end{theorem}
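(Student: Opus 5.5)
The plan is to specialise the scalar characterization of Lemma~\ref{thm:main-nonint} to $r=1$ and $\sigma=0$, where $\alpha_\kappa=\kappa$. Whenever $\tau_\kappa>0$, the first equation in \eqref{eq:se1} reduces to
\[
H_1(\tau,\kappa)=\delta .
\]
The limiting quantities are then $\overline{\mathsf{MSE}}_1(\kappa)=\delta\tau_\kappa^2$ and $R_\kappa=\tau_\kappa\bigl(\delta-D_1(\tau_\kappa,\kappa)\bigr)/\delta$.

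\textbf{Step 1: the $\tau\downarrow0$ expansion.} Write $b=B/\tau$ and compute the limit of $H_1$ as $\tau\downarrow0$. On the support, $|b|\to\infty$, so $\eta_{\alpha,1}(b+G)-b\to G-\alpha\operatorname{sign}(b)$. Off the support, the contribution is $\E(|G|-\alpha)_+^2$. Hence
\[
H_1(0^+,\alpha)=F(\alpha):=\rho(1+\alpha^2)+(1-\rho)\E(|G|-\alpha)_+^2 ,
\]
which is exactly the function in \eqref{eq:r1-critical-delta}. I will write
\[
H_1(\tau,\alpha)=F(\alpha)-\varepsilon(\tau,\alpha),
\qquad
\varepsilon(\tau,\alpha)=\rho\,\E\bigl[g_\alpha\bigl(|W|/(\sqrt\rho\,\tau)\bigr)\bigr],
\]
with $g_\alpha(b):=1+\alpha^2-\E(\eta_{\alpha,1}(b+G)-b)^2$. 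I then split $g_\alpha$ into two pieces:
\begin{itemize}
\item a bounded piece supported on $b\lesssim$ a large constant, whose expectation is controlled by the small-ball assumption \eqref{eq:r1-critical-small-ball} and is therefore $o(\tau^2)$;
\item a Gaussian-tail piece for large $b$, whose expectation is also $o(\tau^2)$ after integrating against the law of $|W|$.
\end{itemize}
Since $g_\alpha\ge0$ near $b=0$, this gives $0\le\varepsilon(\tau,\alpha)=o(\tau^2)$, with the same statement for $\partial_\tau\varepsilon$ and $\partial_\alpha\varepsilon$ obtained by differentiating under the expectation. The same reasoning gives
\[
D_1(\tau,\alpha)=D_0(\alpha)+o(\tau),\qquad D_0(\alpha):=\rho+2(1-\rho)\Phi(-\alpha).
\]

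\textbf{Step 2: locating $\kappa_1^\star$.} With $\delta=\delta_c=\min F=F(\alpha_c)$, a positive solution of $F(\kappa)-\varepsilon(\tau,\kappa)=\delta_c$ forces $F(\kappa)-F(\alpha_c)=\varepsilon=o(\tau^2)$. Using $F''(\alpha_c)>0$, this gives $|\kappa-\alpha_c|=o(\tau)$, so the branch with $\tau_\kappa>0$ emanates from $\kappa=\alpha_c$. I will check two identities at the critical point. The first-order condition $F'(\alpha_c)=0$ reads $\rho\alpha_c=(1-\rho)\E(|G|-\alpha_c)_+$. Combined with an integration by parts on $\E(|G|-\alpha)_+^2$, it should yield $D_0(\alpha_c)=\delta_c$; this is the classical coincidence of the two Donoho--Tanner conditions. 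Together with Lemma~\ref{lem:residual-monotonicity} and the fact that below $\alpha_c$ the only consistent solution has $R=0$ (exact recovery), this identifies $\kappa_1^\star=\alpha_c$.

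\textbf{Step 3: the slope.} I parametrize the branch by $\tau$ rather than by $\kappa$. Along the branch, $\overline{\mathsf{MSE}}_1=\delta\tau^2$ and
\[
R=\tau\bigl(D_0(\alpha_c)-D_0(\kappa(\tau))+o(\tau)\bigr)/\delta=\tau\cdot o(\tau)=o(\tau^2).
\]
By the chain rule,
\[
S_1=\lim \frac{2\delta\tau}{dR/d\tau}.
\]
To show $S_1=+\infty$, I need derivative-level control: $d\kappa/d\tau=o(1)$ and $dR/d\tau=o(\tau)$. I will obtain these from the implicit function theorem applied to $F(\kappa)-\varepsilon(\tau,\kappa)=\delta_c$, which gives
\[
\frac{d\kappa}{d\tau}=\frac{\partial_\tau\varepsilon}{F'(\kappa)-\partial_\alpha\varepsilon}.
\]
Here $F'(\kappa)\approx F''(\alpha_c)(\kappa-\alpha_c)$, and the difficulty is that this denominator is itself small, of size $\sqrt{\varepsilon}$.

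\textbf{Main obstacle.} The hardest step is exactly this derivative bound: ensuring that $\partial_\tau\varepsilon$ is small compared with $\sqrt{\varepsilon}$, so that $d\kappa/d\tau\to0$. The first thing to try is the explicit relation
\[
(\kappa-\alpha_c)^2\approx \frac{2\varepsilon(\tau)}{F''(\alpha_c)} .
\]
This gives $\kappa-\alpha_c\approx\sqrt{2\varepsilon(\tau)/F''(\alpha_c)}$, and then $d\kappa/d\tau=\tfrac12\varepsilon'(\tau)/\sqrt{2F''(\alpha_c)\varepsilon(\tau)}$. To make this $o(1)$, I would use the structure $\varepsilon(\tau)=\E[h(|W|/\tau)]$ to show $\tau\varepsilon'(\tau)=O(\varepsilon(\tau))$, or at least that $\varepsilon$ is regularly varying.

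If pointwise derivative control fails, there is a fallback. Since $R$ and $\overline{\mathsf{MSE}}_1$ are monotone in $\kappa$, the ratio of derivatives in \eqref{eq:slope-definition} can be compared with the ratio of increments $\overline{\mathsf{MSE}}_1(\kappa)/R_\kappa\gtrsim\tau^2/o(\tau^2)\to\infty$, via a Cauchy mean-value or L'Hôpital-type argument. This route still requires the limit to exist, and that must be argued separately from the smoothness of $\varepsilon$ in $\tau$.
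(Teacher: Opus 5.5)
Your Steps~1--2 and the reduction in Step~3 follow the paper's proof closely. The paper also writes $H_1(\tau,\alpha)=h(\alpha)-\rho\,\E\, e\bigl(|W|/(\tau\sqrt\rho),\alpha\bigr)$ with $e(x,\alpha)=2\int_x^\infty z\,\Pp(|z+G|\le\alpha)\,dz$, which is your $g_\alpha$. It uses the small-ball condition to get $\E\exp(-cW^2/(\rho\tau^2))=o(\tau^2)$, and obtains $\alpha-\alpha_c=o(\tau)$ and $D_0(\alpha_c)=\delta_c$, the latter from $h-\tfrac\alpha2 h'=\rho+(1-\rho)\Pp(|G|>\alpha)$. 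It then parametrizes the branch by $\tau$ and reduces $S_1=+\infty$ to $\alpha'(\tau)=o(1)$ and $dR/d\tau=o(\tau)$.

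The gap is exactly the step you flag as the main obstacle, and the tool you propose for it fails. The bound $\tau\varepsilon'(\tau)=O(\varepsilon(\tau))$ is false already for Rademacher $W$. There $x=1/(\sqrt\rho\,\tau)$ is deterministic, $\varepsilon\approx2\rho\,\phi(x-\alpha)/(x-\alpha)$, and $\tau\varepsilon'=-\rho\,x\,e_x\approx2\rho\,x\,\phi(x-\alpha)$. Hence $\tau\varepsilon'/\varepsilon\asymp\tau^{-2}\to\infty$, and $\varepsilon$ is rapidly, not regularly, varying.

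Absolute bounds of the kind in Step~1 (at best $\partial_\alpha\varepsilon=o(\tau^2)$ and $\partial_\tau\varepsilon=o(\tau)$) do not help either, because $\sqrt{\varepsilon}$ can be exponentially smaller than $\tau^2$. What you need are relative bounds $|\partial_\alpha\varepsilon|+|\partial_\tau\varepsilon|=o(\sqrt\varepsilon)$. You need the one for $\partial_\alpha\varepsilon$ as well, which your plan takes for granted: without it, the denominator $F'(\kappa)-\partial_\alpha\varepsilon$ need not be of order $\kappa-\alpha_c$, nor even positive.

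Your fallback does not close the gap. Cauchy's mean value theorem applied to $\delta\tau^2/R\to\infty$ only shows that the $\limsup$ of the derivative ratio is $+\infty$. But \eqref{eq:slope-definition} requires the limit, and L'H\^opital runs in the opposite direction.

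The missing ingredient is a pointwise Gaussian estimate with square-root slack:
\[
 e_\alpha(x,\alpha)^2+x^2e_x(x,\alpha)^2\le C\,e(x,\alpha)\,e^{-cx^2}.
\]
Differentiating costs only polynomial factors in $x$, and an extra Gaussian factor absorbs them. Cauchy--Schwarz with weight $e$ then gives
\[
 |\partial_\alpha\varepsilon|+\tau|\partial_\tau\varepsilon|
 \le C\sqrt{\varepsilon}\,\Bigl(\E\exp\bigl(-cW^2/(\rho\tau^2)\bigr)\Bigr)^{1/2}
 =\sqrt{\varepsilon}\cdot o(\tau).
\]
The Taylor expansion of $h$ gives $\kappa-\alpha_c\sim\sqrt{\varepsilon/\delta_c}$ along the branch. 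Combined with the bound above, this yields $\partial_\alpha H_1\sim2\delta_c(\kappa-\alpha_c)>0$ and $\partial_\tau H_1=o(\kappa-\alpha_c)$. Hence $0<\kappa'(\tau)=o(1)$, which closes Step~3.

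A smaller point concerns Step~2: $\kappa_1^\star=\alpha_c$ should not rest on ``the only consistent solution has $R=0$'', since Lemma~\ref{thm:main-nonint} says nothing of that kind. Instead, construct the branch for every $\kappa>\alpha_c$ by the intermediate value theorem. Along it, $D_1<\rho+(1-\rho)\Pp(|G|>\kappa)<\delta_c$, so $R_\kappa>0$. For $\kappa\le\alpha_c$, use Lemma~\ref{lem:residual-monotonicity} to bound the residual by $R_{\kappa'}$, which tends to $0$ as $\kappa'\downarrow\alpha_c$.
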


In words, the limiting MSE and training error both
tend to zero as $\kappa\downarrow\kappa_1^\star$, but the training error does so at a faster rate. Hence, near interpolation, a small
reduction in training error yields a much larger reduction
in MSE. This results in delayed generalization captured by an unbounded $S_1$. Theorem \ref{thm:r1-critical-infinite-slope} identifies two sufficient conditions leading to such phenomenon: \emph{(i)} the nonzero part of the signal $\beta_*$ has little mass around $0$, %i.e. sparse, 
which follows from the fact that $W$ does not satisfy even a mild small ball condition as \eqref{eq:r1-critical-small-ball} entails; \emph{(ii)} the sample size equals the critical value $\delta_c(\rho)$ in \eqref{eq:r1-critical-delta}, which corresponds to the classical weak recovery
threshold for noiseless $\ell_1$ minimization with Gaussian designs
\citep{donoho2009,almt2014}. %The result is illustrated in the central plot of Figure \ref{fig:lr_monotone_curves}, and its proof is in
Remarkably, the situation is quite different in the presence of label noise: when $\sigma>0$, the $\ell_1$-MNI has a worse MSE than an approximate interpolator, i.e., $S_1<0$ (see Proposition \ref{prop:r1-negative-slope} in Appendix \ref{app:proof-r1-negative-slope} for the precise statement and proof).

We finally note that exact interpolation hurts generalization for $\ell_2$ regularization: for $\sigma>0$, allowing positive training error reduces the MSE, i.e., $S_2< 0$; furthermore, $S_2=0$ in the noiseless case (see Proposition~\ref{prop:r2-negative-slope} in
Appendix~\ref{app:proof-r2-negative-slope}
for the precise statement and proof). To reconcile this result with the analysis of grokking for ridge regularization in \cite{xu2026grok}, note that in their mechanism, generalization is delayed by
the slow decay of an initialization component irrelevant for
the training data. In contrast, our results
identify when regularization geometry supports such
a delay. 

%\begin{proposition}[Negative slope for noisy $\ell_1$ interpolation]
%\label{prop:r1-negative-slope}
%In the proportional regime model with $\delta\in(0,1)$, $\sigma>0$, and $0<\E B^2<\infty$. The limit $S_1$ in \eqref{eq:slope-definition} exists and is strictly negative. \end{proposition}
%The proof of this proposition appears in the appendix. We now conclude with the follwong corollary that has indepdent and somehow suprising form the statsical point of view.
%\begin{remark}[The noisy endpoints behave differently]
%\label{rem:noisy-endpoints}
%The positive slope in Theorem~\ref{thm:sparse-decreasing-slope}
%is an interior phenomenon within the family $1\le r\le2$.
%Indeed, in the proportional Gaussian model with any fixed
%$\sigma>0$, both endpoints satisfy $S_1<0$ and $S_2<0$
%(Propositions~\ref{prop:r1-negative-slope}
%and~\ref{prop:r2-negative-slope}).
%Thus, allowing a small positive training residual improves
%generalization at either endpoint, whereas exact interpolation
%is locally preferable in the sparse regimes established for
%$1<r<2$.
%In particular, the monotonicity in $r$ cannot be extended
%uniformly to $r=1$ at fixed noise and problem parameters.
%This also contrasts with the noiseless critical $\ell_1$ case,
%where Theorem~\ref{thm:r1-critical-infinite-slope}
%gives $S_1=+\infty$.
%\end{remark}

\paragraph{Role of sparsity.}

We next fix any $\ell_r$ norm and show that grokking is more pronounced as the regression problem becomes sparser. 

\begin{theorem}[More sparsity implies more grokking]
\label{prop:sparsity-monotonicity}
Fix $1<r<2$ and $\sigma\geq0$. For $0<\rho<1$, let $B$ be given by \eqref{eq:sparse-prior-family} with $\rho=\lambda\delta$. Assume $\E W^2=1$ and $\E|W|^{r/(r-1)}<\infty$. For any $0<\lambda_-<\lambda_+<\infty$, there exists
$\delta_0:=\delta_0(r,W,\lambda_-,\lambda_+,\sigma)>0$
such that, whenever $\delta\in (0, \delta_0)$, $S_r$ is  strictly
decreasing in $\rho$ on
$[\lambda_-\delta,\lambda_+\delta]$. Moreover, there exists $\lambda_c=\lambda_c(r,W)>0$
such that, if $\lambda_+<\lambda_c$, then $\delta_0$
may be chosen so that $S_r>0$ throughout this interval.
\end{theorem}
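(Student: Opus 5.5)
We reduce $S_r$ to an explicit function of the interpolation fixed point $(\tau_{\mathrm{int}},\alpha_{\mathrm{int}})$ of Lemma~\ref{thm:min-norm-interpolator}, and then expand that function as $\delta\to0$ with $\rho=\lambda\delta$. This is the same route we would take for Theorem~\ref{thm:sparse-decreasing-slope}; here $r$ is fixed and $\lambda$ varies.

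\textbf{Step 1: a closed form for $S_r$.} Parametrize the branch $\kappa>\kappa_r^\star$ by $\alpha$ rather than $\kappa$. For each $\alpha$, the first equation in \eqref{eq:se1} determines $\tau(\alpha)$ by the implicit function theorem. Since both $\overline{\mathsf{MSE}}_r=\delta(\tau^2-\sigma^2)$ and $R=\tau(1-D_r/\delta)$ are functions of $\alpha$ along the branch, the factor $d\alpha/d\kappa$ cancels in the ratio \eqref{eq:slope-definition}. At $\alpha=\alpha_{\mathrm{int}}$ we have $D_r=\delta$, so the term $\tau'(1-D_r/\delta)$ vanishes and
\[
S_r=\frac{-2\delta^2\tau_{\mathrm{int}}\,\tau'(\alpha_{\mathrm{int}})}{\tau_{\mathrm{int}}\bigl(\partial_\alpha D_r+\partial_\tau D_r\,\tau'\bigr)},\qquad \tau'=\frac{\tau\,\partial_\alpha H_r}{2\delta-2H_r-\tau\partial_\tau H_r}\Big|_{\mathrm{int}}.
\]
The existence of the branch attached at the threshold and the identification of the limit use Lemma~\ref{prop:branch-attachment} (for $\sigma>0$) together with the monotonicity in Lemma~\ref{lem:residual-monotonicity}. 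For $\sigma=0$ we argue directly from the fixed point equations. The sign of $S_r$ is then determined by the sign of $\partial_\alpha H_r$ at the interpolation point, provided the denominator terms have definite signs; we verify those signs in the asymptotic regime of Step 2.

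\textbf{Step 2: asymptotics as $\delta\to0$.} Since $D_r(\tau,\alpha)=\delta\to0$, we need $\alpha_{\mathrm{int}}\to\infty$. For $1<r<2$ the proximal map satisfies $\eta_{\alpha,r}(z)\approx \operatorname{sign}(z)(|z|/\alpha)^{1/(r-1)}$ once $\alpha$ is large, and $\eta'\approx (r-1)^{-1}\alpha^{-1/(r-1)}|z|^{(2-r)/(r-1)}$.

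We split every expectation into the null part (mass $1-\rho$, where $Z=G$) and the signal part (mass $\rho=\lambda\delta$, where $Z=W/(\tau\sqrt\rho)+G$). On the signal part $|Z|$ is of order $\delta^{-1/2}$. The moment assumption $\E|W|^{r/(r-1)}<\infty$ makes these signal terms integrable. For example, the signal contribution to $D_r$ scales like $\lambda\delta\cdot \alpha^{-1/(r-1)}\E|W/(\tau\sqrt{\lambda\delta})|^{(2-r)/(r-1)}$.

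Balancing $D_r=\delta$ against $H_r\approx\delta(1-\sigma^2/\tau^2)$ yields leading-order scalings $\alpha_{\mathrm{int}}\asymp \delta^{-a}$ and $\tau_{\mathrm{int}}^2\asymp$ a limiting profile. The coefficients of this profile are explicit functions of $\lambda$, of $m_r(W):=\E|W|^{r/(r-1)}$-type moments, and of Gaussian moments. Plugging into Step 1 gives
\[
S_r=\Phi_r(\lambda)+o(1)
\]
uniformly for $\lambda\in[\lambda_-,\lambda_+]$, with the $o(1)$ also holding for the $\lambda$-derivative. We obtain the derivative bound by differentiating the fixed point equations in $\lambda$ and again using the implicit function theorem.

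\textbf{Step 3: monotonicity and sign.} We show that $\Phi_r$ is strictly decreasing. Since $\Phi_r'$ is continuous and strictly negative on the compact interval $[\lambda_-,\lambda_+]$, the uniform derivative approximation from Step 2 gives strict decrease of $S_r$ in $\rho=\lambda\delta$ for $\delta<\delta_0$. For the sign statement, take $\lambda_c$ to be the first zero of $\Phi_r$, or $+\infty$ if there is none; we expect $\Phi_r(0^+)>0$, since with very sparse signals the null coordinates dominate and shrinkage is beneficial. Then $S_r>0$ on the whole interval once $\lambda_+<\lambda_c$.

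\textbf{Main obstacle.} The hardest step is Step 2. It requires the expansion to hold with uniform error bounds including derivatives, and it requires showing that $\Phi_r$ is strictly monotone. Both the null and signal contributions to $H_r$, $\partial_\alpha H_r$ and $D_r$ enter at comparable orders, with their weights set by $\lambda$. I would first compute $\Phi_r$ in the noiseless case, where $\tau$ drops out through scaling and the monotonicity should be visible directly. I would then treat $\sigma>0$ as a perturbation: $\sigma$ enters only through the ratio $\sigma^2/\tau^2$, which is fixed at leading order.
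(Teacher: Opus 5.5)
Your Step 1 is correct: your ratio is algebraically identical to the paper's threshold formula \eqref{eq:mse-slope-formula}. Your overall plan (expand at the interpolation point as $\delta\to0$ with $\rho=\lambda\delta$, uniformly in $C^1$ in $\lambda$, then show the limit is monotone) is also the paper's plan. The gap is in Steps 2--3, and it is not cosmetic.

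\emph{Scaling.} At interpolation the MSE $\delta(\tau_{\mathrm{int}}^2-\sigma^2)$ tends to $\E B^2=1$, so $\tau_{\mathrm{int}}\asymp\delta^{-1/2}$, not a bounded profile. With $t=\sqrt\delta\,\tau$ and $a=\delta^{r-1}\alpha$:
\begin{itemize}
\item on the signal part $B/\tau=W/(t\sqrt\lambda)$ is of order one, not $\delta^{-1/2}$;
\item $\sigma^2/\tau^2=O(\delta)$ drops out at leading order, so no perturbation argument in $\sigma$ is needed;
\item the rescaled fixed-point system has the $\lambda$-free limit $t^2=1$, $a^s=\E|G|^{s+1}$ with $s=1/(r-1)$. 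This also gives the $\sigma=0$ existence you need for Lemma~\ref{prop:branch-attachment}.
\end{itemize}

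\emph{Normalization.} $S_r\to0$. In your own formula there is an explicit factor $\delta^2$, and $\partial_\tau H_r\asymp\delta^{3/2}$, $\partial_\alpha D_r\asymp\delta^{r}$, $\partial_\alpha H_r=O(\delta^{r+1})$, $\partial_\tau D_r=o(\delta^{3/2})$. Together these give $S_r=O(\delta^{3/2})$. An expansion $S_r=\Phi_r(\lambda)+o(1)$ therefore forces $\Phi_r\equiv0$. An unnormalized $o(1)$ error, even in $C^1$, says nothing about the sign of $\partial_\rho S_r$. What you need is $\delta^{-3/2}S_r=\mathcal S(\lambda)+o_{C^1}(1)$, which gives $\partial_\rho S_r=\delta^{1/2}\{\mathcal S'(\rho/\delta)+o(1)\}$.

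\emph{Monotonicity.} The monotonicity of the limit is the actual content of the theorem, and you assert it rather than prove it. Your heuristic for the sign also points the wrong way. The limit is
\[
\mathcal S(\lambda)=\frac{2}{\E|G|^{s+1}}\Bigl(\E\Bigl[W^2\,\frac{F(x)}{x}\Bigr]_{x=|W|/\sqrt\lambda}-\frac{\E|G|^{2s}}{\E|G|^{s+1}}\Bigr),\qquad F(x)=\E\bigl[(x+G)|x+G|^{s-1}\bigr],
\]
and $\lambda$ enters only through the signal cross term in $\partial_\alpha H_r$.
\begin{itemize}
\item $F$ is odd with $F(0)=0$.
\item $F'(x)=s\E|x+G|^{s-1}$ is strictly increasing on $(0,\infty)$, because $|x+G|$ is stochastically increasing in $x>0$.
\item Hence $F(x)/x$ is strictly increasing in $x$, so $\mathcal S'<0$. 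Differentiation under $\E$ is justified by $\E|W|^{s+1}=\E|W|^{r/(r-1)}<\infty$.
\end{itemize}
For the sign, $F(x)/x\sim x^{s-1}$ gives $\lambda^{(s-1)/2}\mathcal S(\lambda)\to 2\E|W|^{s+1}/\E|G|^{s+1}>0$ as $\lambda\downarrow0$. So positivity comes from the large signal coordinates outweighing the negative pure-Gaussian (null) term, not from null coordinates dominating.

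With the correct scaling, this normalization, and this computation in place, your Step 3 then goes through as you wrote it.
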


The result is illustrated in the right plot of Figure \ref{fig:lr_monotone_curves}, and its proof is in 
Appendix~\ref{app:proof-sparsity-monotonicity}. We finally note that, if there is no structure in the regression problem (i.e., $B$ is Gaussian), then exact interpolation hurts generalization  (i.e., $S_r<0$ for every $1\leq r<2$), see Proposition~\ref{prop:gaussian-negative-slope} in
Appendix~\ref{app:proof-gaussian-negative-slope}
for the precise statement and proof.

\section{Numerical results}
\label{sec:Numerical_Experiments}

We next ask whether the orderings predicted by
Theorems~\ref{thm:sparse-decreasing-slope}
and~\ref{prop:sparsity-monotonicity} persist beyond regression with an explicit regularization. The initialization scale is known to shape the implicit bias of
gradient-based training
\citep{gunasekar2018implicit,azulay2021implicit,kunin2024get,kumar2024grokking},
so it can be used to tune the geometry of the effective regularizer. We use it
first in two-layer diagonal linear networks (DLNs) \citep{azulay2021implicit},
where the implicit regularizer is known in closed form. We then use it in
transformers trained on modular arithmetic \citep{kumar2024grokking}.

\paragraph{Diagonal linear networks.}
We parameterize $\beta=u_+\circ v_+-u_-\circ v_-$ and run full-batch gradient
descent on $\cE(\beta)$ with initialization
$u_{\pm,i}(0)^2+v_{\pm,i}(0)^2=\sqrt k/2$. \citet{azulay2021implicit} show that when gradient flow converges to a zero-loss solution $\hat\beta(\infty)$, this solution minimizes an initialization-dependent regularizer:
\[
 \hat\beta(\infty)=\argmin_{X\beta=y}Q_k(\beta),
 \qquad
 Q_k(\beta)=\sum_{j=1}^p q_k(\beta_j),
 \qquad
 q_k(x)=\frac12\int_0^x
 \operatorname{arcsinh}(2z/\sqrt{k})dz.
\]
In words, as $k$ goes from $0$ to $\infty$, the initialization scale $k$ moves $Q_k$ continuously between $\ell_1$-like
and $\ell_2$-like geometries. %To quantify this, define the local homogeneity exponent $r_{\mathrm{eff}}(x;k):=xq_k'(x)/q_k(x)$. For fixed $x\neq0$, this exponent increases from $1$ to $2$ as $k$ ranges from $0$ to $\infty$. 
Furthermore, Figure \ref{fig:azulay_early_stop} in
Appendix~\ref{app:Experiment} shows that the performance of the solution $\hat \beta(t)$ obtained via early stopping
closely matches that of $Q_k$-regularized regression, upon tuning the regularization strength according to the stopping time $t$. Overall, this provides numerical evidence that training lies on a
regularization path of the type analyzed in Section~\ref{sec:mse-slope}. 

In the first two plots of Figure \ref{fig:dln_gain_monotone}, we take $B$ as in \eqref{eq:sparse-prior-family}, with $W$ a Rademacher random variable, and plot the residual generalization error ($\mathsf{MSE}(\hat\beta(t))-\mathsf{MSE}(\hat\beta(\infty))$) as a function of the training error $\cE(\beta(t))$. The experiment clearly shows that larger gains in MSE near interpolation are exhibited for small values of $k$ (left plot) and of $\rho$ (central plot). This is in agreement with Theorems~\ref{thm:sparse-decreasing-slope}
and~\ref{prop:sparsity-monotonicity}, which predict a sharper drop in MSE for $\ell_1$-like geometry (associated to small $k$) and sparse problems (associated to small $\rho$). 
%On the left panel, . The experiment clearly shows that 
%To measure the approach to $\beta(\infty)$, we use the training residual $\varepsilon(t):=\cE(\beta(t))$. Let $t_\varepsilon$ denote the first time at which $\varepsilon(t)\le\varepsilon$. We then define the
%generalization gain still remaining at residual $\varepsilon$ as
%\[
% \Delta\mathsf{MSE}(\varepsilon)
% :=\mathsf{MSE}(\beta(t_\varepsilon))-\mathsf{MSE}%(\beta(\infty)).
%\]
%This quantity is the dynamical analogue of $S_r$.
%Figure~\ref{fig:dln_gain_monotone} (Left) fixes $\rho$. There, 
%increasing $k$ flattens the curve, i.e., smaller values of $k$ lead to larger gains in MSE near interpolation. This is in agreement with the prediction of Theorem \ref{thm:sparse-decreasing-slope}, as small values of $k$ are associated to an $\ell_1$-like geometry. In the central panel, we plot the same quantity for different values of $k$. The experiment 
%moves the implicit bias toward $\ell_1$ and enlarges the late-stage drop in
%MSE, while increasing $k$ flattens the curves. This matches the predicted
%decrease of $S_r$ in $r$.
%Figure~\ref{fig:dln_gain_monotone} (Center) fixes $k$. There, increasing
%$\rho$ likewise weakens the gain, as predicted for less sparse signals.
%Together, these experiments show that the monotonicities predicted for
%explicit $\ell_r$ regularization carry over to diagonal-network training.

\begin{figure}[!t]
\centering
\includegraphics[width=\linewidth]{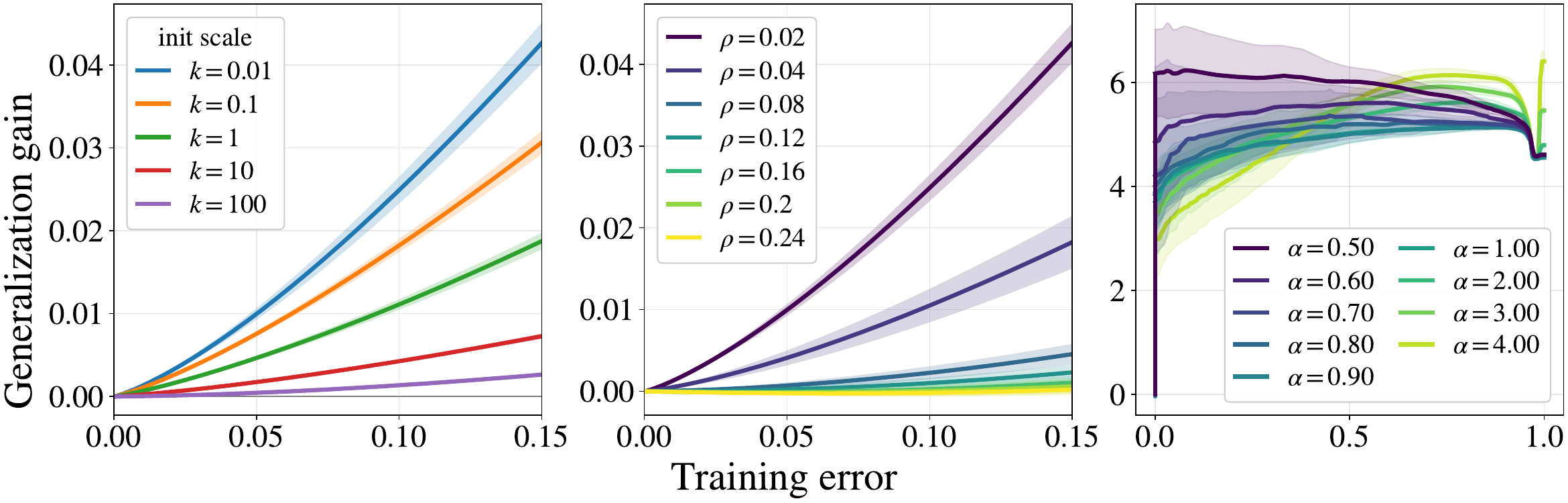}
\caption{Residual generalization error as a function of the training error.
\textbf{Diagonal linear networks} (Left, Center) are trained by gradient
descent on the square-root loss ($\delta=0.2$, $\sigma=0.03$, $n=2000$, $p=10^4$; mean $\pm$ one
standard deviation over $20$ seeds).
\textit{Left:} at fixed $\rho=0.02$, the gain in generalization near interpolation decreases as $k$ increases,
mirroring the decrease of $S_r$ in $r$
(Theorem~\ref{thm:sparse-decreasing-slope}).
\textit{Center:} at fixed $k=0.01$, the gain in generalization near interpolation decreases as $\rho$ increases
(Theorem~\ref{prop:sparsity-monotonicity}).
\textbf{Transformers} (Right) are trained by AdamW on the cross-entropy loss for a modular arithmetic task (mean $\pm$ one standard deviation over 5 seeds). We regulate the
output scale $\alpha$ controlling the learning regime and implicit bias. %Where available, the weight-decay-compensated run is used instead of the original ($\alpha$-lr recipe) run. 
As expected, the gain in generalization near interpolation decreases with $\alpha$.  For implementation details, see
Appendix~\ref{app:Experiment}.}
\vspace{-1em}
\label{fig:dln_gain_monotone}
\end{figure}

\paragraph{Transformer on a modular arithmetic task.}
We train a transformer on a modular arithmetic task following
\citet{power2022grokking,kumar2024grokking}, varying the output scale
\(\alpha\) to probe changes in the learning regime and implicit bias. %For\(\alpha<1\), we use weight-decay-compensated runs (labeled ``wd-comp'') inplace of the  original \(\alpha\)-dependent learning-rate recipe. Thisadjustment controls for the coupling between learning ratand effectiveweight decay, allowing us to better isolate the effect of output scale\citep{kumar2024grokking}. 
The right plot of Figure~\ref{fig:dln_gain_monotone} displays a decrease in late-stage
generalization gains as \(\alpha\) increases. This is qualitatively consistent with
the predicted ordering that stronger sparsity-promoting bias yields larger
gains near interpolation.

\section{Conclusions and perspectives}
We studied why predictors with nearly indistinguishable training errors can have drastically different generalization errors.
In the strongly overparameterized regime ($p\gg n$), we construct a family of norms whose MNIs exhibit a zero--one generalization transition as the norm geometry varies, while maintaining exact interpolation.
In the proportional regime ($p\asymp n$), we quantify the generalization gain for a given change in training error near interpolation, and we show that this gain increases as the target becomes sparser and the norm becomes more sparsity-promoting.
Experiments in diagonal networks and transformers trained on a modular arithmetic task support the predicted dependence on initialization and sparsity. %\mm{would be nice to end with future work which can be taken also from the limitation paragraph we had}

%%%%%%%%%%%
A natural next step is to quantify the correspondence observed between early-stopped diagonal networks and regression with their initialization-dependent implicit regularizer.
This would help connect our analysis of regularization paths to finite-time training dynamics, distinguishing the magnitude of the generalization gain from the time required to realize it.
For transformers, extending this perspective requires identifying the representation in which target sparsity is relevant and testing its role through controlled variations of task structure and output scale.

\subsection*{AI use statement}

We used generative AI tools to polish the writing, implement straightforward parts of the code, fill in a few proof details after we had identified the key methods and arguments, check proofs for correctness, and improve the coverage of related work. We did not use generative AI tools to develop the paper's core ideas, define the problem setting, or prove the main theoretical results. We take full responsibility for the final content of this work, including all text, claims, code, and other artifacts produced with the assistance of generative AI.

\subsection*{Acknowledgments}
This research was funded in whole or in part by the Austrian Science Fund (FWF) 10.55776/COE12. M. M.\ is also funded by the European Union (ERC, INF$^2$, project number 101161364). Views and opinions expressed are however those of the author(s) only and do not necessarily reflect those of the European Union or the European Research Council Executive Agency. Neither the European Union nor the granting authority can be held responsible for them. Gil Kur conducted the very initial part of the research during his visit to the IDEAL Institute, hosted by Lev Reyzin, which was supported by NSF ECCS-2217023.

\bibliographystyle{plainnat}
\bibliography{biblio}

\appendix

\section{Proofs of Lemma~\ref{thm:0-1grok} and Theorem~\ref{thm:cube-grokking}}
\label{app:zero-one-proof}
\label{app:lemma1-theorem1}
\subsection{Sketch of the proof of Lemma~\ref{thm:0-1grok}}
\label{ss:proof-sketch}

\paragraph{Calibrating the ground truth.}
Let $S=\operatorname{supp}(\beta_*)$ and $R:=\|y\|_2$.
Recall that $L=2\log(p/n)$ and that $k_*$ is defined in
\eqref{eq:signal}, so $\|\beta_*\|_1=\|\beta_*\|_2^2=k_*$.
We work under the assumptions of Section~\ref{sec:01} and keep
$\kappa>0$ fixed. Denote the objective by
\begin{equation}
 F_\kappa(b):=\|y-\X b\|_2+\tau_\kappa\|b\|_1,
 \qquad \tau_\kappa:=\kappa\sqrt{L/n}.
 \label{eq:objective}
\end{equation}
Define the symmetric Gaussian polytope
\begin{equation}
 Q_{n,p}:=\X B_1^p
 =\operatorname{conv}\{\pm\X_j:1\le j\le p\}
 \subset\R^n.
 \label{eq:sketch-polytope}
\end{equation}
This is almost surely the unit ball of a norm $\|\cdot\|_{Q_{n,p}}$,
since $\X$ has full row rank. The Minkowski functional of $Q_{n,p}$ satisfies
\begin{equation}
 \|z\|_{Q_{n,p}}
 =\inf_{\X b=z}\|b\|_1,
 \qquad z\in\R^n.
 \label{eq:sketch-norm-representation}
\end{equation}
Thus, $F_\kappa$ trades residual length against the minimum
$\ell_1$ budget needed to produce a prediction. The mean spherical norm of $Q_{n,p}$ is
\[
 M(Q_{n,p})
 :=\int_{S^{n-1}}\|\omega\|_{Q_{n,p}}\,
       \ud\sigma_{n-1}(\omega),
\]
where $\sigma_{n-1}$ is normalized surface measure on the unit sphere in $\R^n$.
By \eqref{eq:sketch-norm-representation}, this is the average
minimum $\ell_1$ budget required to interpolate a unit response.
In \citet{kurpathak2026gaussianity} it is proven that \footnote{The cited work
provides a sharper expansion. The coarser $O(1)$ remainder
is sufficient for our argument, since $\eta L\to\infty$. Also note that the assumptions $L^{-1}+L/n=o(\eta)$ and $\eta=o(1)$ imply
$L/n=o(\eta)$ and $n^{-1/2}=o(\eta)$.} 
\[
 \frac{n}{\bigl(\E_{\X}M(Q_{n,p})\bigr)^2}
 =L-\log L+O(1).
\]
This motives our deterministic
calibration $\nu^2=L-\log L+O(1)$. To see this, as $k_*=(1-\eta)n/\nu^2+O(1)$ and
$R/\sqrt{k_*}=1+O_{\Pp}(n^{-1/2})$, the calibration gives
\[
 \frac{R}{k_*}
 =\frac{\nu}{\sqrt n}
 \left[1+\frac\eta2
       +O\!\left(\eta^2+\frac Ln\right)
       +O_{\Pp}(n^{-1/2})\right].
\]
 Consequently,
\begin{equation}
 \frac{R}{k_*}
 =\frac{\nu}{\sqrt n}
       \left(1+\frac{\eta}{2}+o_{\Pp}(\eta)\right),
 \qquad
 \frac{F_\kappa(\beta_*)}{F_\kappa(0_p)}
 =\frac{\tau_\kappa k_*}{R}
 \xrightarrow{\Pp}\kappa.
 \label{eq:sketch-protrusion}
\end{equation}
Thus, the planted representation produces slightly more response
length per unit $\ell_1$ budget than the calibration scale
$\nu/\sqrt n$.
The objective comparison points to exact recovery for $\kappa<1$
and vanishing coefficient energy relative to the signal for
$\kappa>1$.
However, comparing the ground truth with zero is not sufficient:
competing predictors may mix planted and off-support coordinates.
We control all such competitors, working around $\beta_*$ below
the transition and around zero above it.

\paragraph{Exact recovery for $\kappa<1$.}
Let $s=(\beta_*)_S$, which is also the sign vector on the support.
The directional derivative of the $\ell_1$ norm at $\beta_*$ is
\begin{equation}
 D(h):=\langle s,h_S\rangle+\|h_{S^c}\|_1
 =\sup_{z\in\partial\|\beta_*\|_1}\langle z,h\rangle.
 \label{eq:directional-derivative}
\end{equation}
The subgradient inequality therefore gives
\[
 F_\kappa(\beta_*+h)-F_\kappa(\beta_*)
 \ge\|\X h\|_2+\tau_\kappa D(h).
\]
The dilated subdifferential $\nu\partial\|\beta_*\|_1$ is a
shifted coordinate cube, so its expected squared distance from
a standard Gaussian vector can be computed explicitly.
Specifically, \eqref{eq:lt-distance-moment} in the proof of
Lemma~\ref{lem:width-margin} gives
\[
 \E \operatorname{dist}(g_p,\nu\partial\|\beta_*\|_1)^2
 =k_*(1+\nu^2)+(p-k_*)h_0(\nu)
 \le n\left(1-\frac{\eta}{2}\right),
\]
for all sufficiently large dimensions, where $h_0$ is the Gaussian
excess function in Lemma~\ref{lem:h0}.
The final inequality follows from the sparsity calibration of $k_*$
and the bound $(p/n)h_0(\nu)=O(L^{-1})=o(\eta)$.
Thus, the expected squared distance lies below $n$ by a margin of
order $\eta n$.
By Jensen's inequality and \eqref{eq:lt-gaussian-length}, this yields
\[
 \E\|g_n\|_2
 -\E\operatorname{dist}(g_p,\nu\partial\|\beta_*\|_1)
 \gtrsim\eta\sqrt n.
\]
Applying Gordon's comparison with a deterministic offset
(the first probability bound in Lemma~\ref{lem:lt-comparison},
with $U=S^{n-1}$) to $\sqrt n\,\X$ with offset
$f=-\nu D$, as in the proof of Lemma~\ref{lem:width-margin},
therefore gives, with probability at least $1-e^{-cn\eta^2}$,
\begin{equation}
 \|\X h\|_2+\frac{\nu}{\sqrt n}D(h)
 \gtrsim\eta\|h\|_2
 \qquad\text{for every }h\in\R^p.
 \label{eq:sketch-offset-escape}
\end{equation}
Since $\alpha:=\kappa\sqrt L/\nu\to\kappa<1$, eventually
$0<\alpha<1$, and
\[
 \|\X h\|_2+\tau_\kappa D(h)
 =\alpha\left(\|\X h\|_2+\frac{\nu}{\sqrt n}D(h)\right)
    +(1-\alpha)\|\X h\|_2
 \gtrsim_\kappa\eta\|h\|_2.
\]
Hence, every nonzero perturbation increases the objective.
This proves unique exact recovery,
$\widehat\beta_{\kappa,1}=\beta_*$, establishing the
$\kappa<1$ regime of the lemma.

\paragraph{The null side: $\kappa>1$.}
Any improvement over zero must exploit unusually large initial
correlations.
Recall that $\tau_\kappa=\kappa\sqrt{L/n}$, and define
\begin{equation}
 q:=\X^{\mathsf T}\frac yR,\qquad
 \gamma:=\frac{1+\kappa}{2}\sqrt{L/n}<\tau_\kappa,\qquad
 Z_\gamma^2:=\sum_{j=1}^p(|q_j|-\gamma)_+^2.
 \label{eq:initial-correlations}
\end{equation}
Here $q$ records the initial feature--response correlations:
$q_j=\langle\X_j,y\rangle/R$ measures the alignment of feature $j$
with the response direction before fitting any coefficients.
By Cauchy--Schwarz,
\[
 \|y-\X b\|_2
 \ge\left\langle\frac yR,y-\X b\right\rangle
 =R-\langle q,b\rangle.
\]
Thus, $F_\kappa(b)\le R$ implies
$\tau_\kappa\|b\|_1\le\langle q,b\rangle$.
Clipping each coordinate of $q$ to $[-\gamma,\gamma]$ decomposes
$q$ into a vector with $\ell_\infty$ norm at most $\gamma$
and a remainder with $\ell_2$ norm $Z_\gamma$.
H\"older's and Cauchy--Schwarz inequalities therefore yield
\begin{equation}
 F_\kappa(b)\le R
 \quad\Longrightarrow\quad
 \tau_\kappa\|b\|_1\le\langle q,b\rangle
 \le\gamma\|b\|_1+Z_\gamma\|b\|_2.
 \label{eq:sketch-zero-plane}
\end{equation}
In the proof of Lemma~\ref{lem:initial-excess} below, Gaussianity
gives $Z_\gamma\lesssim\sqrt{k_*/n}$ with probability tending to one;
see \eqref{eq:lt-clipping}.
Subtracting $\gamma\|b\|_1$ in \eqref{eq:sketch-zero-plane}
and using $\tau_\kappa-\gamma=\frac{\kappa-1}{2}\sqrt{L/n}$ gives
\begin{equation}
 \|b\|_1
 \lesssim\frac{1}{\kappa-1}\sqrt{\frac{k_*}{L}}\,\|b\|_2
 \lesssim_\kappa\frac{k_*}{\sqrt n}\|b\|_2
 \qquad(F_\kappa(b)\le R).
 \label{eq:sketch-effective-sparsity}
\end{equation}
The last comparison uses $k_*L/n\to1$.
Thus, every nonzero vector with $F_\kappa(b)\le R$ has effective
sparsity $\|b\|_1^2/\|b\|_2^2$ bounded by
$O_\kappa(k_*^2/n)=O_\kappa(n/L^2)$.

The extra factor of $L$ below the scale $n/L$ is important:
by \eqref{eq:sketch-effective-sparsity},
\[
 \frac Ln\|b\|_1^2
 \lesssim_\kappa\frac{k_*}{n}\|b\|_2^2
 =o(\|b\|_2^2)
 \qquad(F_\kappa(b)\le R).
\]
Thus, the correction term in the global Euclidean lower bound
\eqref{eq:lt-global-lower}, as well as
$\tau_\kappa^2\|b\|_1^2$, can be absorbed into a fixed positive
multiple of $\|b\|_2^2$.
An effective-sparsity bound of only $O_\kappa(n/L)$ would give
corrections of order $\|b\|_2^2$, without ensuring that their
constants are small enough for absorption.

For every $b$ with $F_\kappa(b)\le R$, we have
$\|y-\X b\|_2\le R-\tau_\kappa\|b\|_1$, with a nonnegative
right-hand side.
Squaring and applying the clipping bound gives
\begin{align*}
 \|\X b\|_2^2-\tau_\kappa^2\|b\|_1^2
 &\le2R\bigl(\langle q,b\rangle-\tau_\kappa\|b\|_1\bigr)\\
 &\le2RZ_\gamma\|b\|_2
 \lesssim\frac{k_*}{\sqrt n}\|b\|_2,
\end{align*}
where we used $R\asymp\sqrt{k_*}$ on the same high-probability
event.
By \eqref{eq:sketch-effective-sparsity} and the global lower bound
in Lemma~\ref{lem:effective-rip}, for all sufficiently large $n,p$,
\[
 \|\X b\|_2^2
 \ge\left(\frac14-C_\kappa\frac{k_*}{n}\right)\|b\|_2^2
 \ge\frac18\|b\|_2^2,
 \qquad
 \tau_\kappa^2\|b\|_1^2
 \lesssim_\kappa\frac{k_*}{n}\|b\|_2^2,
\]
where the second estimate follows from
\eqref{eq:sketch-effective-sparsity} and $k_*L/n\to1$.
Since $k_*/n\to0$, the quadratic penalty term can be absorbed
into the Euclidean lower bound.
The squared comparison therefore implies
$\|b\|_2\lesssim_\kappa k_*/\sqrt n$, and
\eqref{eq:sketch-effective-sparsity} then gives
$\|b\|_1\lesssim_\kappa k_*^2/n$.
These bounds also imply
\[
 \langle q,b\rangle
 \le\gamma\|b\|_1+Z_\gamma\|b\|_2
 \lesssim_\kappa\frac{k_*^{3/2}}{n}
 \lesssim_\kappa R\frac{k_*}{n}.
\]
Returning to $\|y-\X b\|_2\ge R-\langle q,b\rangle$ and using
$\|y-\X b\|_2\le R$, we obtain, uniformly over
$F_\kappa(b)\le R$,
\begin{equation}
 \|b\|_2^2+\|b\|_1\lesssim_\kappa\frac{k_*^2}{n},
 \qquad
 1-C_\kappa\frac{k_*}{n}
 \le\frac{\|y-\X b\|_2}{R}\le1.
 \label{eq:sketch-sublevel-energy}
\end{equation}
Since $\|\beta_*\|_\infty=1$, the first bound also implies
\[
 \bigl|\mathsf{MSE}_s(b)-1\bigr|
 \le\frac{\|b\|_2^2+2\|b\|_1}{k_*}
 \lesssim_\kappa\frac{k_*}{n}.
\]
Every minimizer satisfies
$F_\kappa(\widehat\beta_{\kappa,1})\le F_\kappa(0_p)=R$.
Since
\[
 \frac{k_*}{n}\approx\frac1L
 =\frac1{2\log(p/n)}\longrightarrow0,
\]
both normalized errors differ from one by
$O_\kappa(L^{-1})$ on the same high-probability event.
Consequently,
\[
 \left(
 \mathsf{MSE}_s(\widehat\beta_{\kappa,1}),
 \mathcal E_s(\widehat\beta_{\kappa,1})
 \right)
 \xrightarrow{\Pp}(1,1),
\]
completing the proof.
\begin{remark}[Connection to weak recovery]
\label{R:WR}
The recovery part of our proof of Lemma~\ref{thm:0-1grok}
gives a short, quantitative application of the Gaussian-distance
approach to sparse recovery and generalized LASSO
\citep{chandrasekaran2012convex,almt2014,oymak2013squared}.
A direct calculation of the distance to the dilated $\ell_1$
subdifferential establishes recovery at the leading
Donoho--Tanner weak-recovery scale
$k_*\approx \tfrac{n}{2\log(p/n)}$ \citep{donoho2009}, without counting
the faces of the random polytope $Q_{n,p}$.

% The argument uses Gordon's classical Gaussian comparison
% \citep{gordon1985}, rather than its convex strengthening underlying
% CGMT \citep{thrampoulidis2015gaussian}, and yields a uniform
% linear growth bound for the regularized objective around the
% ground truth.
\end{remark}

\begin{remark}[Between geometric and CGMT approaches]\label{R:CGMT}
Our argument combines the geometric viewpoint on minimum-norm
interpolation \citep{kur2026new}
with classical Gaussian comparison.
Rather than deriving a full asymptotic characterization of the
optimizer through CGMT, we control the objective directly around
the ground truth and the zero predictor.
This makes explicit the geometric and distributional ingredients
responsible for the transition.
A natural direction is to determine whether the same zero--one
law holds for non-Gaussian designs, such as matrices with
independent Rademacher entries. Developing geometric or universality arguments that replace
our Gaussian-specific arguments would be of independent interest.
\end{remark}
\subsection{Sketch of the proof of Theorem~\ref{thm:cube-grokking}}
\label{ss:hybrid-proof-sketch}

We use the notation of the preceding sketch. The hybrid norm adds a
second way to fit the response: a component with small $\ell_\infty$
norm that completes the prediction without contributing appreciably
to its normalized coefficient energy. We first show that this forces
exact interpolation, and then recover the same zero--one transition
as in Lemma~\ref{thm:0-1grok}.

\paragraph{The cost of completing a prediction.}
The additional geometric input is that the image $\X B_\infty^p$
is approximately Euclidean. For the completion cost in
\eqref{eq:lt-completion-norm},
\[
 f(z)=\sqrt{\frac2\pi}\frac p{\sqrt n}
       \min_{\X v=z}\|v\|_\infty,
\]
Lemma~\ref{lem:lt-cube} gives, with probability at least
$1-2e^{-n/2}$, simultaneously for all $z\in\R^n$,
\begin{equation}
 (1-\epsilon_{n,p})\|z\|_2\le f(z)
 \le(1+\epsilon_{n,p})\|z\|_2,
 \qquad
 \|v\|_2\lesssim\sqrt{\frac np}\,\|z\|_2,
 \label{eq:sketch-hybrid-completion}
\end{equation}
where $\epsilon_{n,p}=C\sqrt{n/p}=o(k_*/n)$ and $v$ is any
minimum-$\ell_\infty$ completion of $z$.
Thus, the normalized cost of fitting a response is close to its
Euclidean length, while the required coefficient energy is small
because $p\gg n$.

\paragraph{Exact interpolation for every $\kappa$.}
The objective in \eqref{eq:cube-hybrid-estimator}, denoted by
$J_\kappa$ in \eqref{eq:lt-hybrid-objective}, is
\[
 J_\kappa(b)=\|y-\X b\|_2+\frac{\tau_\kappa}{2}N_\kappa(b).
\]
For any $b$ with nonzero residual $e=y-\X b$, choose a
minimum-$\ell_\infty$ completion $v_e$ satisfying $\X v_e=e$.
The scaling in $N_\kappa$ gives
$\tau_\kappa N_\kappa(v_e)\le f(e)$.
Since $b+v_e$ interpolates, the triangle inequality yields
\begin{equation}
 J_\kappa(b+v_e)-J_\kappa(b)
 \le-\|e\|_2+\frac12f(e)
 \le-\frac{1-\epsilon_{n,p}}2\|e\|_2<0.
 \label{eq:sketch-hybrid-exact-fit}
\end{equation}
Hence every minimizer interpolates exactly and therefore minimizes
$N_\kappa$ subject to $\X b=y$; see
\eqref{eq:lt-optimizer-identity}.

For an optimal decomposition $b=u+v$, fixing $u$ and minimizing over
$v$ subject to $\X v=y-\X u$ leaves the objective
$H_\kappa(u)=f(y-\X u)+\tau_\kappa\|u\|_1$ from
\eqref{eq:lt-reduced-objective}. In particular,
\begin{equation}
 \min_{\X b=y}N_\kappa(b)
 =\frac1{\tau_\kappa}\min_u H_\kappa(u),
 \qquad
 \widetilde\beta_\kappa=u+v,
 \label{eq:sketch-hybrid-reduction}
\end{equation}
where $u$ minimizes $H_\kappa$ and $v$ is a minimum-$\ell_\infty$
completion of $y-\X u$.
This correspondence holds for every minimizer by
\eqref{eq:lt-reduction}. It remains to control $u$; the bound in
\eqref{eq:sketch-hybrid-completion} then controls $v$.

\paragraph{Exact recovery for $\kappa<1$.}
For every perturbation $h$, the subgradient inequality and
\eqref{eq:sketch-hybrid-completion} give
\[
 H_\kappa(\beta_*+h)-H_\kappa(\beta_*)
 \ge(1-\epsilon_{n,p})\|\X h\|_2+\tau_\kappa D(h).
\]
Using $\alpha$ from the preceding sketch, the right-hand side equals
\[
 \alpha\left(\|\X h\|_2+\frac\nu{\sqrt n}D(h)\right)
 +(1-\epsilon_{n,p}-\alpha)\|\X h\|_2
 \gtrsim_\kappa\eta\|h\|_2.
\]
Here we used Lemma~\ref{lem:width-margin} and the fact that
$\alpha\to\kappa<1$ and $\epsilon_{n,p}\to0$.
Thus $u=\beta_*$ is the unique minimizer of $H_\kappa$.
Its residual is zero, so its minimum-$\ell_\infty$ completion is
$v=0$. Therefore $\widetilde\beta_\kappa=\beta_*$ uniquely, with
probability at least $1-e^{-cn\eta^2}-2e^{-n/2}$.

\paragraph{The null side: $\kappa>1$.}
We work on the intersection of the events in
Lemmas~\ref{lem:initial-excess}, \ref{lem:effective-rip},
and~\ref{lem:lt-cube}, which has probability tending to one.
As in the preceding sketch, $R\asymp\sqrt{k_*}$ and
$Z_\gamma\lesssim\sqrt{k_*/n}$ on this event.
For any $w$ with $H_\kappa(w)\le H_\kappa(0_p)=f(y)$,
the cube estimate gives
\[
 (1-\epsilon_{n,p})\|y-\X w\|_2+\tau_\kappa\|w\|_1
 \le(1+\epsilon_{n,p})R.
\]
Combining this with $\|y-\X w\|_2\ge R-\langle q,w\rangle$
and the clipping inequality yields
\[
 \bigl(\tau_\kappa-(1-\epsilon_{n,p})\gamma\bigr)\|w\|_1
 \le(1-\epsilon_{n,p})Z_\gamma\|w\|_2+2\epsilon_{n,p}R.
\]
Since the coefficient on the left is at least
$\tau_\kappa-\gamma$, it follows that
\begin{equation}
 \|w\|_1\lesssim_\kappa
 \frac{k_*}{\sqrt n}\|w\|_2+\epsilon_{n,p}k_*.
 \label{eq:sketch-hybrid-coefficients}
\end{equation}
Squaring the residual comparison and applying the global Euclidean
lower bound \eqref{eq:lt-global-lower}, as in
\eqref{eq:lt-hybrid-quadratic}, then gives
\begin{equation}
 \left(\frac14-C_\kappa\frac{k_*}{n}\right)\|w\|_2^2
 \le C_\kappa\frac{k_*}{\sqrt n}\|w\|_2
       +C_\kappa\epsilon_{n,p}k_*.
 \label{eq:sketch-hybrid-boundary}
\end{equation}
For $\|w\|_2=Mk_*/\sqrt n$, dividing by $k_*^2/n$ shows that
this is impossible for a sufficiently large fixed $M=M_\kappa$,
since $k_*/n\to0$ and $\epsilon_{n,p}n/k_*\to0$.
Hence $H_\kappa(w)>H_\kappa(0_p)$ on that sphere.

Convexity now confines every minimizer $u$ of $H_\kappa$ to the
ball it bounds. Indeed, otherwise the segment from zero to $u$
would meet the sphere at $tu$, with $t\in(0,1]$, and
\[
 H_\kappa(tu)
 \le(1-t)H_\kappa(0_p)+tH_\kappa(u)
 \le H_\kappa(0_p),
\]
a contradiction. Thus $\|u\|_2\lesssim_\kappa k_*/\sqrt n$,
and \eqref{eq:sketch-hybrid-coefficients} implies
\begin{equation}
 \|u\|_2^2+\|u\|_1\lesssim_\kappa\frac{k_*^2}{n}.
 \label{eq:sketch-hybrid-u-energy}
\end{equation}

\paragraph{Completing the prediction and the risk bound.}
For every minimizer $u$, comparison with zero also gives
$f(y-\X u)\le H_\kappa(u)\le f(y)$, and therefore
$\|y-\X u\|_2\lesssim R\asymp\sqrt{k_*}$.
The coefficient-energy estimate in
\eqref{eq:sketch-hybrid-completion} yields
\[
 \frac{\|v\|_2^2}{k_*}\lesssim\frac np.
\]
Using $\|\beta_*\|_\infty=1$ and $\|\beta_*\|_2=\sqrt{k_*}$,
we conclude that
\begin{align*}
 \bigl|\mathsf{MSE}_s(u+v)-1\bigr|
 &\le\frac{\|u+v\|_2^2+2\|u\|_1}{k_*}
       +\frac{2\|v\|_2}{\sqrt{k_*}}\\
 &\lesssim_\kappa\frac{k_*}{n}+\sqrt{\frac np}
 =O_\kappa(L^{-1}).
\end{align*}
Thus the completed predictor has asymptotically trivial normalized
risk above the transition, despite fitting the response exactly.
Together with exact recovery below the transition, this proves,
for every fixed $\kappa\ne1$ and every minimizer,
\[
 \left(\mathsf{MSE}_s(\widetilde\beta_\kappa),
       \mathcal E_s(\widetilde\beta_\kappa)\right)
 \xrightarrow{\Pp}\left(\mathbf1_{\{\kappa>1\}},0\right).
\]

\subsection{Preparation for the proofs}
\label{subsec:lt-setup}

We use the notation of Section~\ref{sec:01} and the preceding sketch.
We first prove Lemma~\ref{thm:0-1grok}, bounding the coefficients of
all vectors whose objective is no larger than at zero.
For Theorem~\ref{thm:cube-grokking}, we use the same clipping and
Euclidean estimates, together with convexity, to control the
$\ell_1$ component. The cube-completion estimate controls the
$\ell_\infty$ component.

The dimensional assumptions and \eqref{eq:signal} imply
\begin{gather}
 k_*\approx\frac nL\longrightarrow\infty,\qquad
 \frac{k_*}{n}\approx\frac1L\longrightarrow0,\qquad
 n\eta^2\longrightarrow\infty,\notag\\
 \sqrt{\frac np}=e^{-L/4}=o(k_*/n).
 \label{eq:lt-scales}
\end{gather}
Indeed, $L\to\infty$ and $L=o(n)$ imply $n/L\to\infty$,
so the floor in $k_*$ has relative size $O(L/n)=o(1)$.
Also, $L^{-1}+L/n\ge2n^{-1/2}$, so the condition
$L^{-1}+L/n=o(\eta)$ yields $\eta\sqrt n\to\infty$.
We will also use $\eta L\to\infty$ and $L/n=o(\eta)$.

Moreover, the Gaussian model implies
\begin{equation}
 y\sim N(0,(k_*/n)I_n),\qquad
 \frac{R^2}{k_*}\overset{\mathrm{law}}=\frac{\chi_n^2}{n},\qquad
 \frac R{\sqrt{k_*}}\xrightarrow{\Pp}1.
 \label{eq:lt-response}
\end{equation}
For the hybrid norm in \eqref{eq:cube-hybrid-norm}, introduce the
abbreviations
\begin{equation}
 a:=\sqrt{\frac2\pi}\frac p{\sqrt n},\qquad
 m_\kappa:=\frac p\kappa\sqrt{\frac{2}{\pi L}},\qquad
 \tau_\kappa m_\kappa=a,
 \label{eq:lt-hybrid}
\end{equation}
and denote the objective in \eqref{eq:cube-hybrid-estimator} by
\begin{equation}
 J_\kappa(b):=\|y-\X b\|_2+\frac{\tau_\kappa}{2}N_\kappa(b).
 \label{eq:lt-hybrid-objective}
\end{equation}
We prove the limits in \eqref{eq:zero-one} and \eqref{eq:cube-zero-one}
for every minimizer. For the hybrid estimator, we also show that
$\X\widetilde\beta_\kappa=y$ exactly on a high-probability event.
All constants in this appendix are positive and independent of $n,p$;
subscripts indicate permitted dependence on fixed parameters.
The conclusions are for each fixed $\kappa\ne1$.

\subsection{Preliminaries}
\label{sec:preliminaries}
\label{app:gaussian-preliminaries}
We write $B_r^d$ for the unit $\ell_r$ ball and $S^{d-1}$ for the
Euclidean unit sphere. We use $\operatorname{dist}$ for Euclidean distance and
$\|A\|_{\mathrm F}:=(\sum_{i,j}A_{ij}^2)^{1/2}$ for the Frobenius norm.
For a nonempty compact set $T\subset\R^d$, define its Gaussian
width by
\[
 w(T):=\E\sup_{v\in T}\langle g_d,v\rangle,
 \qquad g_d\sim N(0,I_d).
\]
We use Gaussian concentration: for a standard Gaussian vector $Z$,
a $K$-Lipschitz function $f$, and $t>0$, each of
$\Pp\{f(Z)-\E f(Z)>t\}$ and
$\Pp\{f(Z)-\E f(Z)<-t\}$ is at most
$e^{-t^2/(2K^2)}$. Also, $\operatorname{Var}(f(Z))\le K^2$.
These imply
\begin{equation}
 \sqrt{n-1}\le \E\|g_n\|_2\le\sqrt n,
 \label{eq:lt-gaussian-length}
\end{equation}
since $\E\|g_n\|_2^2=n$ and the norm is one-Lipschitz.
See \citet{vershynin2018highdimensional} for these Gaussian facts.
\begin{lemma}[Gordon's comparison theorem]
\label{lem:lt-comparison}
Let $A\in\R^{m\times d}$ have independent $N(0,1)$ entries,
let $T\subset S^{d-1}$ and $U\subset S^{m-1}$ be deterministic,
nonempty compact sets, and let $f:T\to\R$ be deterministic
and continuous. Then
\begin{align}
 \E\inf_{v\in T}\sup_{u\in U}
 \{\langle u,Av\rangle-f(v)\}
 &\ge w(U)-\E\sup_{v\in T}
 \{\langle g_d,v\rangle+f(v)\},
 \label{eq:lt-gordon-general}\\
 \E\sup_{v\in T,\,u\in U}\langle u,Av\rangle
 &\le w(T)+w(U).
 \label{eq:lt-sup-comparison}
\end{align}
Moreover, for every $t>0$,
\begin{align*}
 \Pp\!\left\{
 \inf_{v\in T}\sup_{u\in U}
 \{\langle u,Av\rangle-f(v)\}
 <w(U)-\E\sup_{v\in T}
 \{\langle g_d,v\rangle+f(v)\}-t
 \right\}
 &\le e^{-t^2/2},\\
 \Pp\!\left\{
 \sup_{v\in T,\,u\in U}\langle u,Av\rangle
 >w(T)+w(U)+t
 \right\}
 &\le e^{-t^2/2}.
\end{align*}
\end{lemma}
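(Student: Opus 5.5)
The plan is to derive all four assertions of Lemma~\ref{lem:lt-comparison} from the classical Gaussian min--max comparison (Gordon) and the Sudakov--Fernique inequality. Expectations come from these comparisons for processes indexed by $T\times U$, and the tail bounds come from Gaussian concentration in $A$. I first reduce to finite index sets: $T$ and $U$ are compact, $f$ is continuous on $T$, and $(v,u)\mapsto\langle u,Av\rangle$ is almost surely continuous. Hence the inf--sup and sup functionals over finite $\varepsilon$-nets converge to those over $T\times U$ as the nets are refined. The dominating variable $\|A\|_{\mathrm F}$ is integrable, so expectations pass to the limit by dominated convergence.

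For \eqref{eq:lt-gordon-general}, fix finite $T,U$ and introduce $\gamma\sim N(0,1)$, $g_d\sim N(0,I_d)$ and $h_m\sim N(0,I_m)$, all independent of $A$ and of each other. Define
\[
Y_{v,u}:=\langle u,Av\rangle+\gamma,\qquad X_{v,u}:=\langle g_d,v\rangle+\langle h_m,u\rangle .
\]
Since $\|u\|_2=\|v\|_2=1$, both processes have variance $2$. Within a row ($v=v'$) the two covariances both equal $1+\langle u,u'\rangle$. Across rows,
\[
\E X_{v,u}X_{v',u'}-\E Y_{v,u}Y_{v',u'}=-(1-\langle u,u'\rangle)(1-\langle v,v'\rangle)\le0 .
\]
These are exactly the hypotheses of Gordon's theorem with rows indexed by $v$. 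That theorem remains valid after adding deterministic row-dependent shifts $-f(v)$, since they cancel in every covariance and difference computation. It therefore gives
\[
\E\min_v\max_u\{Y_{v,u}-f(v)\}\ge\E\min_v\max_u\{X_{v,u}-f(v)\}.
\]
On the left, $\gamma$ does not depend on $(v,u)$ and has mean zero, so it drops out. On the right, the $u$-dependence separates: the maximum over $u$ contributes $\sup_u\langle h_m,u\rangle$, whose expectation is $w(U)$. The remaining term is
\[
\E\inf_v\{\langle g_d,v\rangle-f(v)\}=-\E\sup_v\{\langle -g_d,v\rangle+f(v)\}=-\E\sup_v\{\langle g_d,v\rangle+f(v)\},
\]
using the symmetry $g_d\overset{\mathrm{law}}=-g_d$. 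This yields \eqref{eq:lt-gordon-general}.

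For \eqref{eq:lt-sup-comparison}, I apply Sudakov--Fernique to $Z_{v,u}:=\langle u,Av\rangle$ and $X_{v,u}$. A direct computation gives
\[
\E(X_{v,u}-X_{v',u'})^2-\E(Z_{v,u}-Z_{v',u'})^2=2(1-\langle u,u'\rangle)(1-\langle v,v'\rangle)\ge0 .
\]
Hence $\E\sup Z\le\E\sup X=w(T)+w(U)$.

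For the tail bounds, regard $A$ as a standard Gaussian vector in $\R^{md}$. The map
\[
A\mapsto\inf_{v\in T}\sup_{u\in U}\{\langle u,Av\rangle-f(v)\}
\]
is $1$-Lipschitz in Frobenius norm, because $|\langle u,(A-A')v\rangle|\le\|A-A'\|_{\mathrm F}$ for unit $u,v$, and taking inf/sup preserves Lipschitz constants. The same holds for the sup functional. The one-sided Gaussian concentration inequality from Section~\ref{sec:preliminaries} then bounds each deviation below (respectively above) the mean by $t$ with probability at most $e^{-t^2/2}$. Combining this with the expectation bounds gives both probability statements.

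The main obstacle is bookkeeping rather than depth. One must get Gordon's inequality in the correct direction, which is why $\gamma$ is attached to the bilinear process and not to $X$. One must also check that the offset $f$ is allowed and is converted into $\E\sup_v\{\langle g_d,v\rangle+f(v)\}$ only through the symmetry of $g_d$. The passage from finite nets to compact $T,U$ is routine given the continuity of $f$.
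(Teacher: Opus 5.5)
Your proposal is correct and follows the paper's proof essentially step for step. It reduces to finite nets inside $T$ and $U$ with dominated convergence, applies Gordon's comparison to $\langle u,Av\rangle+z$ versus a decoupled process $\pm\langle g_d,v\rangle+\langle h_m,u\rangle$ with thresholds $c+f(v)$, and uses Gaussian concentration for the $1$-Lipschitz extrema. The differences are cosmetic: you take $+\langle g_d,v\rangle$ and invoke the symmetry of $g_d$ (the paper puts $-\langle g_d,v\rangle$ into $Q$ directly), and you obtain \eqref{eq:lt-sup-comparison} from Sudakov--Fernique on increments rather than the paper's single-row Slepian comparison with the auxiliary $z$. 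For the net limit, the dominating variable should also include $\|f\|_\infty$ (and $\|g_d\|_2+\|h_m\|_2$ on the auxiliary side), not only $\|A\|_{\mathrm F}$.
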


\begin{lemma}
\label{lem:h0}
Let $h_0(t):=\E(|G|-t)_+^2$ for $G\sim N(0,1)$, and write
$\phi(t)=(2\pi)^{-1/2}e^{-t^2/2}$. For $t>0$,
\begin{equation}
 h_0(t)\le\frac{4\phi(t)}{t^3}.
 \label{eq:lt-excess-bound}
\end{equation}
Under the stated assumptions,
\begin{equation}
 \frac pn h_0(\nu)=O(L^{-1})=o(\eta),\qquad
 \frac{p}{k_*} h_0(\mu\sqrt L)
 \le C_\mu\frac{e^{-(\mu^2-1)L/2}}{\sqrt L}\longrightarrow0
 \quad(\mu>1\text{ fixed}).
 \label{eq:lt-excess-scales}
\end{equation}
\end{lemma}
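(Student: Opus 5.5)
We prove the two claims of Lemma~\ref{lem:h0} in turn. The tail bound \eqref{eq:lt-excess-bound} is a calculus estimate. The scale statements \eqref{eq:lt-excess-scales} follow by substituting the calibrations $\nu^2=L-\log L+O(1)$, $p/n=e^{L/2}$, and $k_*\approx n/L$ from \eqref{eq:signal} and \eqref{eq:lt-scales}.

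\emph{The bound \eqref{eq:lt-excess-bound}.} By symmetry,
\[
h_0(t)=2\int_t^\infty (x-t)^2\phi(x)\ud x .
\]
Substitute $x=t+s$ and use $\phi(t+s)=\phi(t)e^{-ts-s^2/2}\le\phi(t)e^{-ts}$. This gives
\[
h_0(t)\le 2\phi(t)\int_0^\infty s^2e^{-ts}\ud s=\frac{4\phi(t)}{t^3}.
\]
The bound holds for every $t>0$.

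\emph{First scale.} Apply the bound at $t=\nu$, where $\nu^2=L-\log L+O(1)$. Then
\[
\phi(\nu)=(2\pi)^{-1/2}e^{-L/2}\,e^{(\log L)/2}\,e^{O(1)}\asymp\sqrt L\,e^{-L/2},
\qquad \nu^3\asymp L^{3/2}.
\]
Since $p/n=e^{L/2}$ by definition of $L$,
\[
\frac pn h_0(\nu)\lesssim e^{L/2}\cdot\frac{\sqrt L\,e^{-L/2}}{L^{3/2}}=O(L^{-1}).
\]
The assumption $L^{-1}=o(\eta)$ in \eqref{eq:signal} then gives $O(L^{-1})=o(\eta)$.

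\emph{Second scale.} Apply the bound at $t=\mu\sqrt L$ with $\mu>1$ fixed. Use $p/k_*\asymp (p/n)L=e^{L/2}L$, which holds because $k_*\approx n/L$ by \eqref{eq:lt-scales}. Then
\[
\frac p{k_*}h_0(\mu\sqrt L)\lesssim e^{L/2}L\cdot\frac{e^{-\mu^2L/2}}{\mu^3L^{3/2}}=C_\mu\frac{e^{-(\mu^2-1)L/2}}{\sqrt L}.
\]
The right-hand side tends to $0$ because $\mu>1$ and $L\to\infty$.

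The only delicate point is the constant bookkeeping in the first scale. The $-\log L$ correction in $\nu^2$ contributes exactly a factor $\sqrt L$ to $\phi(\nu)$. This factor cancels one power against $\nu^3$, leaving $L^{-1}$ rather than something larger. Any $O(1)$ error in $\nu^2$ only changes the constant.
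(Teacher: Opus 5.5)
Your proposal is correct and follows essentially the same route as the paper. You use the substitution $x=t+s$ and drop the factor $e^{-s^2/2}\le 1$ to obtain $4\phi(t)/t^3$, then substitute $p/n=e^{L/2}$, $e^{-\nu^2/2}\lesssim\sqrt L\,e^{-L/2}$, $\nu\asymp\sqrt L$, and $k_*\asymp n/L$ to get the two scale estimates.
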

\begin{lemma}[Width of effectively sparse vectors]
\label{lem:compressible-width}
For integers $1\le\ell\le p$, set
$T_\ell:=\{v\in S^{p-1}:\|v\|_1\le\sqrt\ell\}$. Then
\begin{equation}
 w(T_\ell)\le C\sqrt{\ell\log(ep/\ell)}.
 \label{eq:lt-width}
\end{equation}
\end{lemma}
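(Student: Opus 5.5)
The plan is a direct deterministic bound on $\langle g,v\rangle$ for $v\in T_\ell$, followed by an estimate of the expected size of the $\ell$ largest coordinates of a Gaussian vector; no covering argument is needed. Fix $g=g_p\sim N(0,I_p)$ and let $I\subset\{1,\dots,p\}$ be the indices of the $\ell$ coordinates of largest magnitude. Write $g^{(1)}:=g_I$ and $g^{(2)}:=g_{I^c}$, and let $m:=\min_{i\in I}|g_i|$, so that $\|g^{(2)}\|_\infty\le m$.

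\emph{Deterministic step.} For any $v\in T_\ell$, Cauchy--Schwarz on $I$ and H\"older on $I^c$ give
\[
 \langle g,v\rangle\le\|g^{(1)}\|_2\|v\|_2+\|g^{(2)}\|_\infty\|v\|_1
 \le\|g^{(1)}\|_2+m\sqrt\ell .
\]
Since $m^2\le\frac1\ell\sum_{i\in I}g_i^2=\|g^{(1)}\|_2^2/\ell$, we have $m\sqrt\ell\le\|g^{(1)}\|_2$. Hence $\sup_{v\in T_\ell}\langle g,v\rangle\le 2\|g^{(1)}\|_2$. Taking expectations and using Jensen,
\[
 w(T_\ell)\le 2\left(\E\sum_{i\in I}g_i^2\right)^{1/2}.
\]

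\emph{Probabilistic step.} This is the only step requiring any care. For any threshold $t>0$, pointwise
\[
 \sum_{i\in I}g_i^2\le \ell t^2+\sum_{j=1}^p(g_j^2-t^2)_+ .
\]
This holds because each of the $\ell$ terms on the left is at most $t^2+(g_i^2-t^2)_+$. A standard Gaussian tail computation gives
\[
 \E(G^2-t^2)_+=2\int_t^\infty s\,\Pp(|G|>s)\ud s\le C e^{-t^2/2}
\]
for $t\ge1$, using $\Pp(|G|>s)\le e^{-s^2/2}$ (the constant can alternatively be obtained from bounds in the spirit of Lemma~\ref{lem:h0}). Choosing $t^2=2\log(ep/\ell)\ge2$ makes $p\,\E(G^2-t^2)_+\le Cp\cdot\ell/(ep)\le C\ell$. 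Therefore
\[
 \E\sum_{i\in I}g_i^2\le 2\ell\log(ep/\ell)+C\ell\le C'\ell\log(ep/\ell),
\]
since $\log(ep/\ell)\ge1$.

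Combining the two steps yields $w(T_\ell)\le C\sqrt{\ell\log(ep/\ell)}$, which is \eqref{eq:lt-width}.

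Some edge cases need checking. When $\ell$ is comparable to $p$ the bound is at least of order $\sqrt p$, and this is trivially consistent with $w(T_\ell)\le w(S^{p-1})\le\sqrt p$. If one wants $T_\ell$ compact and nonempty, note that $\ell\ge1$ guarantees it contains the coordinate vectors. The main point of care is the bookkeeping of the tail integral constant; everything else is elementary.
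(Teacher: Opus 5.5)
Your proof is correct. It reaches the same intermediate bound as the paper, $\sup_{v\in T_\ell}\langle g_p,v\rangle\le 2\max_{|I|=\ell}\|(g_p)_I\|_2$ (your top-$\ell$ index set of $g$ attains this maximum), but it gets there differently at both stages.

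\emph{Deterministic reduction.} The paper sorts the coordinates of $v$ and uses the shelling bound $\|v_{I_j}\|_2\le\|v_{I_{j-1}}\|_1/\sqrt\ell$, which gives $\sum_j\|v_{I_j}\|_2\le 2$. You instead split $g$: its top $\ell$ coordinates are paired with $\|v\|_2$ by Cauchy--Schwarz, and the remainder with $\|v\|_1$ by H\"older, using $m\sqrt\ell\le\|g_I\|_2$. This dual split is shorter and never orders $v$.

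\emph{Probabilistic step.} The paper bounds $\E e^{V^2/4}\le\binom p\ell 2^{\ell/2}$ by a union bound over all $\ell$-subsets and then applies Jensen. You use the pointwise inequality $\sum_{i\in I}g_i^2\le\ell t^2+\sum_j(g_j^2-t^2)_+$ and a one-dimensional tail integral. This avoids the combinatorial count and yields explicit constants. Since $\Pp(|G|>s)\le e^{-s^2/2}$ holds for all $s\ge0$, you in fact get $\E(G^2-t^2)_+\le 2e^{-t^2/2}$ for every $t\ge0$, and hence $\E\|g_I\|_2^2\le 2\ell\log(ep/\ell)+2\ell/e$. The restriction $t\ge1$ is not needed.

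The paper's MGF route is the standard template and also gives exponential tail bounds on $V$ for free. Yours is more elementary and fits naturally with the Gaussian excess function $h_0$ of Lemma~\ref{lem:h0} used elsewhere in the appendix.
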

\begin{lemma}
\label{lem:effective-rip}
Under the dimensional assumptions above, suppose that the rows of
$\sqrt n\,\X$ are independent, centered, isotropic sub-Gaussian
random vectors with sub-Gaussian norms bounded by a fixed $K$, namely
$\sup_{i\le n}\sup_{v\in S^{p-1}}
\|\sqrt n\,\langle\X_{i,:},v\rangle\|_{\psi_2}\le K$.
Then there exist constants $c_K,C_K>0$, depending only on $K$, such that,
for all sufficiently large dimensions, with probability at least
$1-e^{-c_K n}$, simultaneously for every $b\in\R^p$,
\begin{equation}
 \|\X b\|_2^2
 \ge \frac14\|b\|_2^2
      -C_K\frac Ln\|b\|_1^2.
 \label{eq:lt-global-lower}
\end{equation}
\end{lemma}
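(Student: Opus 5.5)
The plan is a standard "RIP on effectively sparse vectors, extended homogeneously" argument. Both sides of \eqref{eq:lt-global-lower} are $2$-homogeneous in $b$, so it suffices to treat $b\in S^{p-1}$. Then \eqref{eq:lt-global-lower} reads $\|\X b\|_2^2\ge \tfrac14-C_K\tfrac Ln\|b\|_1^2$. I would split $S^{p-1}$ according to effective sparsity. Fix $\ell:=\lfloor c_0 n/L\rfloor$ for a small constant $c_0=c_0(K)$, and consider two cases.

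\emph{Case 1: $\|b\|_1^2>\ell$.} Here $\tfrac Ln\|b\|_1^2\gtrsim c_0$, so choosing $C_K\ge 1/(4c_0)$ (up to the floor, harmless since $n/L\to\infty$) makes the right-hand side nonpositive. The inequality is then trivial.

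\emph{Case 2: $b\in T_\ell=\{v\in S^{p-1}:\|v\|_1\le\sqrt\ell\}$.} Here I need the uniform lower bound $\inf_{v\in T_\ell}\|\sqrt n\,\X v\|_2\ge \tfrac12\sqrt n$ with probability $1-e^{-c_Kn}$. For sub-Gaussian isotropic rows I would invoke the matrix deviation inequality (Liaw–Mehrabian–Plan–Vershynin; see \citet{vershynin2018highdimensional}):
\[
\sup_{v\in T_\ell}\bigl|\|\sqrt n\,\X v\|_2-\sqrt n\bigr|\le C K^2\bigl(w(T_\ell)+u\bigr)
\]
with probability $1-e^{-u^2}$. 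By Lemma~\ref{lem:compressible-width}, $w(T_\ell)\le C\sqrt{\ell\log(ep/\ell)}$. Since $\ell\approx c_0n/L$ and $\log(ep/\ell)=\log(p/n)+\log(eL/c_0)\le L$ for large dimensions (as $\log L=o(L)$), this gives $w(T_\ell)\le C\sqrt{c_0 n}$. Taking $u=c\sqrt n$ and $c_0,c$ small enough relative to $K$ yields $\|\sqrt n\,\X v\|_2\ge \tfrac12\sqrt n$, i.e.\ $\|\X v\|_2^2\ge\tfrac14$, on $T_\ell$, with probability $1-e^{-c_K n}$. In the Gaussian case one can instead use Gordon's inequality \eqref{eq:lt-gordon-general} with $f=0$, $U=S^{n-1}$ and $T=T_\ell$, together with the concentration bound of Lemma~\ref{lem:lt-comparison}; this gives the same conclusion with $K$ absolute.

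Combining the two cases gives \eqref{eq:lt-global-lower} simultaneously for all $b$ on this event. The main obstacle is the bookkeeping of the logarithm: one must check that $\ell\log(ep/\ell)\lesssim c_0 n$ rather than $c_0 n\log L/L$ or worse. This uses $\log(p/\ell)=\tfrac L2+\log(L/c_0)\le L$, so the choice $\ell\asymp n/L$ exactly matches the factor $L/n$ in the penalty. If the uniform sub-Gaussian matrix deviation bound is not available, I would fall back on the Gaussian Gordon route, which is all that is needed in the paper's model, and note that the sub-Gaussian version follows by the same width estimate.
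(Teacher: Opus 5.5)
Your proposal is correct and follows the paper's proof essentially step for step. The shared steps are the choice $\ell\asymp n/L$, the width bound from Lemma~\ref{lem:compressible-width} with $\log(ep/\ell)=L/2+O(\log L)$, the Liaw--Mehrabian--Plan--Vershynin matrix deviation inequality with deviation parameter of order $\sqrt n$, and the trivial treatment, by homogeneity, of vectors with $\|b\|_1^2>\ell\|b\|_2^2$. Your Gordon-based fallback would only cover the Gaussian case, but your main route already handles the sub-Gaussian setting of the statement.
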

\subsection{Proof of Lemma~\ref{thm:0-1grok}}
\label{subsec:lt-lemma1}
\subsubsection{Recovery below the transition}
\label{sec:recovery}
The subdifferential appearing in \eqref{eq:directional-derivative} is
\begin{equation}
 \partial\|\beta_*\|_1
 =\{z\in\R^p:z_S=s,\ \|z_{S^c}\|_\infty\le1\}.
 \label{eq:lt-directional}
\end{equation}
The subgradient inequality gives
$\|\beta_*+h\|_1-k_*\ge D(h)$ for every $h \in \R^{p}$.

\begin{lemma}[Weighted recovery margin]
\label{lem:width-margin}
For all sufficiently large dimensions, with probability at least
$1-e^{-c n\eta^2}$, simultaneously for every $h\in\R^p$,
\begin{equation}
 \|\X h\|_2+\frac\nu{\sqrt n}D(h)
 \ge c\eta\|h\|_2.
 \label{eq:lt-weighted-margin}
\end{equation}
\end{lemma}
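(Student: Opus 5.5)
We prove Lemma~\ref{lem:width-margin} by applying Gordon's comparison theorem (Lemma~\ref{lem:lt-comparison}) to $A=\sqrt n\,\X$, which has independent $N(0,1)$ entries. We take $U=S^{n-1}$, $T=S^{p-1}$ and the deterministic continuous offset $f(v)=-\nu D(v)$. By homogeneity of both sides of \eqref{eq:lt-weighted-margin} in $h$, it suffices to prove the bound for $h=v\in S^{p-1}$. For such $v$ we have $\sup_{u\in S^{n-1}}\{\langle u,Av\rangle+\nu D(v)\}=\sqrt n\bigl(\|\X v\|_2+\tfrac{\nu}{\sqrt n}D(v)\bigr)$. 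The probability bound in Lemma~\ref{lem:lt-comparison} then gives, with probability at least $1-e^{-t^2/2}$,
\[
 \inf_{v\in S^{p-1}}\sqrt n\Bigl(\|\X v\|_2+\tfrac{\nu}{\sqrt n}D(v)\Bigr)
 \ge \E\|g_n\|_2-\E\sup_{v\in S^{p-1}}\{\langle g_p,v\rangle-\nu D(v)\}-t .
\]
Here $w(S^{n-1})=\E\|g_n\|_2\ge\sqrt{n-1}$ by \eqref{eq:lt-gaussian-length}.

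The next step identifies the supremum as a distance. By \eqref{eq:directional-derivative}, $\nu D(v)=\sup_{z\in\nu\partial\|\beta_*\|_1}\langle z,v\rangle$, so
\[
\sup_{\|v\|_2=1}\{\langle g,v\rangle-\nu D(v)\}=\sup_{\|v\|_2=1}\inf_{z}\langle g-z,v\rangle .
\]
We bound this above by $\sup_{\|v\|_2\le1}\inf_z\langle g-z,v\rangle$. By minimax, which applies because the set $\nu\partial\|\beta_*\|_1$ in \eqref{eq:lt-directional} is convex and the ball is compact, this equals $\inf_z\|g-z\|_2=\operatorname{dist}(g_p,\nu\partial\|\beta_*\|_1)$. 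We then compute the expectation coordinatewise. On $S$ the set is the single point $\nu s$, contributing $k_*(1+\nu^2)$. On $S^c$ it is the interval $[-\nu,\nu]$, contributing $(p-k_*)h_0(\nu)$. Hence $\E\operatorname{dist}^2=k_*(1+\nu^2)+(p-k_*)h_0(\nu)$.

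The main step is to show that this quantity is at most $n(1-\eta/2)$. Since $k_*\le(1-\eta)n/\nu^2$, we get $k_*(1+\nu^2)\le(1-\eta)n(1+\nu^{-2})=(1-\eta)n+O(n/L)$. By Lemma~\ref{lem:h0}, $ph_0(\nu)=n\,O(L^{-1})$. Both error terms are $o(\eta n)$ because $L^{-1}=o(\eta)$. This calibration, which needs $\nu^2\asymp L$ with enough precision that the $h_0(\nu)$ tail is $O(n/(pL))$, is the delicate point; it is exactly what Lemma~\ref{lem:h0} supplies via $\nu^2=L-\log L+O(1)$ and $h_0(\nu)\le4\phi(\nu)/\nu^3$. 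Jensen's inequality then gives $\E\operatorname{dist}\le\sqrt{n(1-\eta/2)}\le\sqrt n(1-\eta/4)$. Combined with $\E\|g_n\|_2\ge\sqrt{n-1}\ge\sqrt n-1$, the gap is at least $\eta\sqrt n/4-1\ge\eta\sqrt n/8$, since $\eta\sqrt n\to\infty$ by \eqref{eq:lt-scales}.

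Finally, we choose $t=\eta\sqrt n/16$. With probability at least $1-e^{-n\eta^2/512}$, we get $\sqrt n(\|\X v\|_2+\tfrac{\nu}{\sqrt n}D(v))\ge\eta\sqrt n/16$ for all $v\in S^{p-1}$. Dividing by $\sqrt n$ and rescaling yields \eqref{eq:lt-weighted-margin} with $c=1/16$.

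One technical point is that Lemma~\ref{lem:lt-comparison} is stated for $\inf_{v\in T}\sup_u\{\langle u,Av\rangle-f(v)\}$ with $f$ continuous on $T$. The function $D$ is a finite supremum of linear functions of $v$ over a compact set, so it is continuous, and the required form holds directly with $f=-\nu D$.
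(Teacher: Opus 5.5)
Your proposal is correct and follows essentially the same route as the paper: Gordon's comparison with $A=\sqrt n\,\X$, $T=S^{p-1}$, $U=S^{n-1}$ and $f=-\nu D$, the coordinatewise computation $\E\operatorname{dist}^2(g_p,\nu\partial\|\beta_*\|_1)=k_*(1+\nu^2)+(p-k_*)h_0(\nu)\le n(1-\eta/2)$, Jensen's inequality, and a deviation $t$ proportional to the gap $\eta\sqrt n$. The only cosmetic difference is how you bound the supremum by the distance: you invoke minimax, whereas the paper plugs in the projection $\nu z_g$ directly; in fact only the trivial weak-duality inequality $\sup\inf\le\inf\sup$ is needed at that step.
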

\begin{proof}
The squared distance to the dilated subdifferential has expectation
\begin{align}
 \E\operatorname{dist}^2(g_p,\nu\partial\|\beta_*\|_1)
 &=k_*(1+\nu^2)+(p-k_*)h_0(\nu),\notag\\
 \frac1n\E\operatorname{dist}^2(g_p,\nu\partial\|\beta_*\|_1)
 &\le1-\eta+\frac1{\nu^2}+\frac pn h_0(\nu)
 \le1-\frac\eta2.
 \label{eq:lt-distance-moment}
\end{align}
The first inequality uses $k_*\nu^2\le(1-\eta)n$; the second follows
from Lemma~\ref{lem:h0} and $\eta L\to\infty$. Write
\[
 \Delta:=\E\|g_n\|_2-
 \E\operatorname{dist}(g_p,\nu\partial\|\beta_*\|_1).
\]
Jensen's inequality and \eqref{eq:lt-gaussian-length} give
\[
 \Delta\ge\sqrt{n-1}-\sqrt{n(1-\eta/2)}
 \ge c\eta\sqrt n,
\]
since $\eta\sqrt n\to\infty$.
For each $g\in\R^p$, choose its Euclidean projection
$\nu z_g$ onto $\nu\partial\|\beta_*\|_1$. For every unit $v$,
\[
 \langle g,v\rangle-\nu D(v)
 \le\langle g-\nu z_g,v\rangle
 \le\operatorname{dist}(g,\nu\partial\|\beta_*\|_1).
\]
We now apply the first probability bound in Lemma~\ref{lem:lt-comparison}
to $A=\sqrt n\X$, $T=S^{p-1}$, $U=S^{n-1}$,
$f=-\nu D$, and $t=\Delta/2$. With probability at least
$1-e^{-\Delta^2/8}$,
\[
 \|\X h\|_2+\frac\nu{\sqrt n}D(h)
 \ge\frac{\Delta}{2\sqrt n}\|h\|_2
\]
for all $h$, by homogeneity. This proves the claim.
\end{proof}

Fix $0<\kappa<1$. On the event of Lemma~\ref{lem:width-margin},
use the coefficient $\alpha$ from the sketch. Since $\alpha\to\kappa$,
eventually $0<\alpha<1$ and, for every $h\ne0$,
\begin{align}
 F_\kappa(\beta_*+h)-F_\kappa(\beta_*)
 &\ge\|\X h\|_2+\tau_\kappa D(h)\notag\\
 &=\alpha\left(\|\X h\|_2+\frac\nu{\sqrt n}D(h)\right)
          +(1-\alpha)\|\X h\|_2\notag\\
 &\ge c_\kappa\eta\|h\|_2>0.
 \label{eq:lt-l1-growth}
\end{align}
Thus $\widehat\beta_{\kappa,1}=\beta_*$ uniquely, and both normalized
errors are exactly zero on this event.
The same event also gives uniqueness of the $\ell_1$-MNI:
if $\X h=0$ and $\|\beta_*+h\|_1\le k_*$, then $D(h)\le0$, contradicting
\eqref{eq:lt-weighted-margin} unless $h=0$.
\subsubsection{Clipping above the transition}
\label{sec:null-offsupport}
Fix $\kappa>1$ and use $q$, $\gamma$, and $Z_\gamma$ from
\eqref{eq:initial-correlations}. Set
\begin{equation}
 \mu:=\frac{1+\kappa}{2},\qquad \sqrt n\,\gamma=\mu\sqrt L.
 \label{eq:lt-correlations}
\end{equation}

\begin{lemma}[Initial correlation excess]
\label{lem:initial-excess}
There exists $c_\kappa>0$, depending only on $\kappa$, such that,
for all sufficiently large $n,p$, with probability at least
$1-2e^{-c_\kappa n}-e^{-k_*/2}$,
\begin{equation}
 \frac12\sqrt{k_*}\le R\le2\sqrt{k_*},\qquad
 Z_\gamma\le3\sqrt{\frac{k_*}{n}},\qquad
 \langle q,b\rangle\le\gamma\|b\|_1+Z_\gamma\|b\|_2
 \quad(b\in\R^p).
 \label{eq:lt-clipping}
\end{equation}
\end{lemma}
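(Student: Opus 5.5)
The three claims are handled separately. The bound on $R$ follows from \eqref{eq:lt-response}, the bound on $Z_\gamma$ needs a Gaussian computation, and the last inequality is deterministic.

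\textbf{The deterministic inequality.} Write $q=q^{\mathrm{c}}+q^{\mathrm{r}}$, where $q^{\mathrm{c}}_j:=\operatorname{sign}(q_j)\min\{|q_j|,\gamma\}$ is the clipped part and $q^{\mathrm{r}}_j:=\operatorname{sign}(q_j)(|q_j|-\gamma)_+$ is the remainder. Then $\|q^{\mathrm{c}}\|_\infty\le\gamma$ and $\|q^{\mathrm{r}}\|_2=Z_\gamma$. Applying H\"older to the first part and Cauchy--Schwarz to the second gives $\langle q,b\rangle\le\gamma\|b\|_1+Z_\gamma\|b\|_2$ for every $b$, with no probability involved.

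\textbf{The bound on $R$.} By \eqref{eq:lt-response}, $R^2/k_*\sim\chi^2_n/n$. Standard chi-square tail bounds, or Gaussian concentration of $\|g_n\|_2$ around $\sqrt n$, show that $R/\sqrt{k_*}\in[1/2,2]$ with probability at least $1-2e^{-cn}$.

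\textbf{The bound on $Z_\gamma$.} The key observation is that $u:=y/R$ is a unit vector and $\X^{\mathsf T}u=q$. Condition on $\X_S$, the columns in the support; this fixes $y=\X_S s$ and hence $u$. The off-support columns $\X_j$, $j\notin S$, are then independent of $u$, so the entries $q_j=\langle \X_j,u\rangle$ for $j\in S^c$ are i.i.d.\ $N(0,1/n)$. Consequently $\sqrt n\,(q_j)_{j\in S^c}$ is a standard Gaussian vector $g\in\R^{p-k_*}$, and
\[
 \Bigl(\sum_{j\in S^c}(|q_j|-\gamma)_+^2\Bigr)^{1/2}=n^{-1/2}\Bigl(\sum_{j\in S^c}(|g_j|-\mu\sqrt L)_+^2\Bigr)^{1/2}.
\]
The right-hand side is $n^{-1/2}$ times a $1$-Lipschitz function of $g$. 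Its second moment is $(p-k_*)h_0(\mu\sqrt L)\le p\,h_0(\mu\sqrt L)=o(k_*)$ by Lemma~\ref{lem:h0}, since $\mu>1$. Jensen's inequality plus Gaussian concentration with deviation $t=\sqrt{k_*}$ then bound the off-support contribution by $2\sqrt{k_*/n}$, with failure probability at most $e^{-k_*/2}$.

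For the support coordinates I use $(|q_j|-\gamma)_+\le|q_j|$, so their contribution is at most $\|\X_S^{\mathsf T}u\|_2\le\|\X_S\|_{\mathrm{op}}$. For the $n\times k_*$ Gaussian matrix $\X_S$ with $N(0,1/n)$ entries, the operator norm is at most $1+\sqrt{k_*/n}+t$ with probability $1-e^{-nt^2/2}$. This is a constant, not of order $\sqrt{k_*/n}$, so the crude bound is too weak.

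\textbf{The main obstacle.} Controlling the support coordinates is where the difficulty lies. I will refine the bound using $y=\X_S s$, which gives
\[
 (\X_S^{\mathsf T}y)_j=\|\X_j\|_2^2 s_j+\sum_{i\ne j}\langle \X_j,\X_i\rangle s_i .
\]
Here $\|\X_j\|_2^2\approx 1$, and the cross term is of order $\sqrt{k_*/n}$ per coordinate. Dividing by $R\approx\sqrt{k_*}$ gives $|q_j|\approx 1/\sqrt{k_*}+O(1/\sqrt n)$ for $j\in S$. Since $\gamma=\mu\sqrt{L/n}$ and $k_*L/n\to1$, we have $1/\sqrt{k_*}\approx\sqrt{L/n}<\gamma$ because $\mu>1$. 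So the clipping removes the diagonal part entirely, and the leftover $\sum_{j\in S}(|q_j|-\gamma)_+^2$ is controlled by the fluctuations of order $1/\sqrt n$ per coordinate, totalling $O(k_*/n)$.

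To make this rigorous, I will write $\X_S^{\mathsf T}\X_S=I+E$ and bound $\|E s\|_2\lesssim\|E\|_{\mathrm{op}}\sqrt{k_*}\lesssim\sqrt{k_*}\cdot\sqrt{k_*/n}$. Then $(|q_j|-\gamma)_+\le|(Es)_j|/R+|(1/R)-\gamma|_+$, and since $1/R<\gamma$ with high probability the second term vanishes. This yields a support contribution of at most $\|Es\|_2/R\lesssim\sqrt{k_*/n}$. With careful constants, adding the off-support contribution gives $Z_\gamma\le3\sqrt{k_*/n}$.
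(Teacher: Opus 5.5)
Your deterministic clipping inequality and your bound on $R$ match the paper. Your off-support argument is also correct: condition on $\X_S$, so that $\sqrt n\,q_{S^c}$ is standard Gaussian, then apply Jensen and Lipschitz concentration. The gap is the support step, which cannot give $Z_\gamma\le3\sqrt{k_*/n}$ with the stated probability $1-2e^{-c_\kappa n}-e^{-k_*/2}$, so the claim ``with careful constants'' is not justified.

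First, the operator-norm step loses a factor of two even at typical values. Indeed $\|E\|_{\mathrm{op}}\approx2\sqrt{k_*/n}$, whereas $\E\|Es\|_2^2=k_*(k_*+1)/n$, i.e.\ $\|Es\|_2/\|s\|_2\approx\sqrt{k_*/n}$. Second, to obtain $\|E\|_{\mathrm{op}}=O(\sqrt{k_*/n})$ from the singular-value bounds $1\pm(\sqrt{k_*/n}+t)$, you must take $t\asymp\sqrt{k_*/n}$. This event then fails with probability $2e^{-nt^2/2}=e^{-\Theta(k_*)}$. That term is absent from the lemma's bound: since $k_*=o(n)$, it cannot be absorbed into $e^{-c_\kappa n}$, and your off-support step already uses the $e^{-k_*/2}$ budget. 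Forcing $2e^{-nt^2/2}<e^{-k_*/2}$ requires $t>\sqrt{k_*/n}$. Your bound on $\|E\|_{\mathrm{op}}$ then exceeds $4\sqrt{k_*/n}$, and since $R\approx\sqrt{k_*}$, your bound $\|E\|_{\mathrm{op}}\|s\|_2/R$ on the support contribution alone already exceeds $3\sqrt{k_*/n}$. Adding the off-support part by the triangle inequality only makes this worse. What your route actually proves is $Z_\gamma\le C\sqrt{k_*/n}$ with probability $1-O(e^{-cn})-O(e^{-ck_*})$. That is enough for Proposition~\ref{prop:sublevel-energy}, but it is not the lemma as stated. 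Separately, your event $1/R<\gamma$ needs a check; it holds with probability $1-e^{-cn}$ because $\mu\sqrt{k_*L/n}\to\mu>1$.

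The missing idea is to condition on $y$ rather than on $\X_S$. With $P=I_p-\beta_*\beta_*^{\mathsf T}/k_*$, write $\X=y\beta_*^{\mathsf T}/k_*+\X P$. Row by row the two pieces are uncorrelated Gaussians, so $\X P$ is independent of $y$, and $q=(R/k_*)\beta_*+g/\sqrt n$ with $g\mid y\sim N(0,P)$. Now the support coordinates are Gaussian as well.

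On the event $R/k_*\le\gamma$, which has probability $1-e^{-c_\kappa n}$ for the same reason as above, the conditional mean on $S$ lies inside the clipping cube. Hence $(|q_j|-\gamma)_+\le|g_j|/\sqrt n$ on $S$. Moreover $Z_\gamma=\operatorname{dist}\bigl((R/k_*)\beta_*+n^{-1/2}Ph,\gamma B_\infty^p\bigr)$ is an $n^{-1/2}$-Lipschitz function of $h\sim N(0,I_p)$, with $\E[Z_\gamma^2\mid y]\le(k_*-1)/n+(p/n)h_0(\mu\sqrt L)\le2k_*/n$. A single concentration step with deviation $(3-\sqrt2)\sqrt{k_*/n}$ gives $Z_\gamma\le3\sqrt{k_*/n}$ with failure probability $e^{-(3-\sqrt2)^2k_*/2}\le e^{-k_*/2}$.

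This handles support and off-support coordinates at once and needs no random-matrix estimate. In your own framework it amounts to writing $\X_S=ys^{\mathsf T}/k_*+\X_S(I-ss^{\mathsf T}/k_*)$ instead of expanding $\X_S^{\mathsf T}\X_S=I+E$.
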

\begin{proof}
Let $P=I_p-\beta_*\beta_*^{\mathsf T}/k_*$.
Since $\|\beta_*\|_2^2=k_*$, $P$ is the orthogonal projection
onto $\beta_*^\perp$.
Decompose each row of $\X$ into its projection onto
$\operatorname{span}(\beta_*)$ and its projection onto
$\beta_*^\perp$.
Writing $B:=\X P$ and using $y=\X\beta_*$ gives
\[
 \X=\X(I_p-P)+\X P
 =\frac{y\beta_*^{\mathsf T}}{k_*}+B.
\]
For each row, the two projections are jointly Gaussian and have
zero cross-covariance, since
$n^{-1}(I_p-P)P=0$.
They are therefore independent. As the rows of $\X$ are independent,
$B$ is independent of $y$ and has independent $N(0,P/n)$ rows.
Consequently, conditional on $y$,
\begin{equation}
 q=\frac R{k_*}\beta_*+\frac g{\sqrt n},\qquad
 g:=\sqrt n\,B^{\mathsf T}\frac yR\sim N(0,P).
 \label{eq:lt-regression}
\end{equation}
Indeed, the conditional covariance of $g$ is
\[
 n\sum_{i=1}^n\frac{y_i^2}{R^2}\frac Pn=P.
\]

Let $\mathcal A_y$ denote the response event
\[
 \frac12\sqrt{k_*}\le R\le2\sqrt{k_*},
 \qquad \frac R{k_*}\le\gamma.
\]
Since $k_*L/n\to1$ and $\mu>1$, for all sufficiently large dimensions,
\[
 \mu\sqrt{\frac{k_*L}{n}}\ge\frac{1+\mu}{2}>1.
\]
Also, $R/\sqrt{k_*}$ has the same distribution as
$\|g_n\|_2/\sqrt n$.
Gaussian concentration for the Euclidean norm therefore gives
\[
 \begin{aligned}
 \Pp(\mathcal A_y^c)
 &\le
 \Pp\left(\frac R{\sqrt{k_*}}<\frac12\right)
 +\Pp\left(
    \frac R{\sqrt{k_*}}>
    \min\left\{2,\frac{1+\mu}{2}\right\}
   \right)\\
 &\le2e^{-c_\kappa n}.
 \end{aligned}
\]
Here the second threshold is a fixed number strictly greater
than one, so $c_\kappa>0$ can be chosen independently of $n,p$.

On $\mathcal A_y$, the conditional mean on $S$ lies in the
clipping cube: for $j\in S$,
$|(R/k_*)(\beta_*)_j|\le\gamma$.
Thus $(|q_j|-\gamma)_+\le |g_j|/\sqrt n$ on $S$, whereas
$q_j=g_j/\sqrt n$ off $S$. It follows that
\[
 Z_\gamma^2\le\frac1n\left(\|g_S\|_2^2+
          \sum_{j\notin S}(|g_j|-\mu\sqrt L)_+^2\right).
\]
Conditionally on $y$,
$\E[\|g_S\|_2^2\mid y]=\operatorname{tr}(P_{S,S})=k_*-1$,
and the off-support coordinates are independent standard Gaussians.
Lemma~\ref{lem:h0} therefore gives, on $\mathcal A_y$,
\[
 \E[Z_\gamma^2\mid y]
 \le\frac{k_*-1}{n}
      +\frac{p-k_*}{n}h_0(\mu\sqrt L)
 \le\frac{2k_*}{n}
\]
for all sufficiently large $n,p$.

In conditional distribution, realize $g=Ph$ with
$h\sim N(0,I_p)$.
The function
\[
 h\longmapsto\operatorname{dist}\left(
       (R/k_*)\beta_*+n^{-1/2}Ph,\gamma B_\infty^p
       \right)
\]
equals $Z_\gamma$ in this representation and it is
$n^{-1/2}$-Lipschitz as the distance to a fixed set is
one-Lipschitz and the operator norm of $P$ is at most one.
Its conditional mean is at most $\sqrt{2k_*/n}$ by Jensen's
inequality. Gaussian concentration therefore gives, on $\mathcal A_y$,
\[
 \Pp\left(Z_\gamma>3\sqrt{k_*/n}\mid y\right)
 \le
 \exp\left(-\frac{(3-\sqrt2)^2}{2}k_*\right)
 \le e^{-k_*/2}.
\]
Integrating over $y$ and combining this with the bound on
$\Pp(\mathcal A_y^c)$ gives the stated probability.

Finally, define the clipped vector coordinate-wise by
\[
 q_j^{\mathrm{clip}}
 :=\operatorname{sign}(q_j)\min\{|q_j|,\gamma\}.
\]
Then
\[
 q_j-q_j^{\mathrm{clip}}
 =\operatorname{sign}(q_j)(|q_j|-\gamma)_+,
 \qquad
 \|q^{\mathrm{clip}}\|_\infty\le\gamma,
 \qquad
 \|q-q^{\mathrm{clip}}\|_2=Z_\gamma.
\]
For every $b\in\R^p$, H\"older's and Cauchy--Schwarz inequalities give
\[
 \begin{aligned}
 \langle q,b\rangle
 &=\langle q^{\mathrm{clip}},b\rangle
   +\langle q-q^{\mathrm{clip}},b\rangle\\
 &\le\|q^{\mathrm{clip}}\|_\infty\|b\|_1
      +\|q-q^{\mathrm{clip}}\|_2\|b\|_2\\
 &\le\gamma\|b\|_1+Z_\gamma\|b\|_2.
 \end{aligned}
\]
This proves the last, deterministic inequality in
\eqref{eq:lt-clipping}.
To obtain the full clipping implication in
\eqref{eq:sketch-zero-plane}, Cauchy--Schwarz also gives
\[
 \|y-\X b\|_2
 \ge\left\langle\frac yR,y-\X b\right\rangle
 =R-\langle q,b\rangle.
\]
Hence, whenever $F_\kappa(b)\le R$,
\[
 \tau_\kappa\|b\|_1
 \le R-\|y-\X b\|_2
 \le\langle q,b\rangle
 \le\gamma\|b\|_1+Z_\gamma\|b\|_2,
\]
as claimed in the sketch.
\end{proof}

\subsubsection{Coefficient bounds above the transition}
\begin{proposition}[Comparison with the zero predictor]
\label{prop:sublevel-energy}
Fix $\kappa>1$. For all sufficiently large $n,p$, with probability
at least $1-3e^{-c_\kappa n}-e^{-k_*/2}$, simultaneously for every
$b\in\R^p$,
\begin{equation}
 F_\kappa(b)\le R
 \quad\Longrightarrow\quad
 \frac{\|b\|_2^2+\|b\|_1}{k_*}
 \le C_\kappa\frac{k_*}{n}.
 \label{eq:lt-robust-null}
\end{equation}
\end{proposition}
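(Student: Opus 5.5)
I will work on the intersection of the events of Lemma~\ref{lem:initial-excess} and Lemma~\ref{lem:effective-rip} (with $K$ an absolute constant, since $\sqrt n\X$ has standard Gaussian entries). This intersection has probability at least $1-2e^{-c_\kappa n}-e^{-k_*/2}-e^{-cn}$, which is of the stated form after shrinking $c_\kappa$. On this event every estimate below is deterministic and uniform in $b$, which is what gives the simultaneity.

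\textbf{Effective sparsity.} Fix $b$ with $F_\kappa(b)\le R$. By the clipping implication established at the end of the proof of Lemma~\ref{lem:initial-excess},
\[
 \tau_\kappa\|b\|_1\le\gamma\|b\|_1+Z_\gamma\|b\|_2 .
\]
Since $\tau_\kappa-\gamma=\frac{\kappa-1}{2}\sqrt{L/n}$ and $Z_\gamma\le3\sqrt{k_*/n}$, this gives
\[
 \|b\|_1\le\frac{6}{\kappa-1}\sqrt{k_*/L}\,\|b\|_2\le C_\kappa\frac{k_*}{\sqrt n}\|b\|_2 ,
\]
where the last step uses $k_*L/n\to1$.

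\textbf{Quadratic comparison.} The condition $F_\kappa(b)\le R$ gives $0\le\|y-\X b\|_2\le R-\tau_\kappa\|b\|_1$. Squaring both sides and expanding $\|y-\X b\|_2^2=R^2-2R\langle q,b\rangle+\|\X b\|_2^2$ yields
\[
 \|\X b\|_2^2-\tau_\kappa^2\|b\|_1^2\le2R\bigl(\langle q,b\rangle-\tau_\kappa\|b\|_1\bigr)\le2RZ_\gamma\|b\|_2\le12\frac{k_*}{\sqrt n}\|b\|_2 .
\]
The middle inequality uses the clipping bound together with $\gamma<\tau_\kappa$; the last uses $R\le2\sqrt{k_*}$.

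Next I insert \eqref{eq:lt-global-lower}, which gives $\|\X b\|_2^2\ge\frac14\|b\|_2^2-C\frac Ln\|b\|_1^2$. The effective-sparsity bound controls both correction terms:
\[
 \Bigl(C\tfrac Ln+\tau_\kappa^2\Bigr)\|b\|_1^2\le C_\kappa\frac{L}{n}\cdot\frac{k_*^2}{n}\|b\|_2^2\le C'_\kappa\frac{k_*}{n}\|b\|_2^2 .
\]
Since $k_*/n\to0$, for large dimensions this is at most $\frac18\|b\|_2^2$. Absorbing it gives $\frac18\|b\|_2^2\le12\frac{k_*}{\sqrt n}\|b\|_2$, hence $\|b\|_2\le C_\kappa k_*/\sqrt n$.

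\textbf{Conclusion.} Substituting back into the effective-sparsity bound gives $\|b\|_1\le C_\kappa k_*^2/n$. Together, $\|b\|_2^2+\|b\|_1\le C_\kappa k_*^2/n$, which after dividing by $k_*$ is exactly \eqref{eq:lt-robust-null}.

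\textbf{Main obstacle.} The delicate point is the absorption step. The corrections $\frac Ln\|b\|_1^2$ and $\tau_\kappa^2\|b\|_1^2$ must be $o(\|b\|_2^2)$, not merely $O(\|b\|_2^2)$, so that they fit under the fixed constant $\frac14$. This works only because the effective-sparsity bound has the extra factor $1/L$ relative to $n/L$: here $k_*^2/n\approx n/L^2$. That factor in turn relies on the clipping estimate $Z_\gamma\lesssim\sqrt{k_*/n}$ holding at threshold $\mu\sqrt L$ with $\mu>1$. I will check this constant bookkeeping carefully. The remaining steps (the squaring, and the probability accounting for the intersection of events) are routine.
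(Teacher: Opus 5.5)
Your proposal is correct and follows the paper's proof essentially step for step: you intersect the events of Lemmas~\ref{lem:initial-excess} and~\ref{lem:effective-rip}, derive the effective-sparsity bound from the clipping inequality, and use the squared residual comparison. You then absorb the $\ell_1$ corrections $\bigl(C\tfrac Ln+\tau_\kappa^2\bigr)\|b\|_1^2$ into $\tfrac14\|b\|_2^2$ using $k_*/n\to0$, exactly as the paper does, with the only difference being that you track the explicit constants ($6/(\kappa-1)$, $12$, $1/8$) that the paper absorbs into $C_\kappa$.
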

\begin{proof}
Let us intersect the event in Lemma~\ref{lem:initial-excess} with the event
in Lemma~\ref{lem:effective-rip}. Their intersection has the stated
probability by a union bound, after decreasing $c_\kappa$ if necessary.
Fix $b$ with $F_\kappa(b)\le R$, and put
$B_0=\|b\|_2$ and $U_0=\|b\|_1$.
By Cauchy--Schwarz and \eqref{eq:lt-clipping},
\[
 \|y-\X b\|_2\ge R-\langle q,b\rangle,
 \qquad
 \tau_\kappa U_0\le\langle q,b\rangle
 \le\gamma U_0+Z_\gamma B_0.
\]
Since $\tau_\kappa-\gamma=(\kappa-1)\sqrt{L/n}/2$ and
$k_*L/n\to1$, it follows that
\begin{equation}
 U_0\le\frac{Z_\gamma}{\tau_\kappa-\gamma}B_0
 \le C_\kappa\frac{k_*}{\sqrt n}B_0.
 \label{eq:lt-approx-compressibility}
\end{equation}
The objective comparison gives
$\|y-\X b\|_2\le R-\tau_\kappa U_0$, whose right-hand side
is nonnegative. Squaring and using clipping yields
\begin{align}
 \|\X b\|_2^2-\tau_\kappa^2U_0^2
 &\le2R\bigl(\langle q,b\rangle-\tau_\kappa U_0\bigr)\notag\\
 &\le2RZ_\gamma B_0
 \le C\frac{k_*}{\sqrt n}B_0.
 \label{eq:lt-squared-comparison}
\end{align}
Combining \eqref{eq:lt-global-lower} with these bounds and
$L/n\asymp1/k_*$ gives
\[
 \frac14B_0^2
 \le\frac{C_\kappa}{k_*}U_0^2
       +C\frac{k_*}{\sqrt n}B_0
 \le C_\kappa\frac{k_*}{n}B_0^2
       +C\frac{k_*}{\sqrt n}B_0.
\]
For sufficiently large $n,p$, the quadratic term on the right is
at most $B_0^2/8$. Thus $B_0\le C_\kappa k_*/\sqrt n$;
substitution into \eqref{eq:lt-approx-compressibility} gives
$U_0\le C_\kappa k_*^2/n$.
All inequalities hold for every $b$ on the same event, proving
\eqref{eq:lt-robust-null}.
\end{proof}

\subsubsection{Generalization and training errors}
The continuous objective $F_\kappa$ is coercive, so it attains its
minimum. Every minimizer $\widehat\beta_{\kappa,1}$ satisfies
$F_\kappa(\widehat\beta_{\kappa,1})\le F_\kappa(0_p)=R$.
Proposition~\ref{prop:sublevel-energy} therefore gives,
for every $b$ with $F_\kappa(b)\le R$,
\begin{equation}
 \|b\|_2^2+\|b\|_1\le C_\kappa\frac{k_*^2}{n},\qquad
 \left|\mathsf{MSE}_s(b)-1\right|
 \le\frac{\|b\|_2^2+2\|b\|_1}{k_*}
 \le C_\kappa\frac{k_*}{n}.
 \label{eq:lt-l1-energy}
\end{equation}
The middle inequality uses $|\langle\beta_*,b\rangle|\le\|b\|_1$.
For the training error, clipping and \eqref{eq:lt-l1-energy} give
\[
 \langle q,b\rangle
 \le\gamma\|b\|_1+Z_\gamma\|b\|_2
 \le C_\kappa\frac{k_*^{3/2}}{n}
 \le C_\kappa R\frac{k_*}{n}.
\]
The Cauchy--Schwarz inequality used above gives the lower bound
\[
 \|y-\X b\|_2
 \ge R-\langle q,b\rangle
 \ge R\left(1-C_\kappa\frac{k_*}{n}\right).
\]
For the upper bound, compare the objective at $b$ with the
objective at the zero predictor:
$F_\kappa(b)\le F_\kappa(0_p)=R$.
Dropping the nonnegative penalty gives
$\|y-\X b\|_2\le F_\kappa(b)\le R$.
These two comparisons yield the two-sided residual bound
\begin{equation}
 1-C_\kappa\frac{k_*}{n}
 \le\frac{\|y-\X b\|_2}{R}\le1
 \qquad(F_\kappa(b)\le R).
 \label{eq:lt-l1-residual}
\end{equation}
For sufficiently large $n,p$, the lower bound is positive,
so no minimizer interpolates.
By the definition of $\mathcal E_s$,
\eqref{eq:lt-l1-residual} directly gives
$|\mathcal E_s(b)-1|\le C_\kappa k_*/n$ on the same set.
Together with \eqref{eq:lt-l1-growth} and \eqref{eq:lt-l1-energy},
this proves both components of \eqref{eq:zero-one} and the coefficient
energy bound. The conclusions hold for every minimizer on a common
high-probability event for each fixed $\kappa\ne1$.

\subsection{Proof of Theorem~\ref{thm:cube-grokking}}
\label{subsec:lt-theorem1}
Recall from \eqref{eq:cube-hybrid-norm} that
\[
 N_\kappa(b)=\inf_{u+v=b}
 \left\{\|u\|_1+
 \frac{p}{\kappa}\sqrt{\frac{2}{\pi L}}\,\|v\|_\infty\right\}.
\]

\subsubsection{The infimal-convolution norm}
\begin{proposition}[Unit ball and norm comparison]
\label{lem:lt-norm}
For every $\kappa>0$, the infimum in \eqref{eq:cube-hybrid-norm}
is attained for every $b\in\R^p$, and $N_\kappa$ is a norm with unit ball
\[
 \{b\in\R^p:N_\kappa(b)\le1\}
 =\operatorname{conv}\left(
 B_1^p\cup\frac{\kappa}{p}\sqrt{\frac{\pi L}{2}}\,B_\infty^p
 \right).
\]
For each fixed $\kappa>0$ and all sufficiently large dimensions,
\begin{equation}
 B_1^p
 \subseteq\{b\in\R^p:N_\kappa(b)\le1\}
 \subseteq\kappa\sqrt{\frac{\pi L}{2}}\,B_1^p.
 \label{eq:lt-norm-inclusions}
\end{equation}
Equivalently,
\begin{equation}
 \frac1\kappa\sqrt{\frac{2}{\pi L}}\,\|b\|_1
 \le N_\kappa(b)\le\|b\|_1
 \qquad(b\in\R^p).
 \label{eq:lt-norm-comparison}
\end{equation}
\end{proposition}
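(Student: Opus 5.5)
We first show that the infimum in \eqref{eq:cube-hybrid-norm} is attained. Write $m_\kappa=\frac p\kappa\sqrt{2/(\pi L)}$ as in \eqref{eq:lt-hybrid}. For fixed $b$, the map $v\mapsto\|b-v\|_1+m_\kappa\|v\|_\infty$ is continuous and coercive in $v$, so a minimizer exists. Next we check that $N_\kappa$ is a norm, in three steps.
\begin{itemize}
\item Absolute homogeneity: rescaling the decomposition $b=u+v$ to $tb=tu+tv$ gives $N_\kappa(tb)=|t|N_\kappa(b)$.
\item Triangle inequality: add near-optimal decompositions of $b$ and $b'$ to obtain a decomposition of $b+b'$.
\item Definiteness: $N_\kappa(b)=0$ forces, at an attained decomposition, $u=0$ and $v=0$, hence $b=0$.
\end{itemize}
Equivalently, $N_\kappa$ is the infimal convolution of two norms, so it is a seminorm. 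Definiteness also follows from the lower bound in \eqref{eq:lt-norm-comparison} proved below.

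For the unit ball, we use the standard fact that the infimal convolution of the gauges of convex bodies $K_1,K_2$ is the gauge of $\operatorname{conv}(K_1\cup K_2)$. Here $K_1=B_1^p$ and $K_2=m_\kappa^{-1}B_\infty^p$, and $m_\kappa^{-1}=\frac\kappa p\sqrt{\pi L/2}$ matches the claimed scaling.
\begin{itemize}
\item If $N_\kappa(b)\le1$, take an optimal decomposition with $\|u\|_1=s$ and $m_\kappa\|v\|_\infty=t$, $s+t\le1$. Then $b=s(u/s)+t(v/t)$, with $u/s\in K_1$ and $v/t\in K_2$ (degenerate cases $s=0$ or $t=0$ are handled directly). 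This exhibits $b$ as a point of $\operatorname{conv}(K_1\cup K_2)$; the case $s+t<1$ is covered because both sets contain $0$.
\item Conversely, any element of the convex hull can be written as $\theta x+(1-\theta)w$ with $x\in K_1$ and $w\in K_2$, since both sets are convex. Taking $u=\theta x$ and $v=(1-\theta)w$ gives $N_\kappa\le1$.
\end{itemize}

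For the comparison \eqref{eq:lt-norm-comparison}, the upper bound follows by choosing $u=b$, $v=0$. For the lower bound, use $\|v\|_1\le p\|v\|_\infty$, so that
\[
m_\kappa\|v\|_\infty\ge\frac1\kappa\sqrt{2/(\pi L)}\,\|v\|_1 .
\]
For all sufficiently large dimensions, $L\to\infty$ makes $c:=\frac1\kappa\sqrt{2/(\pi L)}\le1$. Then
\[
\|u\|_1+m_\kappa\|v\|_\infty\ge c(\|u\|_1+\|v\|_1)\ge c\|b\|_1 .
\]
The inclusions \eqref{eq:lt-norm-inclusions} are the reformulation of these bounds in terms of unit balls.

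The only mildly delicate point is the "sufficiently large" requirement $c\le1$. It is needed so that $\|u\|_1\ge c\|u\|_1$, and it explains why the statement is asymptotic. The remaining steps are routine convex analysis.
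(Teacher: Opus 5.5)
Your proof is correct and follows essentially the same route as the paper. Attainment comes from continuity and coercivity, the unit ball is identified by splitting an optimal decomposition into a convex combination of points of $B_1^p$, $m_\kappa^{-1}B_\infty^p$ and $0$ (and regrouping for the converse), and the comparison comes from $\|v\|_1\le p\|v\|_\infty$ together with $m_\kappa/p\le 1$ in large dimensions. The only cosmetic difference is that you verify the norm axioms directly, whereas the paper obtains the norm property by recognizing $N_\kappa$ as the Minkowski functional of the compact, convex, symmetric unit ball.
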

\begin{proof}
For fixed $b$, the cost as a function of $u$, with $v=b-u$, is continuous
and tends to infinity as $\|u\|_2\to\infty$, so it attains its minimum.
A decomposition with cost at most one expresses $b$ as a convex
combination of a point in $B_1^p$, a point in
$m_\kappa^{-1}B_\infty^p$, and possibly zero. Conversely, grouping the
two types of terms in such a convex combination produces a decomposition
with cost at most one. Thus the unit sublevel set is the stated convex
hull. It is compact, convex, symmetric, and contains a neighborhood of
zero. Together with the positive homogeneity of the infimal convolution,
this identifies $N_\kappa$ as its Minkowski functional and proves that
$N_\kappa$ is a norm.

For fixed $\kappa$, $m_\kappa/p=\kappa^{-1}\sqrt{2/(\pi L)}<1$
for all sufficiently large $n,p$. Every decomposition then satisfies
\[
 \|u\|_1+m_\kappa\|v\|_\infty
 \ge\frac{m_\kappa}{p}\bigl(\|u\|_1+\|v\|_1\bigr)
 \ge\frac{m_\kappa}{p}\|b\|_1.
\]
Taking the infimum proves the lower bound in
\eqref{eq:lt-norm-comparison}; choosing $u=b$ and $v=0$ proves the
upper bound. Passing to unit balls gives \eqref{eq:lt-norm-inclusions}.
\end{proof}

For $z\in\R^n$, define the normalized cost of completing a prediction by
\begin{equation}
 f(z):=a\min_{\X v=z}\|v\|_\infty.
 \label{eq:lt-completion-norm}
\end{equation}
The minimum is attained because its constraint set is nonempty and
closed and the objective is coercive. Its unnormalized value is the
Minkowski functional of $\X B_\infty^p$, which is a norm since $\X$ has
full row rank. In particular, $f$ is even, convex, and positively
homogeneous.

\begin{lemma}[Uniform cube completion and its coefficient energy]
\label{lem:lt-cube}
There is an absolute $C>0$ such that, for all sufficiently large
dimensions, $\epsilon_{n,p}:=C\sqrt{n/p}<1/2$ and, with probability
at least $1-2e^{-n/2}$, the following bounds hold simultaneously for
every $z\in\R^n$:
\begin{equation}
 (1-\epsilon_{n,p})\|z\|_2\le f(z)\le(1+\epsilon_{n,p})\|z\|_2.
 \label{eq:lt-cube-round}
\end{equation}
On the same event, every minimum-$\ell_\infty$ completion satisfies
\begin{equation}
 v\in\operatorname*{arg\,min}_{\X w=z}\|w\|_\infty
 \quad\Longrightarrow\quad
 \|v\|_2\le C\sqrt{\frac np}\|z\|_2.
 \label{eq:lt-completion-energy}
\end{equation}
Moreover, $\epsilon_{n,p}=Ce^{-L/4}=o(k_*/n)$.
\end{lemma}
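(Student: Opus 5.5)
The plan is to reduce \eqref{eq:lt-cube-round} to a uniform two-sided estimate on $\ell_1$ norms of $\X^{\mathsf T}u$, obtained by Gaussian comparison plus concentration. The energy bound \eqref{eq:lt-completion-energy} then follows by a one-line comparison of norms.

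\emph{Step 1: duality.} Since $\X$ has full row rank, linear programming duality (equivalently, duality between the Minkowski functional of $\X B_\infty^p$ and its support function) gives, for every $z\in\R^n$,
\[
 \min_{\X v=z}\|v\|_\infty=\sup_{u\neq0}\frac{\langle u,z\rangle}{\|\X^{\mathsf T}u\|_1}.
\]
It therefore suffices to show that, with $A:=\sqrt n\,\X$ and $m:=p\sqrt{2/\pi}$, on an event of probability at least $1-2e^{-n/2}$,
\begin{equation}
 (1-\epsilon')\,m\le\inf_{u\in S^{n-1}}\|A^{\mathsf T}u\|_1\le\sup_{u\in S^{n-1}}\|A^{\mathsf T}u\|_1\le(1+\epsilon')\,m,
 \qquad \epsilon'=C'\sqrt{n/p}.
 \label{eq:plan-l1-round}
\end{equation}
Indeed, $\|\X^{\mathsf T}u\|_1=\|A^{\mathsf T}u\|_1/\sqrt n$ and $a=m/\sqrt n$. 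Hence $f(z)=a\sup_{u}\langle u,z\rangle/\|\X^{\mathsf T}u\|_1$ lies between $\|z\|_2/(1+\epsilon')$ and $\|z\|_2/(1-\epsilon')$. This yields \eqref{eq:lt-cube-round} with $\epsilon_{n,p}=C\sqrt{n/p}$ for a larger absolute $C$, and $\epsilon_{n,p}<1/2$ eventually because $p/n\to\infty$.

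\emph{Step 2: expectations via Gaussian comparison.} Write $\|A^{\mathsf T}u\|_1=\sup_{w\in B_\infty^p}\langle w,A^{\mathsf T}u\rangle$, a bilinear Gaussian process indexed by $S^{n-1}\times B_\infty^p$.

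For the upper bound I will use Chevet's inequality, which is the Sudakov--Fernique analogue of \eqref{eq:lt-sup-comparison} for general compact index sets:
\[
 \E\sup_{u,w}\langle w,A^{\mathsf T}u\rangle\le\sup_{u}\|u\|_2\,\E\|g_p\|_1+\sup_{w}\|w\|_2\,\E\|g_n\|_2\le m+\sqrt{pn}.
\]

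For the lower bound I will use Gordon's min--max inequality in its general form, with the sets not restricted to spheres. The factor $\|u\|_2=1$ is constant, which keeps the comparison exact, and it gives
\[
 \E\inf_u\sup_w\langle w,A^{\mathsf T}u\rangle\ge\E\|g_p\|_1-\sup_w\|w\|_2\,\E\|g_n\|_2\ge m-\sqrt{pn}.
\]
Rather than quoting Lemma~\ref{lem:lt-comparison} verbatim, one can rescale $w\in B_\infty^p$ and restate it for compact sets. This is the step I expect to need the most care: the paper's version of Gordon's theorem is stated with $U$ on a sphere, so I must either invoke the general Gordon theorem or reduce to the spherical case. If the reduction is awkward, I will fall back on the direct general statement of Gordon's inequality, whose proof is identical.

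\emph{Step 3: concentration.} Both $A\mapsto\sup_u\|A^{\mathsf T}u\|_1$ and $A\mapsto\inf_u\|A^{\mathsf T}u\|_1$ are $\sqrt p$-Lipschitz in Frobenius norm, because
\[
 \bigl|\|A^{\mathsf T}u\|_1-\|B^{\mathsf T}u\|_1\bigr|\le\sqrt p\,\|(A-B)^{\mathsf T}u\|_2\le\sqrt p\,\|A-B\|_{\mathrm F}.
\]
Gaussian concentration with deviation $t\sqrt p$ and $t=\sqrt n$ then gives failure probability at most $e^{-n/2}$ for each side. Together with Step 2 this yields \eqref{eq:plan-l1-round} with $\epsilon'=2\sqrt{pn}/m=C'\sqrt{n/p}$, on an event of probability at least $1-2e^{-n/2}$.

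\emph{Step 4: energy and scales.} On the same event, if $v$ is a minimum-$\ell_\infty$ completion of $z$, then
\[
 \|v\|_2\le\sqrt p\,\|v\|_\infty=\sqrt p\,\frac{f(z)}{a}\le\sqrt p\,(1+\epsilon_{n,p})\sqrt{\frac{\pi}{2}}\frac{\sqrt n}{p}\|z\|_2\le C\sqrt{\frac np}\,\|z\|_2,
\]
which is \eqref{eq:lt-completion-energy}. Finally, $\sqrt{n/p}=e^{-L/4}$ because $L=2\log(p/n)$. Since $k_*/n\asymp1/L$ by \eqref{eq:lt-scales} and $Le^{-L/4}\to0$, we get $\epsilon_{n,p}=Ce^{-L/4}=o(k_*/n)$.
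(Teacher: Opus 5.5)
Your proof is correct, and its skeleton matches the paper's. You obtain the two-sided inclusion $a(1-\theta)B_2^n\subseteq\X B_\infty^p\subseteq a(1+\theta)B_2^n$ with $\theta\asymp\sqrt{n/p}$ on an event of probability $1-2e^{-n/2}$. You pass to Minkowski functionals, and then use $\|v\|_2\le\sqrt p\,\|v\|_\infty=\sqrt p\,f(z)/a$ together with $\sqrt{n/p}=e^{-L/4}$.

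The difference is that the paper simply quotes the Gaussian-projection Dvoretzky--Milman theorem for the cube, with its high-probability refinement. You instead rederive that inclusion from three ingredients:
\begin{itemize}
\item LP duality, $h_{\X B_\infty^p}(u)=\|\X^{\mathsf T}u\|_1$;
\item expectation bounds of Chevet/Gordon type;
\item concentration of the $\sqrt p$-Lipschitz maps $A\mapsto\sup_u\|A^{\mathsf T}u\|_1$ and $A\mapsto\inf_u\|A^{\mathsf T}u\|_1$ at deviation $\sqrt{pn}$.
\end{itemize}
This makes the lemma self-contained with explicit absolute constants, and shows exactly where the probability $1-2e^{-n/2}$ comes from.

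One point in Step 2 needs correcting. In the general Gordon theorem the auxiliary term is $\|u\|_2\|w\|_2\,z$. Since $\|w\|_2$ is not constant on $B_\infty^p$, the fact that $\|u\|_2=1$ does not by itself make the comparison exact.

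The clean reduction is to restrict the inner supremum to the vertices $\{\pm1\}^p$. The linear maximum is attained there, and every vertex has norm $\sqrt p$. Apply Lemma~\ref{lem:lt-comparison} to the $p\times n$ matrix $A^{\mathsf T}$, with infimum set $T=S^{n-1}$, supremum set $U=p^{-1/2}\{\pm1\}^p\subset S^{p-1}$, and $f=0$. After multiplying by $\sqrt p$, the bounds \eqref{eq:lt-gordon-general} and \eqref{eq:lt-sup-comparison} give exactly $\E\inf_u\|A^{\mathsf T}u\|_1\ge\E\|g_p\|_1-\sqrt p\,\E\|g_n\|_2$ and $\E\sup_u\|A^{\mathsf T}u\|_1\le\E\|g_p\|_1+\sqrt p\,\E\|g_n\|_2$.

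Even the general-Gordon route without this reduction would only lose an extra $\sqrt p\,\E z_+=O(\sqrt p)$. That loss is negligible against $\sqrt{pn}$, so your conclusion stands either way.
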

\begin{proof}
This is the Gaussian projection form of the Dvoretzky--Milman theorem
for the cube; see \citet[Theorem~11.3.3 and Example~11.3.6]{vershynin2018highdimensional}.
Its high-probability refinement (Exercise~11.3.5 there), obtained by
Gaussian concentration, gives in our normalization
\[
 a(1-\theta)B_2^n\subseteq\X B_\infty^p
                  \subseteq a(1+\theta)B_2^n,
 \qquad \theta=C_0\sqrt{n/p},
\]
with probability at least $1-2e^{-n/2}$, for an absolute $C_0>0$.
Here the scale is $a=n^{-1/2}w(B_\infty^p)$.
Passing to Minkowski functionals yields
\[
 \frac{\|z\|_2}{1+\theta}\le f(z)
                     \le\frac{\|z\|_2}{1-\theta}
 \qquad(z\in\R^n),
\]
which implies \eqref{eq:lt-cube-round} after choosing $C\ge2C_0$
and taking sufficiently large $n,p$.
Every minimum-$\ell_\infty$ completion satisfies
\[
 \|v\|_2\le\sqrt p\,\|v\|_\infty
 =\frac{\sqrt p}{a}f(z)
 \le C\sqrt{\frac np}\,\|z\|_2,
\]
proving \eqref{eq:lt-completion-energy} on the same event.
Finally, $\sqrt{n/p}=e^{-L/4}=o(L^{-1})$ and $k_*/n\approx L^{-1}$
give $\epsilon_{n,p}=o(k_*/n)$. All bounds are uniform in $z$.
\end{proof}

\subsubsection{Exact interpolation of every regularized minimizer}
Recall from \eqref{eq:lt-hybrid-objective} that
\[
 J_\kappa(b)=\|y-\X b\|_2+\frac{\tau_\kappa}{2}N_\kappa(b)
\]
is the objective defining $\widetilde\beta_\kappa$ in
\eqref{eq:cube-hybrid-estimator}.
We work on the event of Lemma~\ref{lem:lt-cube}.
For any $b\in\R^p$, set $e=y-\X b$ and choose a minimum-$\ell_\infty$
completion $v_e$ of $e$.
By \eqref{eq:cube-hybrid-norm}, \eqref{eq:lt-hybrid}, and
\eqref{eq:lt-completion-norm},
\[
 \tau_\kappa N_\kappa(v_e)
 \le\tau_\kappa m_\kappa\|v_e\|_\infty=f(e).
\]
Since $b+v_e$ interpolates, the triangle inequality and
\eqref{eq:lt-cube-round} give
\begin{align}
 J_\kappa(b+v_e)-J_\kappa(b)
 &\le-\|e\|_2+\frac{\tau_\kappa}{2}N_\kappa(v_e)\notag\\
 &\le-\|e\|_2+\frac12 f(e)
 \le-\frac{1-\epsilon_{n,p}}2\|e\|_2.
 \label{eq:lt-exact-fit-improvement}
\end{align}
Thus every noninterpolating vector is strictly suboptimal.
The objective $J_\kappa$ is coercive, and $N_\kappa$ attains its
minimum on the nonempty closed affine interpolation set.
Among interpolators, $J_\kappa$ is a positive multiple of $N_\kappa$.
Consequently,
\begin{equation}
 \operatorname*{arg\,min}_{b\in\R^p}J_\kappa(b)
 =\operatorname*{arg\,min}_{\X b=y}N_\kappa(b).
 \label{eq:lt-optimizer-identity}
\end{equation}
In particular, on this same event, every minimizer satisfies
\begin{equation}
 \X\widetilde\beta_\kappa=y,\qquad
 \mathcal E(\widetilde\beta_\kappa)
 =\mathcal E_s(\widetilde\beta_\kappa)=0.
 \label{eq:lt-exact-training}
\end{equation}
This establishes exact training fit before either generalization
regime is considered.

\subsubsection{Eliminating the \texorpdfstring{$\ell_\infty$}{l-infinity} component}
We write $b=u+v$ as in the definition of $N_\kappa$.
For fixed $u$, interpolation requires $\X v=y-\X u$, and the
smallest possible value of $\tau_\kappa m_\kappa\|v\|_\infty$
is $f(y-\X u)$. Thus, minimizing over $v$ leaves the objective
\begin{equation}
 H_\kappa(u):=f(y-\X u)+\tau_\kappa\|u\|_1.
 \label{eq:lt-reduced-objective}
\end{equation}
More precisely,
\begin{align}
 \min_{\X b=y}N_\kappa(b)
 &=\min_{\X(u+v)=y}\{\|u\|_1+m_\kappa\|v\|_\infty\}\notag\\
 &=\min_u\left\{\|u\|_1+\frac1{\tau_\kappa}f(y-\X u)\right\}
 =\frac1{\tau_\kappa}\min_u H_\kappa(u).
 \label{eq:lt-reduction}
\end{align}
The function $H_\kappa$ is continuous and coercive, so it attains
its minimum. If $b$ minimizes $N_\kappa$ subject to $\X b=y$,
every optimal decomposition $b=u+v$ must have $u$ minimizing
$H_\kappa$ and $v$ minimizing $\|v\|_\infty$ subject to
$\X v=y-\X u$; otherwise, replacing that component would lower
the joint cost in \eqref{eq:lt-reduction}.
Conversely, every such pair gives a minimum-$N_\kappa$ interpolator.
By \eqref{eq:lt-optimizer-identity}, every minimizer of $J_\kappa$
therefore has a decomposition
\begin{equation}
 \widetilde\beta_\kappa=u+v,\qquad
 u\in\operatorname*{arg\,min}_w H_\kappa(w),\qquad
 v\in\operatorname*{arg\,min}_{\X w=y-\X u}\|w\|_\infty.
 \label{eq:lt-residual-accounting}
\end{equation}
We will show that $u=\beta_*$ and $v=0$ when $\kappa<1$.
When $\kappa>1$, we bound $\|u\|_2^2+\|u\|_1$ using
$H_\kappa$, and $\|v\|_2$ using Lemma~\ref{lem:lt-cube}.
These bounds then control the MSE of $u+v$.

\subsubsection{Generalization below the transition}
Fix $0<\kappa<1$ and intersect the events in
Lemmas~\ref{lem:width-margin} and~\ref{lem:lt-cube}.
For every $h\in\R^p$, evenness of $f$ and the subgradient inequality give
\begin{align*}
 H_\kappa(\beta_*+h)-H_\kappa(\beta_*)
 &=f(\X h)+\tau_\kappa\bigl(\|\beta_*+h\|_1-k_*\bigr)\\
 &\ge(1-\epsilon_{n,p})\|\X h\|_2+\tau_\kappa D(h).
\end{align*}
Let us use the coefficient $\alpha$ from the sketch.
Since $\alpha\to\kappa<1$ and $\epsilon_{n,p}\to0$, eventually
$0<\alpha<1-\epsilon_{n,p}$. Hence the last expression equals
\[
 \alpha\left(\|\X h\|_2+\frac\nu{\sqrt n}D(h)\right)
 +(1-\epsilon_{n,p}-\alpha)\|\X h\|_2
 \ge c_\kappa\eta\|h\|_2,
\]
by \eqref{eq:lt-weighted-margin}.
It follows that $\beta_*$ is the unique minimizer of $H_\kappa$.
The minimum-$\ell_\infty$ completion of $y-\X\beta_*=0$ is uniquely
$v=0$. By \eqref{eq:lt-optimizer-identity} and \eqref{eq:lt-reduction},
every minimizer of $J_\kappa$ therefore equals $\beta_*$.
Thus, both $\mathsf{MSE}_s$ and $\mathcal E_s$ are exactly zero on an
event of probability at least
$1-e^{-cn\eta^2}-2e^{-n/2}$, by a union bound.

\subsubsection{Generalization above the transition}
Fix $\kappa>1$. Let $\Omega_\kappa$ be the intersection of the
events in Lemmas~\ref{lem:initial-excess},
\ref{lem:effective-rip}, and~\ref{lem:lt-cube}.
By a union bound, for sufficiently large $n,p$,
\[
 \Pp(\Omega_\kappa)
 \ge1-3e^{-c_\kappa n}-e^{-k_*/2}-2e^{-n/2}.
\]
We work on this event throughout.

We first show that, for a sufficiently large constant $M>0$
depending only on $\kappa$,
\[
 H_\kappa(w)>H_\kappa(0_p)
 \qquad\text{whenever}\qquad
 \|w\|_2=M\frac{k_*}{\sqrt n}.
\]
For any $w$ satisfying $H_\kappa(w)\le H_\kappa(0_p)=f(y)$,
\eqref{eq:lt-cube-round} gives
\begin{equation}
 (1-\epsilon_{n,p})\|y-\X w\|_2+\tau_\kappa\|w\|_1
 \le(1+\epsilon_{n,p})R.
 \label{eq:lt-hybrid-zero-comparison}
\end{equation}
Using $\|y-\X w\|_2\ge R-\langle q,w\rangle$ and
\eqref{eq:lt-clipping} gives
\[
 \bigl(\tau_\kappa-(1-\epsilon_{n,p})\gamma\bigr)\|w\|_1
 \le(1-\epsilon_{n,p})Z_\gamma\|w\|_2+2\epsilon_{n,p}R.
\]
The coefficient on the left is at least $\tau_\kappa-\gamma$.
Thus, as in \eqref{eq:lt-approx-compressibility},
\begin{equation}
 \|w\|_1\le C_\kappa\left(
      \frac{k_*}{\sqrt n}\|w\|_2+\epsilon_{n,p}k_*\right).
 \label{eq:lt-hybrid-compressibility}
\end{equation}
Next, isolate the residual in \eqref{eq:lt-hybrid-zero-comparison}
and square its nonnegative upper bound. Expanding and applying
clipping, with
$(1+\epsilon_{n,p})\tau_\kappa/(1-\epsilon_{n,p})^2>\gamma$,
gives
\begin{align*}
 \|\X w\|_2^2
 -\frac{\tau_\kappa^2}{(1-\epsilon_{n,p})^2}\|w\|_1^2
 &\le\frac{4\epsilon_{n,p}}{(1-\epsilon_{n,p})^2}R^2
       +2RZ_\gamma\|w\|_2\\
 &\le C\epsilon_{n,p}k_*
       +C\frac{k_*}{\sqrt n}\|w\|_2.
\end{align*}
Combining this with \eqref{eq:lt-global-lower},
\eqref{eq:lt-hybrid-compressibility}, and $L/n\asymp1/k_*$ yields
\begin{equation}
 \left(\frac14-C_\kappa\frac{k_*}{n}\right)\|w\|_2^2
 \le C\frac{k_*}{\sqrt n}\|w\|_2+C_\kappa\epsilon_{n,p}k_*.
 \label{eq:lt-hybrid-quadratic}
\end{equation}
If $\|w\|_2=Mk_*/\sqrt n$, division by $k_*^2/n$ gives
\[
 \left(\frac14-C_\kappa\frac{k_*}{n}\right)M^2
 \le CM+C_\kappa\epsilon_{n,p}\frac n{k_*}.
\]
Since $k_*/n\to0$ and
$\epsilon_{n,p}n/k_*\asymp Le^{-L/4}\to0$, this is impossible
for a sufficiently large fixed $M$ and all sufficiently large $n,p$.
This proves the claimed boundary inequality.

Now let $u$ be any minimizer of $H_\kappa$. If
$\|u\|_2\ge Mk_*/\sqrt n$, choose $t\in(0,1]$ so that
$\|tu\|_2=Mk_*/\sqrt n$.
Convexity gives
\[
 H_\kappa(tu)
 \le(1-t)H_\kappa(0_p)+tH_\kappa(u)
 \le H_\kappa(0_p),
\]
contradicting the boundary inequality. Hence
$\|u\|_2<Mk_*/\sqrt n$.
Applying \eqref{eq:lt-hybrid-compressibility} to $u$ gives
\begin{equation}
 \frac{\|u\|_2^2+\|u\|_1}{k_*}
 \le C_\kappa\left(\frac{k_*}{n}+\epsilon_{n,p}\right)
 \le C_\kappa\frac{k_*}{n}.
 \label{eq:lt-sparse-part-energy}
\end{equation}

For every associated minimum-$\ell_\infty$ completion $v$,
\[
 f(y-\X u)\le H_\kappa(u)\le f(y),\qquad
 \|y-\X u\|_2\le\frac{1+\epsilon_{n,p}}{1-\epsilon_{n,p}}R\le CR.
\]
Lemma~\ref{lem:lt-cube} and $R\le2\sqrt{k_*}$ imply
\begin{equation}
 \|v\|_2\le C\sqrt{\frac np}R,
 \qquad \frac{\|v\|_2^2}{k_*}\le C\frac np.
 \label{eq:lt-dense-part-energy}
\end{equation}
Consequently, for $b=u+v$,
\begin{equation}
 \frac{\|b\|_2^2}{k_*}
 \le\frac{2\|u\|_2^2+2\|v\|_2^2}{k_*}
 \le C_\kappa\left(\frac{k_*}{n}+\frac np\right)
 \le C_\kappa\frac{k_*}{n}.
 \label{eq:lt-full-energy}
\end{equation}
The inequalities $|\langle\beta_*,u\rangle|\le\|u\|_1$ and
$|\langle\beta_*,v\rangle|\le\sqrt{k_*}\|v\|_2$ then give
\begin{align}
 \bigl|\mathsf{MSE}_s(b)-1\bigr|
 &\le\frac{\|b\|_2^2}{k_*}
       +\frac{2\|u\|_1}{k_*}+\frac{2\|v\|_2}{\sqrt{k_*}}\notag\\
 &\le C_\kappa\left(\frac{k_*}{n}+\sqrt{\frac np}\right)
 \le C_\kappa\frac{k_*}{n}.
 \label{eq:lt-hybrid-risk}
\end{align}
Here $\sqrt{n/p}=o(k_*/n)$, and therefore also $n/p=o(k_*/n)$.
All these estimates hold on $\Omega_\kappa$ for every minimizer
$u$ of $H_\kappa$ and every associated minimizing completion $v$.
By \eqref{eq:lt-reduction}, they therefore hold for every minimizer
of $J_\kappa$.
Since $k_*/n\approx L^{-1}$, its normalized MSE differs from one by
$O_\kappa(L^{-1})$ on $\Omega_\kappa$, while its training error is
exactly zero by \eqref{eq:lt-exact-training}.
Together with exact recovery for $\kappa<1$, this proves
\eqref{eq:cube-zero-one} and completes the proof of
Theorem~\ref{thm:cube-grokking}.

\subsection{Proofs of the preliminary lemmas}
\label{subsec:lt-classical-proofs}
We prove the preliminary lemmas used above.
\begin{proof}[Proof of Lemma~\ref{lem:lt-comparison}]
First suppose that $T$ and $U$ are finite.
Let $z\sim N(0,1)$, $g_m$, and $g_d$ be independent of each other
and of $A$, and compare
\[
 P_{v,u}=\langle u,Av\rangle+z,\qquad
 Q_{v,u}=\langle g_m,u\rangle-\langle g_d,v\rangle.
\]
Both processes are centered and have variance two. Their covariance
difference is
\[
 \operatorname{Cov}(P_{v,u},P_{v',u'})
 -\operatorname{Cov}(Q_{v,u},Q_{v',u'})
 =(1-\langle v,v'\rangle)(1-\langle u,u'\rangle)\ge0,
\]
with equality when $v=v'$.
Apply Gordon's comparison in
\citet[Theorem~A.1]{thrampoulidis2015gaussian}, with its process
$X$ equal to $Q$, its process $Y$ equal to $P$, and thresholds
$c+f(v)$. For every $c\in\R$, it gives
\[
 \Pp\!\left\{\min_{v\in T}\max_{u\in U}(P_{v,u}-f(v))\ge c\right\}
 \ge
 \Pp\!\left\{\min_{v\in T}\max_{u\in U}(Q_{v,u}-f(v))\ge c\right\}.
\]
Taking expectations, using $\E z=0$, and separating the two
terms in $Q$ yields \eqref{eq:lt-gordon-general}.
For the upper bound, apply the single-row case of the same
Theorem~A.1 (Slepian's comparison), now with $X=P$, $Y=Q$ and
index set $T\times U$. The covariance ordering gives
\[
 \E\max_{v\in T,\,u\in U}P_{v,u}
 \le\E\max_{v\in T,\,u\in U}Q_{v,u}=w(T)+w(U).
\]
Again $\E z=0$, so this proves \eqref{eq:lt-sup-comparison}.

For compact $T,U$, choose finite $\varepsilon$-nets contained in
the respective sets, with $\varepsilon\downarrow0$. Let
\[
 \omega_f(\varepsilon)
 :=\sup\{|f(v)-f(v')|:v,v'\in T,\ \|v-v'\|_2\le\varepsilon\}.
\]
Uniform continuity gives $\omega_f(\varepsilon)\to0$.
Replacing each index by a nearest net point changes $P_{v,u}-f(v)$
by at most $2\varepsilon\|A\|_{\mathrm F}+\omega_f(\varepsilon)$,
and $Q_{v,u}-f(v)$ by at most
$\varepsilon(\|g_m\|_2+\|g_d\|_2)+\omega_f(\varepsilon)$.
Thus the corresponding extrema converge almost surely.
Their absolute values are bounded, respectively, by
\[
 \|A\|_{\mathrm F}+|z|+\|f\|_\infty,
 \qquad
 \|g_m\|_2+\|g_d\|_2+\|f\|_\infty,
\]
where $\|f\|_\infty=\sup_{v\in T}|f(v)|<\infty$.
Both bounds are integrable, so dominated convergence passes the
expectation comparisons to $T,U$. The same argument with $f=0$
applies to the supremum comparison.

Finally, both random extrema in the lemma are $1$-Lipschitz in
$A$, since
\[
 \sup_{u\in U,\,v\in T}|\langle u,(A-B)v\rangle|
 \le\|A-B\|_{\mathrm F}.
\]
Gaussian concentration gives the stated tail bounds.
\end{proof}

\begin{proof}[Proof of Lemma~\ref{lem:h0}]
A change of variables gives
\[
 h_0(t)=2\phi(t)\int_0^\infty x^2e^{-tx-x^2/2}\,\mathrm dx
 \le2\phi(t)\int_0^\infty x^2e^{-tx}\,\mathrm dx
 =\frac{4\phi(t)}{t^3}.
\]
Since $p/n=e^{L/2}$, $\nu\asymp\sqrt L$, and
$e^{-\nu^2/2}\le C e^{-L/2}\sqrt L$,
\[
 \frac pn h_0(\nu)=O(L^{-1})=o(\eta).
\]
For fixed $\mu>1$, substituting $t=\mu\sqrt L$ and using
$k_*\asymp n/L$ gives
\[
 \frac p{k_*}h_0(\mu\sqrt L)
 \le C_\mu\frac{e^{-(\mu^2-1)L/2}}{\sqrt L}\longrightarrow0.
\]
These are the two bounds in \eqref{eq:lt-excess-scales}.
\end{proof}

\begin{proof}[Proof of Lemma~\ref{lem:compressible-width}]
For $v\in T_\ell$, order the coordinates by decreasing magnitude and
partition them into successive blocks $I_1,I_2,\ldots$ of size $\ell$,
except possibly the last.
For $j\ge2$, each entry in $I_j$ is bounded by the average absolute
entry in $I_{j-1}$, so
\[
 \|v_{I_j}\|_2\le\frac{\|v_{I_{j-1}}\|_1}{\sqrt\ell},\qquad
 \sum_j\|v_{I_j}\|_2
 \le\|v\|_2+\frac{\|v\|_1}{\sqrt\ell}\le2.
\]
It follows that
\[
 \sup_{v\in T_\ell}\langle g_p,v\rangle
 \le2\max_{|I|=\ell}\|(g_p)_I\|_2.
\]
Writing $V=\max_{|I|=\ell}\|(g_p)_I\|_2$, we have
\[
 \E e^{V^2/4}
 \le\sum_{|I|=\ell}\E e^{\|(g_p)_I\|_2^2/4}
 =\binom p\ell 2^{\ell/2}.
\]
Jensen's inequality and $\binom p\ell\le(ep/\ell)^\ell$ give
\[
 \E V\le(\E V^2)^{1/2}
 \le C\sqrt{\ell\log(ep/\ell)},
\]
which proves \eqref{eq:lt-width}.
\end{proof}

\begin{proof}[Proof of Lemma~\ref{lem:effective-rip}]
Set $\ell=\lfloor a_K n/L\rfloor$, where $a_K>0$ will be chosen
sufficiently small. Since $L\to\infty$ and $L=o(n)$, eventually
$1\le\ell\le p$ and
\[
 \log(ep/\ell)=\frac L2+\log L+O_K(1).
\]
Lemma~\ref{lem:compressible-width} therefore gives
\[
 w(T_\ell)\le C\sqrt{\ell\log(ep/\ell)}
 \le C\sqrt{a_K n}
\]
for all sufficiently large $n,p$.

By \citet[Theorem~1.4]{liawmehrabianplanvershynin2017}, applied to
$\sqrt n\,\X$ and $T_\ell\subset S^{p-1}$, for every $t\ge1$,
with probability at least $1-e^{-t^2}$,
\[
 \sup_{v\in T_\ell}\bigl|\|\sqrt n\,\X v\|_2-\sqrt n\bigr|
 \le CK^2\bigl(w(T_\ell)+t\bigr).
\]
Here $T_\ell$ has Euclidean radius one.
Choose $a_K>0$, $b_K>0$ sufficiently small and take
$t=b_K\sqrt n$. For all sufficiently large dimensions, the right-hand
side is at most $\sqrt n/2$. After decreasing $c_K>0$, the resulting
event has probability at least $1-e^{-c_K n}$, and on this event
\[
 \|\X v\|_2\ge\frac12\qquad(v\in T_\ell).
\]
By homogeneity,
\[
 \|\X b\|_2^2\ge\frac14\|b\|_2^2
 \quad\text{whenever}\quad
 \|b\|_1\le\sqrt\ell\,\|b\|_2.
\]
For vectors outside this class,
\[
 \frac14\|b\|_2^2-\frac1{4\ell}\|b\|_1^2\le0,
\]
so nonnegativity of $\|\X b\|_2^2$ gives the same lower bound with
the correction term. Combining the two cases and using
$1/\ell\le C_K L/n$ proves, simultaneously for all $b\in\R^p$,
\[
 \|\X b\|_2^2\ge\frac14\|b\|_2^2-C_K\frac Ln\|b\|_1^2,
\]
which is \eqref{eq:lt-global-lower}.
\end{proof}

\section{Proof of Lemma \ref{thm:main-nonint}}\label{app:pfnon-int}

Throughout the remaining appendices, for
$a,b\in\mathbb R^p$ and $1\le s<\infty$, we use the shorthands
\[
\|b\|_{s,p}:=p^{-1/s}\|b\|_s,\qquad
\langle a,b\rangle_p:=p^{-1}\langle a,b\rangle,\qquad
\|b\|_{\infty,p}:=\max_{1\le j\le p}|b_j|.
\]
We note that the proof presented in this appendix gives a result stronger than the convergence in \eqref{eq:resclaim}, namely the convergence in $W_2$ of the joint empirical distribution of $(\widehat\beta_{\kappa,r}, \beta_*)$, as detailed in the remark below. 

\begin{remark}[Joint empirical laws]\label{rem:joint-empirical-laws}
Under the
assumptions of Lemmas~\ref{thm:main-nonint} and
\ref{thm:min-norm-interpolator}, our argument establishes the joint empirical
distribution of true and estimated coefficients:
\begin{align}
 \frac1p\sum_{j=1}^p
 \delta_{((\beta_*)_j,(\widehat\beta_{\kappa,r})_j)}
 &\xrightarrow{W_2}\mathcal L(B,f_\kappa),\qquad
 \frac1p\sum_{j=1}^p
 \delta_{((\beta_*)_j,(\widehat\beta_{r})_j)}
 \xrightarrow{W_2}\mathcal L(B,f_{\mathrm{int}}),
 \label{eq:int-law}
\end{align}
where $f_\kappa:=f_{\tau_\kappa,\alpha_\kappa}$ and
$f_{\mathrm{int}}:=f_{\tau_{\mathrm{int}},\alpha_{\mathrm{int}}}$. This strengthens the results of Lemmas \ref{thm:main-nonint}-\ref{thm:min-norm-interpolator} on the MSE. In fact, \eqref{eq:int-law} implies the convergence of empirical averages of any continuous function of true and estimated coefficients with at most quadratic growth, including MSE, mean absolute error, normalized correlations between vectors as well as their $\ell_r$ norms for $r\in [1, 2]$.
\end{remark}

\subsection{Sketch of the argument}\label{app:sketchnonint}

We first express the optimization problem~\eqref{eq:minprob} in a form
to which the convex Gaussian min--max theorem (CGMT) applies. Write
$G_X:=\sqrt nX$ and $w:=b-\beta_*$. Dividing the objective by $\sqrt n$
and using $\|a\|_2=\max_{\|u\|_2\le1}u^\top a$ gives the primary
objective (see~\eqref{eq:primary-objective}):
\[
\begin{aligned}
P_p(w)
&=\frac1{\sqrt n}\left\|\varepsilon-\frac{G_Xw}{\sqrt n}\right\|_2
  +\frac{\kappa}{\delta_p}\|\beta_*+w\|_{r,p}=\max_{\|u\|_2\le1}
  \left\{\frac{u^\top\varepsilon}{\sqrt n}
  -\frac{u^\top G_Xw}{n}
  +\frac{\kappa}{\delta_p}\|\beta_*+w\|_{r,p}\right\}.
\end{aligned}
\]
Thus, $b$ solves~\eqref{eq:minprob} exactly when $w=b-\beta_*$ minimizes $P_p$.

Conditionally on $\varepsilon$, the CGMT associates to this minimization
an auxiliary problem in which the matrix term $-u^\top G_Xw/n$ is
replaced by
$\bigl(\|w\|_2g^\top u-\|u\|_2h^\top w\bigr)/n$,
where $g\in\mathbb R^n$ and $h\in\mathbb R^p$ are independent standard
Gaussian vectors, independent of $(G_X,\varepsilon)$.
Maximizing first over the direction of $u$ and then over its length
$\|u\|_2\in[0,1]$ gives the following auxiliary objective (see~\eqref{eq:rawAO}):
\[
A_p^{\mathrm{raw}}(b)
=\left[
  \left\|\frac{\varepsilon}{\sqrt n}
  +\frac{\|b-\beta_*\|_2}{n}g\right\|_2
  -\frac1n h^\top(b-\beta_*)
  \right]_+
  +\frac\kappa{\delta_p}\|b\|_{r,p}.
\]
Here $[x]_+=\max\{x,0\}$; see Appendix~\ref{app:completion-nonint}
for the derivation.

The norm in this expression converges to
$\sqrt{\sigma^2+\delta_p^{-1}\|b-\beta_*\|_{2,p}^2}$,
uniformly when $\|b-\beta_*\|_{2,p}$ is bounded. Thus, $A_p^{\mathrm{raw}}(b)$ is approximated by %yields the
%approximation~\eqref{eq:normreplace} by the objective
%in~\eqref{eq:detAO}: %, with $C_p$ defined in~\eqref{eq:int-det-constraint}:
\[
\begin{aligned}
\overline A_p(b)&=[C_p(b)]_++\frac\kappa{\delta_p}\|b\|_{r,p},\\
C_p(b)&=\sqrt{\sigma^2+\frac1{\delta_p}\|b-\beta_*\|_{2,p}^2}
       -\frac1{\delta_p}\langle h,b-\beta_*\rangle_p, 
\end{aligned}
\]
see \eqref{eq:normreplace}.
Appendix~\ref{app:auxiliary-optimization} bounds
$\|b-\beta_*\|_{2,p}$ and $\|b\|_{r,p}$ uniformly over fixed sublevel sets
of both auxiliary objectives (see~\eqref{eq:AO-sublevel-localization}), so the approximation applies at their minimizers.
Dropping the positive part gives the empirical
functional in~\eqref{eq:empQ}:
\[
Q_p(b)
=\sqrt{\sigma^2+\frac1{\delta_p}\|b-\beta_*\|_{2,p}^2}
 -\frac1{\delta_p}\langle h,b-\beta_*\rangle_p
 +\frac\kappa{\delta_p}\|b\|_{r,p}.
\]
Thus, $Q_p(b)\le\overline A_p(b)$, with equality whenever $C_p(b)\ge0$.
We will show that this lower bound is asymptotically attained.

The population counterpart of $Q_p$ replaces coordinate averages by
expectations and $\delta_p$ by $\delta$. The true coefficient, fitted
coefficient, and auxiliary Gaussian coordinate are represented by
$B$, $f$, and $G$, respectively, where $f$ ranges over $L_2\cap L_r$
on the probability space supporting $(B,G)$. This gives the population
objective in~\eqref{eq:popQ}:
\[
\cQ(f)
=\sqrt{\sigma^2+\frac1\delta\mathbb E(f-B)^2}
 -\frac1\delta\mathbb E[G(f-B)]
 +\frac\kappa\delta\|f\|_r,
\qquad \|f\|_r=(\mathbb E|f|^r)^{1/r}.
\]
The fixed point equations in \eqref{eq:se1} ensure that the
random variable in~\eqref{eq:fstar},
\[
f_\kappa=\tau_\kappa\eta_{\alpha_\kappa,r}
\left(\frac B{\tau_\kappa}+G\right),
\]
satisfies the optimality conditions of $\cQ$.
Appendix~\ref{app:popobj} proves that it is the unique minimizer,
including when $r=1$.

The minimizing vectors depend on the Gaussian data, so passing to the
population problem requires norm bounds and stability.
Appendix~\ref{app:empirical-margin} establishes convergence of the joint
empirical law of $((\beta_*)_j,h_j)$ to $\mathcal L(B,G)$ and bounds the
Gaussian inner products. These bounds imply
$Q_p(b)\ge c\|b\|_{2,p}-C$ as in~\eqref{eq:Qcoercive}, so
near-minimizers have bounded normalized $\ell_2$ norms. A compactness
argument then gives $\min_bQ_p(b)\to\cQ(f_\kappa)$
in~\eqref{eq:minQ-convergence} and shows that every sequence
of near-minimizers has limiting squared error $\mathbb E(f_\kappa-B)^2$,
coefficient moment $\mathbb E|f_\kappa|^r$, and joint empirical law
$\mathcal L(B,f_\kappa)$ in $W_2$; see
Appendix~\ref{app:empirical-stability}.

To see why dropping the positive part gives the correct auxiliary
minimum, evaluate the objectives at
$b_{p,j}^\circ=\tau_\kappa\eta_{\alpha_\kappa,r}
((\beta_*)_j/\tau_\kappa+h_j)$.
The empirical convergence results give
$Q_p(b_p^\circ)\to\cQ(f_\kappa)$ and
$C_p(b_p^\circ)\to R_\kappa>0$.
Hence $\overline A_p(b_p^\circ)=Q_p(b_p^\circ)$ eventually.
Combining this upper bound with $\overline A_p\ge Q_p$ and the uniform
norm approximation shows that all three minimum values converge
almost surely to the value in~\eqref{eq:ellstar}
(see~\eqref{eq:minQ-convergence} and~\eqref{eq:AO-minima-formal}):
\[
\ell_\kappa=\cQ(f_\kappa)
=R_\kappa+\frac\kappa\delta\|f_\kappa\|_r.
\]
The norm bounds and approximation also carry the stability of
$Q_p$ to $A_p^{\mathrm{raw}}$; see
Appendix~\ref{app:auxiliary-optimization}.

Finally, vectors whose error, coefficient moment, or empirical law
stays away from these limits have an auxiliary objective value
separated from its minimum. Appendix~\ref{app:cgmt-transfer} first
confines the primary and auxiliary minimizers to deterministic sets
with bounded normalized norms. An application of CGMT (as in \cite{thrampoulidis2015gaussian}) 
exclude such deviations for every primary minimizer almost surely,
giving that 
$\min_wP_p(w)\to\ell_\kappa$ almost surely (see
as in~\eqref{eq:primary-value-limit}). Subtracting the regularization gives
\[
\frac1{\sqrt n}\|y-X\widehat\beta_{\kappa,r}\|_2
\to
\ell_\kappa-\frac\kappa\delta\|f_\kappa\|_r=R_\kappa,
\]
as detailed in Appendix~\ref{app:completion-nonint}.
 
 We note that several intermediate results are
established under the weaker condition
$D_r(\tau_\kappa,\alpha_\kappa)\leq\delta$, allowing their reuse in the
interpolation regime considered in Appendix~\ref{app:pfint}.

\subsection{Uniqueness of the minimizer of the population objective}\label{app:popobj}

For
$1<r<\infty$ and $\alpha\ge 0$, recall
\[
 \eta_{\alpha,r}(z):=\argmin_{u\in\R}
 \left\{\frac12(u-z)^2+\frac\alpha r|u|^r\right\}.
\]
and let
\[
 \psi_r(u):=\operatorname{sign}(u)|u|^{r-1}.
\]

For $r=1$, set $\eta_{\alpha,1}(z):=\ST(z;\alpha)
:=\operatorname{sign}(z)(|z|-\alpha)_+$. 
Setting the derivative to $0$ in the optimization problem above gives the following equation for the proximal operator $\eta_{\alpha, r}$:
\begin{equation}\label{eq:prox-eq}
 z=\eta_{\alpha,r}(z)+\alpha\psi_r(\eta_{\alpha,r}(z)).
\end{equation}

We start with two auxiliary lemmas that will be used throughout the argument.

\begin{lemma}[Results on proximal operator]\label{lem:prox}
For every $1<r<\infty$ and $\alpha>0$, $\eta_{\alpha,r}$ is odd,
nondecreasing, and $1$-Lipschitz.  Moreover,
\begin{equation}\label{eq:prox-bounds}
 |\eta_{\alpha,r}(z)|\le |z|,
 \qquad
 \eta'_{\alpha,r}(z)
 =\frac1{1+\alpha(r-1)|\eta_{\alpha,r}(z)|^{r-2}}\in[0,1],
\end{equation}
with the continuous value $0$ at $\eta_{\alpha,r}(z)=0$ when $1<r<2$.
The same oddness, monotonicity, Lipschitz, and growth conclusions hold for
$\ST(\,\cdot\,;\alpha)$.
\end{lemma}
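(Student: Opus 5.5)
We will derive every assertion of Lemma~\ref{lem:prox} from the scalar objective $\phi_z(u):=\frac12(u-z)^2+\frac\alpha r|u|^r$. For $1<r<\infty$ and $\alpha>0$, $\phi_z$ is strictly convex, continuous and coercive, so $\eta_{\alpha,r}(z)$ is well defined and unique. It is characterized by the first-order condition \eqref{eq:prox-eq}, namely $z=F(u):=u+\alpha\psi_r(u)$. The map $F$ is continuous, odd, strictly increasing, with $F(u)\to\pm\infty$ as $u\to\pm\infty$, so it is a bijection of $\R$ and $\eta_{\alpha,r}=F^{-1}$. Oddness and monotonicity then follow immediately. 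Alternatively, oddness follows from $\phi_{-z}(-u)=\phi_z(u)$.

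For the growth bound, we will argue from \eqref{eq:prox-eq} that $\eta_{\alpha,r}(z)$ has the same sign as $z$: if $u=\eta_{\alpha,r}(z)$ had the opposite sign, then $F(u)$ would too. Since $\alpha\psi_r(u)$ has the sign of $u$, we get $|z|=|u|+\alpha|u|^{r-1}\ge|u|$.

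For the Lipschitz property, we will use $F(u)-F(u')=(u-u')+\alpha(\psi_r(u)-\psi_r(u'))$. Since $\psi_r$ is nondecreasing, the second term has the same sign as $u-u'$, so $|F(u)-F(u')|\ge|u-u'|$. Hence $F^{-1}$ is $1$-Lipschitz.

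For the derivative formula, $F$ is $C^1$ away from $u=0$ with $F'(u)=1+\alpha(r-1)|u|^{r-2}>0$. The inverse function theorem then gives $\eta'_{\alpha,r}(z)=1/F'(\eta_{\alpha,r}(z))$ wherever $\eta_{\alpha,r}(z)\ne0$, and this value lies in $(0,1]$. The only delicate point, and the main obstacle, is the behavior at $z=0$. We will treat the cases as follows.
\begin{itemize}
\item If $r\ge2$, $F$ is $C^1$ at $0$ with $F'(0)=1$ when $r>2$ and $F'(0)=1+\alpha$ when $r=2$, so the formula holds there too, interpreting $|0|^0=1$ for $r=2$.
\item If $1<r<2$, $F'(u)\to\infty$ as $u\to0$. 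Since $|F(u)|\ge\alpha|u|^{r-1}$, we get $|\eta_{\alpha,r}(z)|\le(|z|/\alpha)^{1/(r-1)}=o(|z|)$. Thus $\eta'_{\alpha,r}(0)=0$, which matches the continuous extension of the formula, since $|\eta|^{r-2}\to\infty$.
\end{itemize}

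For $\ST(\cdot;\alpha)=\operatorname{sign}(z)(|z|-\alpha)_+$, the same properties are verified directly. It is odd by inspection. It is nondecreasing and $1$-Lipschitz as the composition of the $1$-Lipschitz nondecreasing map $t\mapsto(t-\alpha)_+$ with the radial form, checked piecewise on the three intervals $z<-\alpha$, $|z|\le\alpha$, $z>\alpha$. Finally, $|\ST(z;\alpha)|=(|z|-\alpha)_+\le|z|$.
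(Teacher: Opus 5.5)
Your proof is correct and follows the paper's route: invert the strictly increasing odd bijection $u\mapsto u+\alpha\psi_r(u)$, read off oddness, monotonicity and $|\eta_{\alpha,r}(z)|\le|z|$ from the first-order condition, and obtain the derivative formula from the inverse function theorem, treating $\ST(\cdot;\alpha)$ piecewise. The only differences are minor. You get the $1$-Lipschitz property directly from $|F(u)-F(u')|\ge|u-u'|$ rather than from the derivative bound. You also check explicitly that $\eta'_{\alpha,r}(0)=0$ for $1<r<2$ via $|\eta_{\alpha,r}(z)|\le(|z|/\alpha)^{1/(r-1)}$, whereas the paper only appeals to the continuous extension of the formula.
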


\begin{proof}
The objective defining $\eta_{\alpha,r}(z)$ has a unique minimizer. In fact, the objective is continuous and coercive, because its quadratic term tends to $+\infty$ as $|u|\to+\infty$. It is also strictly convex, since the quadratic term is strictly convex and $|u|^r$ is convex.

The minimizer satisfies \eqref{eq:prox-eq}, due to the first-order optimality condition. Furthermore, the map
 $T_\alpha(u):=u+\alpha\psi_r(u)$ is an odd, continuous, strictly increasing
 bijection of $\R$, as 
 $T_\alpha'(u)=1+\alpha(r-1)|u|^{r-2}>0$ for $u\ne0$. %, and since $T_\alpha(u)\to\pm\infty$ as $u\to\pm\infty$, the remaining assertions follow.
This readily implies that the inverse $\eta_{\alpha,r}=T_\alpha^{-1}$ is odd and nondecreasing. 

To see that $|\eta_{\alpha,r}(z)|\le |z|$, note that, if $z\ge0$ and
 $u=\eta_{\alpha,r}(z)$, then $u\ge0$ and $z=u+\alpha u^{r-1}\ge u$; furthermore, for $z<0$, the claim follows from the fact that $\eta_{\alpha, r}$ is odd.

Formula \eqref{eq:prox-bounds} for $\eta'_{\alpha, r}$ holds wherever the inverse-function theorem
 applies, and its RHS extends continuously at the origin. To see this, 
  differentiate $z=u+\alpha\psi_r(u)$.  For $1<r<2$, the denominator tends
 to $+\infty$ as $u\to0$; for $r\ge2$, \eqref{eq:prox-bounds} is immediate.  Its
 value lies in $[0,1]$, so $\eta_{\alpha,r}$ is globally $1$-Lipschitz.

The assertions for $\ST(\,\cdot\,;\alpha)$ follow directly from the fact that the function has three affine
 pieces.
\end{proof}

%Next, we show a useful consequence of having a zero prior.

\begin{lemma}[Zero-prior gap]\label{lem:null-prior-gap}
If \(B=0\) almost surely, then, for every \(\tau,\alpha>0\),
\begin{equation}
 D_r(\tau,\alpha)
 =
 H_r(\tau,\alpha)+\alpha U_r(\tau,\alpha)
 >
 H_r(\tau,\alpha).
 \label{eq:null-prior-gap}
\end{equation}
\end{lemma}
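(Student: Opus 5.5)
With $B=0$ we have $Z=G$, and the identity \eqref{eq:null-prior-gap} should follow from Gaussian integration by parts combined with the proximal equation \eqref{eq:prox-eq}. Write $u=\eta_{\alpha,r}(G)$. Then
\[
H_r(\tau,\alpha)=\E u^2,\qquad D_r(\tau,\alpha)=\E\,\eta'_{\alpha,r}(G),\qquad U_r(\tau,\alpha)=\E|u|^r,
\]
and none of these depend on $\tau$. By Lemma~\ref{lem:prox}, $\eta_{\alpha,r}$ is $1$-Lipschitz, hence absolutely continuous with an a.e.\ derivative. This holds for $r=1$ as well, where $\ST$ is piecewise affine and $\eta'=\mathbf 1_{\{|z|>\alpha\}}$ a.e. Stein's lemma therefore applies and gives $\E[G\,\eta_{\alpha,r}(G)]=\E\,\eta'_{\alpha,r}(G)=D_r$.

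\textbf{Substituting the prox equation.} Next we plug in $G=u+\alpha\psi_r(u)$, which is \eqref{eq:prox-eq}. This gives
\[
\E[Gu]=\E u^2+\alpha\E[u\psi_r(u)]=\E u^2+\alpha\E|u|^r.
\]
The last step uses $u\psi_r(u)=|u|^r$. For $r=1$ we argue the same way: $G=u+\alpha\operatorname{sign}(u)$ whenever $u\neq0$, and $u\cdot(\cdot)=0$ when $u=0$, so $\E[Gu]=\E u^2+\alpha\E|u|$. Combining the two displays yields $D_r=H_r+\alpha U_r$.

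\textbf{Strict inequality.} It remains to show $\alpha U_r>0$. Since $\alpha>0$, it suffices that $\Pp(u\neq0)>0$. For $r>1$, $\eta_{\alpha,r}$ is a bijection that vanishes only at $0$, so $u\neq0$ almost surely. For $r=1$, $u\neq0$ exactly on $\{|G|>\alpha\}$, which has positive probability. Hence $U_r>0$ and the inequality is strict.

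The only step needing real care is justifying Stein's identity for $1<r<2$, where $\eta'$ has a singular point at the origin (with value $0$ there). Because $\eta$ is globally Lipschitz, it is absolutely continuous, so the integration by parts holds without further work. All moments involved are finite since $|u|\le|G|$.
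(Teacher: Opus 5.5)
Your proof is correct and matches the paper's argument: Gaussian integration by parts gives $D_r=\E[Gu]$, the proximal equation (in its sign/subgradient form for $r=1$) turns this into $\E u^2+\alpha\E|u|^r$, and strictness follows from $\Pp(u\neq0)>0$. Your remark that Stein's identity is justified because $\eta_{\alpha,r}$ is globally Lipschitz simply spells out a step the paper leaves implicit.
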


\begin{proof}
Set \(u=\eta_{\alpha,r}(G)\). Gaussian integration by parts and \eqref{eq:prox-eq} give
\[
 D_r(\tau,\alpha)
 =\E[Gu]
 =\E u^2+\alpha\E|u|^r
 =H_r(\tau,\alpha)+\alpha U_r(\tau,\alpha).
\]
For \(r=1\), write \(G-u=\alpha\xi\), where
\(\xi\in\partial|u|\), and use \(\xi u=|u|\).
The inequality is strict because \(\Pp(u\ne0)>0\).
\end{proof}

For a random variable $f$ on the probability space supporting $(B,G)$, write
$\|f\|_s=(\E|f|^s)^{1/s}$, and define on $L_2\cap L_r$ the following population objective:
\begin{equation}\label{eq:popQ}
 \cQ(f):=
 \sqrt{\sigma^2+\frac1\delta\E(f-B)^2}
 -\frac1\delta\E[G(f-B)]+\frac\kappa\delta\|f\|_r.
\end{equation}
Set
\begin{equation}\label{eq:fstar}
 A_\kappa:=\frac B{\tau_\kappa},
 \qquad
 u_\kappa:=\eta_{\alpha_\kappa,r}(A_\kappa+G),
 \qquad
 f_\kappa:=\tau_\kappa u_\kappa.
\end{equation}
The following lemma characterizes the unique minimizer of the population objective.

\begin{lemma}[Population optimizer]\label{prop:population}
Assume that \eqref{eq:se1} holds and that
$
 D_r(\tau_\kappa,\alpha_\kappa)\le\delta$.
Then \(f_\kappa\) is the unique minimizer of \(\cQ\).
\end{lemma}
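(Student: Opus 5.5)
Since $\cQ$ is convex, the plan is to show that $f_\kappa$ satisfies the first-order optimality conditions. Uniqueness will then come from a strict-convexity argument along directions that leave the first term unchanged. Convexity is standard. The map $f\mapsto\sqrt{\sigma^2+\delta^{-1}\E(f-B)^2}$ is the $L_2$ norm of the pair $(\sigma,\delta^{-1/2}(f-B))$, hence convex. The middle term is linear, and $\|f\|_r$ is a norm. So it suffices to show that $0\in\partial\cQ(f_\kappa)$. We take subgradients in the duality $(L_2\cap L_r)^*\supset L_2+L_{r'}$.

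\textbf{Stationarity.} First, $\E(f_\kappa-B)^2=\tau_\kappa^2H_r$, so by the first equation of \eqref{eq:se1} the first term at $f_\kappa$ equals $\sqrt{\sigma^2+\tau_\kappa^2H_r/\delta}=\tau_\kappa$. Its gradient is therefore $(f_\kappa-B)/(\delta\tau_\kappa)=(u_\kappa-A_\kappa)/\delta$. The gradient of the linear term is $-G/\delta$. The prox equation \eqref{eq:prox-eq} gives $A_\kappa+G-u_\kappa=\alpha_\kappa\psi_r(u_\kappa)$, or $\alpha_\kappa\xi$ with $\xi\in\partial|u_\kappa|$ when $r=1$. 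Stationarity thus reduces to checking that
\[
\alpha_\kappa\psi_r(u_\kappa)\in\kappa\,\partial\|\cdot\|_r(f_\kappa).
\]
For $1<r<\infty$, the gradient of the norm at $f\neq0$ is $\psi_r(f)/\|f\|_r^{r-1}=\psi_r(u_\kappa)/\|u_\kappa\|_r^{r-1}$ by homogeneity. The condition therefore reads $\kappa=\alpha_\kappa U_r^{(r-1)/r}$, which is exactly the second equation of \eqref{eq:se1}. Note that $f_\kappa\ne0$, because $\Pp(A_\kappa+G\ne0)>0$. For $r=1$, the subdifferential of $\E|f|$ consists of all $\xi$ with $\|\xi\|_\infty\le1$ and $\xi=\operatorname{sign}(f)$ on $\{f\neq0\}$. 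Soft thresholding gives exactly such a $\xi$, and $\kappa=\alpha_\kappa$. Integrability of all terms follows from Lemma \ref{lem:prox} ($|u_\kappa|\le|A_\kappa+G|$) and the moment assumption $B\in L_q$.

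\textbf{Uniqueness.} Let $f'$ be another minimizer. By convexity, $\cQ$ is constant on the segment between $f_\kappa$ and $f'$, so every term is affine along it. The square-root term is affine only if the vector $(\sigma,\delta^{-1/2}(f-B))$ moves along a ray. This forces $\E(f'-B)^2=\E(f_\kappa-B)^2$, and forces $f'-B=c(f_\kappa-B)$ when $\sigma>0$. For $1<r<\infty$, the norm $\|\cdot\|_r$ is strictly convex on rays it is not homogeneous on, so affinity of $\|\cdot\|_r$ forces $f'=\lambda f_\kappa$ with $\lambda\ge0$. Combining these two constraints pins down $\lambda=1$, except in degenerate cases that I would rule out using $\Pp(B\ne0)>0$ or Lemma \ref{lem:null-prior-gap}.

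The main obstacle is $r=1$, because $\E|f|$ is not strictly convex and the square-root term alone only fixes $\E(f-B)^2$ (and only fixes the direction when $\sigma>0$). Here I would use the hypothesis $D_1(\tau_\kappa,\alpha_\kappa)\le\delta$. Along the flat segment, the subgradient $\xi$ must remain valid. This forces $f'=0$ wherever $|A_\kappa+G|<\alpha_\kappa$, i.e.\ where $|\xi|<1$, and forces $\operatorname{sign}(f')=\operatorname{sign}(f_\kappa)$ elsewhere. So $h:=f'-f_\kappa$ is supported on $\{|Z|\ge\alpha_\kappa\}$, which has probability $D_1$. The second-order growth of the square-root term is then
\[
\frac{1}{\delta\tau_\kappa}\Bigl(\E h^2-\frac{(\E[(f_\kappa-B)h])^2}{\delta\tau_\kappa^2}\Bigr).
\]
I would show this is positive for $h\ne0$ via Cauchy--Schwarz restricted to the support of $h$, using $D_1\le\delta$ to control the correlation (with strictness requiring $D_1<\delta$ or $\sigma>0$). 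Finally, a short argument excludes equality in the Cauchy--Schwarz step, using the fact that $G$ is independent of $B$ and has a density.
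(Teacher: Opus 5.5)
Your stationarity step matches the paper's argument. For $r=1$, your uniqueness route is a genuinely different and more elementary alternative to the paper's scalar-profile argument via \eqref{eq:Hprime}, \eqref{eq:Phiprime} and \eqref{eq:Qprofile}: the subgradient $\xi$ stays valid along the solution set, so $h=f'-f_\kappa$ vanishes on $\{|Z|<\alpha_\kappa\}$, and then the square-root term is strictly convex on such directions. As written, however, the uniqueness part has two concrete errors.

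\textbf{First error: affinity of the square-root term.} You have its consequences backwards. Affinity of the Hilbert norm of $(\sigma,\delta^{-1/2}(f-B))$ along the segment puts both vectors on a common ray.
\begin{itemize}
\item If $\sigma>0$, the first coordinates coincide, so $f'=f_\kappa$ at once. This holds for every $r\ge1$, including $r=1$.
\item If $\sigma=0$, you get $f'-B=c(f_\kappa-B)$ with $c\ge0$, using $\E(f_\kappa-B)^2=\delta\tau_\kappa^2>0$. This does not force $\E(f'-B)^2=\E(f_\kappa-B)^2$.
\end{itemize}
Your combination ``equal second moments plus $f'=\lambda f_\kappa$'' does not pin down $\lambda$: it leaves the second root $\lambda=2\E[f_\kappa B]/\E f_\kappa^2-1$. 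The correct combination is the paper's, $(\lambda-c)f_\kappa=(1-c)B$.
\begin{itemize}
\item The branch $\lambda=c$ is closed by $\Pp(B\ne0)>0$, which follows from Lemma~\ref{lem:null-prior-gap} and $D_r\le\delta$.
\item The branch $\lambda\ne c$ would make $f_\kappa$ a deterministic multiple of $B$. That is excluded because $f_\kappa$ depends nontrivially on $G$ given $B$, not by the zero-prior gap.
\end{itemize}

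\textbf{Second error: the $r=1$ Hessian step.} Here $D_1\le\delta$ does not control the correlation, and strictness does not require $D_1<\delta$ or $\sigma>0$. Your plan leaves the case $\sigma=0$, $D_1=\delta$ (the noiseless $\ell_1$-MNI needed in Lemma~\ref{thm:min-norm-interpolator}) to ``excluding equality in Cauchy--Schwarz''. That cannot work for Cauchy--Schwarz restricted to $S=\{|Z|\ge\alpha_\kappa\}$, since $h=(f_\kappa-B)\mathbf 1_S$ attains equality.

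What does work is the following. By the first equation of \eqref{eq:se1}, $\delta\tau_\kappa^2=\delta\sigma^2+\E(f_\kappa-B)^2$, and $f_\kappa-B=-B$ on $S^c$. Hence every $h$ supported on $S$ satisfies
\[
\delta\tau_\kappa^2\,\E h^2-\bigl(\E[(f_\kappa-B)h]\bigr)^2\ \ge\ \bigl(\delta\sigma^2+\E[B^2\mathbf 1_{\{|Z|<\alpha_\kappa\}}]\bigr)\E h^2 .
\]
The bracket is positive unless $\sigma=0$ and $B=0$ almost surely, because conditionally on $B$ the event $\{|Z|<\alpha_\kappa\}$ has positive probability. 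The hypothesis $D_1\le\delta$ enters only through Lemma~\ref{lem:null-prior-gap}, to exclude $B=0$ almost surely when $\sigma=0$.

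Equivalently, use the ray condition directly: $h=(c-1)(f_\kappa-B)$ must vanish where $f_\kappa=0$, which forces $c=1$.

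With these fixes your argument is complete and avoids the $\Phi_1$ machinery. The paper's route has the side benefit of producing \eqref{eq:Hprime}, which is reused in Proposition~\ref{prop:r1-negative-slope}.
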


\begin{proof}
Note that
 \begin{equation}\label{eq:tauscale}
 \sqrt{\sigma^2+\delta^{-1}\E(f_\kappa-B)^2}=\tau_\kappa.
 \end{equation}
  In fact, by \eqref{eq:fstar},
 $f_\kappa-B=\tau_\kappa(u_\kappa-A_\kappa)$, so the square of the LHS is
 $\sigma^2+\delta^{-1}\tau_\kappa^2H_r(\tau_\kappa,\alpha_\kappa)$,
 which equals $\tau_\kappa^2$ by \eqref{eq:se1}.

We now show that, if $1<r<\infty$, then $0\in\partial\cQ(f_\kappa)$. First, note that the $L_r$ norm is differentiable at $f_\kappa\ne0$ and
  \[
  \nabla\|f_\kappa\|_r
  =\|f_\kappa\|_r^{1-r}\psi_r(f_\kappa).
  \]
In fact, for $1<r<\infty$, this is the standard derivative of the $L_r$ norm, and $u_\kappa$ cannot vanish almost surely, because
  $A_\kappa+G$ has a nondegenerate Gaussian conditional law and
  $\eta_{\alpha,r}(z)=0$ only when $z=0$. Next, as $\|f_\kappa\|_r=\tau_\kappa
  U_r(\tau_\kappa,\alpha_\kappa)^{1/r}$ and
  $\psi_r(f_\kappa)=\tau_\kappa^{r-1}\psi_r(u_\kappa)$, the second equation in \eqref{eq:se1} implies that
  \begin{equation}\label{eq:penalty-calibration}
  \kappa\|f_\kappa\|_r^{1-r}\psi_r(f_\kappa)
  =\alpha_\kappa\psi_r(u_\kappa).
  \end{equation}
Combining \eqref{eq:penalty-calibration} with the proximal equation \eqref{eq:prox-eq} and using that $f_\kappa-B=\tau_\kappa(u_\kappa-A_\kappa)$ gives
  \begin{equation}\label{eq:QKKT}
  \frac{f_\kappa-B}{\tau_\kappa}-G
  +\kappa\|f_\kappa\|_r^{1-r}\psi_r(f_\kappa)=0,
  \end{equation}
which is equivalent to $0=\nabla\cQ(f_\kappa)$.

If $r=1$, then $0\in\partial\cQ(f_\kappa)$. To see this, note that the soft-thresholding relation is equivalent to the existence of
  $\xi_\kappa\in\partial|u_\kappa|$ such that
  \[
  A_\kappa+G-u_\kappa=\kappa \xi_\kappa.
  \]
   In fact, $\alpha_\kappa=\kappa$ by the second equation in \eqref{eq:se1}, and this is the proximal
  optimality condition for $\kappa|u|$. Since $\partial|\tau u|=\partial|u|$ for $\tau>0$, the preceding identity
  becomes
  \[
  \frac{f_\kappa-B}{\tau_\kappa}-G+\kappa \xi_\kappa=0,
  \qquad \xi_\kappa\in\partial|f_\kappa|.
  \]
  By \eqref{eq:tauscale}, this is exactly $0\in\partial\cQ(f_\kappa)$.

The subgradient calculations for $r>1$ and $r=1$, together with convexity of $\cQ$,
 imply that $f_\kappa$ is a global minimizer. In fact, the square root in \eqref{eq:popQ} is the Hilbert norm of
 $(\sigma,(f-B)/\sqrt\delta)$; the second term is affine and the last is a norm.
 Hence $\cQ$ is convex, and a zero subgradient is sufficient for global optimality.

If \(\sigma=0\), we first show that \(\Pp(B\ne0)>0\).
Indeed, if \(B=0\) almost surely, then
Lemma~\ref{lem:null-prior-gap}, evaluated at
\((\tau_\kappa,\alpha_\kappa)\), gives
$
 D_r(\tau_\kappa,\alpha_\kappa)
 >
 H_r(\tau_\kappa,\alpha_\kappa)
 =\delta$,
where the equality follows from \eqref{eq:se1}. This contradicts
\(D_r(\tau_\kappa,\alpha_\kappa)\le\delta\).

 It remains to show the uniqueness of the minimizer, and we split this argument into three cases: \emph{(i)} $1<r<\infty$ and $\sigma>0$, \emph{(ii)} $1<r<\infty$ and $\sigma=0$, and \emph{(iii)} $r=1$.

As for first case ($1<r<\infty$ and $\sigma>0$), note that the map
 $f\mapsto\sqrt{\sigma^2+\delta^{-1}\|f-B\|_2^2}$ is strictly convex when
 $\sigma>0$. % Indeed, equality in the Hilbert-space triangle inequality for $(\sigma,(f_i-B)/\sqrt\delta)$ would force the two vectors to be nonnegative multiples of one another; equality of their first coordinates forces that multiple to be one.  Adding a convex norm and an affine functional preserves strict convexity.

As for the second case ($1<r<\infty$ and $\sigma=0$), let $f_0=f_\kappa$ and $f_1$ be any other minimizer. Then, for
  $0<t<1$, equality must hold separately in the convexity inequalities for
  $\|f-B\|_2$ and $\|f\|_r$ along $(1-t)f_0+tf_1$.
 %  The affine term has zero convexity gap.  The other two gaps are nonnegative,
 % and their sum is zero because both endpoints and the interior point attain the
 % minimum.
This implies that there exist $a,b\ge0$ such that
  \begin{equation}\label{eq:equality-rays}
  f_1-B=a(f_0-B),\qquad f_1=bf_0.
  \end{equation}
 In fact, equality in the $L_2$ triangle inequality gives the first relation; since
  $L_r$ is strictly convex for $1<r<\infty$, equality in its triangle inequality
  gives the second. If $a=b$, subtracting the two identities in \eqref{eq:equality-rays} gives
  $(1-a)B=0$. As $B$ is not almost surely zero, we have $a=1$, which implies $f_1=f_0$.
If $a
\neq b$, $f_0=cB$ for a deterministic constant $c$. However, \eqref{eq:QKKT} would express $G$ as a measurable function of $B$, which is impossible. %  This is impossible because, conditionally on $B$, $G$ has the nondegenerate $N(0,1)$ law. This concludes the uniqueness argument in the second case.
  
As for the third case ($r=1$), let us
define the Moreau envelope and scalar profile
  \begin{align*}
  M_1(z;\kappa)&:=\min_u\{\tfrac12(u-z)^2+\kappa|u|\},\\
  \Phi_1(\tau)&:=\frac\tau2+\frac{\sigma^2}{2\tau}
  +\frac\tau\delta\left(\E M_1(B/\tau+G;\kappa)-\frac12\right).
  \end{align*}
Denote $H_\kappa(\tau):=H_1(\tau,\kappa)$. We show that  \begin{equation}\label{eq:Hprime}
  H_\kappa'(\tau)
  =-\frac2{\tau^3}\E\left[B^2
  \mathbf{1}_{\{|B/\tau+G|\le\kappa\}}\right]\le0.
  \end{equation}
Set $A=B/\tau$, $z=A+G$, $u=\ST(z;\kappa)$, and $q=u-A$.
   Away from $|z|=\kappa$, $q$ is independent of $\tau$ on the active set
   $|z|>\kappa$, whereas $q=-B/\tau$ on the inactive set.
Therefore, $\partial_\tau q^2=-2B^2\tau^{-3}
   \mathbf{1}_{\{|z|\le\kappa\}}$ almost surely.
The threshold event has conditional Gaussian probability zero, and the
   derivative is dominated on compact $\tau$ intervals by $C B^2$. Hence,  dominated
   convergence proves \eqref{eq:Hprime}.

Next, we show that the derivative of the scalar profile is
  \begin{equation}\label{eq:Phiprime}
  \Phi_1'(\tau)=\frac1{2\tau^2}
  \left(\tau^2\left(1-\frac{H_\kappa(\tau)}\delta\right)-\sigma^2\right).
  \end{equation}
Note that the envelope theorem gives $\partial_zM_1(z;\kappa)=z-u$.
With $e=z-u=G-q$, the proximal condition gives $eu=\kappa|u|$.
   Hence,
   \[
   M_1(z;\kappa)-Ae
   =\tfrac12e^2+eu-Ae
   =\tfrac12e^2+eq
   =\tfrac12G^2-\tfrac12q^2.
   \]
Differentiating $\tau\{\E M_1(B/\tau+G;\kappa)-1/2\}$, we obtain the following derivative:
   \[
   \E[M_1-1/2-A\partial_zM_1]
   =-\tfrac12H_\kappa(\tau),
   \]
   because $\E G^2=1$. Adding the derivatives of
   $\tau/2+\sigma^2/(2\tau)$ proves \eqref{eq:Phiprime}.

At this point, we show that the function $\Phi_1$ has the unique minimizer $\tau_\kappa$.
Let $K(\tau):=\tau^2(1-H_\kappa(\tau)/\delta)$.  By
   \eqref{eq:se1}, $K(\tau_\kappa)=\sigma^2$.
If $\sigma>0$, then $H_\kappa(\tau_\kappa)<\delta$.  If
   $0<s<t$ and $H_\kappa(s)<\delta$, then
   \[
   K(t)-K(s)=(t^2-s^2)\left(1-\frac{H_\kappa(s)}\delta\right)
   +\frac{t^2}{\delta}\{H_\kappa(s)-H_\kappa(t)\}>0.
   \]
We now compare \(K(\tau)\) with
\(K(\tau_\kappa)=\sigma^2\).  If \(\tau<\tau_\kappa\) and
\(H_\kappa(\tau)\geq\delta\), then
$
 K(\tau)\leq0<\sigma^2$.
If instead \(H_\kappa(\tau)<\delta\), applying the preceding identity
with \(s=\tau\) and \(t=\tau_\kappa\) gives
$
 K(\tau)<K(\tau_\kappa)=\sigma^2$. 
Finally, if \(\tau>\tau_\kappa\), then
\(H_\kappa(\tau_\kappa)<\delta\), and applying the same identity with
\(s=\tau_\kappa\) and \(t=\tau\) gives
$
 K(\tau)>K(\tau_\kappa)=\sigma^2$.
Therefore, \eqref{eq:Phiprime} shows that \(\Phi_1\) is strictly
decreasing on \((0,\tau_\kappa)\) and strictly increasing on
\((\tau_\kappa,\infty)\).

If \(\sigma=0\), then \(H_\kappa(\tau_\kappa)=\delta\).
Since \(\Pp(B\ne0)>0\), the expectation in \eqref{eq:Hprime}
is strictly positive for every \(\tau>0\), so \(H_\kappa\) is
strictly decreasing. Hence $K(\tau)<0$ before $\tau_\kappa$ and
   $K(\tau)>0$ after it. As a result, \eqref{eq:Phiprime} gives that $\Phi_1$ decreases before
   $\tau_\kappa$ and increases after it, which %.  It also gives strict separation at
  % $0$ and $+\infty$: fix $0<\tau_0<\tau_\kappa<\tau_1$ and compare all
%   $\tau\le\tau_0$ with $\tau_0$, and all $\tau\ge\tau_1$ with $\tau_1$. This 
implies the desired claim on the uniqueness of the minimizer of $\Phi_1$.

As  $\sqrt{x}=\inf_{\tau>0}\{\tau/2+x/(2\tau)\}$, we have 
%To conclude the uniqueness of $f_\kappa$, it then suffices to show the variational identity
\begin{equation}\label{eq:Qprofile}
  \cQ(f)=\inf_{\tau>0}\left[
  \frac\tau2+\frac{\sigma^2}{2\tau}
  +\frac1\delta\E\left[
  \frac{(f-B)^2}{2\tau}-G(f-B)+\kappa|f|\right]\right]
  \end{equation}
%  implies uniqueness of $f_\kappa$.
    For fixed $\tau$, the term inside the expectation is strictly convex in $f$ and its unique pointwise minimizer is
  $f=\tau\ST(B/\tau+G;\kappa)$.  Minimizing first in $f$ produces
  $\Phi_1(\tau)$. 
  If \(\sigma>0\), or if \(\|f-B\|_2>0\), the infimum over
\(\tau\) in \eqref{eq:Qprofile} is attained at a positive scale.
The strict minimality of \(\tau_\kappa\) for \(\Phi_1\), together
with pointwise strict convexity, then shows that
\(\cQ(f)=\cQ(f_\kappa)\) forces \(f=f_\kappa\).

It remains, when \(\sigma=0\), to exclude \(f=B\), whose optimal
scale is the boundary \(\tau\downarrow0\). Fix
\(0<\tau_0<\tau_\kappa\). At \(f=B\), the expression in brackets
in \eqref{eq:Qprofile} is $\frac{\tau}{2}+\frac{\kappa}{\delta}\E|B|$, which is bounded below by \(\Phi_1(\tau)\). Hence, for
\(0<\tau\le\tau_0\), it is at least
\(\Phi_1(\tau_0)>\Phi_1(\tau_\kappa)\). Letting
\(\tau\downarrow0\) gives
\[
 \cQ(B)
 \ge \Phi_1(\tau_0)
 >\Phi_1(\tau_\kappa)
 =\cQ(f_\kappa).
\]
Thus, \(f_\kappa\) is the unique minimizer of \(\cQ\) and the proof is complete.
%\par\medskip\noindent
%The preceding arguments prove uniqueness for $r>1$ and for $r=1$, completing the proposition.
\end{proof}

\subsection{Joint empirical laws and bounds on Gaussian inner products}
\label{app:empirical-margin}

\begin{lemma}[Convergence of joint empirical distribution]
\label{lem:empirical}
Let $h=(h_1,\ldots,h_p)$ have i.i.d.\ $N(0,1)$ entries and be independent of
$\beta_*$.  Almost surely,
\begin{equation}\label{eq:jointWq}
 \frac1p\sum_{j=1}^p\delta_{((\beta_*)_j,h_j)}
 \xrightarrow[W_q]{}\cL(B,G).
\end{equation}
Consequently, for fixed $\tau,\alpha>0$, if
$
 b_j(\tau,\alpha):=\tau\eta_{\alpha,r}((\beta_*)_j/\tau+h_j),
$
then
\begin{align}
 \|b(\tau,\alpha)-\beta_*\|_{2,p}^2
 &\to\tau^2H_r(\tau,\alpha),\label{eq:empH}\\
 \|b(\tau,\alpha)\|_{r,p}^r
 &\to\tau^rU_r(\tau,\alpha),\label{eq:empU}\\
 \langle h,b(\tau,\alpha)-\beta_*\rangle_p
 &\to\tau D_r(\tau,\alpha).
 \label{eq:empD}
\end{align}
For $r=1$, \eqref{eq:empU} corresponds to convergence of the first absolute moment, and 
the empirical law of $((\beta_*)_j,b_j(\tau,\alpha))$ converges in $W_2$ to the
corresponding population law.
\end{lemma}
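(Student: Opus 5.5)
The proof has two parts. First I establish the joint $W_q$ convergence in \eqref{eq:jointWq}. Then I deduce \eqref{eq:empH}--\eqref{eq:empD} by showing that each integrand is continuous with growth of order at most $q$, so that $W_q$ convergence passes the limits through.

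\textbf{Step 1: joint $W_q$ convergence.} Recall that $W_q$ convergence of empirical measures is equivalent to weak convergence together with convergence of $q$-th absolute moments. For the moments, write $|(x,g)|^q\le C(|x|^q+|g|^q)$ and use a uniform-integrability formulation. Since the deterministic empirical law of $\beta_*$ converges to $P_B$ in $W_q$, its $q$-th moments are uniformly integrable. The strong law of large numbers gives $p^{-1}\sum_j|h_j|^q\mathbf 1\{|h_j|>M\}\to\E|G|^q\mathbf 1\{|G|>M\}$ almost surely, simultaneously for all rational $M$. For weak convergence of the joint law, I test against bounded Lipschitz functions $\varphi(x,g)$. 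Because $\beta_*$ is deterministic and the $h_j$ are i.i.d., I decompose
\[
p^{-1}\sum_j\varphi((\beta_*)_j,h_j)=p^{-1}\sum_j\E_G\varphi((\beta_*)_j,G)+\text{fluctuation}.
\]
The first term converges to $\E\varphi(B,G)$, because $x\mapsto\E\varphi(x,G)$ is bounded and Lipschitz and the empirical law of $\beta_*$ converges weakly to $P_B$. The fluctuation is an average of independent, bounded, centered variables, so Hoeffding's inequality and Borel--Cantelli (the tails $e^{-cp\epsilon^2}$ are summable) make it vanish almost surely. To get a single almost-sure event valid for all test functions, I take a countable convergence-determining family of bounded Lipschitz functions.

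\textbf{Step 2: the limits \eqref{eq:empH}--\eqref{eq:empD}.} I use the standard consequence of $W_q$ convergence: $p^{-1}\sum_j\Psi((\beta_*)_j,h_j)\to\E\Psi(B,G)$ for every continuous $\Psi$ with $|\Psi(x,g)|\le C(1+|x|^q+|g|^q)$. By Lemma~\ref{lem:prox}, $\eta_{\alpha,r}$ (and $\ST$ when $r=1$) is $1$-Lipschitz and satisfies $|\eta_{\alpha,r}(z)|\le|z|$. I apply this consequence to each integrand:
\begin{itemize}
\item For \eqref{eq:empH}, the integrand is $(\tau\eta_{\alpha,r}(x/\tau+g)-x)^2$. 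It is continuous and bounded by $C(x^2+g^2)$, which has growth order $2\le q$.
\item For \eqref{eq:empU}, the integrand is $|\tau\eta_{\alpha,r}(x/\tau+g)|^r$. It is bounded by $C(|x|^r+|g|^r)$, and $r\le q$.
\item For \eqref{eq:empD}, the integrand is $g(\tau\eta_{\alpha,r}(x/\tau+g)-x)$. It is continuous with $|\cdot|\le C(x^2+g^2)$, so its empirical average converges to $\E[G(\tau\eta_{\alpha,r}(Z)-B)]$. Since $\E[GB]=0$ by independence, this equals $\tau\E[G\eta_{\alpha,r}(Z)]$. Gaussian integration by parts, valid because $\eta_{\alpha,r}$ is Lipschitz and hence absolutely continuous, turns this into $\tau\E\eta'_{\alpha,r}(Z)=\tau D_r(\tau,\alpha)$.
\end{itemize}
For $r=1$, $\eta'$ is the indicator $\mathbf 1\{|Z|>\alpha\}$ almost everywhere, which gives $D_1=\Pp(|Z|>\alpha)$.

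\textbf{$W_2$ statement for the fitted pair.} The map $(x,g)\mapsto(x,\tau\eta_{\alpha,r}(x/\tau+g))$ is Lipschitz. Pushing the coupling that realizes $W_2\le W_q$ forward through this map shows that the empirical law of $((\beta_*)_j,b_j(\tau,\alpha))$ converges to the corresponding population law in $W_2$.

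\textbf{Main obstacle.} The delicate point is making the almost-sure event in Step 1 uniform. It must be a single null set serving all test functions and all moment truncations. This is handled by the countable-family reduction and separability, since $\beta_*$ is a deterministic triangular array and only the independence of the $h_j$ is used.
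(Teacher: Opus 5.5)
Your proposal is correct and follows the paper's proof essentially step for step. Weak convergence comes from bounded-Lipschitz test functions, each split into the conditional mean $x\mapsto\E\varphi(x,G)$ plus a Hoeffding/Borel--Cantelli fluctuation, with a countable convergence-determining class; $q$-th moment uniform integrability comes from the $W_q$ assumption on $\beta_*$ and the strong law for the Gaussian tails; the limits then follow from the growth bounds of Lemma~\ref{lem:prox}, with Gaussian integration by parts for \eqref{eq:empD}. The only difference is cosmetic: you get the $W_2$ claim by pushing a $W_q$-optimal coupling through the Lipschitz map $(x,g)\mapsto(x,\tau\eta_{\alpha,r}(x/\tau+g))$, whereas the paper uses continuous mapping with linear growth; both work, and yours shows the $W_2$ statement holds for every $r$, not only $r=1$.
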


\begin{proof}
We start by showing that, for every bounded Lipschitz $f:\R^2\to\R$,
 \begin{equation}\label{eq:BLconv}
 \frac1p\sum_{j=1}^p f((\beta_*)_j,h_j)\to \E f(B,G)
 \qquad\text{almost surely}.
 \end{equation}
% which implies that  the empirical measures in \eqref{eq:jointWq} converge weakly to $\cL(B,G)$.
%To prove \eqref{eq:BLconv}, l
Let $\bar f(x):=\E f(x,G)$. Then,
  $p^{-1}\sum_j\bar f((\beta_*)_j)\to\E\bar f(B)$, as
$\bar f$ is bounded and continuous, and $W_q$ convergence of
  the empirical law of $\beta_*$ implies weak convergence.
Consider the centered average
  \[
  Z_p:=\frac1p\sum_{j=1}^p\{f((\beta_*)_j,h_j)-\bar f((\beta_*)_j)\}.
  \]
   Conditional on $\beta_*$, the summands are independent,
  centered, and uniformly bounded.
  Let $M>0$ satisfy $\|f\|_\infty\le M$. Hoeffding's inequality
\cite[Theorem~2.2.6]{vershynin2018highdimensional},
applied to both tails, gives, for every $t>0$,
\[
\Pp(|Z_p|>t)
\le 2\exp\left(-\frac{pt^2}{2M^2}\right).
\]
Thus, an application of Borel--Cantelli gives that $Z_p$  converges almost surely to zero, which gives  \eqref{eq:BLconv}.

%  Apply \eqref{eq:BLconv} to a countable convergence-determining class
% of bounded Lipschitz functions and intersect the resulting probability-one events.

%To obtain the $W_q$ convergence claimed in \eqref{eq:jointWq}, it remains to prove that the $q$-th moments of the empirical measures  are uniformly integrable and converge to the $q$-th
% moment of $(B,G)$.
The deterministic measures $p^{-1}\sum_j\delta_{(\beta_*)_j}$ are uniformly
  $q$-integrable, due to their assumed $W_q$ convergence. Applying first the strong law of large numbers to the truncated functions
  $|h|^q\wedge K$ and then letting $K\to\infty$ gives that, almost surely,
  \[
  \lim_{L\to\infty}\limsup_{p\to\infty}
  \frac1p\sum_{j=1}^p|h_j|^q\mathbf{1}_{\{|h_j|>L\}}=0.
  \]
Since $\|(x,g)\|^q\le C_q(|x|^q+|g|^q)$, we obtain that the $q$-th moments of the empirical measures  are uniformly integrable and converge to the $q$-th
 moment of $(B,G)$. Thus, a union bound over a countable class of bounded Lipschitz function estalishes the weak convergence in  \eqref{eq:jointWq}.

%Choose a countable class \(\mathcal F_0\) of bounded Lipschitz functions on \(\mathbb R^2\) sufficient to establish weak convergence. Applying the preceding argument to every \(f\in\mathcal F_0\) and intersecting the resulting probability-one events, we obtain, with probability one,
%\[
% \frac1p\sum_{j=1}^p f(\beta_j,h_j)
% \to \E f(B,G)
% \qquad\text{for every }f\in\mathcal F_0.
%\]
%It follows that the empirical measures in \eqref{eq:jointWq}
%converge weakly to \(\mathcal L(B,G)\).

Consider the continuous map
 \[
 (x,g)\mapsto\bigl(x,\tau\eta_{\alpha,r}(x/\tau+g)\bigr).
 \]
Lemma~\ref{lem:prox} gives linear growth of this map. Furthermore, the squared-error
 function has quadratic growth, and the $r$th-power function has growth at most
 $C(1+|x|^q+|g|^q)$ because $q=2\vee r$. Hence, \eqref{eq:jointWq} implies \eqref{eq:empH} and \eqref{eq:empU}, and the asserted $W_2$
 convergence for $r=1$. 

It remains to show \eqref{eq:empD}. To that aim, consider the function
  \[
  (x,g)\mapsto g(\eta_{\alpha,r}(x/\tau+g)-x/\tau).
  \]
This function  is continuous except on a null set when $r=1$, and has at most quadratic growth due to Lemma~\ref{lem:prox}. % bounds its absolute value by
%  $C(|g|^2+|x|^2)$; the only discontinuities of the derivative representation for
%  $r=1$ occur on the two threshold curves, which have zero conditional Gaussian
%  probability.
%\par\medskip\noindent
Hence, using \eqref{eq:jointWq}, we have that its empirical average converges to
  $\E[G\{\eta_{\alpha,r}(B/\tau+G)-B/\tau\}]$.
%   Use truncation together with  and $q\ge2$.
   Conditional on $B$, the map of $G$ is Lipschitz and absolutely continuous. Thus,
  Gaussian integration by parts gives
  \[
  \E_G[G\eta_{\alpha,r}(B/\tau+G)]
  =\E_G[\eta'_{\alpha,r}(B/\tau+G)],
  \]
  while $\E_G[G B/\tau]=0$. Thus, integrating over $B$ and using dominated convergence gives \eqref{eq:empD} and concludes the proof. 
\end{proof}

Define
\begin{equation}\label{eq:gamma-pop}
 \gamma_r(\kappa):=
 \sup_{v\in L_2\cap L_r,\ \|v\|_2\le 1}
 \{\E[Gv]-\kappa\|v\|_r\}.
\end{equation}
Here, $L_s=L_s(P_B \otimes N(0, 1))$, we identify with $B$ and $G$ the coordinate random variables on this product space and for $v\in L_2\cap L_r$, $\E[Gv]$ denotes the $L_2$ inner product between $G$ and $v=v(B, G)$.

\begin{lemma}[Upper bound on $\gamma_r(\kappa)$]\label{lem:margin}
Assume that \eqref{eq:se1} holds and that
\(D_r(\tau_\kappa,\alpha_\kappa)\le\delta\). Then
\begin{equation}\label{eq:gamma-strict}
 \gamma_r(\kappa)<\sqrt\delta.
\end{equation}
\end{lemma}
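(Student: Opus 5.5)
The plan is to bound the functional in \eqref{eq:gamma-pop} by testing it against the optimality condition of the population objective. By Lemma~\ref{prop:population}, $f_\kappa$ minimizes $\cQ$ and satisfies $0\in\partial\cQ(f_\kappa)$. By \eqref{eq:tauscale}, \eqref{eq:QKKT} and its $r=1$ analogue, this means
\[
 G=\frac{f_\kappa-B}{\tau_\kappa}+\kappa\,\xi_\kappa,
\]
where $\xi_\kappa$ is a subgradient of $\|\cdot\|_r$ at $f_\kappa$. Concretely, $\xi_\kappa=\|f_\kappa\|_r^{1-r}\psi_r(f_\kappa)$ for $r>1$, and $\xi_\kappa\in\partial|f_\kappa|$ pointwise for $r=1$. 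In either case the dual-norm bound $\E[\xi_\kappa v]\le\|v\|_r$ holds for every $v\in L_2\cap L_r$: by H\"older for $r>1$, and by $|\xi_\kappa|\le1$ for $r=1$.

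Substituting this identity, for every $v$ with $\|v\|_2\le1$ we get
\[
 \E[Gv]-\kappa\|v\|_r
 =\frac{\E[(f_\kappa-B)v]}{\tau_\kappa}
 -\kappa\bigl(\|v\|_r-\E[\xi_\kappa v]\bigr)
 \le\frac{\|f_\kappa-B\|_2}{\tau_\kappa}.
\]
By the first equation in \eqref{eq:se1}, the right-hand side equals
\[
 \sqrt{H_r(\tau_\kappa,\alpha_\kappa)}=\sqrt{\delta\,(1-\sigma^2/\tau_\kappa^2)}.
\]
Hence $\gamma_r(\kappa)\le\sqrt\delta$, with strict inequality immediately when $\sigma>0$. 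The hypothesis $D_r\le\delta$ enters only through Lemma~\ref{prop:population}.

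The main work is strictness when $\sigma=0$. Here I would first show that the supremum in \eqref{eq:gamma-pop} is attained.
\begin{itemize}
\item Along a maximizing sequence, $\E[Gv]\le1$, so $\kappa\|v\|_r$ stays bounded.
\item For $r\le2$ the $L_2$ unit ball is weakly compact, and $v\mapsto\E[Gv]-\kappa\|v\|_r$ is weakly upper semicontinuous, because $\|\cdot\|_r\le\|\cdot\|_2$ is convex and continuous on $L_2$.
\item For $r>2$ I would use weak compactness in the reflexive space $L_2\cap L_r$ with the norm bounded above, where lower semicontinuity of both norms under weak convergence gives the same conclusion.
\end{itemize}
This compactness step is where I expect the most care is needed.

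Given a maximizer $v$, equality $\gamma_r(\kappa)=\sqrt\delta$ forces two equality cases. Cauchy--Schwarz must be tight, so $v=(f_\kappa-B)/\|f_\kappa-B\|_2$; note $f_\kappa\ne B$, since otherwise $H_r=0\neq\delta$. The dual pairing must also be tight, $\E[\xi_\kappa v]=\|v\|_r$.
\begin{itemize}
\item For $1<r<\infty$, strict convexity of $L_r$ makes the norming functional $\xi_\kappa$ of $f_\kappa$ norm only positive multiples of $f_\kappa$. Hence $f_\kappa-B=cf_\kappa$, so $f_\kappa$ is a deterministic multiple of $B$ (note $c\ne1$ because $B\not\equiv0$, as shown in Lemma~\ref{prop:population}). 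Then \eqref{eq:QKKT} writes $G$ as a measurable function of $B$, which is impossible since $G$ is independent of $B$ and nondegenerate.
\item For $r=1$, tightness requires $v\,\xi_\kappa=|v|$ almost surely. On $\{f_\kappa=0\}$ we have $|\xi_\kappa|=|A_\kappa+G|/\kappa<1$ almost surely, so $v=0$ there. On $\{f_\kappa\ne0\}$, $v$ must have the sign of $f_\kappa$. But there $f_\kappa-B=\tau_\kappa\bigl(G-\kappa\,\operatorname{sign}(u_\kappa)\bigr)$, and conditionally on $B$ the event $\{u_\kappa>0,\ G<\kappa\}$ has positive probability, which violates the sign condition.
\end{itemize}
Either way we reach a contradiction, so $\gamma_r(\kappa)<\sqrt\delta$.
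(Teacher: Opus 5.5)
Your proof is correct in structure, but for the strict case $\sigma=0$ it takes a different route from the paper.

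\textbf{Comparison with the paper.} The non-strict bound is the same in both. Your $\kappa\xi_\kappa$ is exactly the paper's dual candidate $s_\kappa=A_\kappa+G-u_\kappa\in K_\kappa$. Your first display is the weak-duality inequality $\E[Gv]-\kappa\|v\|_r\le\|G-s_\kappa\|_2\|v\|_2$.

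For strictness, the paper proves the full identity $\gamma_r(\kappa)=\operatorname{dist}_{L_2}(G,K_\kappa)$, using closedness of $K_\kappa$ via Fatou and the support function of $K_\kappa$. Equality would then make $s_\kappa$ the unique metric projection of $G$, which is an odd $G$-measurable function. This forces $A_\kappa=0$, i.e.\ $B=0$ a.s., contradicting Lemma~\ref{lem:null-prior-gap} together with $D_r\le\delta$.

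You stay on the primal side. First you get attainment by weak compactness. Your split $r\le2$ versus $r>2$ works: for $r>2$, a bounded sequence in $L_r$ has a weakly convergent subsequence, and weak convergence in $L_r$ implies weak convergence in $L_2$ because every element of $L_2$ lies in $L_{r^*}$. Then you use the equality cases of Cauchy--Schwarz and of the norming pairing. For $r>1$ this reproduces the uniqueness argument of Lemma~\ref{prop:population}: $f_\kappa$ becomes a multiple of $B$, and \eqref{eq:QKKT} then makes $G$ a function of $B$. That case is complete.

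Your route avoids the harder half of the duality. The paper's investment is reused in Lemma~\ref{lem:empirical-dual-margin}, which needs both the distance representation and the $G$-measurable projection $s_0(G)$. Its measurability argument also handles all $r$ at once.

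\textbf{A false step in the $r=1$ case.} Conditionally on $B$, the event $\{u_\kappa>0,\ G<\kappa\}$ equals $\{\kappa-A_\kappa<G<\kappa\}$. This is empty unless $B>0$, and in particular it is empty when $B=0$.

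This is not cosmetic. If $B\equiv0$ and $\sigma=0$, then $s_\kappa=\operatorname{clip}(G;[-\kappa,\kappa])$ is the true projection and $\gamma_1(\kappa)=\sqrt\delta$. So your $r=1$ argument must use $\Pp(B\neq0)>0$, and as written it never does.

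The repair is short. On $\{B>0\}$ use your event; on $\{B<0\}$ use the mirror event $\{u_\kappa<0,\ G>-\kappa\}$; then conclude from $\Pp(B\neq0)>0$, which Lemma~\ref{prop:population} provides. Alternatively, since $v$ is a positive multiple of $f_\kappa-B$, your observation that $v=0$ on $\{f_\kappa=0\}$ already forces $B=0$ on $\{|A_\kappa+G|\le\kappa\}$. That event has positive conditional probability given any value of $B$, which again contradicts $\Pp(B\neq0)>0$.
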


\begin{proof}
We start by showing that
the quantity in \eqref{eq:gamma-pop} has the dual representation
 \begin{equation}\label{eq:gamma-distance}
 \gamma_r(\kappa)
 ={\rm dist}_{L_2}\bigl(G,K_\kappa\bigr),
 \qquad
 K_\kappa:=\{s\in L_2:\|s\|_{r^*}\le\kappa\},
 \end{equation}
 where $r^*$ denotes the conjugate exponent of $r$, i.e., $1/r+1/r^*=1$, with $r^*=\infty$ when $r=1$.
Note that the set $K_\kappa$ is convex in $L_2$. We now prove that $K_\kappa$ is closed with respect to the $L_2$ topology. This requires showing that, if $s_n\in K_\kappa$  and $\|s_n-s\|_2\to 0$, then $\|s\|_{r^*}\le \kappa$. If $r^*\le 2$, then 
$$
|\|s_n\|_{r^*}-\|s\|_{r^*}|\le \|s_n-s\|_{r^*}\le\|s_n-s\|_{2}\to 0.
$$
If $2<r^*<\infty$, then $L_2$ convergence does not imply $L_{r^*}$ convergence. However, $L_2$ convergence implies convergence in probability and, hence, there is a subsequence $(s_{n_k})$ such that $s_{n_k}\to s$ almost surely. Fatou's lemma then gives
$$
\mathbb E|s|^{r^*}\le \liminf_{k\to\infty} \mathbb E|s_{n_k}|^{r^*}\le \kappa^{r^*}.
$$
Finally, if $r^*=\infty$, then $\|s_n\|_\infty\le \kappa$ implies that $|s_n|\le \kappa$ almost surely. Thus, taking the almost-surely convergent subsequence above, we obtain pointwise $|s|=\lim_{k\to\infty} |s_{n_k}|\le \kappa$ almost surely, which gives that $\|s\|_\infty\le \kappa$. Since $K_\kappa$ is closed and convex, $G$ has a unique projection onto it, which we denote by $s_0$. Moreover, \(s_0\) is measurable with respect to \(G\).  Indeed, if
\(s\in K_\kappa\), then \(\bar s:=\E[s\mid G]\) is feasible by
conditional Jensen, and
\[
 \E(G-\bar s)^2
 =
 \E(G-s)^2-\E(s-\bar s)^2.
\]
Uniqueness of the projection therefore gives
\(s_0=\E[s_0\mid G]\) almost surely.  By symmetry and uniqueness,
\(s_0\) is an odd function of \(G\).

For every $s\in K_\kappa$ and every $v\in L_2\cap L_r$,
  \[
  \E[Gv]-\kappa\|v\|_r
  \le\E[(G-s)v]\le\|G-s\|_2\|v\|_2,
  \]
where the first inequality follows from H\"older's inequality and the fact that $s\in K_\kappa$, and the second one follows from Cauchy--Schwarz. Taking the
  supremum in $v$ and infimum in $s$ proves one side of the equivalence in
  \eqref{eq:gamma-distance}.

  Set \(q_0:=G-s_0\).  As $s_0$ is the projection of $G$, $\E[(G-s_0)(s-s_0)]\le0$ for all $s\in K_\kappa$.
Consequently,
\begin{equation}
 \sup_{s\in K_\kappa}\E[q_0s]
 =\E[q_0s_0]<\infty.
 \label{eq:projection-support}
\end{equation}
The support function of \(K_\kappa\) is
\begin{equation}
 \sup_{s\in K_\kappa}\E[qs]
 =
 \begin{cases}
  \kappa\|q\|_r,&q\in L_r,\\
  +\infty,&q\notin L_r.
 \end{cases}
 \label{eq:K-support-function}
\end{equation}
Indeed, the upper bound in the first case follows from H\"older's
inequality.  The reverse bound follows by truncating the formal
optimizer
$
 s^*
 =
 \kappa\,
 \frac{\sign(q)|q|^{r-1}}{\|q\|_r^{r-1}}
$
when \(1<r<\infty\).  For \(r=1\), the optimizer is
\(s^*=\kappa\sign(q)\).  The second case in
\eqref{eq:K-support-function} follows by applying the same truncation
argument.

Since the LHS of \eqref{eq:projection-support} is finite,
\eqref{eq:K-support-function} implies that \(q_0\in L_r\).  Moreover,
$
 \E[q_0s_0]=\kappa\|q_0\|_r$.
Therefore,
\[
 \E[Gq_0]-\kappa\|q_0\|_r
 =
 \E[(G-s_0)q_0]
 =
 \|q_0\|_2^2.
\]
If \(q_0\neq0\), taking \(v=q_0/\|q_0\|_2\) in
\eqref{eq:gamma-pop} gives
\[
 \gamma_r(\kappa)\geq\|q_0\|_2
 =\operatorname{dist}_{L_2}(G,K_\kappa).
\]
Together with the opposite inequality proved above, this establishes
\eqref{eq:gamma-distance}.  If \(q_0=0\), the conclusion is immediate.

Recall the definitions in \eqref{eq:fstar} and consider the random variable
 \begin{equation}\label{eq:dual-candidate}
 s_\kappa:=A_\kappa+G-u_\kappa=A_\kappa+G-\eta_{\alpha_\kappa, r}(A_\kappa+G).
 \end{equation}
If $r=1$, then $s_\kappa=A_\kappa+G-\ST(A_\kappa+G;\kappa)$, which implies that $\|s_\kappa\|_\infty\le\kappa$. If $1<r<\infty$, then
  $s_\kappa=\alpha_\kappa\psi_r(u_\kappa)$ by
  \eqref{eq:prox-eq}, and
  $
  \|s_\kappa\|_{r^*}
  =\alpha_\kappa(\E|u_\kappa|^r)^{(r-1)/r}
  =\kappa
  $
  by the second equation in \eqref{eq:se1}. Therefore, for all $r\ge 1$, $s_\kappa\in K_\kappa$ and we have
 \begin{equation}\label{eq:gamma-H}
 \gamma_r(\kappa)^2
 \le \|G-s_\kappa\|_2^2
 =\|u_\kappa-A_\kappa\|_2^2
 =H_r(\tau_\kappa,\alpha_\kappa)
 \le\delta,
 \end{equation}
  where the first inequality follows from \eqref{eq:gamma-distance}, the next two equalities respectively use
 \eqref{eq:dual-candidate} and \eqref{eq:Hdef}, and  the last inequality follows from
 \eqref{eq:se1}. %
% $H_r=\delta(1-\sigma^2/\tau_\kappa^2)$.
If $\sigma>0$, then the last inequality in \eqref{eq:gamma-H} is strict and \eqref{eq:gamma-strict} follows immediately.

It remains to show that the inequality in \eqref{eq:gamma-H} is strict for $\sigma=0$. We will do that by contradiction and suppose that equality in \eqref{eq:gamma-H} holds. Equality in \eqref{eq:gamma-H} would imply that \(s_\kappa\) is a
metric projection of \(G\) onto \(K_\kappa\).  By uniqueness of the
projection,
$
 s_\kappa=s_0$.
By the observation following the definition of \(s_0\), it follows
that \(s_\kappa\) is an odd measurable function of \(G\).

  Consider $1<r<\infty$. From $s_\kappa=\alpha_\kappa\psi_r(u_\kappa)$, the invertibility of
  $\psi_r$ makes $u_\kappa$ an odd measurable function of $G$. Furthermore,
  $u_\kappa-A_\kappa=G-s_\kappa$ is odd and measurable in $G$.  Hence
  $A_\kappa$ is both measurable in $G$ and independent of $G$, so it is constant; being odd, we have that $A_\kappa=0$ almost surely.
   For $r=1$, we also have that  $A_\kappa=0$ almost surely. In fact, 
   the metric projection onto the $L_\infty$ ball is uniquely
  ${\rm clip}(G;[-\kappa,\kappa])$, where ${\rm clip}(z; [a, b]):=\min\{b, \max\{a, z\}\}$.  Hence, 
  $
  \ST(A_\kappa+G;\kappa)-A_\kappa=\ST(G;\kappa).
  $
  Conditional on $A_\kappa=a$, this equality holds for almost every $G$.  On the
  interval $|G|<\kappa$ it can hold only for $a=0$.  Independence therefore gives
  $A_\kappa=0$ almost surely.

As \(A_\kappa=B/\tau_\kappa\), equality in \eqref{eq:gamma-H}
would therefore imply \(B=0\) almost surely. Since equality also
forces \(\sigma=0\), \eqref{eq:se1} gives
\(H_r(\tau_\kappa,\alpha_\kappa)=\delta\). On the other hand,
Lemma~\ref{lem:null-prior-gap} would give
$
 D_r(\tau_\kappa,\alpha_\kappa)
 >
 H_r(\tau_\kappa,\alpha_\kappa)
 =\delta,
$
contradicting the hypothesis. Hence \eqref{eq:gamma-H} is strict, and the proof is complete.
\end{proof}

%Let \(r^*\) denote the conjugate exponent of \(r\), with
%\(r^*=\infty\) when \(r=1\); we use
%\(\|s\|_{\infty,p}:=\max_{1\le j\le p}|s_j|\). Define

\begin{lemma}[Convergence to $\gamma_r(\kappa)$]\label{lem:empirical-dual-margin}
Let
\begin{equation}\label{eq:gamma-emp}
 \gamma_{r,p}(\kappa):=
 \sup_{\|v\|_{2,p}\le 1}
 \{\langle h,v\rangle_p-\kappa\|v\|_{r,p}\}.
\end{equation}
Then, for every fixed \(\kappa>0\),
\[
 \gamma_{r,p}(\kappa)\to\gamma_r(\kappa)
\]
almost surely and in \(L_1\).
\end{lemma}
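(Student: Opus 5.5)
We reduce both $\gamma_{r,p}(\kappa)$ and $\gamma_r(\kappa)$ to dual distance formulas, as in the proof of Lemma~\ref{lem:margin}. The finite-dimensional analogue of \eqref{eq:gamma-distance} holds by the same Hölder/projection argument, with the inner product $\langle\cdot,\cdot\rangle_p$ and the dual normalized norm $\|s\|_{r^*,p}$:
\[
 \gamma_{r,p}(\kappa)=\min\bigl\{\|h-s\|_{2,p}:\ \|s\|_{r^*,p}\le\kappa\bigr\}.
\]
This is easier than in $L_2$, since everything is finite-dimensional and the minimum is attained. The pairing $\langle a,b\rangle_p=p^{-1}\sum a_jb_j$ is dual to $\|\cdot\|_{r,p}$ against $\|\cdot\|_{r^*,p}$, because $p^{-1}\sum|a_jb_j|\le \|a\|_{r,p}\|b\|_{r^*,p}$ with equality attainable; this holds for $r=1$ as well, with $\|\cdot\|_{\infty,p}$. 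Hence the problem becomes a comparison of a projection distance of the empirical Gaussian vector onto an empirical dual ball with its population counterpart.

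\textbf{Upper bound.} We take the population projection $s_0=\varphi(G)$, an odd measurable function of $G$ by Lemma~\ref{lem:margin}, and plug in $s_j=\varphi(h_j)$. If $\varphi$ is continuous with controlled growth, the strong law gives $\|\varphi(h)\|_{r^*,p}\to\|s_0\|_{r^*}\le\kappa$ and $\|h-\varphi(h)\|_{2,p}\to\gamma_r(\kappa)$. Rescaling by $(1+\epsilon)^{-1}$ to restore feasibility gives $\limsup\gamma_{r,p}\le\gamma_r(\kappa)+o_\epsilon(1)$. The regularity of $\varphi$ is not known a priori. To avoid it, we approximate $s_0$ in $L_2\cap L_{r^*}$ by bounded continuous functions of $G$, shrunk slightly to lie in $K_\kappa$. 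This works when $r^*<\infty$. For $r=1$ the projection is explicit, $\mathrm{clip}(G;[-\kappa,\kappa])$, which is Lipschitz.

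\textbf{Lower bound.} We use the primal form. For any fixed $v=v(G)$ bounded and continuous with $\|v\|_2<1$, the vector $v_j=v(h_j)$ is eventually feasible, and the law of large numbers gives $\langle h,v\rangle_p-\kappa\|v\|_{r,p}\to\E[Gv]-\kappa\|v\|_r$. Since $\gamma_r(\kappa)$ is attained by $q_0/\|q_0\|_2$ with $q_0=G-s_0$, a function of $G$ alone, density of bounded continuous functions in $L_2\cap L_r(\mathcal L(G))$ gives $\liminf\gamma_{r,p}\ge\gamma_r(\kappa)$ almost surely. Continuity of the objective in $v$ with respect to the $L_2\cap L_r$ norm is immediate. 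Both limits need only countably many test functions, so the almost-sure events can be intersected.

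\textbf{$L_1$ convergence.} The choice $v=0$ gives $0\le\gamma_{r,p}(\kappa)$, and $\gamma_{r,p}(\kappa)\le\|h\|_{2,p}$ by Cauchy--Schwarz. Since $\E\|h\|_{2,p}^2=1$, the family is uniformly integrable, and Vitali's theorem upgrades almost-sure convergence to $L_1$ convergence.

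The main obstacle is the approximation step: justifying that truncated continuous approximants of $s_0$ (or $q_0$) stay in, or can be rescaled into, the relevant constraint set. This is delicate when $r^*>2$, where $L_2$ and $L_{r^*}$ norms are not comparable. I would handle it by truncating at level $M$ and mollifying. Truncation does not increase $\|\cdot\|_{r^*}$, and monotone convergence controls both norms as $M\to\infty$.
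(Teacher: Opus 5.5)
Your argument is correct. The upper bound matches the paper's: plug the population projection $s_0=\varphi(G)$, evaluated at $h_1,\dots,h_p$, into the dual distance formula and rescale. So does the $L_1$ step via $0\le\gamma_{r,p}(\kappa)\le\|h\|_{2,p}$. Your lower bound, however, takes a different route.

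\textbf{How the paper does the lower bound.} It argues on the dual side. It takes an empirical projection $s_p$ and extracts a weak limit $(G,S)$ of the empirical laws of $(h_j,s_{p,j})$. It then uses lower semicontinuity (with truncation when $r^*>2$) and replaces $S$ by $\E[S\mid G]$ via conditional Jensen, which gives a feasible population candidate. This needs the full finite-dimensional duality $\gamma_{r,p}(\kappa)=\min_{\|s\|_{r^*,p}\le\kappa}\|h-s\|_{2,p}$ and a compactness argument. It never uses a population maximizer.

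\textbf{How you do it.} You plug the shrunk population maximizer $(1-\epsilon)q_0/\|q_0\|_2$ into the primal; it is a function of $G$ alone by the proof of Lemma~\ref{lem:margin}. The case $q_0=0$ is trivial because $\gamma_{r,p}\ge0$. Combined with your upper bound, this uses only the two easy finite-dimensional inequalities:
\begin{itemize}
\item H\"older plus Cauchy--Schwarz, giving $\gamma_{r,p}\le\|h-s\|_{2,p}$ for every feasible $s$;
\item evaluation of the primal objective at a feasible point.
\end{itemize}
The cost is that you invoke attainment at the population level, which is already available. Your route is therefore more elementary than the paper's Prokhorov and lower-semicontinuity argument.

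\textbf{Your ``main obstacle'' is not real.} The $h_j$ are i.i.d.\ $N(0,1)$, and $s_0$, $q_0$ are fixed Borel functions of $G$ with $s_0\in L_2\cap L_{r^*}$ and $q_0\in L_2\cap L_r$. So the strong law applies directly to
\[
|h_j-\varphi(h_j)|^2,\quad |\varphi(h_j)|^{r^*},\quad h_jq_0(h_j),\quad q_0(h_j)^2,\quad |q_0(h_j)|^r,
\]
with no continuity or growth assumption. For $r=1$, feasibility of $\varphi(h)$ holds pointwise. This is exactly what the paper does in its upper bound. You can drop the truncation, mollification and density steps, and the countable sequence $\epsilon=1/k$ already suffices for the almost-sure statement.
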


\begin{proof}
Repeating the same argument of Lemma~\ref{lem:margin} and replacing expectations with
empirical averages, we have
\[
 \gamma_{r,p}(\kappa)
 =
 \min_{\|s\|_{r^*,p}\le\kappa}\|h-s\|_{2,p},
\]
with $r^*$ the conjugate exponent of $r$.
Let \(s_0(G)\) be the population metric projection in
\eqref{eq:gamma-distance}. Evaluating it at \(h_1,\ldots,h_p\) and,
if necessary, rescaling by $
 \frac{\kappa}{\|s_0(h)\|_{r^*,p}\vee\kappa}$
gives
\[
 \limsup_{p\to\infty}\gamma_{r,p}(\kappa)
 \le\gamma_r(\kappa),
\]
  from 
the strong law of large numbers applied to \(|h-s_0(h)|^2\) and, when
\(r^*<\infty\), to \(|s_0(h)|^{r^*}\). 

Conversely, let \(s_p\) be an empirical projection. Then
\(\|h-s_p\|_{2,p}\le\|h\|_{2,p}\), and hence
\(\|s_p\|_{2,p}\le2\|h\|_{2,p}\). Along every subsequence, the
empirical laws of \((h_j,s_{p,j})\) have a further weakly convergent
subsequence, say to the law of \((G,S)\), with
\[
 \|S\|_{r^*}\le\kappa,
 \qquad
 \|G-S\|_2
 \le
 \liminf_{p\to\infty}\|h-s_p\|_{2,p}.
\]
Here tightness follows from the \(L_2\) bound; lower semicontinuity
gives the constraint and the distance inequality, with truncation
when \(r^*>2\).
The limit variable \(S\) need not be measurable with respect to \(G\),
so put \(\overline S:=\E[S\mid G]\). Conditional Jensen gives
\[
 \|\overline S\|_{r^*}\le\|S\|_{r^*}\le\kappa,
 \qquad
 \|G-\overline S\|_2\le\|G-S\|_2.
\]
Thus \(\overline S\), viewed as a function of the \(G\)-coordinate on
the product space supporting \((B,G)\), is feasible in
\eqref{eq:gamma-distance}. Consequently,
\[
 \gamma_r(\kappa)
 \le\|G-\overline S\|_2
 \le\|G-S\|_2
 \le\liminf_{p\to\infty}\|h-s_p\|_{2,p}.
\]
This proves almost-sure convergence. Finally,
\[
 0\le\gamma_{r,p}(\kappa)\le\|h\|_{2,p},
\]
and the variables on the right have uniformly bounded second
moments. Thus the convergence also holds in \(L_1\).
\end{proof}

\subsection{Stability of the empirical objective}
\label{app:empirical-stability}

Define the empirical active functional
\begin{equation}\label{eq:empQ}
 Q_p(b):=
 \sqrt{\sigma^2+\frac1{\delta_p}\|b-\beta_*\|_{2,p}^2}
 -\frac1{\delta_p}\langle h,b-\beta_*\rangle_p
 +\frac\kappa{\delta_p}\|b\|_{r,p}.
\end{equation}
Set
\[
M_\kappa^{\mathrm{err}}
:=
\E(f_\kappa-B)^2
=
\tau_\kappa^2H_r(\tau_\kappa,\alpha_\kappa),
\qquad M_\kappa^{\mathrm{coef}}
:=
\E|f_\kappa|^r
=
\tau_\kappa^rU_r(\tau_\kappa,\alpha_\kappa),
\qquad
\mu_\kappa:=\mathcal L(B,f_\kappa).
\]
%where, for $r=1$, the definition of $M_\kappa^{\mathrm{coef}}$ is replaced by
%$$
%M_\kappa^{\mathrm{coef}}
%:=\tau_\kappa\mathbb %E|\ST(B/\tau_\kappa+G;\kappa)|.
%$$
For $b\in\mathbb R^p$, define
\[
\mu_{p,b}
:=
\frac1p\sum_{j=1}^p\delta_{((\beta_*)_j,b_j)}
\]
and
\begin{equation}\label{eq:dedb}
d_p(b)
:=
\max\left\{
 \left|\|b-\beta_*\|_{2,p}^2-M_\kappa^{\mathrm{err}}\right|,
 \left|\|b\|_{r,p}^r-M_\kappa^{\mathrm{coef}}\right|,
 W_2(\mu_{p,b},\mu_\kappa)
\right\}.
\end{equation}
For $\epsilon>0$, define
\begin{equation}\label{eq:defBp}
\mathcal B_{p,\epsilon}
:=
\{b\in\mathbb R^p:d_p(b)\ge\epsilon\}.
\end{equation}

\begin{lemma}[Stability of empirical objective]\label{lem:stability}
Assume that \eqref{eq:se1} holds and that
\(D_r(\tau_\kappa,\alpha_\kappa)\le\delta\). Then, 
almost surely, the following statements hold.
\begin{enumerate}
 \item[(i)] There are deterministic constants $c,C>0$ such that, for every sufficiently
 large $p$ and every $b\in\R^p$,
 \begin{equation}\label{eq:Qcoercive}
 Q_p(b)\ge c\|b\|_{2,p}-C.
 \end{equation}
\item[(ii)] We have
\begin{equation}
 \inf_{b\in\mathbb R^p}Q_p(b)
 \to
 \cQ(f_\kappa).
 \label{eq:minQ-convergence}
\end{equation}
 \item[(iii)] If $Q_p(b_p)\le\min_bQ_p(b)+o(1)$, then
 \begin{align}
 \|b_p-\beta_*\|_{2,p}^2
 &\to\tau_\kappa^2H_r(\tau_\kappa,\alpha_\kappa),
 \label{eq:stabmse}\\
 \|b_p\|_{r,p}^r
 &\to\tau_\kappa^rU_r(\tau_\kappa,\alpha_\kappa),
 \label{eq:stabmoment}\\
 \frac1p\sum_{j=1}^p\delta_{((\beta_*)_j,b_{p,j})}
 &\xrightarrow[W_2]{}\mu_\kappa.
 \label{eq:stablaw}
 \end{align}
% where for $r=1$, \eqref{eq:stabmoment} is replaced by
% \begin{equation}
%     \|b_p\|_{1, p}\to \tau_\kappa \mathbb E|\ST(B/\tau_\kappa+G; \kappa)|.
% \end{equation}
 \item[(iv)] For every $\epsilon>0$,
\begin{equation}\label{eq:Q-gap-formal}
\liminf_{p\to\infty}
\left(
 \inf_{b\in\mathcal B_{p,\epsilon}}Q_p(b)
 -
 \inf_{b\in\mathbb R^p}Q_p(b)
\right)>0.
\end{equation}
\end{enumerate}
\end{lemma}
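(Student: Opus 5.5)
We prove the four items in order, using Lemmas~\ref{prop:population}, \ref{lem:empirical}, \ref{lem:margin} and \ref{lem:empirical-dual-margin}.

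\emph{Step (i): coercivity.} Write $w=b-\beta_*$. The square-root term is at least $\delta_p^{-1/2}\|w\|_{2,p}$. For the remaining two terms, homogeneity and the triangle inequality give $\kappa\|b\|_{r,p}\ge\kappa\|w\|_{r,p}-\kappa\|\beta_*\|_{r,p}$. By the definition \eqref{eq:gamma-emp},
\[
\langle h,w\rangle_p-\kappa\|w\|_{r,p}\le\gamma_{r,p}(\kappa)\|w\|_{2,p}.
\]
Combining these,
\[
Q_p(b)\ge\Bigl(\delta_p^{-1/2}-\delta_p^{-1}\gamma_{r,p}(\kappa)\Bigr)\|w\|_{2,p}-\frac{\kappa}{\delta_p}\|\beta_*\|_{r,p}.
\]
By Lemmas~\ref{lem:margin} and \ref{lem:empirical-dual-margin}, $\gamma_{r,p}(\kappa)\to\gamma_r(\kappa)<\sqrt\delta$ almost surely. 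Hence the coefficient is eventually at least a deterministic $c>0$. Since $\|\beta_*\|_{r,p}$ and $\|\beta_*\|_{2,p}$ are bounded by $W_q$ convergence, we get \eqref{eq:Qcoercive} with $\|w\|_{2,p}$, and then with $\|b\|_{2,p}$ after absorbing $\|\beta_*\|_{2,p}$ into $C$.

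\emph{Step (ii): value convergence.} For the upper bound, evaluate $Q_p$ at $b^\circ=b(\tau_\kappa,\alpha_\kappa)$. By \eqref{eq:empH}--\eqref{eq:empD}, $Q_p(b^\circ)\to\cQ(f_\kappa)$.

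For the lower bound, take near-minimizers $b_p$. By (i) they are bounded in $\|\cdot\|_{2,p}$. Consider the triples $((\beta_*)_j,h_j,b_{p,j})$ and their empirical laws $\nu_p$. These are tight, so along any subsequence we extract a weak limit $\mathcal L(B,G,F)$, whose $(B,G)$-marginal is correct by \eqref{eq:jointWq}. Lower semicontinuity gives the two norm bounds
\[
\E(F-B)^2\le\liminf\|b_p-\beta_*\|_{2,p}^2,\qquad \|F\|_r\le\liminf\|b_p\|_{r,p}.
\]
The delicate term is the cross term $\langle h,b_p\rangle_p$, since it is not lower semicontinuous. The empirical laws of $(h_j,b_{p,j})$ are only $L_2$-bounded, and the $h$-coordinate is uniformly square integrable. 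Truncating $h$ at level $K$ and letting $K\to\infty$ therefore gives $\langle h,b_p\rangle_p\to\E[GF]$ along the subsequence.

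Then $\liminf Q_p(b_p)\ge\cQ(\overline F)$ with $\overline F=\E[F\mid B,G]$. Conditional Jensen only lowers the two norm terms, and it leaves $\E[GF]$ unchanged. This yields $\liminf\ge\cQ(f_\kappa)$.

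\emph{Steps (iii)--(iv): stability.} In the argument above, equality must hold in every inequality. By uniqueness in Lemma~\ref{prop:population}, $\overline F=f_\kappa$. We also need $F=\overline F$, i.e.\ no conditional variance, which follows from equality in Jensen for $\E(F-B)^2$ whenever the square-root term is strictly increasing in its argument. This holds always, since $\tau_\kappa>0$. Equality in the lower semicontinuity steps then upgrades weak convergence to convergence of second moments and of $r$-th moments, giving $W_2$ convergence and \eqref{eq:stabmse}--\eqref{eq:stablaw}. For $r>2$ we use the $W_q$ moment control.

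Item (iv) follows by contradiction. If the gap in \eqref{eq:Q-gap-formal} vanished along a subsequence, there would be $b_p\in\mathcal B_{p,\epsilon}$ that are near-minimizers, contradicting (iii).

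The main obstacle is Step (ii). It requires identifying the limit of the random linear term and ensuring that the limit object is feasible for $\cQ$. Both the truncation argument and the conditioning on $(B,G)$ must be handled with care.
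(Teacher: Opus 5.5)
Your proposal is correct and follows the paper's proof essentially step for step. The shared steps are: coercivity from the strict dual margin $\gamma_r(\kappa)<\sqrt\delta$ (you apply it to $w=b-\beta_*$ rather than to $b$, which is immaterial), the test point $b^\circ$ for the upper bound, subsequential weak limits of the triples $((\beta_*)_j,h_j,b_{p,j})$ with uniform integrability of the cross term, conditional Jensen plus uniqueness of $f_\kappa$, equality in the norm terms to rule out conditional variance and escape of moment mass, and (iv) by contradiction from (iii). One small remark: the strict monotonicity of $x\mapsto\sqrt{\sigma^2+x/\delta}$ that forces $F=\overline F$ holds for every $\sigma\ge0$, so you do not need to appeal to $\tau_\kappa>0$.
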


\begin{proof}
By Lemmas~\ref{lem:margin} and \ref{lem:empirical-dual-margin}, and
since \(\delta_p\to\delta\), there is a deterministic \(c_1>0\)
such that, eventually,
 \begin{equation}\label{eq:emp-margin}
 \sqrt{\delta_p}\|v\|_{2,p}-\langle h,v\rangle_p
 +\kappa\|v\|_{r,p}\ge c_1\|v\|_{2,p},
 \end{equation}
 for $v\in\R^p$.
An application of the triangle inequality gives
  \[
  \sqrt{\sigma^2+\delta_p^{-1}\|b-\beta_*\|_{2,p}^2}
  \ge\delta_p^{-1/2}\|b\|_{2,p}
  -\delta_p^{-1/2}\|\beta_*\|_{2,p}.
  \]
As $\|\beta_*\|_{2,p}=O(1)$ and $|\langle h,\beta_*\rangle_p|=O(1)$ almost surely, applying \eqref{eq:emp-margin} with $v=b$ gives that \eqref{eq:Qcoercive} holds.

The bound in \eqref{eq:Qcoercive} readily implies that every sequence with $Q_p(b_p)=O(1)$ has bounded
 $\|b_p\|_{2,p}$. Furthermore, rearranging \eqref{eq:empQ} gives
  \[
  \frac\kappa{\delta_p}\|b_p\|_{r,p}
  \le Q_p(b_p)+\frac1{\delta_p}
  |\langle h,b_p-\beta_*\rangle_p|,
  \]
 which implies that $\|b_p\|_{r,p}$ is also bounded whenever $Q_p(b_p)$ is bounded. 

Let $
 b^\circ_{p,j}:=\tau_\kappa
 \eta_{\alpha_\kappa,r}((\beta_*)_j/\tau_\kappa+h_j)$.
Then, by Lemma~\ref{lem:empirical}, we have
 $Q_p(b_p^\circ)\to\cQ(f_\kappa)$. This implies that 
 \begin{equation}\label{eq:minQlimit}
 \limsup_p\min_bQ_p(b)\le\cQ(f_\kappa).
 \end{equation}
Let \(b_p\) be any sequence satisfying
\[
 Q_p(b_p)\leq\inf_b Q_p(b)+o(1).
\]
Then, the sequences
\(\|b_p-\beta_*\|_{2,p}\) and \(\|b_p\|_{r,p}\) are bounded.
Fix an arbitrary subsequence.  Passing to a further subsequence, we may
assume that
\[
 a_p:=\|b_p-\beta_*\|_{2,p}^2\to a,
 \qquad
 c_p:=\|b_p\|_{r,p}^r\to c,
\]
and that the empirical laws
\[
 \nu_p:=\frac1p\sum_{j=1}^p
 \delta_{((\beta_*)_j,h_j,b_{p,j})}
\]
converge weakly to the law of a triple \((B,G,F)\).  Such a
subsequence exists by the previous moment bounds and Prokhorov's
theorem.

We first verify convergence of the Gaussian cross term.  Set
\(e_{p,j}:=b_{p,j}-(\beta_*)_j\).  For \(K,T>0\),
\[
\begin{aligned}
 &\frac1p\sum_{j=1}^p|h_je_{p,j}|
   \mathbf 1_{\{|h_je_{p,j}|>T\}}\leq
 \left(
  \frac1p\sum_{j=1}^p
  h_j^2\mathbf 1_{\{|h_j|>K\}}
 \right)^{1/2}
 \|e_p\|_{2,p}
 +\frac{K^2}{T}\|e_p\|_{2,p}^2.
\end{aligned}
\]
Indeed, the first term controls the contribution from
\(\{|h_j|>K\}\); on its complement,
\(|h_je_{p,j}|>T\) implies \(|e_{p,j}|>T/K\).
%The norms \(\|e_p\|_{2,p}\) are uniformly bounded, and the empirical Gaussian squares are uniformly integrable.  
Letting first
\(T\to\infty\) and then \(K\to\infty\) proves uniform
integrability of the products \(h_je_{p,j}\).  Consequently,
\begin{equation}
 \langle h,b_p-\beta_*\rangle_p
 \to
 \E[G(F-B)].
 \label{eq:stability-cross-term}
\end{equation}
It follows that
\begin{equation}
 \lim_{p\to\infty}Q_p(b_p)
 =
 \sqrt{\sigma^2+\frac a\delta}
 -\frac1\delta\E[G(F-B)]
 +\frac{\kappa}{\delta}c^{1/r}.
 \label{eq:stability-objective-limit}
\end{equation}
Let
\[
 \overline F:=\E[F\mid B,G].
\]
Lower semicontinuity and conditional Jensen's inequality give
\[
 a\geq\E(F-B)^2\geq\E(\overline F-B)^2,
 \qquad
 c^{1/r}\geq\|F\|_r\geq\|\overline F\|_r,
\]
whereas
\[
 \E[G(F-B)]=\E[G(\overline F-B)].
\]
Hence the RHS of
\eqref{eq:stability-objective-limit} is at least
\(\cQ(\overline F)\), which in turn is at least
\(\cQ(f_\kappa)\) by Lemma~\ref{prop:population}.
On the other hand, \eqref{eq:minQlimit} and the near-minimality of
\(b_p\) give
\[
 \limsup_{p\to\infty}Q_p(b_p)\leq\cQ(f_\kappa).
\]
Therefore equality holds throughout:
\[
 \lim_{p\to\infty}Q_p(b_p)
 =\cQ(\overline F)
 =\cQ(f_\kappa).
\]
Taking \(b_p\) to be an exact minimizer proves
\eqref{eq:minQ-convergence}.

The uniqueness of the population minimizer gives that $\overline F=f_\kappa$ almost surely. Moreover,
$
 \E(F-B)^2
 =
 \E(f_\kappa-B)^2+\E(F-f_\kappa)^2$.
If \(\E(F-f_\kappa)^2>0\), the square-root term in
\eqref{eq:stability-objective-limit} would be strictly larger than
the corresponding term in \(\cQ(f_\kappa)\), while the norm term
cannot be smaller.  This contradicts equality above.  Thus,
$
 F=f_\kappa$ almost surely.

It remains to exclude escape of moment mass.  Lower semicontinuity
now gives
 $a\geq M_\kappa^{\mathrm{err}}$,
 $c\geq M_\kappa^{\mathrm{coef}}$.
Using \(F=f_\kappa\) in
\eqref{eq:stability-objective-limit} and comparing with
\(\cQ(f_\kappa)\), we obtain
\[
\begin{aligned}
0=
 \sqrt{\sigma^2+\frac a\delta}
 -
 \sqrt{\sigma^2+
  \frac{M_\kappa^{\mathrm{err}}}{\delta}}+
 \frac{\kappa}{\delta}
 \left(
  c^{1/r}
  -(M_\kappa^{\mathrm{coef}})^{1/r}
 \right).
\end{aligned}
\]
Both terms on the right are nonnegative.  Since \(\kappa>0\), both
must vanish, and hence
$
 a=M_\kappa^{\mathrm{err}}, c=M_\kappa^{\mathrm{coef}}$.
Because the original subsequence was arbitrary, this proves
\eqref{eq:stabmse} and \eqref{eq:stabmoment} for the full sequence.

Finally, the empirical laws of
\(((\beta_*)_j,b_{p,j}-(\beta_*)_j)\) converge weakly to
\(\mathcal L(B,f_\kappa-B)\).  Their second moments also converge:
the first-coordinate second moments converge by the
\(W_q\) assumption, and the second-coordinate second moments converge
by \eqref{eq:stabmse}.  Thus these empirical laws converge in \(W_2\).
Applying the continuous linear map $
 (x,e)\longmapsto(x,x+e)$ 
gives \eqref{eq:stablaw}.

 It remains to show statement \emph{(iv)}. By contradiction,
suppose it fails for some $\epsilon>0$. Then, there is a subsequence $p_k$ and $b_{p_k}\in\mathcal B_{p_k, \epsilon}$ such that $Q_{p_k}(b_{p_k})-\inf_b Q_{p_k}(b)\to 0$. For indices outside the subsequence, choose an exact minimizer of $Q_p$. Then, $Q_p(b_p)\le \min_b Q_p(b)+o(1)$ and by statement \emph{(iii)} we have that $d_p(b_p)\to 0$, which is a contradiction. 
  \end{proof}

  \subsection{Analysis of the auxiliary optimization}
\label{app:auxiliary-optimization}

We next consider the auxiliary optimization problem and relate it to
the original one via CGMT. Let \(X=G_X/\sqrt n\) and \(w=b-\beta_*\).
Let \(g\in\R^n\) and \(h\in\R^p\) be independent standard Gaussian
vectors, independent of \((G_X,\varepsilon)\). Define
\begin{align}
 C_p(b)
 &:=
 \sqrt{\sigma^2+\delta_p^{-1}\|b-\beta_*\|_{2,p}^2}
 -\delta_p^{-1}\langle h,b-\beta_*\rangle_p,
 \label{eq:int-det-constraint}\\
 C_p^{\mathrm{raw}}(b)
 &:=
 \left\|
  \frac{\varepsilon}{\sqrt n}
  +\frac{\|b-\beta_*\|_2}{n}g
 \right\|_2
 -\frac1n h^\top(b-\beta_*).
 \label{eq:int-raw-constraint}
\end{align}
The CGMT auxiliary objective is
\begin{equation}\label{eq:rawAO}
 A_p^{\mathrm{raw}}(b):=
 [C_p^{\mathrm{raw}}(b)]_+
 +\frac\kappa{\delta_p}\|b\|_{r,p}.
\end{equation}
Its deterministic-norm counterpart is
\begin{equation}\label{eq:detAO}
 \overline A_p(b):=
 [C_p(b)]_+
 +\frac\kappa{\delta_p}\|b\|_{r,p}.
\end{equation}
Define the minima
\begin{equation}
     \overline a_p^*:=\inf_b \overline A_p(b),
 \qquad
 a_{p,\mathrm{raw}}^*:=\inf_b A_p^{\mathrm{raw}}(b),
\end{equation}
and the set
$$
C_{p, M}:=\{b\in\mathbb R^p : \|b-\beta_*\|_{2, p}\le M, \|b\|_{r, p}\le M\}.
$$
\begin{lemma}[Characterization of the auxiliary optimization]
\label{lem:AO}
Assume that \eqref{eq:se1} holds and that
\(D_r(\tau_\kappa,\alpha_\kappa)\le\delta\). Then,
almost surely, the following statements hold.
\begin{enumerate}
 \item[(i)] For any $M<\infty$,
 \begin{equation}\label{eq:normreplace}
 \sup_{\|b-\beta_*\|_{2,p}\le M}
 |A_p^{\mathrm{raw}}(b)-\overline A_p(b)|\longrightarrow0.
 \end{equation}
 \item[(ii)] For any $L<\infty$, there exists
$M_L<\infty$ such that, for all sufficiently large $p$,
\begin{equation}\label{eq:AO-sublevel-localization}
 \{b:\overline A_p(b)\le L\}
 \cup
 \{b:A_p^{\mathrm{raw}}(b)\le L\}
 \subseteq C_{p,M_L}.
\end{equation}
\item[(iii)] The two auxiliary minimum values satisfy
\begin{equation}\label{eq:AO-minima-formal}
 \overline a_p^*\to\ell_\kappa,
 \qquad
 a_{p,\mathrm{raw}}^*\to\ell_\kappa,
\end{equation}
where
\begin{equation}\label{eq:ellstar}
 \ell_\kappa:=R_\kappa+\frac\kappa\delta\|f_\kappa\|_r.
 \end{equation}
\item[(iv)] For every $\epsilon>0$,
\begin{align}
\liminf_{p\to\infty}
\left(
 \inf_{b\in\mathcal B_{p,\epsilon}}\overline A_p(b)
 -
 \overline a_p^*
\right)
&>0,
\label{eq:detAO-gap-formal}\\
\liminf_{p\to\infty}
\left(
 \inf_{b\in\mathcal B_{p,\epsilon}}A_p^{\mathrm{raw}}(b)
 -
 a_{p,\mathrm{raw}}^*
\right)
&>0.
\label{eq:rawAO-gap-formal}
\end{align}
\end{enumerate}
\end{lemma}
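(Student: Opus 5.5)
We prove the four statements in the order (i), (ii), (iii), (iv), working on the almost-sure event where Lemmas~\ref{lem:empirical}, \ref{lem:empirical-dual-margin} and \ref{lem:stability} hold, together with $\|\varepsilon\|_2^2/n\to\sigma^2$ and $\|g\|_2^2/n\to1$.

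\textbf{Step (i).} Expand the squared norm inside $C_p^{\mathrm{raw}}$:
\[
\|\varepsilon\|_2^2/n+\|b-\beta_*\|_2^2\|g\|_2^2/n^2+2\|b-\beta_*\|_2\,\varepsilon^\top g/n^{3/2}.
\]
Since $\|b-\beta_*\|_2^2/n=\delta_p^{-1}\|b-\beta_*\|_{2,p}^2$ and $\varepsilon^\top g/n\to0$ almost surely by the strong law, this expression converges to $\sigma^2+\delta_p^{-1}\|b-\beta_*\|_{2,p}^2$. The convergence is uniform over $\|b-\beta_*\|_{2,p}\le M$. The map $\sqrt{\cdot}$ is uniformly continuous on $[0,\infty)$, and $[\cdot]_+$ is 1-Lipschitz. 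Hence $|A_p^{\mathrm{raw}}-\overline A_p|\le|C_p^{\mathrm{raw}}-C_p|\to0$ uniformly on that set.

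\textbf{Step (ii).} We need coercivity of both auxiliary objectives. The key point is that $[C]_+\ge C$, so $\overline A_p\ge Q_p$. Lemma~\ref{lem:stability}(i) then gives $\overline A_p(b)\ge c\|b\|_{2,p}-C$, which bounds $\|b-\beta_*\|_{2,p}$ on $\{\overline A_p\le L\}$. For the raw objective, the triangle inequality gives
\[
C_p^{\mathrm{raw}}(b)\ge \|b-\beta_*\|_2\|g\|_2/n-\|\varepsilon\|_2/\sqrt n-h^\top(b-\beta_*)/n.
\]
We then apply \eqref{eq:emp-margin} with $v=b-\beta_*$ (after rescaling, using $\|g\|_2/\sqrt n\to1$ and $\delta_p\to\delta$). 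We also control $\kappa\|b\|_{r,p}$ against $\kappa\|b-\beta_*\|_{r,p}$ via $\|\beta_*\|_{r,p}=O(1)$. Together these give a lower bound of the form $c\|b-\beta_*\|_{2,p}-C$. The $\ell_r$ bound is immediate, since $\kappa\delta_p^{-1}\|b\|_{r,p}\le L$.

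\textbf{Step (iii).} For the lower bound, $\overline a_p^*\ge\inf Q_p\to\cQ(f_\kappa)$ by Lemma~\ref{lem:stability}(ii). We then verify $\cQ(f_\kappa)=\ell_\kappa$. By \eqref{eq:tauscale}, the square-root term equals $\tau_\kappa$. Gaussian integration by parts gives $\E[G(f_\kappa-B)]=\tau_\kappa D_r$. Hence $\cQ(f_\kappa)=\tau_\kappa(1-D_r/\delta)+\frac{\kappa}{\delta}\|f_\kappa\|_r=R_\kappa+\frac\kappa\delta\|f_\kappa\|_r$.

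For the upper bound, evaluate at $b_p^\circ$. Lemma~\ref{lem:empirical} gives $C_p(b_p^\circ)\to R_\kappa$. Under the assumption $R_\kappa>0$ the positive part is inactive eventually, so $\overline A_p(b_p^\circ)=Q_p(b_p^\circ)\to\ell_\kappa$. (For later reuse with $R_\kappa=0$, we note that $[C_p]_+\to0=R_\kappa$ still holds by continuity, so the argument works under $D_r\le\delta$ in general.)

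For the raw minimum, both sublevel sets $\{\le\ell_\kappa+1\}$ lie in $C_{p,M}$ by (ii). On this set (i) applies, giving $|a_{p,\mathrm{raw}}^*-\overline a_p^*|\to0$.

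\textbf{Step (iv).} This is the main obstacle, because $\overline A_p$ and $Q_p$ differ where $C_p<0$. Suppose $b\in\mathcal B_{p,\epsilon}$ has $\overline A_p(b)\le\overline a_p^*+o(1)$. Then $Q_p(b)\le\overline A_p(b)\le\ell_\kappa+o(1)=\inf Q_p+o(1)$, so $b$ is a near-minimizer of $Q_p$. Lemma~\ref{lem:stability}(iv) says such near-minimizers have $d_p(b)\to0$, contradicting $b\in\mathcal B_{p,\epsilon}$. This proves the gap \eqref{eq:detAO-gap-formal} by a subsequence contradiction argument.

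The raw gap \eqref{eq:rawAO-gap-formal} is handled the same way. Candidates with raw value at most $a_{p,\mathrm{raw}}^*+1$ lie in $C_{p,M}$ by (ii). Applying (i) on that set transfers both the infimum over $\mathcal B_{p,\epsilon}$ and the global minimum up to $o(1)$, so the gap carries over from the deterministic objective.
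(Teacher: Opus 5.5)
Your proposal is correct and follows essentially the same route as the paper. Part (i) replaces the random norm by its deterministic limit; part (ii) obtains coercivity from the empirical margin \eqref{eq:emp-margin}; part (iii) combines the lower bound $\overline A_p\ge Q_p$ with evaluation at $b_p^\circ$ (including the $R_\kappa\ge0$ case); and part (iv) transfers the $Q_p$ gap through $\overline A_p\ge Q_p$, then through (i)--(ii), to the raw objective.

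The differences are cosmetic. You prove (i) by expanding the square rather than by the paper's Lipschitz-in-$\rho$ net argument. In (iv), the fact that near-minimizers have $d_p\to0$ is really Lemma~\ref{lem:stability}(iii); along a subsequence it is cleaner to contradict Lemma~\ref{lem:stability}(iv) directly, as the paper does.
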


\begin{proof}
Let $\varepsilon=\sigma z$ with $z\sim N(0,I_n)$ independent of $g$,
  and set $\rho:=\|b-\beta_*\|_2/\sqrt n$.  Then,
  \[
  \left\|\frac\varepsilon{\sqrt n}
  +\frac{\|b-\beta_*\|_2}{n}g\right\|_2
  =T_n(\rho):=\frac1{\sqrt n}\|\sigma z+\rho g\|_2.
  \]
For every fixed $\rho\ge0$,
  $T_n(\rho)^2\to\sigma^2+\rho^2$ almost surely. As $\sqrt{\sigma^2+\rho^2}$ is $1$-Lipschitz, for all $\rho,\rho'\ge0$,
  \[
  |T_n(\rho)-T_n(\rho')|
  \le\frac{\|g\|_2}{\sqrt n}|\rho-\rho'|.
  \]
Since $\|g\|_2/\sqrt n\to1$, a finite-net argument gives uniform
  convergence on every compact $\rho$ interval and
  \eqref{eq:normreplace} readily follows.

Consider the sublevel sets of $\overline A_p(b)$, and set $w=b-\beta_*$.  Using $[x]_+\ge x$ and
  $\|b\|_{r,p}\ge\|w\|_{r,p}-\|\beta_*\|_{r,p}$, we have
  \begin{align*}
  \overline A_p(b)
  &\ge \frac1{\delta_p}
  \left(\sqrt{\delta_p}\|w\|_{2,p}
  -\langle h,w\rangle_p+\kappa\|w\|_{r,p}\right)-C.
  \end{align*}
By \eqref{eq:emp-margin},
  $\overline A_p(b)\ge c\|w\|_{2,p}-C$ eventually.
  Hence, a bounded objective sublevel has bounded $\|w\|_{2,p}$.
Since $\overline A_p(b)\ge(\kappa/\delta_p)\|b\|_{r,p}$, the same
  sublevel has bounded $\|b\|_{r,p}$.

Moving to the sublevel sets of $A_p^{\mathrm{raw}}(b)$, the reverse triangle inequality yields
  \[
  \frac1{\sqrt n}\|\sigma z+\rho g\|_2
  \ge \frac{\|g\|_2}{\sqrt n}\rho
  -\sigma\frac{\|z\|_2}{\sqrt n}.
  \]
Let $a_n:=\|g\|_2/\sqrt n\to1$. Then, the same argument as above gives
  \[
  A_p^{\mathrm{raw}}(b)
  \ge\frac1{\delta_p}
  \left(a_n\sqrt{\delta_p}\|w\|_{2,p}
  -\langle h,w\rangle_p+\kappa\|w\|_{r,p}\right)-C.
  \]
Since
  $a_n\sqrt{\delta_p}-\gamma_{r,p}(\kappa)\to
  \sqrt\delta-\gamma_r(\kappa)>0$, the RHS is at least
  $c\|w\|_{2,p}-C$ eventually.
Again, $A_p^{\mathrm{raw}}(b)\ge(\kappa/\delta_p)\|b\|_{r,p}$, which concludes the proof of \eqref{eq:AO-sublevel-localization}.

We next prove statement \emph{(iii)}. For every \(b\),
 $\overline A_p(b)\ge Q_p(b)$.
Lemma~\ref{lem:stability} and \eqref{eq:minQ-convergence} therefore give
\[
 \liminf_{p\to\infty}\overline a_p^*
 \ge \cQ(f_\kappa).
\]
For the sequence
$
 b^\circ_{p,j}
 :=
 \tau_\kappa
 \eta_{\alpha_\kappa,r}
 \left(\frac{(\beta_*)_j}{\tau_\kappa}+h_j\right)
$, Lemma~\ref{lem:empirical}, \eqref{eq:se1}, and Gaussian integration by
parts give
\[
\begin{aligned}
 C_p(b_p^\circ)&\to R_\kappa,\qquad
 \|b_p^\circ\|_{r,p}\to\|f_\kappa\|_r.
\end{aligned}
\]
The hypothesis \(D_r(\tau_\kappa,\alpha_\kappa)\le\delta\) gives
\(R_\kappa\ge0\). Consequently,
\[
 \limsup_{p\to\infty}\overline a_p^*
 \le
 R_\kappa+\frac{\kappa}{\delta}\|f_\kappa\|_r
 =\ell_\kappa.
\]
A similar calculation gives
\[
 \cQ(f_\kappa)
 =
 R_\kappa+\frac{\kappa}{\delta}\|f_\kappa\|_r
 =\ell_\kappa.
\]
Thus, \(\overline a_p^*\to\ell_\kappa\).

Notice also that every near-minimizer of \(\overline A_p\) is a
near-minimizer of \(Q_p\). Indeed, if
\(\overline A_p(b_p)\le\overline a_p^*+o(1)\), then
\[
0\le
 Q_p(b_p)-\inf_bQ_p(b)
\le
 \overline A_p(b_p)-\inf_bQ_p(b)
\longrightarrow0.
\]
We now compare the minima of the two objectives \eqref{eq:rawAO} and \eqref{eq:detAO}.  Evaluating
both objectives at $b=\beta_*$ gives
\[
\overline A_p(\beta_*)
=
\sigma+\frac{\kappa}{\delta_p}\|\beta_*\|_{r,p}
=O(1)
\]
and
\[
A_p^{\mathrm{raw}}(\beta_*)
=
\frac{\|\varepsilon\|_2}{\sqrt n}
+\frac{\kappa}{\delta_p}\|\beta_*\|_{r,p}
=O(1)
\]
almost surely.  Hence there is a deterministic $L<\infty$ such that both
minimum values are at most $L$ for all sufficiently large $p$.
By statement~\emph{(ii)}, the minimizers of both objectives are then contained in
$C_{p,M_L}$ eventually.  Set
\begin{equation}\label{eq:Deltap}
\Delta_p
:=
\sup_{\|b-\beta_*\|_{2,p}\le M_L}
\left|A_p^{\mathrm{raw}}(b)-\overline A_p(b)\right|.
\end{equation}
Statement~\emph{(i)} gives $\Delta_p\to0$ almost surely.  Evaluating each objective
at a minimizer of the other one yields
\[
a_{p,\mathrm{raw}}^*
\le \overline a_p^*+\Delta_p,
\qquad
\overline a_p^*
\le a_{p,\mathrm{raw}}^*+\Delta_p.
\]
Therefore,
\[
\left|a_{p,\mathrm{raw}}^*-\overline a_p^*\right|
\le\Delta_p\longrightarrow0.
\]
Since $\overline a_p^*\to\ell_\kappa$, it follows that
$a_{p,\mathrm{raw}}^*\to\ell_\kappa$.

We finally prove statement~\emph{(iv)}. Fix \(\epsilon>0\) and work
on the probability-one event on which
Lemma~\ref{lem:stability}\emph{(iv)} and
statements~\emph{(i)}--\emph{(iii)} above hold.
Since \(\overline A_p(b)\ge Q_p(b)\),
\[
\begin{aligned}
\inf_{b\in\mathcal B_{p,\epsilon}}\overline A_p(b)
-\overline a_p^*
&\ge
\inf_{b\in\mathcal B_{p,\epsilon}}Q_p(b)
-\inf_bQ_p(b)\\
&\quad+
\inf_bQ_p(b)-\overline a_p^*.
\end{aligned}
\]
The last line tends to zero by \eqref{eq:minQ-convergence} and
statement~\emph{(iii)}. Lemma~\ref{lem:stability}\emph{(iv)}
therefore implies \eqref{eq:detAO-gap-formal}.

We prove~\eqref{eq:rawAO-gap-formal} by contradiction.  Suppose that
\[
\liminf_{p\to\infty}
\left(
\inf_{b\in\mathcal B_{p,\epsilon}}A_p^{\mathrm{raw}}(b)
-a_{p,\mathrm{raw}}^*
\right)=0.
\]
The expression inside braces is nonnegative.  Hence, there exists a subsequence,
still denoted by $p$, and points
$b_p\in\mathcal B_{p,\epsilon}$ such that
$
A_p^{\mathrm{raw}}(b_p)-a_{p,\mathrm{raw}}^*
\longrightarrow0$.
Here, if the infimum is not attained, choose $b_p$ within $1/p$
of it.
By statement~\emph{(iii)},
$
a_{p,\mathrm{raw}}^*\to\ell_\kappa$.
It follows that, by taking $L>\ell_\kappa+1$, 
$
A_p^{\mathrm{raw}}(b_p)\le L
$
for all sufficiently large $p$ in the subsequence. By statement~\emph{(ii)}, there exists $M_L<\infty$ such that
$
b_p\in C_{p,M_L}
$
eventually.
Setting $\Delta_p$ as in \eqref{eq:Deltap} and recalling that $\Delta_p\to0$, we have
\[
\begin{aligned}
0
&\le
\overline A_p(b_p)-\overline a_p^*\\
&\le
\left|\overline A_p(b_p)-A_p^{\mathrm{raw}}(b_p)\right|
+A_p^{\mathrm{raw}}(b_p)-a_{p,\mathrm{raw}}^*
+\left|a_{p,\mathrm{raw}}^*-\overline a_p^*\right|\\
&\le
\Delta_p
+A_p^{\mathrm{raw}}(b_p)-a_{p,\mathrm{raw}}^*
+\left|a_{p,\mathrm{raw}}^*-\overline a_p^*\right|
\longrightarrow0.
\end{aligned}
\]
On the other hand, $b_p\in\mathcal B_{p,\epsilon}$, so~\eqref{eq:detAO-gap-formal} implies that
$\overline A_p(b_p)-\overline a_p^*$ is bounded away from zero along every
sufficiently large subsequence, which gives a contradiction and concludes the argument.
\end{proof}

\subsection{Application of CGMT}
\label{app:cgmt-transfer}

We use the following nonasymptotic form of CGMT; see Theorem~II.1\emph{(i)}--\emph{(ii)} (and in particular equations~(16) and~(21)) of \cite{thrampoulidis2015gaussian}.
Let $\mathcal X\subseteq\mathbb R^p$ and
$\mathcal U\subseteq\mathbb R^n$ be nonempty compact sets, let
$G\in\mathbb R^{n\times p}$, $g\in\mathbb R^n$, and
$h\in\mathbb R^p$ have independent standard Gaussian entries, and let
$\psi:\mathcal X\times\mathcal U\to\mathbb R$ be continuous.  Define
\begin{align*}
\Phi(G)
&:=
\min_{x\in\mathcal X}\max_{u\in\mathcal U}
\left\{
 u^\top Gx+\psi(x,u)
\right\},\\
\phi(g,h)
&:=
\min_{x\in\mathcal X}\max_{u\in\mathcal U}
\left\{
 \|x\|_2g^\top u+\|u\|_2h^\top x+\psi(x,u)
\right\}.
\end{align*}
Then, for every $t\in\mathbb R$,
\begin{equation}\label{eq:CGMT-lower-general}
\Pp(\Phi(G)\le t)
\le
2\Pp(\phi(g,h)\le t).
\end{equation}
No convexity of $\mathcal X$ or $\mathcal U$ is required for \eqref{eq:CGMT-lower-general} to hold.  If, in addition, $\mathcal X$ and $\mathcal U$
are convex and $\psi$ is convex in $x$ and concave in $u$, then
\begin{equation}\label{eq:CGMT-upper-general}
\Pp(\Phi(G)\ge t)
\le
2\Pp(\phi(g,h)\ge t).
\end{equation}
We note that \cite{thrampoulidis2015gaussian} use strict inequalities in the events on the LHS. To include equality at the threshold, it suffices to replace $t$ by $t\pm \eta$ and then let $\eta\downarrow0$, using continuity of
probability for decreasing events.

In our case, $\psi$ also depends on the Gaussian noise
$\varepsilon$, which is independent of $G_X$, $g$, and $h$. Thus, we apply
these results conditionally on $\varepsilon$ and then
average over $\varepsilon$.
%Below, we use the convention $\inf\varnothing=+\infty$. % is used in
%\eqref{eq:raw-bad-gap-assumption}.

\begin{lemma}[Localization of minimizers and application of CGMT]\label{lem:cgmt-transfer}
For $w\in\mathbb R^p$, define
\begin{equation}\label{eq:primary-objective}
P_p(w)
:=
\max_{\|u\|_2\le1}
\left\{
 \frac1{\sqrt n}u^\top\varepsilon
 -\frac1n u^\top G_Xw
 +\frac{\kappa}{\delta_p}\|\beta_*+w\|_{r,p}
\right\}.
\end{equation}
For all $p$, let $S_p\subseteq\mathbb R^p$ be a deterministic closed set, and suppose
that
\begin{equation}\label{eq:raw-bad-gap-assumption}
\liminf_{p\to\infty}
\left(
 \inf_{b\in S_p}A_p^{\mathrm{raw}}(b)
 -a_{p,\mathrm{raw}}^*
\right)>0
\qquad\text{almost surely}.
\end{equation}
Then, almost surely, there exists $p_0<\infty$ such that, for every
$p\ge p_0$,
\begin{equation}\label{eq:bad-minimizer-exclusion}
\operatorname*{arg\,min}_{w\in\mathbb R^p}P_p(w)
\cap
\{w\in\mathbb R^p:\beta_*+w\in S_p\}
=\varnothing.
\end{equation}
Moreover,
\begin{equation}\label{eq:primary-value-limit}
\inf_{w\in\mathbb R^p}P_p(w)
\to\ell_\kappa
\qquad\text{almost surely}.
\end{equation}
\end{lemma}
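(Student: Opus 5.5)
The plan is to apply the CGMT pair \eqref{eq:CGMT-lower-general}--\eqref{eq:CGMT-upper-general} conditionally on $\varepsilon$, after restricting $w$ to compact sets. The set of primary minimizers is not a priori compact, so we first localize.

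\textbf{Localization.} We use the deterministic bound $P_p(w)\ge \frac{\kappa}{\delta_p}\|\beta_*+w\|_{r,p}$ and the value $P_p(0)=\|\varepsilon\|_2/\sqrt n+\frac{\kappa}{\delta_p}\|\beta_*\|_{r,p}=O(1)$. These bound $\|\beta_*+w\|_{r,p}$ on the relevant sublevel set. However, $\|w\|_{2,p}$ is not directly controlled, since the residual term could be small for large $w$ in the null space direction. To control it, we apply CGMT to the event that some $w$ with $\|w\|_{2,p}\in[M,M']$ and $\|\beta_*+w\|_{r,p}\le M$ attains $P_p\le L$. On this compact set $\psi$ is continuous, so \eqref{eq:CGMT-lower-general} bounds its probability by twice the probability of the corresponding AO event. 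Here the AO value, after optimizing over the direction and length of $u$ on the unit ball, equals $\min A_p^{\mathrm{raw}}$ over the same set. Lemma~\ref{lem:AO}\emph{(ii)} makes that event eventually impossible once $M>M_L$.

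To get almost sure rather than in-probability statements, we need summable probabilities. The AO bounds in Lemma~\ref{lem:AO} are almost sure but not quantitative, so this is the step we expect to be the main obstacle. We would first try to exploit that the events are monotone in $M'$, and then use Gaussian concentration of the Lipschitz functionals $\gamma_{r,p}$, $\|g\|_2/\sqrt n$ and $\langle h,\beta_*\rangle_p$. Each deviates with probability $e^{-cp}$, which gives summable bounds for the sublevel-localization event via the explicit lower bound $A_p^{\mathrm{raw}}\ge c\|w\|_{2,p}-C$. Borel--Cantelli then confines all primary near-minimizers to the compact set $\beta_*+C_{p,M}$ eventually.

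\textbf{Value limit and exclusion.} On the compact convex set $\mathcal X=C_{p,M}-\beta_*$ with $\mathcal U$ the unit ball, $\psi$ is convex-concave. We then apply \eqref{eq:CGMT-lower-general} with threshold $t=\ell_\kappa-\epsilon$ and \eqref{eq:CGMT-upper-general} with threshold $t=\ell_\kappa+\epsilon$. Lemma~\ref{lem:AO}\emph{(iii)} says the AO minimum over $\mathcal X$ converges to $\ell_\kappa$, because the minimizers lie in $C_{p,M_L}$. Summability again comes from concentration: the restricted AO value is Lipschitz in $(g,h,z)$ with constant $O(p^{-1/2})$ on bounded sets, so deviations of size $\epsilon$ have probability $e^{-cp\epsilon^2}$. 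Borel--Cantelli then gives \eqref{eq:primary-value-limit}.

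For \eqref{eq:bad-minimizer-exclusion}, let $\zeta>0$ be the gap in \eqref{eq:raw-bad-gap-assumption}. We apply the lower bound \eqref{eq:CGMT-lower-general}, which needs no convexity, to the restricted compact set $\{w\in\mathcal X:\beta_*+w\in S_p\}$ with threshold $\ell_\kappa+\zeta/2$. Its AO value is at least $a^*_{p,\mathrm{raw}}+\zeta$, which tends to at least $\ell_\kappa+\zeta$. So the restricted primary minimum exceeds $\ell_\kappa+\zeta/2$ eventually, while the global primary minimum tends to $\ell_\kappa$. Hence no primary minimizer lies in $S_p$. If concentration for the gap assumption is not directly available, we would first use that the AO restricted minimum is a Lipschitz functional of $(g,h,z)$ and that its almost sure gap implies a deterministic gap in expectation for large $p$.
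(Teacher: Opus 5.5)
Your proposal is correct in substance. The value-limit and exclusion steps follow the paper: CGMT lower and upper bounds on a deterministic compact convex localization set, $C_M/\sqrt n$-Lipschitz concentration of the localized AO minima, and Borel--Cantelli.

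One caveat concerns the exclusion step. The version you lead with does not work as written. The gap in \eqref{eq:raw-bad-gap-assumption} is only a random variable $\Gamma=\liminf_p(W_p-V_p)>0$, and CGMT compares probabilities only at deterministic thresholds, so the threshold $\ell_\kappa+\zeta/2$ with $\zeta=\Gamma$ is not admissible. Your fallback is therefore necessary, and it is exactly the paper's device. Fatou applied to $\min\{W_p-V_p,1\}$ gives $\E W_p-\E V_p\ge 2c_S$ with $c_S=\tfrac14\E\min\{\Gamma,1\}>0$. Concentration of $W_p$ and $V_p$ around their means then separates them at the deterministic midpoint. Similarly, centering the value concentration at $\ell_\kappa$ requires $\E V_p\to\ell_\kappa$. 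The paper obtains this from uniform integrability via $0\le V_p\le A^{\mathrm{raw}}_p(\beta_*)$; it also follows from your concentration bound combined with the almost-sure limit.

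The genuine difference is the localization.

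\textbf{Paper's route.} It passes to the recession function $P_p^\infty(v)=\frac1n\|G_Xv\|_2+\frac\kappa{\delta_p}\|v\|_{r,p}$ and applies the CGMT lower bound once, on the unit sphere $\{\|v\|_{2,p}=1\}$. There the AO is at least $\delta_p^{-1}(\sqrt{\delta_p}\|g\|_2/\sqrt n-\gamma_{r,p}(\kappa))$. Concentration of this $C/\sqrt p$-Lipschitz quantity yields $P_p(w)\ge c_0\|w\|_{2,p}-C$ for all $w$ simultaneously, with no noise term and no threshold $L$.

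\textbf{Your route.} You apply CGMT directly to the sublevel event $\{\exists w:\|w\|_{2,p}\ge M,\ P_p(w)\le L\}$, truncated to compact annuli and recovered by continuity from below. Alternatively, convexity of $P_p$ together with $P_p(0)<L$ reduces it to the single sphere $\|w\|_{2,p}=M$. You then bound the AO by the explicit coercivity $A^{\mathrm{raw}}_p(\beta_*+w)\ge\delta_p^{-1}(a_n\sqrt{\delta_p}-\gamma_{r,p}(\kappa))\|w\|_{2,p}-C$ from the proof of Lemma~\ref{lem:AO}\emph{(ii)}.

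Both routes rest on the same inputs:
\begin{itemize}
\item the margin $\gamma_r(\kappa)<\sqrt\delta$ of Lemma~\ref{lem:margin};
\item the $L_1$ convergence in Lemma~\ref{lem:empirical-dual-margin};
\item exponential concentration of $\gamma_{r,p}(\kappa)$ and $\|g\|_2/\sqrt n$, plus $\|\varepsilon\|_2/\sqrt n$ in your version.
\end{itemize}
The recession route is slightly more economical, since it gives a uniform, noise-free growth bound from a single compact CGMT application. Yours instead reuses the sublevel-set structure already established for the AO.
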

%The convention $\inf\varnothing=+\infty$ is used in \eqref{eq:raw-bad-gap-assumption}.

\begin{proof}
We first prove that all minimizers of $P_p$ eventually belong to a deterministic
bounded set. A direct calculation gives that %the recession functional $P_p^\infty$ of $P_p$ is
\begin{equation}\label{eq:primary-recession}
P_p^\infty(v):=\lim_{t\to\infty}\frac{P_p(w+tv)-P_p(w)}{t}
=
\frac1n\|G_Xv\|_2
+\frac{\kappa}{\delta_p}\|v\|_{r,p}.
\end{equation}
We claim that there exists a deterministic $c_0>0$ such that, almost surely,
for all sufficiently large $p$,
\begin{equation}\label{eq:primary-recession-bound}
P_p^\infty(v)\ge c_0\|v\|_{2,p}
\qquad\text{for every }v\in\mathbb R^p.
\end{equation}
To show this, first note that, by positive homogeneity, it is enough to minimize
$P_p^\infty(v)$ over the sphere $\|v\|_{2,p}=1$. Define %the minimum recession slope of the primary optimization by
\[
R_p^{\mathrm{PO}}
:=
\inf_{\|v\|_{2,p}=1}P_p^\infty(v)
=
\inf_{\|v\|_{2,p}=1}
\left\{
 \frac1n\|G_Xv\|_2
 +\frac{\kappa}{\delta_p}\|v\|_{r,p}
\right\}.
\]
This quantity measures the smallest linear growth rate of \(P_p\) at
infinity. Using the dual representation of the Euclidean norm,
\[
\frac1n\|G_Xv\|_2
=
\sup_{\|u\|_2\le 1}
\left\{-\frac1n u^\top G_Xv\right\},
\]
the Gaussian min--max theorem associates with
\(R_p^{\mathrm{PO}}\) the auxiliary optimization
\[
 R_p^{\mathrm{AO}}
:=
\inf_{\|v\|_{2,p}=1}
\sup_{\|u\|_2\le1}
\left\{
 \frac{\|v\|_2}{n}g^\top u
 -\frac{\|u\|_2}{n}h^\top v
 +\frac{\kappa}{\delta_p}\|v\|_{r,p}
\right\},
\]
where \(g\in\mathbb R^n\) and \(h\in\mathbb R^p\) are independent
standard Gaussian vectors. Since \(\|v\|_{2,p}=1\), we have \(\|v\|_2=\sqrt p\).  For fixed \(v\),
write \(\rho=\|u\|_2\in[0,1]\).  Maximizing first over the direction of
\(u\) and then over its norm gives
\[
\begin{aligned}
&\sup_{\|u\|_2\le1}
\left\{
 \frac{\|v\|_2}{n}g^\top u
 -\frac{\|u\|_2}{n}h^\top v
\right\} 
=
\sup_{0\le\rho\le1}
\rho\left\{
 \frac{\|g\|_2}{\sqrt n\sqrt{\delta_p}}
 -\frac1{\delta_p}\langle h,v\rangle_p
\right\} 
=
\left[
 \frac{\|g\|_2}{\sqrt n\sqrt{\delta_p}}
 -\frac1{\delta_p}\langle h,v\rangle_p
\right]_+.
\end{aligned}
\]
%where \([x]_+:=\max\{x,0\}\).  
Indeed, the maximizing radius is
\(\rho=1\) when the expression in braces is positive and \(\rho=0\)
when it is negative.  Consequently,
\[
R_p^{\mathrm{AO}}
=
\inf_{\|v\|_{2,p}=1}
\left\{
 \left[
  \frac{\|g\|_2}{\sqrt n\sqrt{\delta_p}}
  -\frac1{\delta_p}\langle h,v\rangle_p
 \right]_+
 +\frac{\kappa}{\delta_p}\|v\|_{r,p}
\right\}.
\]
Using $[x]_+\ge x$ and the definition of $\gamma_{r,p}(\kappa)$, we have
\begin{align}
R_p^{\mathrm{AO}}
&\ge
\frac1{\delta_p}
\inf_{\|v\|_{2,p}=1}
\left\{
 \frac{\|g\|_2}{\sqrt n}\sqrt{\delta_p}
 -\langle h,v\rangle_p
 +\kappa\|v\|_{r,p}
\right\}\ge
\frac1{\delta_p}
\left(
 \frac{\|g\|_2}{\sqrt n}\sqrt{\delta_p}
 -\gamma_{r,p}(\kappa)
\right).
\label{eq:AO-recession-lower-bound}
\end{align}
Since
\[
\frac{\|g\|_2}{\sqrt n}\longrightarrow1,
\qquad
\delta_p\to\delta,
\qquad
\gamma_{r,p}(\kappa)\to\gamma_r(\kappa)
\]
almost surely, the RHS of
\eqref{eq:AO-recession-lower-bound} converges to
\[
c_*:=\frac{\sqrt\delta-\gamma_r(\kappa)}{\delta}>0,
\]
where the inequality follows from Lemma \ref{lem:margin}.
The RHS of \eqref{eq:AO-recession-lower-bound} also converges
in $L_1$ as $
0\le\gamma_{r,p}(\kappa)\le\|h\|_{2,p}$, hence
\[
\liminf_{p\to\infty}\E R_p^{\mathrm{AO}}\ge c_*.
\]
Furthermore, $R_p^{\mathrm{AO}}$ is a
$C/\sqrt p$-Lipschitz function of $(g,h)$, for a deterministic $C$ independent
of $p$.  Gaussian concentration therefore gives deterministic constants
$a_0,c>0$ such that
\begin{equation}\label{eq:AO-recession-tail}
\Pp(R_p^{\mathrm{AO}}\le a_0)\le e^{-cp}
\end{equation}
for all sufficiently large $p$.
Applying 
\eqref{eq:CGMT-lower-general} (cf.\ \cite[Theorem~II.1\emph{(i)}]{thrampoulidis2015gaussian}), to
$
\inf_{\|v\|_{2,p}=1}P_p^\infty(v)
$, we conclude that
\[
\Pp\left(
 \inf_{\|v\|_{2,p}=1}P_p^\infty(v)\le a_0
\right)
\le
2\Pp(R_p^{\mathrm{AO}}\le a_0)
\le2e^{-cp}.
\]
The probabilities are summable, so Borel--Cantelli proves
\eqref{eq:primary-recession-bound}, with $c_0=a_0$.

The triangle inequality and \eqref{eq:primary-recession-bound} imply
\begin{align}
P_p(w)
&\ge
P_p^\infty(w)
-\frac{\|\varepsilon\|_2}{\sqrt n}
-\frac{\kappa}{\delta_p}\|\beta_*\|_{r,p}\ge
c_0\|w\|_{2,p}-C
\label{eq:primary-coercivity}
\end{align}
eventually almost surely, where $C<\infty$ is deterministic.  Since
$P_p(0)=O(1)$ almost surely, every minimizer of $P_p$ has bounded
$\|w\|_{2,p}$.  Moreover,
$
P_p(w)\ge\frac{\kappa}{\delta_p}\|\beta_*+w\|_{r,p}$,
so $\|\beta_*+w\|_{r,p}$ is bounded as well.
Consequently, there exists a deterministic $M<\infty$ such that, almost
surely, all minimizers of $P_p$ eventually belong to
\[
\mathcal C^w_{p,M}
:=
\left\{
w\in\mathbb R^p:
\|w\|_{2,p}\le M,\ 
\|\beta_*+w\|_{r,p}\le M
\right\}.
\]
Increasing $M$ if necessary, Lemma~\ref{lem:AO}\emph{(ii)}-\emph{(iii)} ensures that all
minimizers of $A_p^{\mathrm{raw}}$ eventually satisfy
$
b-\beta_*\in\mathcal C^w_{p,M}$.
Define %the localized auxiliary
%values
\begin{align*}
V_p
&:=
\inf_{\substack{b\in\mathbb R^p\\b-\beta_*\in\mathcal C^w_{p,M}}}
A_p^{\mathrm{raw}}(b),\qquad
W_p
:=
\inf_{\substack{b\in S_p\\b-\beta_*\in\mathcal C^w_{p,M}}}
A_p^{\mathrm{raw}}(b).
\end{align*}
If the feasible set defining $W_p$ is empty, then \eqref{eq:bad-minimizer-exclusion} trivially holds. We will therefore assume that the feasible set is not empty.
By the localization of the minimizers of $A_p^{\rm raw}$ proved in Lemma \ref{lem:AO}, we have that
$
V_p=a_{p,\mathrm{raw}}^*
$
eventually almost surely.  Hence, \eqref{eq:raw-bad-gap-assumption} implies
\begin{equation}\label{eq:localized-random-gap}
\Gamma
:=
\liminf_{p\to\infty}(W_p-V_p)>0
\qquad\text{almost surely},
\end{equation}
which also implies that
\[
c_S
:=
\frac14\E[\min\{\Gamma,1\}]>0.
\]
Fatou's lemma gives
$
\liminf_{p\to\infty}
\E\big[\min\{W_p-V_p,1\}\big]
\ge
\E[\min\{\Gamma,1\}]
=4c_S$.
Since $W_p\ge V_p$, it follows that, for all sufficiently large $p$,
\begin{equation}\label{eq:mean-value-gap}
\E W_p-\E V_p\ge2c_S.
\end{equation}

We now apply the CGMT results.  Define
\begin{align*}
\Phi_p
&:=
\inf_{w\in\mathcal C^w_{p,M}}P_p(w),\qquad
\Phi_p^S
:=
\inf_{\substack{w\in\mathcal C^w_{p,M}\\\beta_*+w\in S_p}}P_p(w).
\end{align*}
For any $t\in\mathbb R$, 
\eqref{eq:CGMT-lower-general} (cf.\
\cite[Theorem~II.1\emph{(i)}]{thrampoulidis2015gaussian}) gives
\begin{equation}\label{eq:CGMT-bad-lower-tail}
\Pp(\Phi_p^S\le t)
\le
2\Pp(W_p\le t).
\end{equation}
Furthermore,  $\mathcal C^w_{p,M}$ is convex,
$\{u:\|u\|_2\le1\}$ is convex, and
$
\psi_p(w,u)
=
\frac1{\sqrt n}u^\top\varepsilon
+\frac{\kappa}{\delta_p}\|\beta_*+w\|_{r,p}
$
is convex in $w$ and affine, hence concave, in $u$.  Thus, \eqref{eq:CGMT-upper-general} (cf.\
\cite[Theorem~II.1\emph{(ii)}]
{thrampoulidis2015gaussian}) gives
\begin{equation}\label{eq:CGMT-good-upper-tail}
\Pp(\Phi_p\ge t)
\le
2\Pp(V_p\ge t).
\end{equation}
Let
$
t_p:=(\E V_p+\E W_p)/2$. By \eqref{eq:mean-value-gap}, $t_p-\E V_p\ge c_S$ and
$\E W_p-t_p\ge c_S$.
Note that both $V_p$ and $W_p$ are $C_M/\sqrt n$-Lipschitz functions of their
standard Gaussian data.  %For example, changing $G_X$ by $\Delta$ changes the primary bilinear term by at most
%\[
%\frac1n\|u\|_2\|\Delta\|_{\mathrm F}\|w\|_2
%\le
%\frac{M\sqrt p}{n}\|\Delta\|_{\mathrm F}
%\le
%\frac{C_M}{\sqrt n}\|\Delta\|_{\mathrm F},
%\]
%and the analogous bounds hold for the $g$, $h$, and standard Gaussian noise
%terms in the auxiliary problem.  Taking an infimum or supremum over a fixed
%set preserves the Lipschitz constant.
Gaussian concentration therefore yields
\begin{equation}\label{eq:AO-separation-tails}
\Pp(W_p\le t_p)+\Pp(V_p\ge t_p)
\le2e^{-c_S'n}
\end{equation}
for a deterministic $c_S'>0$ and all sufficiently large $p$.
Combining \eqref{eq:CGMT-bad-lower-tail},
\eqref{eq:CGMT-good-upper-tail}, and
\eqref{eq:AO-separation-tails} gives
\[
\Pp(\Phi_p^S\le\Phi_p)
\le4e^{-c_S'n}.
\]
As these probabilities are summable,  Borel-Cantelli
 allows us to conclude that $
\Phi_p^S>\Phi_p$ eventually almost surely.  Thus, no minimizer of $P_p$ in
$\mathcal C^w_{p,M}$ satisfies $\beta_*+w\in S_p$ and since all primary
minimizers are eventually contained in $\mathcal C^w_{p,M}$,
\eqref{eq:bad-minimizer-exclusion} follows.

It remains to prove \eqref{eq:primary-value-limit}. By Lemma~\ref{lem:AO}\emph{(iii)},
\[
V_p=a_{p,\mathrm{raw}}^*\to\ell_\kappa
\qquad\text{almost surely}
\]
Moreover, $0\le V_p\le A_p^{\mathrm{raw}}(\beta_*)$ and
the latter variables have uniformly bounded second moments.  Hence,
$(V_p)_p$ is uniformly integrable and
$\E V_p\to\ell_\kappa$. Gaussian concentration then implies that, for any $t>0$,
$$
\Pp(|V_p-\ell_\kappa|>t)\le2e^{-c_tn},
$$
for all sufficiently large $p$. Hence, using  \eqref{eq:CGMT-lower-general}-\eqref{eq:CGMT-upper-general} (cf.\ \cite[Theorem~II.1\emph{(i)}-\emph{(ii)}]{thrampoulidis2015gaussian}), we conclude that
\[
\Pp(|\Phi_p-\ell_\kappa|>t)
\le4e^{-c_tn}.
\]
Another application of Borel-Cantelli then implies that 
\[
\Phi_p\to\ell_\kappa
\qquad\text{almost surely}.
\]
Finally, all primary minimizers are eventually contained in
$\mathcal C^w_{p,M}$, so
\[
\Phi_p=\inf_{w\in\mathbb R^p}P_p(w)
\]
eventually almost surely, which concludes the argument.
\end{proof}

\subsection{Completing the proof}
\label{app:completion-nonint}

\begin{proof}
Set \(w=b-\beta_*\), so that \(y-Xb=\varepsilon-Xw\). Since
\(X=G_X/\sqrt n\), dividing the objective in \eqref{eq:minprob} by \(\sqrt n\)
gives
\[
 \frac1{\sqrt n}
 \left\|
   \varepsilon-\frac1{\sqrt n}G_Xw
 \right\|_2
 +\frac{\kappa}{\delta_p}\|\beta_*+w\|_{r,p}.
\]
As $\|a\|_2=\max_{\|u\|_2\le1}u^\top a$,
the preceding objective equals
\[
 P_p(w)
 :=
 \max_{\|u\|_2\le1}
 \left\{
   \frac1{\sqrt n}u^\top\varepsilon
   -\frac1n u^\top G_Xw
   +\frac{\kappa}{\delta_p}\|\beta_*+w\|_{r,p}
 \right\}.
\]
Thus, \(b\) minimizes the objective in \eqref{eq:minprob} if and only if
\(w=b-\beta_*\) minimizes \(P_p\).

We next identify the auxiliary optimization appearing in Lemmas \ref{lem:AO}-\ref{lem:cgmt-transfer}. Conditionally on \(\varepsilon\), the CGMT auxiliary problem
associated with the bilinear term in \(P_p\) is
\[
 \min_w\max_{\|u\|_2\le1}
 \left\{
   \frac{\|w\|_2}{n}g^\top u
   -\frac{\|u\|_2}{n}h^\top w
   +\frac1{\sqrt n}u^\top\varepsilon
   +\frac{\kappa}{\delta_p}\|\beta_*+w\|_{r,p}
 \right\}.
\]
For fixed \(w\), the same radial maximization used in the proof of
Lemma~\ref{lem:cgmt-transfer} gives
\[
\max_{\|u\|_2\le1}
\left\{
u^\top\left(
\frac{\varepsilon}{\sqrt n}
+\frac{\|w\|_2}{n}g
\right)
-\frac{h^\top w}{n}\|u\|_2
\right\}
=
\left[
\left\|
\frac{\varepsilon}{\sqrt n}
+\frac{\|w\|_2}{n}g
\right\|_2
-\frac{h^\top w}{n}
\right]_+.
\]
Consequently, the scalar auxiliary objective is
\[
 A_p^{\mathrm{raw}}(b)
 =
 \left[
   \left\|
     \frac{\varepsilon}{\sqrt n}
     +\frac{\|b-\beta_*\|_2}{n}g
   \right\|_2
   -\frac1n h^\top(b-\beta_*)
 \right]_+
 +\frac{\kappa}{\delta_p}\|b\|_{r,p},
\]
which is studied in Lemma \ref{lem:AO}.

We now transfer the stability conclusions from the auxiliary optimization to the original
primary problem. 
Note that $\mathcal B_{p,\varepsilon}$ is a  closed set. Indeed, the two moment maps
appearing in \(d_p\) are continuous, and the coupling that pairs the
\(j\)-th atom of \(\mu_{p,b}\) with the \(j\)-th atom of
\(\mu_{p,b'}\) gives
$ W_2(\mu_{p,b},\mu_{p,b'})
 \le \|b-b'\|_{2,p}$.
Thus, \(b\mapsto W_2(\mu_{p,b},\mu_\kappa)\) is continuous.

For any \(\varepsilon>0\), Lemma \ref{lem:AO}\emph{(iv)} gives
\[
 \liminf_{p\to\infty}
 \left(
   \inf_{b\in\mathcal B_{p,\varepsilon}}
        A_p^{\mathrm{raw}}(b)
   -
   \inf_{b\in\mathbb R^p}
        A_p^{\mathrm{raw}}(b)
 \right)>0
 \qquad\text{almost surely}.
\]
Thus, by Lemma \ref{lem:cgmt-transfer}, almost surely, for all
sufficiently large \(p\), no primary minimizer \(w\) satisfies
\(\beta_*+w\in\mathcal B_{p,\varepsilon}\), which gives
\begin{equation}\label{eq:convclaim}
 \sup_{\widehat\beta\in\mathcal M_p}
 d_p(\widehat\beta)\longrightarrow0
 \qquad\text{almost surely},
\end{equation}
where
\(\mathcal M_p\) denotes the set of minimizers of \eqref{eq:minprob}.
In particular, uniformly over all choices of minimizer,
\[
 \|\widehat\beta-\beta_*\|_{2,p}^2
 \to
 M_\kappa^{\mathrm{err}}
 =
 \tau_\kappa^2H_r(\tau_\kappa,\alpha_\kappa).
\]
Hence, the convergence of the MSE in \eqref{eq:resclaim} follows
from \eqref{eq:se1}. Furthermore, \eqref{eq:convclaim} implies that
\[
 W_2(\mu_{p,\widehat\beta},\mu_\kappa)\longrightarrow0,
\]
which proves the first convergence in \eqref{eq:int-law}
of Remark~\ref{rem:joint-empirical-laws}. Note that \eqref{eq:convclaim} also implies that
\[
 \|\widehat\beta\|_{r,p}\to\|f_\kappa\|_r.
\]
Lemma \ref{lem:cgmt-transfer} gives
\[
 \min_w P_p(w)\to\ell_\kappa
 \qquad\text{almost surely},
\]
where, by \eqref{eq:ellstar},
$
 \ell_\kappa
 =
 R_\kappa+\frac{\kappa}{\delta}\|f_\kappa\|_r$.
For any \(\widehat\beta\in\mathcal M_p\), setting
\(\widehat w=\widehat\beta-\beta_*\) gives
\[
 \min_wP_p(w)
 =
 P_p(\widehat w)
 =
 \frac1{\sqrt n}\|y-X\widehat\beta\|_2
 +\frac{\kappa}{\delta_p}\|\widehat\beta\|_{r,p}.
\]
Since \(\delta_p\to\delta\), we conclude that
\[
 \frac1{\sqrt n}\|y-X\widehat\beta\|_2
 \to
 \ell_\kappa-\frac{\kappa}{\delta}\|f_\kappa\|_r
 =
 R_\kappa,
\]
which gives the convergence of the training error in \eqref{eq:resclaim} and concludes the proof.
\end{proof}

\section{Proof of Lemma~\ref{thm:min-norm-interpolator}}
\label{app:pfint}

%We first record the finite-dimensional estimate used at the end of the proof.  Recall that \(X=G_X/\sqrt n\), where \(G_X\) has independent standard Gaussian entries.

We start by isolating the ingredient used to pass from
penalized optimization to exact interpolation. First, we show that every
residual can be corrected with a small
\(\ell_r\) cost. This implies that, if the penalty on the residual is large enough, the solution of the regularized problem is an exact interpolator.

\begin{lemma}%[Gaussian quotient estimate]
\label{lem:int-gaussian-quotient}
Fix \(1\le r<\infty\) and \(\delta\in(0,1)\).
There is a deterministic constant \(C_{r,\delta}<\infty\) such that,
almost surely for all sufficiently large \(p\), for every
\(z\in\mathbb R^n\) there exists \(u\in\mathbb R^p\) satisfying
\begin{equation}
 Xu=z,
 \qquad
 \|u\|_{r,p}
 \le C_{r,\delta}\frac{\|z\|_2}{\sqrt n}.
 \label{eq:int-gaussian-quotient}
\end{equation}
\end{lemma}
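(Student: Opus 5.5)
We split according to whether $r\le 2$ or $r>2$. In both cases the idea is to produce, uniformly in $z$, a preimage $u$ with $\|u\|_2\lesssim\sqrt{p/n}\,\|z\|_2$ whose entries are delocalized. For $r\le2$, delocalization is automatic.

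\textbf{Case $1\le r\le 2$.} Take the minimum-$\ell_2$ preimage $u=X^\top(XX^\top)^{-1}z$. Then $\|u\|_2\le\|z\|_2/s_{\min}(X^\top)$. Since $\sqrt n\,X^\top$ is a $p\times n$ standard Gaussian matrix, Gordon's inequality (Lemma~\ref{lem:lt-comparison}, with $T=S^{n-1}$, $U=S^{p-1}$, $f=0$) gives $s_{\min}(\sqrt n X^\top)\ge\sqrt{p}-\sqrt n-t$ except on an event of probability at most $e^{-t^2/2}$. Choosing $t=c\sqrt p$ and applying Borel--Cantelli, we get almost surely, eventually,
\[
 s_{\min}(X^\top)\ge\tfrac12\bigl(\delta^{-1/2}-1\bigr)>0 .
\]
Monotonicity of normalized $\ell_s$ norms, $\|u\|_{r,p}\le\|u\|_{2,p}$, then yields
\[
 \|u\|_{r,p}\le p^{-1/2}\,C_\delta\|z\|_2=C_\delta\sqrt{\delta_p}\,\|z\|_2/\sqrt n,
\]
which is \eqref{eq:int-gaussian-quotient}.

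\textbf{Case $r>2$: why the $\ell_2$ preimage fails.} Here the minimum-$\ell_2$ preimage does not work. If $z$ is aligned with a column $X_j$, then $X^\top(XX^\top)^{-1}z$ has a spike of order $\|z\|_2$ in coordinate $j$. This gives $\|u\|_{r,p}\gtrsim p^{-1/r}\|z\|_2\gg\|z\|_2/\sqrt n$. We therefore prove the statement by duality and restrict to \emph{flat} preimages.

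\textbf{Case $r>2$: flat preimages.} Fix $K\ge1$ and set
\[
 W_K:=\{w\in\R^p:\|w\|_2\le1,\ \|w\|_\infty\le K/\sqrt p\}.
\]
Every $w\in W_K$ satisfies $\|w\|_{r,p}\le K/\sqrt p$. It therefore suffices to show that $XW_K\supseteq c\sqrt{p/n}\,B_2^n$ for some deterministic $c>0$. Indeed, given $z$, this yields $u\in\frac{\sqrt n\|z\|_2}{c\sqrt p}W_K$ with $Xu=z$. Then $\|u\|_{r,p}\le \frac{K}{c\sqrt p}\frac{\sqrt n\|z\|_2}{\sqrt p}=\frac{K\delta_p}{c}\,\frac{\|z\|_2}{\sqrt n}$.

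By convexity, symmetry and Hahn--Banach, the inclusion $XW_K\supseteq c\sqrt{p/n}\,B_2^n$ is equivalent to
\[
 \inf_{v\in S^{n-1}}\ \sup_{w\in W_K}\langle \sqrt n X^\top v,w\rangle\ \ge\ c\sqrt p .
\]
Apply Gordon's comparison (first probability bound in Lemma~\ref{lem:lt-comparison}) to $A=\sqrt n X^\top$, with $T=S^{n-1}$, $U=W_K$ and $f=0$. Although $U$ is not contained in the sphere, the same covariance comparison goes through because $\sup_{w\in W_K}\|w\|_2=1$. This gives, with probability at least $1-e^{-t^2/2}$,
\[
 \inf_{v}\sup_{w\in W_K}\langle Av,w\rangle\ \ge\ w(W_K)-\E\|g_n\|_2-t\ \ge\ w(W_K)-\sqrt n-t .
\]

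\textbf{Main obstacle: the width estimate.} The crux is to show $w(W_K)\ge c_K\sqrt p$ with $c_K\to1$ as $K\to\infty$. To get this, test the supremum with $w=\mathrm{clip}(g;[-K,K])/\sqrt p$ after normalization. The law of large numbers gives
\[
 w(W_K)\ \ge\ \sqrt p\,\frac{\E[|G|\min(|G|,K)]}{\sqrt{\E\min(G^2,K^2)}}(1-o(1)),
\]
and this ratio tends to $1$ as $K\to\infty$. Since $\delta<1$, we can fix $K=K(\delta)$ with $c_K>\sqrt\delta$. Then choosing $t=\tfrac12(c_K-\sqrt{\delta})\sqrt p$ yields a margin of order $\sqrt p$, with failure probability at most $e^{-c'p}$. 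Borel--Cantelli upgrades this to "almost surely, for all sufficiently large $p$," uniformly in $z$, because the inclusion is a statement about sets.

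The delicate points are to check the applicability of Gordon's inequality with the non-spherical set $W_K$, and to compute the clipped-Gaussian width ratio quantitatively. The $r=1$ endpoint is already covered by the first case.
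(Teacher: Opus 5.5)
Your proof is correct. For $1\le r\le 2$ it is the paper's argument in primal form. The paper bounds the dual norm by $p^{1/r}\|X^\top v\|_{r^*}\ge\sqrt p\,\|X^\top v\|_2$ and then uses the smallest singular value of $\sqrt n\,X^\top$, which is exactly what your minimum-$\ell_2$ preimage encodes.

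For $r>2$ the routes genuinely differ. The paper keeps the $\ell_{r^*}$ dual norm and shows $\inf_{\|v\|_2=1}p^{-1/r^*}\|\sqrt n X^\top v\|_{r^*}\ge a/2$. It does so with an exponential-Markov small-ball bound $(Ca)^p$ for each fixed $v$ and a union bound over a net of $S^{n-1}$ of size $(C'/a)^n$. There, $\delta<1$ enters via $(C'/a)^n(Ca)^p\to0$ for small $a$. You replace the $\ell_r$ ball by the flat set $W_K=B_2^p\cap(K/\sqrt p)B_\infty^p$ and use Gordon, so $\delta<1$ enters via the width comparison $w(W_K)\approx\sqrt p>\sqrt n$.

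Your extension of Lemma~\ref{lem:lt-comparison} to $U=W_K$ is sound, for three reasons:
\begin{itemize}
\item with $\|v\|_2=1$, both processes have variance $\|u\|_2^2+1$;
\item the covariance gap $(1-\langle u,u'\rangle)(1-\langle v,v'\rangle)$ stays nonnegative for $\|u\|_2,\|u'\|_2\le1$;
\item the min--max remains $1$-Lipschitz in $A$.
\end{itemize}

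What your route buys is a stronger, $r$-free statement: a preimage with $\|u\|_\infty\lesssim_\delta\|z\|_2/\sqrt n$. This is the proportional-regime counterpart of the inner inclusion in Lemma~\ref{lem:lt-cube}, and it makes your case split unnecessary. The paper's route, in exchange, needs only elementary small-ball and net estimates and no modification of Gordon's lemma.

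One detail in your width bound should be fixed. The vector $\mathrm{clip}(g;[-K,K])$, normalized to unit $\ell_2$ norm, has $\ell_\infty$ norm about $K/\bigl(\sqrt p\,(\E\min\{G^2,K^2\})^{1/2}\bigr)>K/\sqrt p$, so it does not lie in $W_K$. Divide instead by $\max\{\sqrt p,\|\mathrm{clip}(g;[-K,K])\|_2\}$. This gives an admissible test vector and $w(W_K)\ge\sqrt p\,\E[|G|\min\{|G|,K\}](1-o(1))$, whose constant still tends to $1$ as $K\to\infty$.
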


\begin{proof}
Let \(r^*\) be the conjugate exponent of $r$ and \(N_p(u)=\|u\|_{r,p}\). Its dual norm is $
 N_p^*(a)=p^{1/r}\|a\|_{r^*}$. We claim that there is a deterministic \(c_{r,\delta}>0\) such that,
with probability at least \(1-e^{-c p}\),
\begin{equation}
 \inf_{\|u\|_2=1}N_p^*(X^\top u)
 \ge c_{r,\delta}\sqrt n.
 \label{eq:int-dual-lower-bound}
\end{equation}
Suppose first that \(r^*\ge2\). Note that
$p^{1/r}\|a\|_{r^*}\ge \sqrt p\,\|a\|_2$. Thus, 
Corollary 5.35 of \cite{vershynin2012nonasymptotic} gives
\[
 \mathbb P\bigl(
  s_{\min}(\sqrt{n} X^\top)\le\sqrt p-\sqrt n-t
 \bigr)\le 2e^{-t^2/2},
\]
where $s_{\min}(\cdot)$ denotes the smallest singular value. This proves \eqref{eq:int-dual-lower-bound}, as
\(n/p\to\delta<1\).

It remains to consider \(1<r^*<2\). For every fixed
\(u\in\mathbb S^{n-1}\) and \(0<a\le1\), taking the exponential and applying  Markov's
inequality gives
\begin{align*}
 \mathbb P\left(
  p^{-1/r^*}\|\sqrt{n}X^\top u\|_{r^*}\le a
 \right)
 &\le
 e^p\left(\mathbb E
  e^{-|G|^{r^*}/a^{r^*}}\right)^p
 \le (C a)^p,
\end{align*}
where $G\sim N(0,1)$ and $C>0$ is a deterministic constant
depending only on $r$.
%Indeed, the last expectation is at most a constant times \(a\), by the change of variables \(x=at\) and boundedness of the Gaussian density. 
Moreover,
\[
 \sup_{\|u\|_2=1}p^{-1/r^*}\|\sqrt{n}X^\top u\|_{r^*}
 \le p^{-1/2}\|\sqrt{n}X^\top\|_{\mathrm{op}},
\]
and hence this supremum is at most a deterministic constant \(K\)
except on an event of probability \(e^{-c p}\).  On the complementary event, take
an \(a/(2K)\)-net of \(\mathbb S^{n-1}\). Its cardinality is at most
\((1+4K/a)^n\) by Corollary 4.2.13 of \cite{vershynin2018highdimensional}, so the union bound yields
\[
 \mathbb P\left(
  \inf_{\|u\|_2=1}p^{-1/r^*}\|\sqrt{n}X^\top u\|_{r^*}<\frac a2
 \right)
 \le e^{-c p}+(1+4K/a)^n(C a)^p.
\]
Since \(n/p\to\delta<1\), the RHS is at most \(2e^{-c'p}\)
when \(a>0\) is chosen sufficiently small. Thus,
\[
 N_p^*(X^\top u)
 =\frac{p^{1/r}}{\sqrt n}\|\sqrt{n}X^\top u\|_{r^*}
 \ge c\frac p{\sqrt n}\|u\|_2
 \ge c_{r,\delta}\sqrt n\,\|u\|_2.
\]
%which proves \eqref{eq:int-dual-lower-bound} in the remaining case.
The failure probabilities are summable, so
\eqref{eq:int-dual-lower-bound} holds eventually almost surely.
Eventually \(X\) has full row rank.  Convex duality then gives, for
every \(z\in\mathbb R^n\),
\[
 \inf_{Xu=z}N_p(u)
 =\sup\left\{
  u^\top z:N_p^*(X^\top u)\le1
 \right\}
 \le \frac{\|z\|_2}{c_{r,\delta}\sqrt n},
\]
which is the desired result.
\end{proof}

\begin{lemma}[Exact penalty for interpolation]
\label{lem:int-exact-penalty}
Fix \(L>C_{r,\delta}\), where $C_{r, \delta}$ is the deterministic constant of Lemma \ref{lem:int-gaussian-quotient}. Almost surely, for all sufficiently large
\(p\) and every \(y\in\mathbb R^n\),
\begin{equation}
 \operatorname*{arg\,min}_{b\in\mathbb R^p}
 \left\{
  \|b\|_{r,p}+\frac L{\sqrt n}\|y-Xb\|_2
 \right\}
 =
 \operatorname*{arg\,min}_{Xb=y}\|b\|_{r,p}.
 \label{eq:int-exact-penalty}
\end{equation}
\end{lemma}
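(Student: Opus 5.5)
The plan is to work on the almost-sure event of Lemma~\ref{lem:int-gaussian-quotient}, which for all sufficiently large $p$ provides the quotient bound \eqref{eq:int-gaussian-quotient} simultaneously for every $z\in\R^n$. Fix $y$ and write $\Phi(b):=\|b\|_{r,p}+\frac L{\sqrt n}\|y-Xb\|_2$. There are two steps: first show that every minimizer of $\Phi$ interpolates, and then identify the minimizers of $\Phi$ with the minimum-norm interpolators.

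\textbf{Step 1: minimizers interpolate.} Take any $b$ whose residual $e:=y-Xb$ is nonzero. By Lemma~\ref{lem:int-gaussian-quotient}, there is a $u$ with $Xu=e$ and $\|u\|_{r,p}\le C_{r,\delta}\|e\|_2/\sqrt n$. The vector $b+u$ interpolates, and by the triangle inequality for $\|\cdot\|_{r,p}$,
\[
 \Phi(b+u)\le\|b\|_{r,p}+C_{r,\delta}\frac{\|e\|_2}{\sqrt n}
 <\|b\|_{r,p}+\frac{L}{\sqrt n}\|e\|_2=\Phi(b).
\]
The strict inequality uses $L>C_{r,\delta}$ together with $\|e\|_2>0$. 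Hence no noninterpolating vector can minimize $\Phi$.

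\textbf{Step 2: identify the minimizers.} On the affine set $\{Xb=y\}$, $\Phi$ coincides with $\|\cdot\|_{r,p}$. Minimizers of $\Phi$ exist because $\Phi$ is continuous and coercive, since $\|b\|_{r,p}\to\infty$. The interpolation set is nonempty because $X$ eventually has full row rank, and the norm attains its minimum on this closed set.
\begin{itemize}
\item Let $b^\star$ be any minimizer of $\Phi$. By Step 1 it interpolates. For any interpolator $b'$, $\|b^\star\|_{r,p}=\Phi(b^\star)\le\Phi(b')=\|b'\|_{r,p}$, so $b^\star$ is a minimum-norm interpolator.
\item Conversely, let $\bar b$ be a minimum-norm interpolator. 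For any $b$, Step 1 gives an interpolator $b'$ with $\Phi(b')\le\Phi(b)$. Therefore $\Phi(\bar b)=\|\bar b\|_{r,p}\le\|b'\|_{r,p}=\Phi(b')\le\Phi(b)$, so $\bar b$ minimizes $\Phi$.
\end{itemize}
This gives the set equality \eqref{eq:int-exact-penalty}.

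The argument is essentially immediate; the only delicate point is uniformity. The event of Lemma~\ref{lem:int-gaussian-quotient} must hold simultaneously for every $z$, and hence for every $y$. This is exactly what the lemma provides, since the event depends only on $X$ and not on $y$.
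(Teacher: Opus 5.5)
Your proposal is correct and is essentially the paper's argument: both apply the quotient bound of Lemma~\ref{lem:int-gaussian-quotient} to the residual $y-Xb$ and use the triangle inequality, uniformly in $y$ on the same almost-sure event. The only difference is packaging. The paper writes this as the single lower bound $\|b\|_{r,p}+\frac{L}{\sqrt n}\|y-Xb\|_2\ge V_p(y)+\frac{L-C_{r,\delta}}{\sqrt n}\|y-Xb\|_2$, with $V_p(y)$ the minimum interpolating norm, and reads off the equality cases; you split it into a strict-improvement step followed by a two-sided comparison, which is equivalent.
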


\begin{proof}
Let $V_p(y):=\inf_{X\widetilde b=y}\|\widetilde b\|_{r,p}$. Lemma~\ref{lem:int-gaussian-quotient} shows that the feasible set is
nonempty; the infimum is attained because it is a finite-dimensional
norm minimization over a closed affine set.
For arbitrary \(b\), applying Lemma~\ref{lem:int-gaussian-quotient} to
\(z=y-Xb\) gives that there exists \(u\) such that \(X(b+u)=y\) and
\[
 \|u\|_{r,p}
 \le
 C_{r,\delta}\frac{\|y-Xb\|_2}{\sqrt n}.
\]
Therefore
\[
 \|b\|_{r,p}+\frac L{\sqrt n}\|y-Xb\|_2
 \ge
 V_p(y)
 +\frac{L-C_{r,\delta}}{\sqrt n}\|y-Xb\|_2.
\]
Since \(L>C_{r,\delta}\), equality is possible precisely when
\(Xb=y\) and \(\|b\|_{r,p}=V_p(y)\).
\end{proof}

We next extend the results of Lemma \ref{lem:cgmt-transfer} by allowing the loss and regularizer to have independent coefficients.

\begin{lemma}[Extension of Lemma \ref{lem:cgmt-transfer}]
\label{lem:weighted-cgmt-transfer}
Let \(\rho_p,\lambda_p>0\) be deterministic sequences such that
$ \rho_p\to\rho\in(0,\infty)$ and
$\lambda_p\to\lambda\in(0,\infty)$.
Fix \(\kappa_0>0\) such that
\(\gamma_r(\kappa_0)<\sqrt\delta\), and suppose that, eventually,
\begin{equation}
 \lambda_p\ge\frac{\delta_p\rho_p}{\kappa_0}.
 \label{eq:weighted-coefficient-condition}
\end{equation}
Define
\begin{align}
 P_p^{\rho,\lambda}(w)
 &:=
 \rho_p\|\beta_*+w\|_{r,p}
 +\frac{\lambda_p}{\sqrt n}
 \left\|
  \varepsilon-\frac1{\sqrt n}G_Xw
 \right\|_2,
 \label{eq:weighted-primary-objective}\\
 A_p^{\rho,\lambda,\mathrm{raw}}(b)
 &:=
 \rho_p\|b\|_{r,p}
 +\lambda_p[C_p^{\mathrm{raw}}(b)]_+.
 \label{eq:weighted-raw-objective}
\end{align}
For every \(p\), let \(S_p\subseteq\mathbb R^p\) be deterministic
and closed. Suppose that, for some deterministic \(\ell<\infty\),
\begin{equation}
 \inf_bA_p^{\rho,\lambda,\mathrm{raw}}(b)
 \to\ell
 \qquad\text{almost surely},
 \label{eq:weighted-raw-minimum}
\end{equation}
and
\begin{equation}
 \liminf_{p\to\infty}
 \left(
  \inf_{b\in S_p}A_p^{\rho,\lambda,\mathrm{raw}}(b)
  -
  \inf_bA_p^{\rho,\lambda,\mathrm{raw}}(b)
 \right)>0
 \qquad\text{almost surely}.
 \label{eq:weighted-raw-gap}
\end{equation}
Then, almost surely, for all sufficiently large \(p\),
\begin{equation}\label{eq:exclusion-statement}
 \operatorname*{arg\,min}_{w\in\mathbb R^p}
 P_p^{\rho,\lambda}(w)
 \cap
 \{w:\beta_*+w\in S_p\}
 =\varnothing.
\end{equation}
Moreover,
\begin{equation}\label{eq:value-con}
 \inf_wP_p^{\rho,\lambda}(w)
 \to\ell
 \qquad\text{almost surely}.
\end{equation}
\end{lemma}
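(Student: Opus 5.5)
The plan is to repeat the proof of Lemma~\ref{lem:cgmt-transfer}, with the coefficients $\rho_p,\lambda_p$ replacing $1$ and $\kappa/\delta_p$. First I would rewrite $P_p^{\rho,\lambda}$ in min--max form: using $\|a\|_2=\max_{\|u\|_2\le1}u^\top a$,
\[
P_p^{\rho,\lambda}(w)=\max_{\|u\|_2\le1}\Bigl\{\frac{\lambda_p}{\sqrt n}u^\top\varepsilon-\frac{\lambda_p}{n}u^\top G_Xw+\rho_p\|\beta_*+w\|_{r,p}\Bigr\}.
\]
Conditionally on $\varepsilon$, the bilinear term is $\lambda_p$ times the one in \eqref{eq:primary-objective}. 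The same radial maximization over $u$ then shows that its CGMT auxiliary problem is exactly $A_p^{\rho,\lambda,\mathrm{raw}}$. The positive scalar $\lambda_p$ can be absorbed by rescaling $u$, or equivalently by noting that $\lambda_p[x]_+=[\lambda_p x]_+$.

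Next comes localization. The recession functional is
\[
P^{\rho,\lambda,\infty}_p(v)=\frac{\lambda_p}{n}\|G_Xv\|_2+\rho_p\|v\|_{r,p}.
\]
Since $\lambda_p>0$, the ratio $\rho_p/\lambda_p\le\kappa_0/\delta_p$ by \eqref{eq:weighted-coefficient-condition}. Dividing by $\lambda_p$ gives a functional of the form $\frac1n\|G_Xv\|_2+\frac{\kappa'}{\delta_p}\|v\|_{r,p}$ with $\kappa'=\delta_p\rho_p/\lambda_p\le\kappa_0$. I would then check that $\gamma_r$ is nonincreasing in $\kappa$, which is clear from \eqref{eq:gamma-pop}. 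It follows that $\gamma_{r,p}(\kappa')\le\gamma_{r,p}(\kappa_0)$, and therefore $\gamma_{r,p}(\kappa_0)\to\gamma_r(\kappa_0)<\sqrt\delta$ by Lemma~\ref{lem:empirical-dual-margin}.

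This is the main point to get right. In the earlier proof the margin came from Lemma~\ref{lem:margin} through the fixed-point equations; here it is assumed directly through $\kappa_0$. The monotonicity in $\kappa$ is what allows the varying ratio $\rho_p/\lambda_p$, which may even tend to a limit below $\kappa_0$, to be handled by a single margin. With this in hand, the chain in \eqref{eq:AO-recession-lower-bound} together with CGMT \eqref{eq:CGMT-lower-general}, Gaussian concentration and Borel--Cantelli gives a deterministic linear growth rate $c_0\lambda$. Hence all primary minimizers lie eventually in a deterministic convex set $\mathcal C^w_{p,M}$. The analogous bound for $A_p^{\rho,\lambda,\mathrm{raw}}$, using the reverse triangle inequality as in Lemma~\ref{lem:AO}(ii), places its near-minimizers in the same set. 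Both objectives are $O(1)$ at $w=0$, because $\|\varepsilon\|_2/\sqrt n\to\sigma$.

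Finally, I would run the separation argument verbatim. Define $V_p$ and $W_p$ as the localized infima of $A_p^{\rho,\lambda,\mathrm{raw}}$ over $\mathcal C^w_{p,M}$ and over its intersection with $S_p$, respectively. Assumption \eqref{eq:weighted-raw-gap} and Fatou's lemma give a gap in expectations. The Lipschitz constants of $V_p$ and $W_p$ are now $C_M\lambda_p/\sqrt n$, which is still $O(n^{-1/2})$. Applying \eqref{eq:CGMT-lower-general} to the constrained primary problem, and \eqref{eq:CGMT-upper-general} to the unconstrained one, gives summable failure probabilities; here $\psi$ is convex in $w$ and affine in $u$. This yields \eqref{eq:exclusion-statement}. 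For \eqref{eq:value-con}, uniform integrability follows from $0\le V_p\le A_p^{\rho,\lambda,\mathrm{raw}}(\beta_*)$, whose second moments are bounded. Combined with \eqref{eq:weighted-raw-minimum}, this gives $\E V_p\to\ell$, and two-sided CGMT together with concentration and Borel--Cantelli gives $\inf_w P_p^{\rho,\lambda}\to\ell$ almost surely.
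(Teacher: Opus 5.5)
Your architecture (min--max rewriting, a recession bound to localize both problems, then the separation and value-transfer arguments of Lemma~\ref{lem:cgmt-transfer}) matches the paper's. However, the step you single out as the main point has the inequality backwards. Since $\kappa\mapsto\gamma_{r,p}(\kappa)$ is nonincreasing and $\kappa'=\delta_p\rho_p/\lambda_p\le\kappa_0$, what actually follows is $\gamma_{r,p}(\kappa')\ge\gamma_{r,p}(\kappa_0)$. Dividing by $\lambda_p$ keeps the full loss weight and leaves you with the \emph{weaker} penalty ratio $\kappa'$. The hypothesis $\gamma_r(\kappa_0)<\sqrt\delta$ says nothing about that ratio. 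Taking $v=G$ in \eqref{eq:gamma-pop} gives $\gamma_r(\kappa)\ge1-\kappa\|G\|_r$, so for $\delta<1$ the margin fails once $\kappa'$ is small. The chain \eqref{eq:AO-recession-lower-bound} evaluated at $\kappa'$ then yields a negative, useless bound.

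This is not a corner case. The hypothesis \eqref{eq:weighted-coefficient-condition} only bounds $\lambda/\rho$ from below, so $\kappa'$ may be arbitrarily small. In the proof of Lemma~\ref{thm:min-norm-interpolator} the lemma is applied with $\rho_p=1$, $\lambda_p=L>\delta/\kappa_{\mathrm{int}}$ and $\kappa_0=\kappa_{\mathrm{int}}$, so that $\kappa'\to\delta/L<\kappa_0$. The same defect affects your localization of the minimizers of $A_p^{\rho,\lambda,\mathrm{raw}}$.

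The missing idea is to use \eqref{eq:weighted-coefficient-condition} to \emph{lower} the loss weight instead of normalizing by it, which raises the effective penalty ratio to $\kappa_0$. Set $\nu_p:=\delta_p\rho_p/\kappa_0\le\lambda_p$. Nonnegativity gives $\lambda_p[x]_+\ge\nu_p[x]_+\ge\nu_p x$, so the auxiliary recession value satisfies
\[
\inf_{\|v\|_{2,p}=1}\Bigl\{\rho_p\|v\|_{r,p}+\lambda_p\Bigl[\frac{\|g\|_2}{\sqrt{n\delta_p}}-\frac{\langle h,v\rangle_p}{\delta_p}\Bigr]_+\Bigr\}\ge\frac{\rho_p}{\kappa_0}\Bigl(\frac{\|g\|_2}{\sqrt n}\sqrt{\delta_p}-\gamma_{r,p}(\kappa_0)\Bigr)\longrightarrow\frac{\rho}{\kappa_0}\bigl(\sqrt\delta-\gamma_r(\kappa_0)\bigr)>0.
\]
Likewise, with $a_n=\|g\|_2/\sqrt n$, one gets $A_p^{\rho,\lambda,\mathrm{raw}}(\beta_*+w)\ge\frac{\rho_p}{\kappa_0}\{a_n\sqrt{\delta_p}\|w\|_{2,p}-\langle h,w\rangle_p+\kappa_0\|w\|_{r,p}\}-C$. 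This is exactly how the paper argues.

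In your normalization the repair reads $\frac1n\|G_Xv\|_2+\frac{\kappa'}{\delta_p}\|v\|_{r,p}\ge\frac{\kappa'}{\kappa_0}\bigl(\frac1n\|G_Xv\|_2+\frac{\kappa_0}{\delta_p}\|v\|_{r,p}\bigr)$, valid because $\kappa'\le\kappa_0$. Multiplying back by $\lambda_p$ gives the same prefactor $\delta_p\rho_p/\kappa_0$. With this correction, the rest of your argument goes through as in the paper: the gap for $V_p,W_p$ via Fatou, the two one-sided CGMT bounds, concentration, Borel--Cantelli, and uniform integrability for the value.
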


\begin{proof}
By the dual representation of the Euclidean norm,
\begin{align*}
 P_p^{\rho,\lambda}(w)
 =
 \max_{\|u\|_2\le1}
 \left\{
  \frac{\lambda_p}{\sqrt n}u^\top\varepsilon
  -\frac{\lambda_p}{n}u^\top G_Xw
  +\rho_p\|\beta_*+w\|_{r,p}
 \right\}.
\end{align*}
The associated CGMT auxiliary problem is obtained exactly as in the
proof of Lemma~\ref{lem:cgmt-transfer}. Maximizing first over the
direction and then over the norm of \(u\) shows that its scalar
objective is
\[
 A_p^{\rho,\lambda,\mathrm{raw}}(\beta_*+w)
 =
 \rho_p\|\beta_*+w\|_{r,p}
 +\lambda_p[C_p^{\mathrm{raw}}(\beta_*+w)]_+.
\]
We first verify the localization needed to apply the CGMT on
deterministic compact sets. Define
\[
 R_p^{\mathrm{AO},\rho,\lambda}
 :=
 \inf_{\|v\|_{2,p}=1}
 \left\{
  \rho_p\|v\|_{r,p}
  +\lambda_p
  \left[
   \frac{\|g\|_2}{\sqrt n\sqrt{\delta_p}}
   -\frac1{\delta_p}\langle h,v\rangle_p
  \right]_+
 \right\}.
\]
Set $
 \nu_p:=\frac{\delta_p\rho_p}{\kappa_0}$.
Since \(\lambda_p\ge\nu_p\) eventually and
\(\lambda_p[x]_+\ge\nu_px\) for every \(x\in\mathbb R\), we obtain
\begin{align*}
 R_p^{\mathrm{AO},\rho,\lambda}
 &\ge
 \frac{\rho_p}{\kappa_0}
 \inf_{\|v\|_{2,p}=1}
 \left\{
  \frac{\|g\|_2}{\sqrt n}\sqrt{\delta_p}
  -\langle h,v\rangle_p
  +\kappa_0\|v\|_{r,p}
 \right\}\ge
 \frac{\rho_p}{\kappa_0}
 \left(
  \frac{\|g\|_2}{\sqrt n}\sqrt{\delta_p}
  -\gamma_{r,p}(\kappa_0)
 \right).
\end{align*}
By Lemma~\ref{lem:empirical-dual-margin}, the last expression
converges almost surely and in \(L_1\) to
$
 c_*:=
 \frac{\rho}{\kappa_0}
 \{\sqrt\delta-\gamma_r(\kappa_0)\}>0$.
Moreover, since \(\lambda_p\) is bounded,
\(R_p^{\mathrm{AO},\rho,\lambda}\) is a
\(C/\sqrt p\)-Lipschitz function of \((g,h)\). Gaussian
concentration and an application of \cite[Theorem~II.1\emph{(i)}]{thrampoulidis2015gaussian} as in \eqref{eq:primary-recession-bound}--\eqref{eq:AO-recession-tail}
therefore gives that, almost surely
eventually,
\begin{equation}
 (P_p^{\rho,\lambda})^\infty(v)
 \ge c\|v\|_{2,p},
 \label{eq:weighted-primary-recession-bound}
\end{equation}
for $v\in\mathbb R^p$ and a deterministic $c>0$.
The triangle inequality then gives
\[
 P_p^{\rho,\lambda}(w)
 \ge
 c\|w\|_{2,p}
 -\rho_p\|\beta_*\|_{r,p}
 -\frac{\lambda_p}{\sqrt n}\|\varepsilon\|_2.
\]
Since the last two terms are eventually bounded and
\(P_p^{\rho,\lambda}(0)=O(1)\), all primary minimizers have bounded
\(\|w\|_{2,p}\). Furthermore,
\[
 P_p^{\rho,\lambda}(w)
 \ge\rho_p\|\beta_*+w\|_{r,p},
\]
and \(\rho_p\) is bounded away from zero, so their
\(\|\beta_*+w\|_{r,p}\) norms are bounded as well.

The minimizers of $A_p^{\rho,\lambda,\mathrm{raw}}$ satisfy the same localization. Indeed,
with \(a_n:=\|g\|_2/\sqrt n\), the reverse triangle inequality gives,
almost surely eventually,
\begin{align*}
 A_p^{\rho,\lambda,\mathrm{raw}}(\beta_*+w)
 \ge
 \frac{\rho_p}{\kappa_0}
 \bigl\{
  a_n\sqrt{\delta_p}\|w\|_{2,p}
  -\langle h,w\rangle_p
  +\kappa_0\|w\|_{r,p}
 \bigr\}
 -C,
\end{align*}
for a deterministic \(C<\infty\). By homogeneity and the definition
of \(\gamma_{r,p}(\kappa_0)\), the expression in braces is at least
$
 \{a_n\sqrt{\delta_p}-\gamma_{r,p}(\kappa_0)\}
 \|w\|_{2,p}$. As $a_n\sqrt{\delta_p}-\gamma_{r,p}(\kappa_0)$ converges to
\(\sqrt\delta-\gamma_r(\kappa_0)>0\), we conclude that
\[
 A_p^{\rho,\lambda,\mathrm{raw}}(\beta_*+w)
 \ge c\|w\|_{2,p}-C
\]
eventually. Since also
$
 A_p^{\rho,\lambda,\mathrm{raw}}(b)
 \ge\rho_p\|b\|_{r,p}$,
the minimizers of $A_p^{\rho,\lambda,\mathrm{raw}}$ are localized as claimed.

Consequently, there is a deterministic \(M<\infty\) such that all minimizers of $P_p^{\rho,\lambda}$ and  $A_p^{\rho,\lambda,\mathrm{raw}}$ eventually belong to
\[
 \mathcal C_{p,M}^w
 :=
 \{w:\|w\|_{2,p}\le M,\
       \|\beta_*+w\|_{r,p}\le M\}.
\]
%Enlarge \(M\), if necessary, so that \(0\in\mathcal C_{p,M}^w\), and
%The sets over which these infima are taken are deterministic and
%compact.
By localization and \eqref{eq:weighted-raw-gap},
\[
 \liminf_{p\to\infty}(\inf_{\substack{w\in\mathcal C_{p,M}^w\\
                  \beta_*+w\in S_p}}
 A_p^{\rho,\lambda,\mathrm{raw}}(\beta_*+w)- \inf_{w\in\mathcal C_{p,M}^w}
 A_p^{\rho,\lambda,\mathrm{raw}}(\beta_*+w))>0
 \qquad\text{almost surely}.
\]
The argument in the proof of Lemma~\ref{lem:cgmt-transfer} starting from
\eqref{eq:localized-random-gap} %through
%\eqref{eq:AO-separation-tails} 
and involving the application of \cite[Theorem~II.1\emph{(ii)}]{thrampoulidis2015gaussian}) now applies without change, giving \eqref{eq:exclusion-statement}. %Indeed,
%the weights \(\rho_p,\lambda_p\) are bounded, so \(V_p\) and \(W_p\)
%remain \(C_M/\sqrt n\)-Lipschitz functions of the Gaussian data; the
%convexity assumptions needed for the upper CGMT comparison are
%unchanged; and the lower comparison does not require \(S_p\) to be
%convex. We therefore obtain
%\[
% \Pp(\Phi_p^S\le\Phi_p)\le Ce^{-cn}.
%\]
%Borel--Cantelli and primary localization prove the asserted
%exclusion of minimizers.

It remains to transfer the minimum value. By localization and
\eqref{eq:weighted-raw-minimum},
\[
 V_p\to\ell
 \qquad\text{almost surely}.
\]
Moreover,
\[
 0\le V_p
 \le
 \rho_p\|\beta_*\|_{r,p}
 +\frac{\lambda_p}{\sqrt n}\|\varepsilon\|_2,
\]
so \((V_p)_p\) is uniformly integrable. Thus, the argument at the end
of the proof of Lemma~\ref{lem:cgmt-transfer} therefore
gives
\[
 \Phi_p\to\ell
 \qquad\text{almost surely}.
\]
Since all minimizers of $P_p^{\rho,\lambda}$ are eventually contained in
\(\mathcal C_{p,M}^w\), \(\Phi_p=\inf_wP_p^{\rho,\lambda}(w)\)
eventually, which proves \eqref{eq:value-con}.
\end{proof}

At this point, we are ready to give the proof of Lemma \ref{thm:min-norm-interpolator}.

\begin{proof}[Proof of Lemma~\ref{thm:min-norm-interpolator}]
Let us define
\[
 \kappa_{\mathrm{int}}
 :=
 \begin{cases}
  \alpha_{\mathrm{int}},
     & r=1,\\[1mm]
  \alpha_{\mathrm{int}}
  U_r(\tau_{\mathrm{int}},\alpha_{\mathrm{int}})^{(r-1)/r},
     & 1<r<\infty,
 \end{cases}
\]
where $U_r$ is defined in \eqref{eq:Hdef}. We first verify that Lemma~\ref{lem:AO} applies at the
interpolation parameters. Equation~\eqref{eq:int-state-mse}
and the definition of $\kappa_{\mathrm{int}}>0$ above imply
\eqref{eq:se1} with
$(\tau_\kappa,\alpha_\kappa,\kappa,f_\kappa)
=(\tau_{\mathrm{int}},\alpha_{\mathrm{int}},
\kappa_{\mathrm{int}},f_{\mathrm{int}})$.
Moreover, the second equation in \eqref{eq:int-state-mse} gives
$D_r(\tau_{\mathrm{int}},\alpha_{\mathrm{int}})=\delta$,
which satisfies the remaining hypothesis of the lemma
and implies $R_\kappa=0$.
Moreover, \(\kappa_{\mathrm{int}}>0\). %: this is immediate when \(r=1\);
%when \(r>1\), the proximal output is nonzero with positive probability, and hence
%\(U_r(\tau_{\mathrm{int}},\alpha_{\mathrm{int}})>0\).

%We first establish the boundary transfer that will be used below.
Let \(L_p>0\) be any deterministic sequence such that
\begin{equation}
 L_p\to L\in(0,\infty),
 \qquad
 L_p\ge\frac{\delta_p}{\kappa_{\mathrm{int}}}
 \quad\text{eventually}.
 \label{eq:int-boundary-weight-condition}
\end{equation}
Let $c_p:=\frac{\delta_p}{\kappa_{\mathrm{int}}}$,
and define
$$I_{p,L_p}^{\mathrm{raw}}(b)
 :=
 \|b\|_{r,p}+L_p[C_p^{\mathrm{raw}}(b)]_+.$$
At interpolation, \(R_\kappa=0\). Therefore,
Lemma~\ref{lem:AO}\emph{(iii)}--\emph{(iv)} gives
\begin{equation}
 a_p^*
 :=
 \inf_bA_p^{\mathrm{raw}}(b)
 \to
 \frac{\kappa_{\mathrm{int}}}{\delta}
 \|f_{\mathrm{int}}\|_r
 \qquad\text{almost surely},
 \label{eq:int-boundary-base-AO-value}
\end{equation}
and, for every \(\eta>0\),
\begin{equation}
 \liminf_{p\to\infty}
 \left(
  \inf_{b\in\mathcal B_{p,\eta}}A_p^{\mathrm{raw}}(b)
  -a_p^*
 \right)>0
 \qquad\text{almost surely}.
 \label{eq:int-boundary-base-AO-gap}
\end{equation}
Here and below, \(d_p\) and \(\mathcal B_{p,\eta}\) (defined respectively in \eqref{eq:dedb} and \eqref{eq:defBp}) are evaluated at interpolation, i.e., under the substitution
$ (\tau_\kappa,\alpha_\kappa,\kappa,f_\kappa)
 =
 (\tau_{\mathrm{int}},\alpha_{\mathrm{int}},
  \kappa_{\mathrm{int}},f_{\mathrm{int}})$.

Since \(L_p\ge c_p\), we have
\begin{equation}
 I_{p,L_p}^{\mathrm{raw}}(b)
 \ge c_pA_p^{\mathrm{raw}}(b),
 \label{eq:int-boundary-AO-domination}
\end{equation}
for $b\in\mathbb R^p$. Let $
 b_{p,j}^{\circ}
 :=
 \tau_{\mathrm{int}}
 \eta_{\alpha_{\mathrm{int}},r}
 \left(
  \frac{(\beta_*)_j}{\tau_{\mathrm{int}}}+h_j
 \right)$.
Lemma~\ref{lem:empirical}, \eqref{eq:int-state-mse} and
Gaussian integration by parts give
\[
 C_p(b_p^\circ)\longrightarrow0,
 \qquad
 \|b_p^\circ-\beta_*\|_{2,p}=O(1),
 \qquad
 \|b_p^\circ\|_{r,p}
 \to\|f_{\mathrm{int}}\|_r.
\]
Equation~\eqref{eq:normreplace} therefore gives
\[
 [C_p^{\mathrm{raw}}(b_p^\circ)]_+
 \longrightarrow0.
\]
Since \(L_p\) is bounded,
\[
 I_{p,L_p}^{\mathrm{raw}}(b_p^\circ)
 \to\|f_{\mathrm{int}}\|_r.
\]
On the other hand,
\eqref{eq:int-boundary-AO-domination} and
\eqref{eq:int-boundary-base-AO-value} give
\[
 \liminf_{p\to\infty}
 \inf_b I_{p,L_p}^{\mathrm{raw}}(b)
 \ge
 \lim_{p\to\infty}c_pa_p^*
 =
 \|f_{\mathrm{int}}\|_r.
\]
Consequently,
\begin{equation}
 i_p^*
 :=
 \inf_b I_{p,L_p}^{\mathrm{raw}}(b)
 \to\|f_{\mathrm{int}}\|_r
 \qquad\text{almost surely},
 \label{eq:int-boundary-weighted-AO-value}
\end{equation}
and
\[
\begin{aligned}
 &\inf_{b\in\mathcal B_{p,\eta}}
 I_{p,L_p}^{\mathrm{raw}}(b)-i_p^*\ge
 c_p\left(
  \inf_{b\in\mathcal B_{p,\eta}}A_p^{\mathrm{raw}}(b)-a_p^*
 \right)
 +c_pa_p^*-i_p^*.
\end{aligned}
\]
The last two terms differ by \(o(1)\) by
\eqref{eq:int-boundary-base-AO-value} and
\eqref{eq:int-boundary-weighted-AO-value}. Since
\(c_p\to\delta/\kappa_{\mathrm{int}}>0\),
\eqref{eq:int-boundary-base-AO-gap} gives
\begin{equation}
 \liminf_{p\to\infty}
 \left(
  \inf_{b\in\mathcal B_{p,\eta}}
  I_{p,L_p}^{\mathrm{raw}}(b)
  -i_p^*
 \right)>0
 \qquad\text{almost surely}.
 \label{eq:int-boundary-weighted-AO-gap}
\end{equation}
%An application of Lemma
%The sets \(\mathcal B_{p,\eta}\) are deterministic and closed. Indeed, the two moment maps in \(d_p\) are continuous, and
%\[
% W_2(\mu_{p,b},\mu_{p,b'})
% \le\|b-b'\|_{2,p}.
%\]
%We may therefore apply
%Lemma~\ref{lem:weighted-cgmt-transfer} %with
%\[
% \rho_p=1,\qquad
% \lambda_p=L_p,\qquad
% %\kappa_0=\kappa_{\mathrm{int}},\qquad
% S_p=\mathcal B_{p,\eta}.
%\]
%Its coefficient condition is precisely
%\eqref{eq:int-boundary-weight-condition}, while its strict-margin
%condition follows from Lemma~\ref{lem:margin}. Equations
%\eqref{eq:int-boundary-weighted-AO-value} and
%\eqref{eq:int-boundary-weighted-AO-gap} give the remaining
%hypotheses.
%
%Thus, for
Letting
\begin{equation}\label{eq:int-w-prim}
 P_{p,L_p}(w)
 :=
 \|\beta_*+w\|_{r,p}
 +\frac{L_p}{\sqrt n}
 \left\|
  \varepsilon-\frac1{\sqrt n}G_Xw
 \right\|_2,
\end{equation}
an application of Lemma~\ref{lem:weighted-cgmt-transfer}
%the value conclusion of Lemma~\ref{lem:weighted-cgmt-transfer} 
gives
\begin{equation}
 \inf_wP_{p,L_p}(w)
 \to\|f_{\mathrm{int}}\|_r
 \qquad\text{almost surely}.
 \label{eq:int-boundary-transfer-value}
\end{equation}
and %Its exclusion conclusion, applied to \(\eta=1/k\), \(k\in\mathbb N\), gives
\begin{equation}
 \sup_{\widehat w\in\argmin_wP_{p,L_p}(w)}
 d_p(\beta_*+\widehat w)
 \longrightarrow0
 \qquad\text{almost surely}.
 \label{eq:int-boundary-transfer-stability}
\end{equation}
To conclude, choose
$
 L>
 \max\left\{
  \frac{\delta}{\kappa_{\mathrm{int}}},
  C_{r,\delta}
 \right\}$.
Since \(\delta_p/\kappa_{\mathrm{int}}
\to\delta/\kappa_{\mathrm{int}}\), the constant sequence \(L_p=L\)
satisfies \eqref{eq:int-boundary-weight-condition}. Hence
\eqref{eq:int-boundary-transfer-stability} applies to the minimizers
of
\[
 b\longmapsto
 \|b\|_{r,p}+\frac L{\sqrt n}\|y-Xb\|_2.
\]
By Lemma~\ref{lem:int-exact-penalty}, these minimizers are exactly the
minimum-\(\ell_r\) interpolators. Therefore, uniformly over all such
interpolators,
\[
 d_p(\widehat\beta_r)
 \longrightarrow0
 \qquad\text{almost surely}.
\]
In particular, the second convergence in \eqref{eq:int-law}
of Remark~\ref{rem:joint-empirical-laws} holds, and
\[
\begin{aligned}
 \|\widehat\beta_r-\beta_*\|_{2,p}^2
 &\to
 \tau_{\mathrm{int}}^2
 H_r(\tau_{\mathrm{int}},\alpha_{\mathrm{int}}), %,\qquad \|\widehat\beta_r\|_{r,p}^r &\to \tau_{\mathrm{int}}^r U_r(\tau_{\mathrm{int}},\alpha_{\mathrm{int}}).
\end{aligned}
\]
Finally, \eqref{eq:int-state-mse} gives
\[
 \tau_{\mathrm{int}}^2
 H_r(\tau_{\mathrm{int}},\alpha_{\mathrm{int}})
 =
 \delta(\tau_{\mathrm{int}}^2-\sigma^2),
\]
which proves \eqref{eq:int-mse}.
\end{proof}
\section{Proofs for Section \ref{sec:mse-slope}}\label{app:pfslope}

\subsection{Auxiliary results}\label{app:aux}

\begin{lemma}[Monotonicity of the residual]
\label{lem:residual-monotonicity}
Let $1\le r<\infty$. For each finite $(n,p)$ and $\kappa\ge0$,
let $\widehat\beta_{\kappa,r}$ be any minimizer
in \eqref{eq:minprob}. If $0\leq\kappa_1<\kappa_2$, then, for any choices of minimizers,
\[
 \|y-X\widehat\beta_{\kappa_1,r}\|_2
 \leq
 \|y-X\widehat\beta_{\kappa_2,r}\|_2.
\]
%and
%\[
% J_{r,p}(\widehat\beta_{\kappa_1,r})
% \geq
% J_{r,p}(\widehat\beta_{\kappa_2,r}).
%\]
%Consequently, whenever the asymptotic residual
%\[
% \mathcal R_r(\kappa)
% :=
% \lim_{n,p\to\infty}
% \frac1{\sqrt n}
% \|y-X\widehat\beta_{\kappa,r}\|_2
%\]
%exists, the map $\kappa\mapsto\mathcal R_r(\kappa)$ is nondecreasing.
\end{lemma}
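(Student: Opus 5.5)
This is the standard exchange argument for a penalized objective, applied to the two optimality inequalities at $\kappa_1$ and $\kappa_2$. Write $\lambda_i:=\kappa_i\lambda(n,p)/\sqrt n$, so that $0\le\lambda_1<\lambda_2$. Abbreviate $b_i:=\widehat\beta_{\kappa_i,r}$, $\mathcal R_i:=\|y-Xb_i\|_2$ and $N_i:=\|b_i\|_r$. Since $b_1$ minimizes the objective with coefficient $\lambda_1$ and $b_2$ minimizes the objective with coefficient $\lambda_2$, comparing each minimizer with the other gives
\[
 \mathcal R_1+\lambda_1N_1\le\mathcal R_2+\lambda_1N_2,
 \qquad
 \mathcal R_2+\lambda_2N_2\le\mathcal R_1+\lambda_2N_1.
\]

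Adding the two inequalities yields $(\lambda_2-\lambda_1)(N_2-N_1)\le0$. Since $\lambda_2>\lambda_1$, this gives $N_2\le N_1$: the norm of the minimizer is nonincreasing along the path. Substituting this into the first inequality gives $\mathcal R_1\le\mathcal R_2+\lambda_1(N_2-N_1)\le\mathcal R_2$, which is the claim.

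The argument uses neither convexity nor uniqueness of the minimizers, only their optimality. The inequality therefore holds for any choice of minimizers at each $\kappa$. The case $\kappa_1=0$ is covered as well: there $\lambda_1=0$ and the first inequality reads $\mathcal R_1\le\mathcal R_2$ directly. Existence of minimizers follows from coercivity of the objective when $\kappa>0$, and is implicit in the statement when $\kappa=0$.

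There is no real obstacle. The only point needing care is that the norm comparison must come from summing both inequalities, before it is fed back into the first; starting from either inequality alone does not close the argument. The asymptotic monotonicity of $\overline R_r$ used in \eqref{eq:kappastar} then follows by dividing by $\sqrt n$ and passing to the limit.
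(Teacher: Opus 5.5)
Your proof is correct and is essentially the paper's argument. The paper writes the same two optimality inequalities and combines them into $\kappa_2(P_2-P_1)\le L_1-L_2\le\kappa_1(P_2-P_1)$ to get $P_2\le P_1$, then feeds this back into the first inequality exactly as you do.
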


\begin{proof}
For $i=1,2$, write
\[
 L_i:=\|y-X\widehat\beta_{\kappa_i,r}\|_2,
 \qquad
 P_i:=\frac{p}{\sqrt n}
 \|\widehat\beta_{\kappa_i,r}\|_{r, p}.
\]
Optimality at $\kappa_1$ and $\kappa_2$ gives
\[
 L_1+\kappa_1P_1\leq L_2+\kappa_1P_2,
 \qquad
 L_2+\kappa_2P_2\leq L_1+\kappa_2P_1.
\]
Equivalently,
\[
 \kappa_2(P_2-P_1)
 \le L_1-L_2
 \le \kappa_1(P_2-P_1).
\]
Since \(\kappa_2>\kappa_1\), these inequalities imply
\(P_2\le P_1\), and hence
\[
 L_1-L_2\le\kappa_1(P_2-P_1)\le0.
\]
Thus \(L_1\le L_2\). Dividing by \(\sqrt n\) and passing to the limit
proves the final assertion.
\end{proof}

\begin{lemma}[Limit towards interpolation]
\label{prop:branch-attachment}
Suppose that $0<\delta<1$, $1<r<2$, $\sigma\geq0$, and
$\E B^2<\infty$.  If $\sigma=0$, assume that the fixed point equations in 
\eqref{eq:int-state-mse} admit a
solution in $(0,\infty)^2$.  Then, these equations admit a unique
solution $(\tau_{\mathrm{int}},\alpha_{\mathrm{int}})$, and
\[
 \kappa_r^\star=\kappa_{\mathrm{int}},
 \qquad
 (\tau_\kappa,\alpha_\kappa)
 \to
 (\tau_{\mathrm{int}},\alpha_{\mathrm{int}})
 \quad\text{as }\kappa\downarrow\kappa_r^\star
\]
along any non-interpolating set of solutions.  Moreover,
$R_\kappa\to0$ as $\kappa\downarrow\kappa_r^\star$.
\end{lemma}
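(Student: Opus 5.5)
To prove Lemma~\ref{prop:branch-attachment}, I would first reduce the interpolation equations \eqref{eq:int-state-mse} to a single scalar equation. For $1<r<2$, write $D_r(\tau,\alpha)=\E[1/(1+\alpha(r-1)|\eta_{\alpha,r}(Z)|^{r-2})]$. For fixed $\tau$, this quantity is continuous and strictly decreasing in $\alpha$. It tends to $1$ as $\alpha\downarrow0$ and to $0$ as $\alpha\to\infty$. The limit at infinity uses that $|\eta_{\alpha,r}|^{r-2}\to\infty$ when $\eta\to0$; the strictness comes from $r<2$ and Lemma~\ref{lem:prox}. Since $\delta\in(0,1)$, the equation $D_r=\delta$ therefore defines a unique continuous curve $\alpha=a(\tau)$.

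Substituting this curve, I would study $K(\tau):=\tau^2\bigl(1-H_r(\tau,a(\tau))/\delta\bigr)$, so that the first equation becomes $K(\tau)=\sigma^2$. The key claim is that $\tau\mapsto K(\tau)$ is strictly increasing wherever $H_r<\delta$. I would try to derive this the way \eqref{eq:Phiprime} is used in the proof of Lemma~\ref{prop:population}. First, interpret $\tau^2 H_r$ along the curve as the optimal value of the population problem \eqref{eq:popQ} with the interpolation constraint. Then show that $\tau^2H_r(\tau,a(\tau))$ grows at most like $\tau^2$ times a nonincreasing factor. An envelope/Gaussian integration-by-parts computation should give $\frac{d}{d\tau}H_r\le0$ along the curve, analogous to \eqref{eq:Hprime}. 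Combined with the endpoint behaviour ($K\le0$ for small $\tau$, and $K\sim\tau^2(1-H_r(\infty)/\delta)>0$ for large $\tau$, since $B/\tau\to0$ and Lemma~\ref{lem:null-prior-gap} gives $H<D=\delta$ at zero prior), this yields existence and uniqueness when $\sigma>0$. When $\sigma=0$, existence is assumed, and uniqueness follows from the same strict monotonicity of $H_r$ in $\tau$, which requires $\Pp(B\neq0)>0$ (itself forced by Lemma~\ref{lem:null-prior-gap}).

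Next, I would show $\kappa_r^\star=\kappa_{\mathrm{int}}$ and the convergence of $(\tau_\kappa,\alpha_\kappa)$. On the non-interpolating branch, $R_\kappa>0$ means $D_r(\tau_\kappa,\alpha_\kappa)<\delta$. Lemma~\ref{lem:residual-monotonicity} gives monotonicity of $\overline R_r$. For a sequence $\kappa\downarrow\kappa_r^\star$, I would bound $(\tau_\kappa,\alpha_\kappa)$ in a compact subset of $(0,\infty)^2$. The upper bound on $\tau$ comes from $\tau^2(1-H/\delta)=\sigma^2$ together with $H\le$ a bound from Lemma~\ref{lem:margin}. The lower bound on $\alpha$ comes from $D_r<\delta<1$, and the upper bound on $\alpha$ from $\kappa=\alpha U_r^{(r-1)/r}$ with $U_r$ bounded below. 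Extracting a convergent subsequence, continuity of $H_r,D_r,U_r$ (dominated convergence via Lemma~\ref{lem:prox}) shows the limit solves \eqref{eq:se1} at $\kappa_r^\star$ with $D_r\le\delta$.

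The remaining task is to exclude $D_r<\delta$ at the limit. If $D_r<\delta$ held, then $R_{\kappa_r^\star}>0$, and Lemma~\ref{thm:main-nonint} together with continuity would give positive residual for $\kappa$ slightly below $\kappa_r^\star$, contradicting the definition \eqref{eq:kappastar}. Hence the limit satisfies $D_r=\delta$, and by uniqueness it equals $(\tau_{\mathrm{int}},\alpha_{\mathrm{int}})$. This identifies $\kappa_r^\star=\kappa_{\mathrm{int}}$ via the second equation, and $R_\kappa=\tau_\kappa(1-D_r/\delta)\to0$ by continuity.

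The main obstacle is the monotonicity of $K$ along the constraint curve, which drives uniqueness, and the matching lower-side argument for $\kappa<\kappa_r^\star$. For the latter I would try the Lemma~\ref{thm:main-nonint} machinery combined with the penalty-domination argument of Lemma~\ref{lem:int-exact-penalty}.
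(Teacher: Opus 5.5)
Your existence argument for $\sigma>0$ (solve $D_r=\delta$ for $\alpha=a(\tau)$, then apply the intermediate-value theorem, using Lemma~\ref{lem:null-prior-gap} as $\tau\to\infty$) is the paper's. The uniqueness step, however, is not proved. Since $a(\tau)$, and hence $K(\tau)=\tau^2\bigl(1-H_r(\tau,a(\tau))/\delta\bigr)$, does not depend on $\sigma$, your ``key claim'' that $K$ is strictly increasing where $H_r<\delta$ amounts to uniqueness for every $\sigma>0$. The reduction therefore makes no progress.

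The mechanism you propose, $\frac{d}{d\tau}H_r(\tau,a(\tau))\le0$, is a stronger statement, and the analogy with \eqref{eq:Hprime} does not deliver it. That identity is for $r=1$ at a \emph{fixed} threshold; for $1<r<2$ even the fixed-$\alpha$ derivative \eqref{eq:slope-H-tau-direct} has no evident sign. Along the constraint curve you also pick up $\partial_\alpha H_r\,a'(\tau)$, and $\partial_\alpha H_r$ has no fixed sign. At interpolation it has the sign of $S_r$ in \eqref{eq:mse-slope-formula}, which is positive in Theorem~\ref{thm:sparse-decreasing-slope} and negative in Proposition~\ref{prop:gaussian-negative-slope}. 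Your $\sigma=0$ case rests on the same unproved monotonicity.

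The missing idea is the convex lifting the paper uses. For any interpolation state, $f=\tau\eta_{\alpha,r}(B/\tau+G)$ satisfies $\mathcal C(f)=0$. By Lemma~\ref{prop:population} (applicable since $D_r=\delta$), it is the unique minimizer of $\mathcal C(h)+\frac{\kappa}{\delta}\|h\|_r$ with $\kappa=\alpha U_r^{(r-1)/r}$. Hence $f$ is the unique minimizer of $\|h\|_r$ over the convex set $\{h:\mathcal C(h)\le0\}$. That set does not depend on the state, so all states give the same $f$ and therefore the same $(\tau,\alpha)$.

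The convergence step has similar gaps. Your compactness bounds fail as stated:
\begin{itemize}
\item Lemma~\ref{lem:margin} bounds $H_r$ from below, not away from $\delta$, so it gives no upper bound on $\tau$; for $\sigma=0$ the first equation is just $H_r=\delta$.
\item Nothing keeps $\tau_\kappa$ away from $0$ when $\sigma=0$; compare Theorem~\ref{thm:r1-critical-infinite-slope}, where $\tau_\kappa\to0$.
\item $U_r\to0$ as $\alpha\to\infty$, so $\kappa=\alpha U_r^{(r-1)/r}$ does not bound $\alpha$.
\end{itemize}
More importantly, you assume that solutions of \eqref{eq:se1} with $R_\kappa>0$ exist for $\kappa$ slightly above $\kappa_r^\star$. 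Your exclusion of $D_r<\delta$ at the limit also needs such a solution at some $\kappa<\kappa_r^\star$, because a positive residual at $\kappa_r^\star$ itself is consistent with \eqref{eq:kappastar}. Continuity of $H_r,D_r,U_r$ does not produce solutions at nearby $\kappa$.

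The paper obtains existence, compactness and convergence together from $\mathcal A_\kappa=[\mathcal C]_++\frac{\kappa}{\delta}\|\cdot\|_r$:
\begin{itemize}
\item $\mathcal A_\kappa$ is coercive since $\gamma_r(\kappa_{\mathrm{int}})<\sqrt\delta$, and $f_{\mathrm{int}}$ is its unique minimizer for $\kappa\le\kappa_{\mathrm{int}}$.
\item For $\kappa>\kappa_{\mathrm{int}}$ every minimizer has $\mathcal C>0$. Near $\kappa_{\mathrm{int}}$ each minimizer yields a solution of \eqref{eq:se1} with $R_\kappa=\mathcal C(f_\kappa)>0$, and these minimizers converge strongly to $f_{\mathrm{int}}$.
\item Conversely, every solution with $R_\kappa>0$ is such a minimizer.
\end{itemize}

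Finally, $\kappa_r^\star=\kappa_{\mathrm{int}}$ requires vanishing residual up to $\kappa_{\mathrm{int}}$. Lemma~\ref{lem:int-exact-penalty} only applies when $\delta/\kappa>C_{r,\delta}$, which need not reach $\kappa_{\mathrm{int}}$. The paper instead applies Lemma~\ref{lem:weighted-cgmt-transfer} with the borderline weight $L_p=\delta_p/\kappa_{\mathrm{int}}$, as in the proof of Lemma~\ref{thm:min-norm-interpolator}, to get vanishing residual at $\kappa_{\mathrm{int}}$. It then uses Lemma~\ref{lem:residual-monotonicity} on both sides, with Lemma~\ref{thm:main-nonint} at an intermediate $\widetilde\kappa\in(\kappa_{\mathrm{int}},\kappa)$ for the upper side.
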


\begin{proof}
We first prove existence when $\sigma>0$. For every
$\tau\geq\sigma$, the map
$\alpha\mapsto D_r(\tau,\alpha)$ is continuous and strictly
decreasing from $1$ to $0$. Indeed, if $z\ne0$,
$u=\eta_{\alpha,r}(z)$, and $t=\alpha|u|^{r-2}$, then
\[
 \eta'_{\alpha,r}(z)=\frac1{1+(r-1)t},
 \qquad
 \frac{d\log t}{d\log\alpha}
 =\frac{1+t}{1+(r-1)t}>0.
\]
Thus, for every $z\ne0$, the map
$\alpha\mapsto\eta'_{\alpha,r}(z)$ is continuous and strictly
decreasing from $1$ to $0$. Since, conditionally on $B$, the
variable $B/\tau+G$ has a continuous law, dominated convergence
gives that $D_r(\tau,\alpha)$ is jointly continuous in
$(\tau,\alpha)$ and satisfies
\[
 \lim_{\alpha\downarrow0}D_r(\tau,\alpha)=1,
 \qquad
 \lim_{\alpha\to\infty}D_r(\tau,\alpha)=0.
\]
Consequently, there is a unique continuous function
$\alpha(\tau)>0$ such that
\[
 D_r(\tau,\alpha(\tau))=\delta.
\]
Set
\[
 g(\tau):=
 H_r(\tau,\alpha(\tau))
 -\delta\left(1-\frac{\sigma^2}{\tau^2}\right),
 \qquad \tau\geq\sigma.
\]
We have
\[
 g(\sigma)=H_r(\sigma,\alpha(\sigma))>0.
\]
Indeed, equality would imply
$\eta_{\alpha(\sigma),r}(B/\sigma+G)=B/\sigma$
almost surely. By \eqref{eq:prox-eq}, this would give
$G=\alpha(\sigma)\psi_r(B/\sigma)$ almost surely,
which gives a contradiction.
We next determine the limit of $g$ at infinity. If
$\tau_j\to\infty$, the identity
$D_r(\tau_j,\alpha(\tau_j))=\delta$ prevents
$\alpha(\tau_j)$ from having a subsequence tending to either $0$ or
$\infty$. Every subsequential limit $\bar\alpha\in(0,\infty)$
satisfies
\[
 \E\eta'_{\bar\alpha,r}(G)=\delta.
\]
This equation has a unique solution, denoted by $\alpha_0$, and
therefore
$\alpha(\tau)\to\alpha_0$.
Lemma~\ref{lem:prox} and dominated convergence then give
\[
 H_r(\tau,\alpha(\tau))
 \to
 \E\eta_{\alpha_0,r}(G)^2.
\]
Writing $u_0=\eta_{\alpha_0,r}(G)$, Gaussian integration by parts
and \eqref{eq:prox-eq} yield
\[
 \delta=\E[Gu_0]
 =\E u_0^2+\alpha_0\E|u_0|^r.
\]
Consequently,
\[
 \lim_{\tau\to\infty}g(\tau)
 =-\alpha_0\E|u_0|^r<0.
\]
The intermediate-value theorem then gives the existence of a solution with
$\tau_{\mathrm{int}}>\sigma$. %In the noiseless case, existence of a positive solution is assumed in the statement.

We now move to proving uniqueness.
On the probability space supporting $(B,G)$, define
\begin{align}
 \mathcal C(f)
 &:=
 \left(\sigma^2+\frac1\delta\E(f-B)^2\right)^{1/2}
 -\frac1\delta\E[G(f-B)],
 \label{eq:attachment-C}\\
 \mathcal A_\kappa(f)
 &:=
 [\mathcal C(f)]_+
 +\frac{\kappa}{\delta}\|f\|_r.
 \label{eq:attachment-A}
\end{align}
Since $1<r<2$, these functionals are well defined on $L_2$.
Let $(\tau,\alpha)\in(0,\infty)^2$ satisfy
\eqref{eq:int-state-mse} and set
\[
 f:=\tau\eta_{\alpha,r}(B/\tau+G),
 \qquad
 \kappa:=\alpha U_r(\tau,\alpha)^{(r-1)/r}.
\]
Then, %\eqref{eq:int-state-mse} and Gaussian integration by parts give
\[
 \mathcal C(f)=0,
 \qquad
 \|f-B\|_2^2=\delta(\tau^2-\sigma^2).
\]
Moreover, \eqref{eq:int-state-mse} and the definition of $\kappa$,  are precisely the fixed point equations in 
\eqref{eq:se1} with
$D_r(\tau,\alpha)=\delta$. Hence
Lemma~\ref{prop:population} shows that $f$ is the unique minimizer
of
\begin{equation}\label{eq:ast}
 h\longmapsto
 \mathcal C(h)+\frac{\kappa}{\delta}\|h\|_r.
\end{equation}
The optimality calculation in the proof of that lemma also gives
\begin{equation}
 0=\nabla\mathcal C(f)
 +\frac{\kappa}{\delta}\nabla\|f\|_r.
 \label{eq:attachment-stationarity}
\end{equation}
If $\mathcal C(h)\leq0$, optimality in \eqref{eq:ast} yields
\[
 \frac{\kappa}{\delta}\|h\|_r
 \geq
 \mathcal C(h)+\frac{\kappa}{\delta}\|h\|_r
 \geq
 \frac{\kappa}{\delta}\|f\|_r,
\]
with equality only for $h=f$, thus proving  %Thus every interpolation state produces the unique minimum-$L_r$ element of $\{h\in L_2:\mathcal C(h)\leq0\}$. This feasible set is independent of the interpolation state, so every interpolation state produces the same $f$, which proves 
uniqueness of $(\tau, \alpha)$, as well as of $\kappa_{\mathrm{int}}$. 

Let $(\tau_{\mathrm{int}},\alpha_{\mathrm{int}})$ denote
this unique solution, and write $f_{\mathrm{int}}
 :=\tau_{\mathrm{int}}\eta_{\alpha_{\mathrm{int}},r}
 \left(B/\tau_{\mathrm{int}}+G\right)$.
We now study the minimizers of $\mathcal A_\kappa$, beginning
with their existence for every $\kappa>0$.
Lemma~\ref{lem:margin}, applied at
$(\tau_{\mathrm{int}},\alpha_{\mathrm{int}},
\kappa_{\mathrm{int}})$, gives
\begin{equation}
 \gamma_r(\kappa_{\mathrm{int}})<\sqrt\delta.
 \label{eq:attachment-strict-margin}
\end{equation}
Since $[\mathcal C]_+\geq\mathcal C$, the definition of
$\gamma_r$, its monotonicity, and $\E[GB]=0$ imply, for every
$\kappa\geq\kappa_{\mathrm{int}}$,
\begin{equation}
 \mathcal A_\kappa(f)
 \geq
 \frac{\sqrt\delta-\gamma_r(\kappa_{\mathrm{int}})}{\delta}
 \|f\|_2-\frac{\|B\|_2}{\sqrt\delta}.
 \label{eq:attachment-coercivity}
\end{equation}
Moreover, since both terms defining $\mathcal A_\kappa$
are nonnegative, for every $\kappa>0$,
\[
 \mathcal A_\kappa(f)
 \geq
 \min\left(1,\frac{\kappa}{\kappa_{\mathrm{int}}}\right)
 \mathcal A_{\kappa_{\mathrm{int}}}(f).
\]
Thus, for each fixed $\kappa>0$,
$\mathcal A_\kappa(f)\to\infty$ as $\|f\|_2\to\infty$.
In particular, every sequence with objective values
bounded above is bounded in $L_2$. Since $\mathcal A_\kappa$ is also proper, convex, and weakly
lower semicontinuous, it attains its minimum.

We next identify these minimizers. The previous argument
shows that $f_{\mathrm{int}}$ is the unique minimizer of
$\|f\|_r$ over
$\{f\in L_2:\mathcal C(f)\leq0\}$.
For $0<\kappa\leq\kappa_{\mathrm{int}}$, we claim that
$f_{\mathrm{int}}$ also uniquely minimizes
$\mathcal A_\kappa$. Indeed, if $\mathcal C(f)\geq0$, then
\[
 \frac{\delta}{\kappa}\mathcal A_\kappa(f)
 =
 \|f\|_r+\frac{\delta}{\kappa}\mathcal C(f)
 \geq
 \|f\|_r+\frac{\delta}{\kappa_{\mathrm{int}}}\mathcal C(f)
 \geq
 \|f_{\mathrm{int}}\|_r,
\]
where the last inequality follows from the optimality of
$f_{\mathrm{int}}$ in \eqref{eq:ast} with
$\kappa=\kappa_{\mathrm{int}}$.
If $\mathcal C(f)\leq0$, optimality of $f_{\mathrm{int}}$
in \eqref{eq:ast} at $\kappa=\kappa_{\mathrm{int}}$ gives
\[
 \frac{\kappa_{\mathrm{int}}}{\delta}\|f\|_r
 \geq
 \mathcal C(f)+\frac{\kappa_{\mathrm{int}}}{\delta}\|f\|_r
 \geq
 \frac{\kappa_{\mathrm{int}}}{\delta}\|f_{\mathrm{int}}\|_r.
\]
Since $\mathcal A_\kappa(f)=(\kappa/\delta)\|f\|_r$
in this case, the same bound follows.
Moreover, $\mathcal C(f_{\mathrm{int}})=0$, so the bound
is attained at $f_{\mathrm{int}}$.
In either case, equality forces $f=f_{\mathrm{int}}$
by uniqueness of the minimizer in \eqref{eq:ast}
at $\kappa=\kappa_{\mathrm{int}}$.

Now, let $\kappa>\kappa_{\mathrm{int}}$.
We first show that $f_{\mathrm{int}}$ cannot minimize
$\mathcal A_\kappa$. Since
$\mathcal C(f_{\mathrm{int}})=0$,
\[
 \partial[\mathcal C]_+(f_{\mathrm{int}})
 =
 \left\{
  \theta\nabla\mathcal C(f_{\mathrm{int}}):
  0\leq\theta\leq1
 \right\}.
\]
Together with \eqref{eq:attachment-stationarity},
the first-order optimality condition would therefore
require $\kappa=\theta\kappa_{\mathrm{int}}$
for some $\theta\in[0,1]$, which is impossible. Furthermore, if a minimizer $f_\kappa$ of $\mathcal A_\kappa$
satisfied $\mathcal C(f_\kappa)\leq0$, comparison with
$f_{\mathrm{int}}$ would give
$\|f_\kappa\|_r\leq\|f_{\mathrm{int}}\|_r$ and the minimality of $f_{\mathrm{int}}$ would then imply
$f_\kappa=f_{\mathrm{int}}$, a contradiction.
Consequently, every minimizer satisfies
\begin{equation}
 \mathcal C(f_\kappa)>0,
 \qquad \kappa>\kappa_{\mathrm{int}}.
 \label{eq:attachment-positive-C}
\end{equation}

We now prove that these minimizers converge to
$f_{\mathrm{int}}$ as
$\kappa\downarrow\kappa_{\mathrm{int}}$.
For any minimizer $f_\kappa$,
\[
 \mathcal A_\kappa(f_\kappa)
 \leq
 \mathcal A_\kappa(f_{\mathrm{int}})
 =
 \frac{\kappa}{\delta}\|f_{\mathrm{int}}\|_r.
\]
Together with \eqref{eq:attachment-coercivity}, this
shows that all minimizers remain uniformly bounded
in $L_2$ as $\kappa\downarrow\kappa_{\mathrm{int}}$.
Let $\kappa_j\downarrow\kappa_{\mathrm{int}}$ and choose
$f_j\in\argmin\mathcal A_{\kappa_j}$. On bounded subsets of $L_2$,
\[
 \left|
 \mathcal A_{\kappa_j}(h)
 -\mathcal A_{\kappa_{\mathrm{int}}}(h)
 \right|
 \leq
 \frac{|\kappa_j-\kappa_{\mathrm{int}}|}{\delta}\|h\|_2
 \longrightarrow0.
\]
Weak compactness, lower semicontinuity, and uniqueness of the minimizer of $\mathcal A_{\kappa_{\mathrm{int}}}$ therefore show that $f_j$ converges weakly to $f_{\mathrm{int}}$ in $L_2$.
Since $f_j$ minimizes $\mathcal A_{\kappa_j}$,
comparison with $f_{\mathrm{int}}$ and
\eqref{eq:attachment-positive-C} give
\[
 0<\mathcal C(f_j)
 \leq
 \frac{\kappa_j}{\delta}
 \left(\|f_{\mathrm{int}}\|_r-\|f_j\|_r\right).
\]
In particular, $\|f_j\|_r\leq\|f_{\mathrm{int}}\|_r$.
On the other hand, weak lower semicontinuity gives
$\|f_{\mathrm{int}}\|_r\leq\liminf_j\|f_j\|_r$.
Therefore,
\[
 \|f_j\|_r\to\|f_{\mathrm{int}}\|_r,
 \qquad
 \mathcal C(f_j)\to0.
\]
Furthermore, weak convergence implies
\[
 \E[G(f_j-B)]\to\E[G(f_{\mathrm{int}}-B)].
\]
Using \eqref{eq:attachment-C},
$\mathcal C(f_{\mathrm{int}})=0$ and
\eqref{eq:int-state-mse}, we obtain
\[
 \left(\sigma^2+\frac1\delta\|f_j-B\|_2^2\right)^{1/2}
 =\mathcal C(f_j)+\frac1\delta\E[G(f_j-B)]
 \to\tau_{\mathrm{int}}.
\]
It follows that
$$\|f_j-B\|_2\to\|f_{\mathrm{int}}-B\|_2,$$
and hence
\begin{equation}
 \|f_j-f_{\mathrm{int}}\|_2\to0.
 \label{eq:attachment-strong-convergence}
\end{equation}

We next show that these minimizers determine positive solutions
of \eqref{eq:se1} with $R_\kappa>0$.
For all sufficiently large $j$, define
\begin{equation}
 \tau_j:=
 \left(\sigma^2+\frac1\delta\E(f_j-B)^2\right)^{1/2},
 \qquad
 \alpha_j:=\kappa_j
 \left(\frac{\tau_j}{\|f_j\|_r}\right)^{r-1}.
 \label{eq:attachment-parameter-definitions}
\end{equation}
These quantities are positive because
$\tau_j\to\tau_{\mathrm{int}}>0$ and
$\|f_j\|_r\to\|f_{\mathrm{int}}\|_r>0$.
Since $\mathcal C(f_j)>0$, the first-order optimality
condition for $\mathcal A_{\kappa_j}$ is
\[
 \frac{f_j-B}{\tau_j}-G
 +\kappa_j\|f_j\|_r^{1-r}\psi_r(f_j)=0.
\]
Equivalently,
\[
 \frac B{\tau_j}+G
 =\frac{f_j}{\tau_j}
 +\alpha_j\psi_r\!\left(\frac{f_j}{\tau_j}\right).
\]
By \eqref{eq:prox-eq},
\[
 f_j=\tau_j\eta_{\alpha_j,r}(B/\tau_j+G).
\]
Substituting this expression into
\eqref{eq:attachment-parameter-definitions} gives
\[
 \tau_j^2
 =\sigma^2+\frac{\tau_j^2}{\delta}H_r(\tau_j,\alpha_j),
 \qquad
 \kappa_j=\alpha_jU_r(\tau_j,\alpha_j)^{(r-1)/r}.
\]
Thus $(\tau_j,\alpha_j)$ satisfies
\eqref{eq:se1}. Gaussian integration by
parts also gives
$\E[G(f_j-B)]=\tau_jD_r(\tau_j,\alpha_j)$, and hence
\[
 R_{\kappa_j}
 =\tau_j\left(1-\frac{D_r(\tau_j,\alpha_j)}{\delta}\right)
 =\mathcal C(f_j)>0.
\]
Finally, \eqref{eq:attachment-parameter-definitions} and
the definition of $\kappa_{\mathrm{int}}$ imply
\[
 \tau_j\to\tau_{\mathrm{int}},
 \qquad
 \alpha_j\to
 \kappa_{\mathrm{int}}
 \left(\frac{\tau_{\mathrm{int}}}
 {\|f_{\mathrm{int}}\|_r}\right)^{r-1}
 =\alpha_{\mathrm{int}},
 \qquad
 R_{\kappa_j}\to0.
\]
This argument applies to every sequence
$\kappa_j\downarrow\kappa_{\mathrm{int}}$ and every choice
of minimizers $f_j$. In particular, the fixed point equations in
\eqref{eq:se1} admit a positive solution
with $R_\kappa>0$ for every $\kappa>\kappa_{\mathrm{int}}$
sufficiently close to $\kappa_{\mathrm{int}}$.

We now show that the same convergence holds for every
positive solution $(\tau_\kappa,\alpha_\kappa)$ of
\eqref{eq:se1} with $R_\kappa>0$.
Set
\[
 f_\kappa:=\tau_\kappa\eta_{\alpha_\kappa,r}
 \left(B/\tau_\kappa+G\right).
\]
Gaussian integration by parts and \eqref{eq:se1}
give $\mathcal C(f_\kappa)=R_\kappa>0$.
In particular, $D_r(\tau_\kappa,\alpha_\kappa)<\delta$,
so Lemma~\ref{prop:population} applies and yields,
for every $f\in L_2$,
\[
 \mathcal A_\kappa(f)
 \geq\mathcal C(f)+\frac\kappa\delta\|f\|_r
 \geq\mathcal C(f_\kappa)+\frac\kappa\delta\|f_\kappa\|_r
 =\mathcal A_\kappa(f_\kappa).
\]
Thus $f_\kappa$ minimizes $\mathcal A_\kappa$.
The convergence proved for arbitrary sequences of minimizers
therefore applies to these random variables as well.
Moreover, \eqref{eq:se1} expresses
$\tau_\kappa$ and $\alpha_\kappa$ in terms of $f_\kappa$
by the same formulas as
\eqref{eq:attachment-parameter-definitions}.
Consequently, for every such choice of solutions,
\[
 (\tau_\kappa,\alpha_\kappa)
 \to(\tau_{\mathrm{int}},\alpha_{\mathrm{int}}),
 \qquad R_\kappa\to0,
 \qquad \kappa\downarrow\kappa_{\mathrm{int}}.
\]

It remains to prove that $\kappa_{\mathrm{int}}$ equals
$\kappa_r^\star$ as defined in \eqref{eq:kappastar}.
Set $L_p:=\delta_p/\kappa_{\mathrm{int}}$.
Since $L_p\to\delta/\kappa_{\mathrm{int}}>0$, this choice
satisfies \eqref{eq:int-boundary-weight-condition}.
Recall the definition of
 $P_{p,L_p}(w)$ in \eqref{eq:int-w-prim}. Then, \eqref{eq:int-boundary-transfer-value}-\eqref{eq:int-boundary-transfer-stability} give,
almost surely,
\[
 \inf_wP_{p,L_p}(w)\to\|f_{\mathrm{int}}\|_r,
 \qquad
 \sup_{\widehat w\in\argmin_wP_{p,L_p}(w)}
 \left|\|\beta_*+\widehat w\|_{r,p}
 -\|f_{\mathrm{int}}\|_r\right|\to0.
\]
Note that
\[
 P_{p,L_p}(b-\beta_*)
 =\frac{\delta_p}{\kappa_{\mathrm{int}}\sqrt n}
 \left(\|y-Xb\|_2
 +\frac{p\kappa_{\mathrm{int}}}{\sqrt n}\|b\|_{r, p}\right).
\]
Thus $\widehat w$ minimizes $P_{p,L_p}$ if and only if
$\widehat\beta=\beta_*+\widehat w$ minimizes
\eqref{eq:minprob} at $\kappa=\kappa_{\mathrm{int}}$.
For every such $\widehat\beta$,
\[
 \frac1{\sqrt n}\|y-X\widehat\beta\|_2
 =\frac1{L_p}
 \left(\inf_wP_{p,L_p}(w)
 -\|\widehat\beta\|_{r,p}\right) \to0
 \qquad\text{almost surely}.
\]
Lemma~\ref{lem:residual-monotonicity} then gives,
for every fixed $0\leq\kappa\leq\kappa_{\mathrm{int}}$
and every choice of minimizer in \eqref{eq:minprob},
\[
 \frac1{\sqrt n}\|y-X\widehat\beta_{\kappa,r}\|_2
 \to0
 \qquad\text{almost surely}.
\]

Conversely, fix any $\kappa>\kappa_{\mathrm{int}}$.
Choose $\widetilde\kappa\in(\kappa_{\mathrm{int}},\kappa)$
close enough to $\kappa_{\mathrm{int}}$ that
\eqref{eq:se1} admits a positive solution
with $R_{\widetilde\kappa}>0$, as established above.
Lemma~\ref{thm:main-nonint}, specifically
\eqref{eq:resclaim}, then gives
\[
 \frac1{\sqrt n}
 \|y-X\widehat\beta_{\widetilde\kappa,r}\|_2
 \to R_{\widetilde\kappa}>0
 \qquad\text{almost surely}.
\]
By Lemma~\ref{lem:residual-monotonicity},
\[
 \liminf_{p\to\infty}
 \frac1{\sqrt n}\|y-X\widehat\beta_{\kappa,r}\|_2
 \geq R_{\widetilde\kappa}>0
 \qquad\text{almost surely}.
\]
Therefore $\kappa_r^\star=\kappa_{\mathrm{int}}$.
Together with the convergence of
$(\tau_\kappa,\alpha_\kappa)$ and $R_\kappa$ proved above,
this completes the proof.
\end{proof}

\begin{lemma}[Derivative at the interpolation threshold]
\label{prop:mse-slope-formula}
Suppose that $\E B^2<\infty$, $\delta\in(0,1)$, $\sigma\geq0$,
and $1<r<2$. If $\sigma=0$, assume in addition that
$\Pp(B\neq0)>0$ and that \eqref{eq:int-state-mse} admits a solution
$(\tau_{\mathrm{int}},\alpha_{\mathrm{int}})\in(0,\infty)^2$.
Then the limit in \eqref{eq:slope-definition} exists, and it is given by
\begin{equation}
 S_r
 =\frac{2\delta^2\partial_\alpha H_r}
 {\left(\partial_\tau H_r
          -\dfrac{2\delta\sigma^2}{\tau_{\mathrm{int}}^3}\right)
        \partial_\alpha D_r
       -(\partial_\alpha H_r)(\partial_\tau D_r)}.
 \label{eq:mse-slope-formula}
\end{equation}
Here $H_r,D_r$, and all their partial derivatives are evaluated at
$(\tau_{\mathrm{int}},\alpha_{\mathrm{int}})$.
\end{lemma}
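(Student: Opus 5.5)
The plan is to differentiate the fixed point equations \eqref{eq:se1} along the non-interpolating branch and then pass to the limit at the threshold given by Lemma~\ref{prop:branch-attachment}. Write $K(\tau,\alpha):=H_r(\tau,\alpha)-\delta(1-\sigma^2/\tau^2)$. The first equation in \eqref{eq:se1} is equivalent to $K(\tau_\kappa,\alpha_\kappa)=0$. Along the branch,
\[
\overline{\mathsf{MSE}}_r(\kappa)=\delta(\tau_\kappa^2-\sigma^2),\qquad R_\kappa=\tau_\kappa\bigl(1-D_r(\tau_\kappa,\alpha_\kappa)/\delta\bigr),
\]
and the second equation in \eqref{eq:se1} reads $\kappa=\alpha U_r(\tau,\alpha)^{(r-1)/r}$. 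The idea is to parametrize the branch near $(\tau_{\mathrm{int}},\alpha_{\mathrm{int}})$ by $\alpha$ rather than by $\kappa$. In the ratio \eqref{eq:slope-definition}, the factor $d\kappa/d\alpha$ then cancels by the chain rule, provided it does not vanish.

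\textbf{Step 1: regularity.} First I will show that $H_r$, $D_r$ and $U_r$ are $C^1$ on $(0,\infty)^2$ for $1<r<2$ and $\E B^2<\infty$. For $H_r$ and $U_r$ this is dominated convergence, using the explicit derivative of the proximal map from \eqref{eq:prox-eq}: $\partial_\alpha\eta_{\alpha,r}(z)=-\psi_r(\eta)\,\eta'_{\alpha,r}(z)$, together with the bounds in Lemma~\ref{lem:prox}. For $D_r=\E\,\eta'_{\alpha,r}(B/\tau+G)$ I will avoid differentiating $\eta'$ directly, since $\eta''$ is singular at $0$ when $r<2$. Instead I will use Gaussian integration by parts, $D_r=\E[G\,\eta_{\alpha,r}(B/\tau+G)]$, which moves the derivatives onto the Gaussian density. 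Continuity of the partial derivatives then follows because, conditionally on $B$, the argument $B/\tau+G$ has a smooth law.

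\textbf{Step 2: the branch as a $C^1$ curve.} At $(\tau_{\mathrm{int}},\alpha_{\mathrm{int}})$ I will apply the implicit function theorem to $K(\tau,\alpha)=0$. This requires
\[
\partial_\tau K=\partial_\tau H_r+2\delta\sigma^2/\tau^3\neq0.
\]
If instead $\partial_\alpha K\neq0$ (note that $\partial_\alpha K=\partial_\alpha H_r$), I will solve for $\alpha$ as a function of $\tau$ and parametrize by $\tau$; the final formula is symmetric enough that either parametrization works. This gives a $C^1$ curve $(\tau(s),\alpha(s))$ through the interpolation point. Lemma~\ref{prop:branch-attachment} says every non-interpolating solution converges to $(\tau_{\mathrm{int}},\alpha_{\mathrm{int}})$ as $\kappa\downarrow\kappa_r^\star$, so by local uniqueness in the implicit function theorem these solutions lie on this curve. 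Along the curve, differentiating $K=0$ gives
\[
\Bigl(\partial_\tau H_r-\tfrac{2\delta\sigma^2}{\tau^3}\Bigr)\tau'+\partial_\alpha H_r\,\alpha'=0 .
\]
(Here $\partial_\tau K$ appears with the sign convention of \eqref{eq:mse-slope-formula}, i.e.\ $H_r(\tau,\alpha)-\delta+\delta\sigma^2/\tau^2=0$.)

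\textbf{Step 3: the ratio.} The two derivatives along the curve are
\[
\frac{d}{ds}\overline{\mathsf{MSE}}_r=2\delta\tau\tau',\qquad
\frac{d}{ds}R=\tau'\Bigl(1-\frac{D_r}{\delta}\Bigr)-\frac{\tau}{\delta}\bigl(\partial_\tau D_r\,\tau'+\partial_\alpha D_r\,\alpha'\bigr).
\]
At the threshold $D_r=\delta$, so the first term of $dR/ds$ vanishes in the limit. Dividing and eliminating $\tau'$ through the linear relation from Step 2, the common factor $\alpha'$ cancels and we obtain exactly \eqref{eq:mse-slope-formula}. Continuity of all derivatives (Step 1) justifies taking the limit $s\to s_{\mathrm{int}}$ on the correct side.

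\textbf{Main obstacle: nondegeneracy.} To identify the limit in \eqref{eq:slope-definition}, which is taken in $\kappa$, with the limit along the curve parameter $s$, I need $d\kappa/ds\neq0$ near the threshold; I also need the denominator of \eqref{eq:mse-slope-formula} to be nonzero. For $d\kappa/ds$, my first attempt will use the monotonicity of $\kappa\mapsto R_\kappa$ from Lemma~\ref{lem:residual-monotonicity}, together with the strict optimality structure of $f_{\mathrm{int}}$ from the proof of Lemma~\ref{prop:branch-attachment}, to show that the curve is traversed monotonically in $\kappa$. For the denominator, I will argue that it equals (up to a nonzero factor) $\tau\,dR/ds$ per unit $\alpha'$, and show it is nonzero using $R_\kappa>0$ for $\kappa>\kappa_r^\star$ and $R_{\kappa_r^\star}=0$ from Lemma~\ref{prop:branch-attachment}. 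If nondegeneracy fails, the fallback is to interpret \eqref{eq:slope-definition} as a one-sided ratio of difference quotients and apply l'Hôpital's rule along the curve.
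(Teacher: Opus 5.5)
\textbf{Gap: the nondegeneracy step.} Your Step~3 algebra is correct. Parametrizing the branch by $\alpha$ and eliminating $\tau'$ through the differentiated first equation does reproduce \eqref{eq:mse-slope-formula}. This route is equivalent to the paper's, which applies the implicit function theorem to the full system \eqref{eq:se1} in $\kappa$ and then uses Cramer's rule.

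The gap is the step you call the main obstacle. It is the real content of the lemma, and the arguments you propose for it do not work.
\begin{itemize}
\item \emph{The denominator.} The limit in \eqref{eq:slope-definition} equals \eqref{eq:mse-slope-formula} only if $dR/ds\neq0$ at the threshold. Equivalently, $\bigl(\partial_\tau H_r-2\delta\sigma^2/\tau^3\bigr)\partial_\alpha D_r-\partial_\alpha H_r\,\partial_\tau D_r\neq0$. Knowing $R_\kappa>0$ for $\kappa>\kappa_r^\star$ and $R\to0$ at $\kappa_r^\star$ gives only a one-sided sign, never strict nonvanishing. A $C^1$ function that vanishes at an endpoint and is positive beside it can have zero derivative there. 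This actually happens in Theorem~\ref{thm:r1-critical-infinite-slope}: there $0<\frac{d}{d\tau}R_{\alpha(\tau)}=o(\tau)$, which is exactly why $S_1=+\infty$.
\item \emph{The map $\kappa(s)$.} Monotonicity of the residual (Lemma~\ref{lem:residual-monotonicity}) gives at most monotonicity of $\kappa(s)$, not $d\kappa/ds\neq0$.
\item \emph{Step~2.} Your implicit function argument needs $\nabla K\neq0$ at the interpolation point, and you do not establish it.
\item \emph{The fallback.} The l'H\^opital fallback changes the definition of $S_r$. If the denominator did vanish, it would not yield \eqref{eq:mse-slope-formula} anyway.
\end{itemize}
A telling sign: your proposal never uses the hypothesis $\Pp(B\neq0)>0$ for $\sigma=0$.

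\textbf{What is missing: a quantitative computation specific to $1<r<2$.} Following the paper, set $v=\eta_{\alpha,r}(B/\tau+G)$ and $\omega=|v|^{r-2}\eta'_{\alpha,r}(B/\tau+G)$. Then $\omega\in(0,\{\alpha(r-1)\}^{-1}]$, which is bounded precisely because $1<r<2$. Express every partial derivative of $H_r,U_r,D_r$ through moments of $v$, $\omega$ and $B/\tau$. This produces two sum-of-squares identities.
\begin{itemize}
\item \emph{Jacobian of \eqref{eq:se1}.} It is a positive multiple of
\[
\tfrac{\delta\sigma^2}{\tau^2}\E[v^2\omega]+\alpha(r-1)\Bigl(\E[v^2\omega]\,\E[(B/\tau)^2\omega]-\E[(B/\tau)v\omega]^2\Bigr).
\]
This is positive by Cauchy--Schwarz. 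It is strictly positive when $\sigma=0$ because $\Pp(B\neq0)>0$ prevents $v$ from being a deterministic multiple of $B/\tau$.
\item \emph{Denominator.} It equals
\[
\tfrac{2\alpha}{\tau}\Bigl[\bigl(\E[(B/\tau-v)v\omega]\bigr)^2+\E[|v|^r\omega]\Bigl(\tfrac{\delta\sigma^2}{\tau^2}+\alpha(r-1)\E[(B/\tau-v)^2\omega]\Bigr)\Bigr]>0 .
\]
\end{itemize}
The first identity gives $d\kappa/ds\neq0$, because along $K=0$ the derivative $d\kappa/ds$ is a nonzero multiple of this determinant. It also gives $\nabla K\neq0$. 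Together these make the branch $C^1$ in $\kappa$ and your cancellation of $\alpha'$ legitimate. The second identity makes $dR/ds\neq0$, so the limit is finite and equals \eqref{eq:mse-slope-formula}. Without these two computations, your Steps~2--3 rest on unproved hypotheses.
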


\begin{proof}
By Lemma~\ref{prop:branch-attachment},
\[
 \kappa_r^\star=\kappa_{\mathrm{int}},
 \qquad
 (\tau_\kappa,\alpha_\kappa)
 \to
 (\tau_{\mathrm{int}},\alpha_{\mathrm{int}})
 \quad\text{as }\kappa\downarrow\kappa_r^\star.
\]
We can therefore compute \(S_r\) by differentiating the
system of fixed point equations \eqref{eq:se1} at
\((\tau_{\mathrm{int}},\alpha_{\mathrm{int}})\). For $(\tau,\alpha)$
in the neighborhood of $(\tau_{\mathrm{int}},\alpha_{\mathrm{int}})$, define
\begin{equation}
 v:=\eta_{\alpha,r}\left(\frac B\tau+G\right),
 \qquad
 \omega
 :=|v|^{r-2}
 \eta'_{\alpha,r}\left(\frac B\tau+G\right).
 \label{eq:slope-proof-local-notation}
\end{equation}
At $v=0$, the second expression is understood by continuity as
$\{\alpha(r-1)\}^{-1}$.  Differentiating the proximal equation
\eqref{eq:prox-eq} gives
\begin{equation}
 \partial_\alpha v
 =-\sign(v)|v|^{r-1}
   \eta'_{\alpha,r}\left(\frac B\tau+G\right),
 \qquad
 \partial_\tau v
 =-\frac B{\tau^2}
   \eta'_{\alpha,r}\left(\frac B\tau+G\right),
 \label{eq:slope-prox-derivatives}
\end{equation}
and
\begin{equation}
 1-\eta'_{\alpha,r}\left(\frac B\tau+G\right)
 =\alpha(r-1)\omega.
 \label{eq:slope-prox-complement}
\end{equation}
Since $0<\omega\leq\{\alpha(r-1)\}^{-1}$ and
$|v|\leq|B/\tau+G|$, differentiation under the expectations defining
$H_r$ and $U_r$ is justified by dominated convergence and we have
\begin{align}
 \partial_\alpha H_r(\tau,\alpha)
 &=-2\left(
    \E[v^2\omega]
    -\E\left[\frac B\tau v\omega\right]
   \right),
 \label{eq:slope-H-alpha-direct}\\
 \partial_\tau H_r(\tau,\alpha)
 &=\frac{2\alpha(r-1)}\tau
   \left(
    \E\left[\frac B\tau v\omega\right]
    -\E\left[\left(\frac B\tau\right)^2\omega\right]
   \right),
 \label{eq:slope-H-tau-direct}\\
 \partial_\tau U_r(\tau,\alpha)
 &=-\frac r\tau
   \E\left[\frac B\tau v\omega\right],
 \label{eq:slope-U-tau-direct}\\
 \partial_\alpha U_r(\tau,\alpha)
 &=-r\E[|v|^r\omega].
 \label{eq:slope-U-alpha-direct}
\end{align}
Equation \eqref{eq:slope-prox-complement} also gives
\begin{equation}
 \alpha(r-1)\E[|v|^r\omega]
 =U_r(\tau,\alpha)-\E[v^2\omega].
 \label{eq:slope-U-identity-direct}
\end{equation}
If $(\tau, \alpha)$ solves \eqref{eq:se1}, then
\begin{equation}
 1-\frac1\delta H_r(\tau,\alpha)
 =\frac{\sigma^2}{\tau^2}.
 \label{eq:slope-state-identity}
\end{equation}
Substituting
\eqref{eq:slope-H-alpha-direct}-\eqref{eq:slope-U-identity-direct}
into the Jacobian of \eqref{eq:se1} gives
\begin{align}
 &\det
 \frac{\partial\left(
  \tau^2-\sigma^2-\dfrac{\tau^2}{\delta}H_r(\tau,\alpha),
  \ \alpha U_r(\tau,\alpha)^{(r-1)/r}
 \right)}
 {\partial(\tau,\alpha)}
 \notag\\
 &\quad=
 \frac{2\tau U_r(\tau,\alpha)^{-1/r}}\delta
 \Bigg[
  \frac{\delta\sigma^2}{\tau^2}\E[v^2\omega]
  +\alpha(r-1)
  \Bigg(
   \E[v^2\omega]
   \E\left[\left(\frac B\tau\right)^2\omega\right]
   -\E\left[\frac B\tau v\omega\right]^2
  \Bigg)
 \Bigg].
 \label{eq:slope-jacobian}
\end{align}
Note that the RHS is strictly positive by Cauchy-Schwartz when $\sigma>0$.  If $\sigma=0$, equality would imply
that $v$ is a deterministic multiple of $B/\tau$ almost surely.  This is
impossible when $\Pp(B\neq0)>0$, because, conditionally on $B=b$, the
random variable $\eta_{\alpha,r}(b/\tau+G)$ is nondegenerate for
$P_B$-almost every $b\neq0$.  This proves that the Jacobian in
\eqref{eq:slope-jacobian} is strictly positive.

The implicit-function theorem therefore gives a unique \(C^1\) local
solution curve through
\((\tau_{\mathrm{int}},\alpha_{\mathrm{int}})\).
By Lemma~\ref{prop:branch-attachment}, for
$\kappa>\kappa_r^\star$ sufficiently close to $\kappa_r^\star$,
this local solution curve agrees with the parameters
$(\tau_\kappa,\alpha_\kappa)$ corresponding to positive
training residual $R_\kappa$. Thus, we have
\begin{equation}
 \frac{d\tau}{d\kappa}
 =
 \frac{
  \tau_{\mathrm{int}}^2
  \partial_\alpha H_r
  (\tau_{\mathrm{int}},\alpha_{\mathrm{int}})
 }{\delta}
 \left[
  \left.
  \det
  \frac{\partial\left(
   \tau^2-\sigma^2-\dfrac{\tau^2}{\delta}H_r(\tau,\alpha),
   \ \alpha U_r(\tau,\alpha)^{(r-1)/r}
  \right)}
  {\partial(\tau,\alpha)}
  \right|_{(\tau,\alpha)
  =(\tau_{\mathrm{int}},\alpha_{\mathrm{int}})}
 \right]^{-1}.
 \label{eq:slope-tau-kappa}
\end{equation}
For the remainder of the proof, all quantities are evaluated at
$\tau=\tau_{\mathrm{int}}$ and $\alpha=\alpha_{\mathrm{int}}$. Gaussian integration by parts gives $D_r=\E[Gv]$.
Differentiating this identity using
\eqref{eq:slope-prox-derivatives} and the proximal equation yields
\begin{align*}
 \partial_\alpha D_r
 &=-\E[Gv\omega]
 =\frac12\partial_\alpha H_r-\alpha\E[|v|^r\omega],\\
 \partial_\tau D_r
 &=-\frac1\tau
   \E\left[G\frac B\tau
   \eta'_{\alpha,r}\left(\frac B\tau+G\right)\right]=\frac12\partial_\tau H_r-\frac{\delta\sigma^2}{\tau^3}
   +\frac\alpha\tau
    \left(\E|v|^r-\E\left[\frac B\tau v\omega\right]\right).
\end{align*}
For the second identity, we also used
\eqref{eq:slope-prox-complement} and
\[
 \E\left[\frac B\tau v\right]
 -\E\left[\left(\frac B\tau\right)^2\right]
 +\alpha\E|v|^r
 =D_r-H_r=\frac{\delta\sigma^2}{\tau^2},
\]
where the first equality follows from
\eqref{eq:prox-eq}, the definition of $H_r$ in \eqref{eq:Hdef},
and $D_r=\E[Gv]$, while the second follows from \eqref{eq:int-state-mse}.
The bounds $|\eta'_{\alpha,r}|\leq1$,
$\omega\leq(\alpha(r-1))^{-1}$, and
$|v|\leq|B/\tau+G|$ justify differentiation under the expectations
and continuity of these derivatives near the interpolation point.

Substituting these identities and
\eqref{eq:slope-H-alpha-direct},
\eqref{eq:slope-H-tau-direct}, and
\eqref{eq:slope-U-identity-direct} give
\begin{align}
 &\left(\partial_\tau H_r-\frac{2\delta\sigma^2}{\tau^3}\right)
    \partial_\alpha D_r
   -(\partial_\alpha H_r)(\partial_\tau D_r)
 \notag\\
 &\quad=\frac{2\alpha}{\tau}
 \Bigg[
  \left(\E\left[\left(\frac B\tau-v\right)v\omega\right]\right)^2
  +\E[|v|^r\omega]
   \left(
    \frac{\delta\sigma^2}{\tau^2}
    +\alpha(r-1)
      \E\left[\left(\frac B\tau-v\right)^2\omega\right]
   \right)
 \Bigg]>0.
 \label{eq:slope-residual-denominator-positive}
\end{align}
%Indeed, $\E[|v|^r\omega]>0$ since $v\neq0$ almost surely. Moreover, $\E[(B/\tau-v)^2\omega]=0$ would imply $v=B/\tau$ almost surely, so the proximal equation would give $G=\alpha\sign(B/\tau)|B/\tau|^{r-1}$ almost surely. This contradicts the independent, nondegenerate Gaussian law of $G$ conditional on $B$. Thus the second expectation is also strictly positive, and all quantities above are finite under $\E B^2<\infty$.
Finally, we differentiate
\[
 \overline{\mathsf{MSE}}_r(\kappa)
 =\delta(\tau_\kappa^2-\sigma^2),
 \qquad
 R_\kappa
 =\tau_\kappa
   \left(1-\frac{D_r(\tau_\kappa,\alpha_\kappa)}{\delta}\right).
\]
Since $D_r=\delta$ at interpolation, 
$\lim_{\kappa\downarrow\kappa_r^\star}dR_\kappa/d\kappa$
equals the LHS of
\eqref{eq:slope-residual-denominator-positive} multiplied by
$\tau^3/\delta^2$ and by the reciprocal of the Jacobian in
\eqref{eq:slope-jacobian}.
This limit is therefore finite and strictly positive.
Combining this expression with \eqref{eq:slope-tau-kappa} and
$d\overline{\mathsf{MSE}}_r/d\kappa
=2\delta\tau_\kappa\,d\tau_\kappa/d\kappa$
proves \eqref{eq:mse-slope-formula}.
\end{proof}

\subsection{Proof of Theorem~\ref{thm:sparse-decreasing-slope}}
\label{app:proof-sparse-decreasing-slope}

\begin{proof}
Let $s=1/(r-1)$ and rescale the interpolation parameters as
\[
 t=\sqrt\delta\,\tau,\qquad a=\delta^{r-1}\alpha.
\]
We first let $\delta\downarrow0$ with $\lambda>0$ and $\sigma$
fixed. All convergence statements below are uniform over
$r\in[r_-,r_+]$ and locally uniform in positive $t,a$.
The notation $o_{C^k}(1)$ means that the remainder and its
partial derivatives of total order at most $k$ tend uniformly
to zero; before evaluating at interpolation these are derivatives
in $(t,a,r)$, and afterwards they are derivatives in $r$.

Define the rescaled proximal function
\[
 v_\delta(z):=\delta^{-1}\eta_{a\delta^{1-r},r}(z),
\]
suppressing its dependence on $a,r$.
By \eqref{eq:prox-eq}, it satisfies
\[
 z=\delta v_\delta(z)
   +a\sign(v_\delta(z))|v_\delta(z)|^{r-1}.
\]
Consequently, $|v_\delta(z)|\leq a^{-s}|z|^s$,
$\delta|v_\delta(z)|\leq|z|$, and
\begin{align}
 \left|v_\delta(z)-a^{-s}z|z|^{s-1}\right|
 &\leq C|z|^s\min\{1,\delta|z|^{s-1}\},
 \label{eq:sparse-uniform-prox-bound}\\
 \delta v_\delta(z)^2
 &\leq C|z|^{s+1}\min\{1,\delta|z|^{s-1}\}.
 \label{eq:sparse-quadratic-cutoff}
\end{align}
Implicit differentiation gives
\begin{align}
 \partial_zv_\delta
 &=\frac{|v_\delta|^{2-r}}
         {a(r-1)+\delta|v_\delta|^{2-r}},\,\,
 \partial_av_\delta
 &=-\frac{v_\delta}
          {a(r-1)+\delta|v_\delta|^{2-r}},\,\,
 \partial_rv_\delta
 &=-\frac{av_\delta\log|v_\delta|}
          {a(r-1)+\delta|v_\delta|^{2-r}}.
 \label{eq:sparse-v-derivatives}
\end{align}
We now explain why these approximations may be differentiated twice
under the expectations used below. Differentiating
\eqref{eq:sparse-v-derivatives} once more gives, for $z\neq0$,
\[
 |\partial_zv_\delta|\leq C|z|^{s-1},
 \qquad
 |\partial_{zz}v_\delta|\leq C|z|^{s-2}.
\]
Derivatives in $a$ preserve the polynomial growth, while each
derivative in $r$ adds at most one factor
$1+|\log|z||$.
To identify the expectations being differentiated, expand
the definition of $H_r$ in \eqref{eq:Hdef}:
\begin{align*}
 H_r(\tau,\alpha)
 &=\E\left[
   \left(\delta v_\delta(B/\tau+G)-B/\tau\right)^2
   \right]=\frac{\delta}{t^2}
   -2\delta\,\E\left[
     \frac B\tau v_\delta(B/\tau+G)
   \right]
   +\delta^2\,\E\left[v_\delta(B/\tau+G)^2\right],
\end{align*}
where we used $\E B^2=1$ and $\tau=t/\sqrt\delta$.
Similarly,
\begin{align*}
 D_r(\tau,\alpha)
 &=\delta\,\E\left[
   \partial_zv_\delta(B/\tau+G)
   \right]=\delta\,\E\left[Gv_\delta(B/\tau+G)\right].
\end{align*}
On $\{\xi_\rho=1\}$, we have $B/\tau=W/(t\sqrt\lambda)$.
Consider first the two integrands
\[
 \frac{W}{t\sqrt\lambda}\,
 v_\delta\left(\frac{W}{t\sqrt\lambda}+G\right),
 \qquad
 Gv_\delta\left(\frac{W}{t\sqrt\lambda}+G\right),
\]
which appear in the second term of the expansion of $H_r$
and in the expression for $D_r$, respectively.
Using the derivative bounds above and integrating over $G$,
the conditional expectations given $W$ of their absolute
values, and of the absolute values of all their first and
second partial derivatives in $(t,a,r)$, are bounded by
\[
 C\left(1+|W|^{s+1}
                 (1+\log^2(2+|W|))\right).
\]
The preceding bounds, the moment assumption, and
\eqref{eq:sparse-quadratic-cutoff} justify differentiation
under the expectations and the uniform $C^2(t,a,r)$
expansions below, by dominated convergence with truncation
near $z=0$.

%This bound is integrable uniformly in $r$. Furthermore, the possible local singularity $|z|^{s-2}$ is uniformly integrable near zero under the conditional distribution of $W/(t\sqrt\lambda)+G$ given $W$. For the quadratic terms, implicit differentiation and \eqref{eq:sparse-quadratic-cutoff} give the same integrable bound for $\delta v_\delta^2$ and its first two parameter derivatives, all of which tend pointwise to zero. On $\{\xi_\rho=0\}$, all required Gaussian moments are finite. Dominated convergence, with truncation near $z=0$ when necessary, therefore gives the uniform $C^2$ limits used below.

Using \eqref{eq:Hdef}, conditioning on $\xi_\rho$, and applying
Gaussian integration by parts to $D_r$, we obtain
\begin{align}
 \frac{H_r(t/\sqrt\delta,a\delta^{1-r})}{\delta}
 &=\frac1{t^2}
   +\delta a^{-2s}\E|G|^{2s}
 -\frac{2\delta\sqrt\lambda}{t a^s}
   \E\left[
    W\left(\frac W{t\sqrt\lambda}+G\right)
     \left|\frac W{t\sqrt\lambda}+G\right|^{s-1}
   \right]
   +o_{C^2}(\delta),
 \notag\\ \frac{D_r(t/\sqrt\delta,a\delta^{1-r})}{\delta}
 &=a^{-s}\E|G|^{s+1}+o_{C^2}(1),
 \label{eq:sparse-rescaled-system-limit}
\end{align}
where $o_{C^2}(\delta)$ denotes $\delta$ times an
$o_{C^2}(1)$ remainder.
%For the first expansion, the quadratic contribution from
%$\{\xi_\rho=1\}$ is negligible by
%\eqref{eq:sparse-quadratic-cutoff}; the cross term is contributed
%entirely by this event.
%For the second expansion, use the exact identity
%\[
% \frac{D_r(t/\sqrt\delta,a\delta^{1-r})}{\delta}
% =\E\left[Gv_\delta\left(\frac B\tau+G\right)\right].
%\]
%
In the variables $(t,a)$, 
\eqref{eq:int-state-mse} corresponds to
\begin{equation}\label{eq:resc-inter}
 t^2\left(1-
 \frac{H_r(t/\sqrt\delta,a\delta^{1-r})}{\delta}\right)
 -\delta\sigma^2=0,
 \qquad
 \frac{D_r(t/\sqrt\delta,a\delta^{1-r})}{\delta}-1=0.
\end{equation}
By \eqref{eq:sparse-rescaled-system-limit}, as
$\delta\downarrow0$ with $\lambda,\sigma$ fixed, the two
left-hand sides converge to $t^2-1$ and
$a^{-s}\E|G|^{s+1}-1$, respectively.
Then, the equations obtained by setting these limits equal to zero are
\[
 t^2-1=0,\qquad a^{-s}\E|G|^{s+1}-1=0,
\]
with the corresponding Jacobian in $(t, a)$ given by
\begin{equation}
 \begin{pmatrix}
 2&0\\
 0&-s/a
 \end{pmatrix},
 \label{eq:sparse-limiting-jacobian}
\end{equation}
which is nonsingular for $r\in[r_-,r_+]$.
The implicit-function theorem thus gives
a positive solution, continuously differentiable in $r$, with
\begin{equation}
 t=1+o_{C^1}(1),\qquad
 a^s=\E|G|^{s+1}+o_{C^1}(1).
 \label{eq:sparse-boundary-limit}
\end{equation}
By Lemma~\ref{prop:branch-attachment}, this solution corresponds to the interpolation endpoint, also when $\sigma=0$.
Thus, we can apply Lemma~\ref{prop:mse-slope-formula} and obtain
\begin{equation}
 S_r=
 \frac{2\delta^2\partial_\alpha H_r}
 {\left(\partial_\tau H_r-\dfrac{2\delta\sigma^2}{\tau^3}\right)
       \partial_\alpha D_r
  -(\partial_\alpha H_r)(\partial_\tau D_r)},
 \label{eq:slope-moment-formula}
\end{equation}
where all quantities are evaluated at interpolation.

To evaluate the ratio, we differentiate
\eqref{eq:sparse-rescaled-system-limit} using
$\partial_\tau=\sqrt\delta\,\partial_t$ and
$\partial_\alpha=\delta^{r-1}\partial_a$.
Together with \eqref{eq:sparse-boundary-limit}, this gives
\[
 \delta^{-3/2}\partial_\tau H_r=-2+o_{C^1}(1),\qquad
 \delta^{-r}\partial_\alpha D_r=-s/a+o_{C^1}(1),\qquad
 \delta^{-3/2}\partial_\tau D_r=o_{C^1}(1),
\]
and
\begin{align*}
 \delta^{-(r+1)}\partial_\alpha H_r
 =2sa^{-s-1}\Bigg(&
 \sqrt\lambda\,
 \E\left[
 W\left(\frac W{\sqrt\lambda}+G\right)
  \left|\frac W{\sqrt\lambda}+G\right|^{s-1}
 \right]-a^{-s}\E|G|^{2s}\Bigg)+o_{C^1}(1).
\end{align*}
Since $2\delta\sigma^2/\tau^3=O(\delta^{5/2})$, substitution
into \eqref{eq:slope-moment-formula} yields
\begin{align}
 \delta^{-3/2}S_r
 &=\frac{2\sqrt\lambda}{\E|G|^{s+1}}
 \E\left[
 W\left(\frac W{\sqrt\lambda}+G\right)
  \left|\frac W{\sqrt\lambda}+G\right|^{s-1}
 \right]
 -\frac{2\E|G|^{2s}}{(\E|G|^{s+1})^2}
       +o_{C^1}(1).
 \label{eq:sparse-fixed-lambda-limit}
\end{align}
The $C^2$ estimates above justify the differentiated expansions
and show that $r\mapsto S_r$ is continuously differentiable.

We finally choose $\lambda$ sufficiently small. As $\lambda\to 0$, we have
\begin{align*}
 &\lambda^{(s-1)/2}\sqrt\lambda\,
 \E\left[
 W\left(\frac W{\sqrt\lambda}+G\right)
  \left|\frac W{\sqrt\lambda}+G\right|^{s-1}
 \right]=\E|W|^{s+1}+o_{C^1}(1).
\end{align*}
Since $\E|W|^{s+1}\geq1$ and $s>1$, combining the preceding
estimate with \eqref{eq:sparse-fixed-lambda-limit} gives
\begin{equation}
 \lim_{\delta\downarrow0}\delta^{-3/2}S_r
 =\frac{2\E|W|^{s+1}}{\E|G|^{s+1}}\,
   \lambda^{-(s-1)/2}(1+o_{C^1}(1)),
 \label{eq:sparse-slope-asymptotic}
\end{equation}
for all sufficiently small $\lambda>0$. Here the limit in
$\delta$ is taken at fixed $\lambda$, and $o_{C^1}(1)$ denotes
a remainder whose value and first derivative in $r$ converge
uniformly to zero on $[r_-,r_+]$ as $\lambda\downarrow0$.%The Gaussian term in \eqref{eq:sparse-fixed-lambda-limit} is negligible here, including after differentiation, since $\lambda^{(s-1)/2}(1+|\log\lambda|)\to0$ uniformly in $r$.
The coefficient in \eqref{eq:sparse-slope-asymptotic} is positive,
bounded away from zero, and has bounded logarithmic derivative.
As $ds/dr=-s^2$, it follows that, for small $\lambda$,
\[
 \frac d{dr}\log\left(
  \lim_{\delta\downarrow0}\delta^{-3/2}S_r
 \right)
 =-\frac{s^2}{2}\log(1/\lambda)+O(1),
\]
uniformly in $r$.
We may therefore choose $\lambda_0:=\lambda_0(r_-,r_+,W)>0$
so that the limit in \eqref{eq:sparse-fixed-lambda-limit}
is strictly positive and has strictly negative derivative
throughout $[r_-,r_+]$ for every $0<\lambda<\lambda_0$.
%For each such fixed $\lambda$, both inequalities have uniform strict margins on this compact interval.
The $C^1$ convergence in \eqref{eq:sparse-fixed-lambda-limit}
then gives
$\delta_0=\delta_0(r_-,r_+,W,\lambda,\sigma)>0$
such that $S_r>0$ and $dS_r/dr<0$ whenever
$0<\delta<\delta_0$ and $r\in[r_-,r_+]$, which concludes the argument.
\end{proof}

\subsection{Proof of Theorem~\ref{thm:r1-critical-infinite-slope}}
\label{app:proof-r1-critical-infinite-slope}

\begin{proof}
Define
\begin{equation}
 h(\alpha):=\rho(1+\alpha^2)
 +(1-\rho)\E(|G|-\alpha)_+^2,
 \qquad \alpha\geq0,
 \label{eq:r1-critical-h}
\end{equation}
so that $\delta_c=\min_{\alpha\geq0}h(\alpha)$ by
\eqref{eq:r1-critical-delta}. Computing derivatives gives
\[
 h''(\alpha)=2(\rho+(1-\rho)\Pp(|G|>\alpha))>0,
 \qquad
 h(\alpha)-\frac\alpha2h'(\alpha)
 =\rho+(1-\rho)\Pp(|G|>\alpha).
\]
Since $h'(0)<0$ and $h(\alpha)\to\infty$, its minimizer
$\alpha_c$ is unique and positive. In particular,
\[
 h(\alpha_c)=\delta_c,\qquad h'(\alpha_c)=0,\qquad
 h''(\alpha_c)=2\delta_c,
 \qquad \rho+(1-\rho)\Pp(|G|>\alpha_c)=\delta_c.
\]

Define
\[
 e(x,\alpha):=2\int_x^\infty z\,\Pp(|z+G|\leq\alpha)\,dz,
 \qquad x\geq0.
\]
Conditioning on the support of $B$ gives
\begin{align*}
 H_1(\tau,\alpha)
 &=h(\alpha)-\rho\E e\left(\frac{|W|}{\tau\sqrt\rho},\alpha\right),\\
 D_1(\tau,\alpha)
 &=\rho+(1-\rho)\Pp(|G|>\alpha)
   -\rho\Pp\left(\left|\frac{W}{\tau\sqrt\rho}+G\right|
                  \leq\alpha\right),
\end{align*}
%The first identity follows by integrating the derivative
%$2x\Pp(|x+G|\leq\alpha)$ of the scalar risk, whose limit
%as $x\to\infty$ is $1+\alpha^2$.
which implies that %These formulas also show that
$\partial_\tau H_1<0$, $\partial_\tau D_1<0$ and
$\partial_\alpha D_1<0$.

To identify the limit as $\tau\downarrow0$, note that
\eqref{eq:r1-critical-small-ball} implies $\Pp(W=0)=0$,
and hence $|W|/(\tau\sqrt\rho)\to\infty$ almost surely.
Since $0\leq e(x,\alpha)\leq e(0,\alpha)<\infty$ and
$e(x,\alpha)\to0$ as $x\to\infty$, dominated convergence gives
$H_1(\tau,\alpha)\to h(\alpha)$. For each $\alpha>\alpha_c$, the uniqueness of the minimizer of $h$ gives
\[
 \lim_{\tau\downarrow0}H_1(\tau,\alpha)
 =h(\alpha)>h(\alpha_c)=\delta_c.
\]
Furthermore, the limit as $\tau\to\infty$ is
$\E(|G|-\alpha)_+^2<\delta_c$. Thus, there exists a unique smooth solution
of $H_1(\tau,\alpha)=\delta_c$, which satisfies
$D_1(\tau,\alpha)<\delta_c$. For $r=1$ and $\sigma=0$, the fixed-point equations
\eqref{eq:se1} reduce, for $\tau_\kappa>0$, to
\[
 H_1(\tau_\kappa,\alpha_\kappa)=\delta_c,
 \qquad \alpha_\kappa=\kappa.
\]
Thus, for $\kappa=\alpha>\alpha_c$, the pair
$(\tau_\kappa,\alpha_\kappa)=(\tau(\alpha),\alpha)$
solves these equations. Moreover,
$D_1(\tau(\alpha),\alpha)<\delta_c$ ensures that
$R_\kappa>0$, as required by Lemma~\ref{thm:main-nonint}.
Applying that lemma gives the limiting training error
and MSE:
\[
 R_\alpha=\tau(\alpha)
 \left(1-\frac{D_1(\tau(\alpha),\alpha)}{\delta_c}\right)>0,
 \qquad
 \overline{\mathsf{MSE}}_1(\alpha)=\delta_c\tau(\alpha)^2.
\]
For every fixed $\tau>0$, $H_1(\tau,\alpha_c)<\delta_c$.
By continuity, this inequality holds for $\alpha$ close
to $\alpha_c$, so the solution $\tau(\alpha)$ must lie below
that fixed $\tau$. Thus $\tau(\alpha)\to0$ as
$\alpha\downarrow\alpha_c$, and $0<R_\alpha\leq\tau(\alpha)$
implies $R_\alpha\to0$. For any fixed $\kappa\leq\alpha_c$, the training error is bounded by  Lemma~\ref{lem:residual-monotonicity} and hence an application of Lemma~\ref{thm:main-nonint} gives that the training error at
$\kappa$ converges almost surely to zero, which in turn implies $\kappa_1^\star=\alpha_c$.

We next estimate the partial derivatives of $H_1(\tau,\alpha)$
and $D_1(\tau,\alpha)$ as $\tau\downarrow0$, evaluating these estimates at $\tau=\tau(\alpha)$, where
$H_1(\tau(\alpha),\alpha)=\delta_c$. In the calculations below, $\tau\downarrow0$ and
$\alpha$ ranges over a fixed compact neighborhood of $\alpha_c$
contained in $(0,\infty)$. %All remainders are uniform in $\alpha$.
Gaussian tail estimates give, for $x\geq0$,
\begin{align*}
 e(x,\alpha)
 +\frac{e_\alpha(x,\alpha)^2+x^2e_x(x,\alpha)^2}
        {e(x,\alpha)}
 &\leq C e^{-c x^2},\qquad
 \Pp(|x+G|\leq\alpha)
 +x\left|\partial_x\Pp(|x+G|\leq\alpha)\right|
 \leq C e^{-c x^2}.
\end{align*}
%For the first bound, $e(x,\alpha)$ is comparable to $x^{-1}\exp(-(x-\alpha)^2/2)$ for large $x$, and $|e_\alpha|+|xe_x|\leq C(1+x)^2e$. On bounded $x$-intervals, continuity and positivity of $e$ give the same bound. The second follows directly from the Gaussian density formula.
Furthermore, \eqref{eq:r1-critical-small-ball}
implies that, for each $c>0$,
\[
 \frac1{\tau^2}\E\exp\left(-\frac{cW^2}{\rho\tau^2}\right)
 =\int_0^\infty 2cx e^{-cx^2}
   \frac{\Pp(|W|\leq\tau\sqrt\rho\,x)}{\tau^2}\,dx
 \longrightarrow0.
\]
%The integrand tends to zero at each $x>0$ and is bounded by $Cx^3e^{-cx^2}$, since $\Pp(|W|\leq t)\leq Ct^2$ for all $t>0$. This justifies dominated convergence.
%The formula for $H_1$ and the bound for $e$ now give
As a result, we have
\begin{equation}
 \begin{aligned}
 0<h(\alpha)-H_1(\tau,\alpha)
 &\leq C\E\exp\left(-\frac{cW^2}{\rho\tau^2}\right)
 =o(\tau^2),\\
 \frac{|h'(\alpha)-\partial_\alpha H_1|
       +\tau|\partial_\tau H_1|}
      {\sqrt{h(\alpha)-H_1(\tau,\alpha)}}
 &\leq C\left(\E\exp\left(-\frac{cW^2}{\rho\tau^2}\right)\right)^{1/2}
 =o(\tau).
 \end{aligned}
 \label{eq:r1-critical-direct-estimates}
\end{equation}
%For the second inequality, differentiate the expectation defining $H_1$ and apply Cauchy--Schwarz with weight $e$; the absolute value of the $\tau$ derivative contributes $|xe_x|/\tau$, evaluated at $x=|W|/(\tau\sqrt\rho)$. The displayed Gaussian bounds also justify these differentiations under the expectation.
Since $h$ is three times continuously differentiable near
$\alpha_c$, with
$h(\alpha_c)=\delta_c$, $h'(\alpha_c)=0$ and
$h''(\alpha_c)=2\delta_c$, a Taylor expansion gives 
\begin{align*}
 h(\alpha)-\delta_c
 &=\delta_c(\alpha-\alpha_c)^2
   +O\bigl((\alpha-\alpha_c)^3\bigr),\qquad
 h'(\alpha)
 =2\delta_c(\alpha-\alpha_c)
   +O\bigl((\alpha-\alpha_c)^2\bigr),
\end{align*}
which combined with
\eqref{eq:r1-critical-direct-estimates} yields
\[
 \alpha-\alpha_c=o(\tau),\qquad
 \partial_\alpha H_1\sim2\delta_c(\alpha-\alpha_c)>0,
 \qquad
 \partial_\tau H_1=o(\alpha-\alpha_c).
\]
Since $\partial_\tau H_1<0$, $\tau(\alpha)$ is strictly increasing near $\alpha_c$ and it therefore has a local inverse $\alpha(\tau)$, satisfying
\[
 0<\alpha'(\tau)
 =-\frac{\partial_\tau H_1}{\partial_\alpha H_1}=o(1),
\]
as $\tau\downarrow0$.

Note that
\[
 \left|D_1-\rho-(1-\rho)\Pp(|G|>\alpha)\right|
 +\tau|\partial_\tau D_1|
 \leq C\E\exp\left(-\frac{cW^2}{\rho\tau^2}\right)
 =o(\tau^2),
\]
which implies that
\[
D_1=\rho+(1-\rho)\Pp(|G|>\alpha)+o(\tau^2),\qquad \partial_\tau D_1=o(\tau).
\]
Furthermore, $|\partial_\alpha D_1|\leq2/\sqrt{2\pi}$, since
$-\partial_\alpha D_1$ is an average of sums of two Gaussian densities.

We now evaluate $D_1$ and its partial derivatives at
$(\tau,\alpha(\tau))$.
As $\rho+(1-\rho)\Pp(|G|>\alpha_c)=\delta_c$ and
$\alpha(\tau)-\alpha_c=o(\tau)$, we have
$$1-D_1/\delta_c=o(\tau).$$
Differentiating $R_{\alpha(\tau)}=\tau(1-D_1/\delta_c)$
therefore gives
\[
 0<\frac{d}{d\tau}R_{\alpha(\tau)}
 =1-\frac{D_1}{\delta_c}
   -\frac{\tau}{\delta_c}
    \left(\partial_\tau D_1+
          \partial_\alpha D_1\,\alpha'(\tau)\right)
 =o(\tau).
\]
%Positivity follows from $D_1<\delta_c$ and the derivative signs established above. The three terms are respectively $o(\tau)$, $o(\tau^2)$ and $o(\tau)$, using $\alpha'(\tau)=o(1)$ for the last term.
Finally, as $\kappa=\alpha(\tau)$,
$\kappa\downarrow\kappa_1^\star=\alpha_c$ implies $\tau\downarrow0$.
%For every sufficiently small $\tau>0$, the chain rule cancels the common nonzero factor $\alpha'(\tau)$ in the derivative ratio. 
Using $\overline{\mathsf{MSE}}_1(\alpha(\tau))
=\delta_c\tau^2$, we therefore obtain
\[
 S_1
 =\lim_{\kappa\downarrow\kappa_1^\star}
 \frac{d\overline{\mathsf{MSE}}_1/d\kappa}{dR_\kappa/d\kappa}
 =\lim_{\tau\downarrow0}
 \frac{2\delta_c\tau}{\dfrac{d}{d\tau}R_{\alpha(\tau)}}
 =+\infty,
\]
which gives the desired result.
\end{proof}

\subsection{Exact interpolation hurts generalization for noisy data and $\ell_1$ regularization}
\label{app:proof-r1-negative-slope}

\begin{proposition}[$S_1< 0$ for $\ell_1$ regularization and $\sigma>0$]
\label{prop:r1-negative-slope}
Let $r=1$, $\delta\in(0,1)$, $\sigma>0$, and $0<\E B^2<\infty$. Then the limit $S_1$ in \eqref{eq:slope-definition} exists and is strictly negative. 
\end{proposition}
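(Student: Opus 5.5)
The plan is to reduce $S_1$ to a sign computation on the scalar system, mirroring Lemma~\ref{prop:mse-slope-formula} but exploiting the simplifications for $r=1$.

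\textbf{Step 1: attachment at the threshold.} For $r=1$ the second equation in \eqref{eq:se1} reads $\alpha_\kappa=\kappa$, so the system collapses to a single equation
\[
F(\tau,\kappa):=\tau^2\Bigl(1-\tfrac{H_1(\tau,\kappa)}{\delta}\Bigr)-\sigma^2=0 .
\]
The uniqueness part of Lemma~\ref{prop:population} (case \emph{(iii)}, via the profile $\Phi_1$ and $K(\tau)$) shows that for each $\kappa$ this equation has exactly one root $\tau_\kappa>\sigma$. I will first establish an $r=1$ analogue of Lemma~\ref{prop:branch-attachment}:
\begin{itemize}
\item existence of $(\tau_{\mathrm{int}},\alpha_{\mathrm{int}})$ with $D_1=\delta$, by the same intermediate-value argument on $\alpha\mapsto D_1(\tau,\alpha)$, which is strictly decreasing since $\partial_\alpha D_1=-\E[\phi(\alpha-A)+\phi(\alpha+A)]<0$;
\item $\kappa_1^\star=\alpha_{\mathrm{int}}$, using Lemma~\ref{lem:residual-monotonicity}, Lemma~\ref{thm:main-nonint} for $\kappa>\alpha_{\mathrm{int}}$, and Lemma~\ref{thm:min-norm-interpolator} at the threshold.
\end{itemize}
Here $A=B/\tau$.

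\textbf{Step 2: formula for $S_1$.} By \eqref{eq:Hprime}, $\partial_\tau H_1\le0$, and $1-H_1/\delta=\sigma^2/\tau^2>0$, so
\[
F_\tau=\frac{2\sigma^2}{\tau}-\frac{\tau^2}{\delta}\partial_\tau H_1>0 .
\]
The implicit function theorem then gives a $C^1$ curve $\kappa\mapsto\tau_\kappa$ with
\[
\frac{d\tau}{d\kappa}=\frac{\tau^2\,\partial_\alpha H_1}{\delta F_\tau}.
\]
Differentiating $\overline{\mathsf{MSE}}_1=\delta(\tau^2-\sigma^2)$ and $R_\kappa=\tau(1-D_1/\delta)$ at $D_1=\delta$ yields
\[
\frac{d\overline{\mathsf{MSE}}_1}{d\kappa}=2\delta\tau\,\frac{d\tau}{d\kappa},
\qquad
\frac{dR}{d\kappa}=-\frac{\tau}{\delta}\Bigl(\partial_\tau D_1\,\frac{d\tau}{d\kappa}+\partial_\alpha D_1\Bigr).
\]
Both derivatives are continuous up to $\kappa_1^\star$ (dominated convergence, as in Lemma~\ref{prop:mse-slope-formula}), so the limit in \eqref{eq:slope-definition} exists and equals their ratio evaluated at $(\tau_{\mathrm{int}},\alpha_{\mathrm{int}})$.

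\textbf{Step 3: signs.} By Stein's lemma, $\partial_\tau D_1=-\tau^{-1}\E[G A\,\mathbf 1\{\cdot\}']$ has an explicit Gaussian-density form. Differentiating $q=\ST(A+G;\alpha)-A$ in $\alpha$ gives
\[
\partial_\alpha H_1=-2\E\bigl[(G\operatorname{sign}(z)-\alpha)\mathbf 1_{\{|z|>\alpha\}}\bigr]=2\bigl(\alpha D_1+\partial_\alpha D_1\bigr),\qquad z=A+G .
\]
I will show two things:
\begin{enumerate}
\item[(a)] $\partial_\alpha H_1<0$ at the interpolation point, hence $d\tau/d\kappa<0$;
\item[(b)] $dR/d\kappa>0$, which should follow from (a) together with $\partial_\alpha D_1<0$ and the sign of $\partial_\tau D_1$, or be extracted from the determinant computation as in \eqref{eq:slope-residual-denominator-positive}.
\end{enumerate}
Together these give $S_1<0$.

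\textbf{Main obstacle.} The hard step is (a), i.e.\ $\alpha_{\mathrm{int}}D_1+\partial_\alpha D_1<0$. A pointwise-in-$A$ inequality fails for large $|A|$, so the noise must enter. What I would try first is to use the two interpolation identities $D_1=\delta$ and $H_1=\delta(1-\sigma^2/\tau^2)<D_1$, combined with the Stein identity
\[
D_1-H_1=\E[Gq]-\E[q^2]=\E[(G-q)q]=\alpha\E|\ST|-\E[Aq],
\]
and write $\partial_\alpha H_1$ as a combination of these quantities plus a remainder with a definite sign.

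If that fails, the fallback is variational. Lemma~\ref{prop:branch-attachment}'s argument shows $f_{\mathrm{int}}$ minimizes $\|f\|_1$ over $\{\mathcal C(f)\le0\}$. Perturbing the threshold $\alpha$ along the family $\tau\ST(A+G;\alpha)$ while tracking $\mathcal C$, I would show that increasing $\alpha$ strictly decreases $\E(f-B)^2$ at the interpolation point whenever $\sigma>0$. The strict inequality should come from the gap $D_1-H_1=\delta\sigma^2/\tau^2>0$.
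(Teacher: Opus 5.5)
Your architecture matches the paper's: you collapse \eqref{eq:se1} to one equation via $\alpha_\kappa=\kappa$, apply the implicit function theorem, and take the ratio of $2\delta\tau\,d\tau/d\kappa$ to $dR/d\kappa$ at $(\tau_{\mathrm{int}},\alpha_{\mathrm{int}})$. However, the two sign facts that make up the actual proof are not established.

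For (a) you only describe an attempt, and the identity you plan to use has a sign error. Since $G-q=Z-\ST(Z;\alpha)$ and $\E[(Z-\ST(Z;\alpha))A]=-\E[Aq]$, one has $D_1-H_1=\alpha\E|\ST(Z;\alpha)|+\E[Aq]$; check $\alpha\to\infty$, where $D_1-H_1=-\E A^2$. The missing idea is to split $(G-q)q$ over the active and inactive sets. It equals $\alpha(\sign(Z)G-\alpha)$ on $\{|Z|>\alpha\}$ and $-AZ$ on $\{|Z|\le\alpha\}$. Combine this with your formula $\partial_\alpha H_1=-2\E[(\sign(Z)G-\alpha)\mathbf 1_{\{|Z|>\alpha\}}]$ and with $D_1-H_1=\delta\sigma^2/\tau^2$ at interpolation. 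This gives
\[
\partial_\alpha H_1=-\frac2\alpha\Bigl(\frac{\delta\sigma^2}{\tau^2}+\E\bigl[AZ\mathbf 1_{\{|Z|\le\alpha\}}\bigr]\Bigr)<0 .
\]
The sign holds because, conditionally on $A$, the truncated mean $\E[Z\mathbf 1_{\{|Z|\le\alpha\}}\mid A]$ has the sign of $A$ (symmetry and monotonicity of the Gaussian density), and $\sigma>0$ supplies strictness. This is exactly the paper's computation.

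For (b), your primary suggestion fails. We have $\partial_\tau D_1\le0$, because $\Pp(|x+G|>\alpha)$ increases in $|x|$ and $|A|=|B|/\tau$ decreases in $\tau$. Together with $d\tau/d\kappa<0$, this makes the first contribution in $dR/d\kappa=-\frac\tau\delta\bigl(\partial_\tau D_1\frac{d\tau}{d\kappa}+\partial_\alpha D_1\bigr)$ nonpositive, so it competes with the positive term $-\frac\tau\delta\partial_\alpha D_1$. Signs alone therefore decide nothing. Nor can you quote \eqref{eq:slope-residual-denominator-positive}, which is derived for $1<r<2$ through the weights $\omega$. What is needed is the interpolation-point identity
\[
\partial_\tau D_1=\frac12\Bigl(\partial_\tau H_1-\frac{2\delta\sigma^2}{\tau^3}\Bigr)-\frac{\alpha}{2\tau}\partial_\alpha H_1 ,
\]
which follows from $\partial_\tau D_1=\tau^{-1}\E[AG\mathbf 1_{\{|Z|\le\alpha\}}]$ (Stein) and the display above. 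Combine it with $\partial_\alpha D_1=\frac12\partial_\alpha H_1-\alpha\delta$ and the implicit-function relation. This yields $\frac{d}{d\kappa}D_1=-\alpha\delta-\frac{\alpha}{2\tau}\partial_\alpha H_1\frac{d\tau}{d\kappa}\le-\alpha\delta$, since $\partial_\alpha H_1\frac{d\tau}{d\kappa}=-(\partial_\alpha H_1)^2\big/\bigl(\partial_\tau H_1-2\delta\sigma^2/\tau^3\bigr)\ge0$. Hence $dR/d\kappa\ge\alpha\tau>0$, and this uses the decomposition from (a) but not the sign of $\partial_\alpha H_1$. Note also that your Step 1 already depends on (b): applying Lemma~\ref{thm:main-nonint} just above $\alpha_{\mathrm{int}}$ requires $D_1<\delta$ along the branch, which is precisely (b).
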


\begin{proof}
For every $\tau\geq\sigma$, the equation
$D_1(\tau,\alpha)=\Pp(|B/\tau+G|>\alpha)=\delta$
yields a unique positive $\alpha$, which is continuous in $\tau$.
Furthermore,
$H_1(\tau,\alpha)-\delta(1-\sigma^2/\tau^2)$ is positive at
$\tau=\sigma$. As $\tau\to\infty$, $\alpha$ converges to the
positive solution of $\Pp(|G|>\alpha)=\delta$, and the same
expression has a strictly negative limit by
Lemma~\ref{lem:null-prior-gap}. Hence, by the intermediate-value theorem, there exists a solution
$(\tau_{\mathrm{int}},\alpha_{\mathrm{int}})$ of
\eqref{eq:int-state-mse}, with $\tau_{\mathrm{int}}>\sigma$.

In the calculations below, we write
$(\tau,\alpha)=(\tau_{\mathrm{int}},\alpha_{\mathrm{int}})$
and recall $Z=B/\tau+G$ from \eqref{eq:sdes}.
Using~\eqref{eq:Hprime} and Gaussian integration by parts, we have
\begin{align*}
 \partial_\tau H_1-\frac{2\delta\sigma^2}{\tau^3}
 &=-\frac2{\tau^3}
   \left(\delta\sigma^2+
    \E\left[B^2\mathbf1_{\{|Z|\leq\alpha\}}\right]\right)<0,\\
 \partial_\alpha H_1
 &=-\frac2\alpha
   \left(\frac{\delta\sigma^2}{\tau^2}+
    \E\left[\frac B\tau Z
       \mathbf1_{\{|Z|\leq\alpha\}}\right]\right)<0.
\end{align*}
%For the second inequality, the expectation is nonnegative: conditional on $B$, the truncated mean
%$\E[Z\mathbf1_{\{|Z|\leq\alpha\}}\mid B]$ has the same sign
%as $B$, by symmetry and monotonicity of the Gaussian density.
By continuity of the partial derivatives, we can apply the implicit-function theorem to the first equation
in \eqref{eq:se1} and obtain a continuously differentiable $\tau(\alpha)$ satisfying %which gives a continuously differentiable local branch $\tau(\alpha)$ through the endpoint, with
\[
 \tau'(\alpha_{\mathrm{int}})
 =-\frac{\partial_\alpha H_1}
 {\partial_\tau H_1-2\delta\sigma^2/\tau^3}<0.
\]
Differentiating $D_1$ and using \eqref{eq:int-state-mse} gives %,
%at the endpoint,
\[
 \partial_\alpha D_1=\frac12\partial_\alpha H_1-\alpha\delta,
 \qquad
 \partial_\tau D_1
 =\frac12\left(\partial_\tau H_1
                 -\frac{2\delta\sigma^2}{\tau^3}\right)
  -\frac\alpha{2\tau}\partial_\alpha H_1.
\]
Consequently,
\[
 \left.\frac{d}{d\alpha}D_1(\tau(\alpha),\alpha)
 \right|_{\alpha=\alpha_{\mathrm{int}}}
 =-\alpha\delta
  -\frac\alpha{2\tau}\partial_\alpha H_1\,
    \tau'(\alpha_{\mathrm{int}})<0.
\]
For $\alpha>\alpha_{\mathrm{int}}$ sufficiently close to it,
we therefore have $D_1<\delta$. Since $\kappa=\alpha$ for $r=1$,
Lemma~\ref{thm:main-nonint} gives
\[
 R_\alpha=\tau(\alpha)
 \left(1-\frac{D_1(\tau(\alpha),\alpha)}\delta\right)>0,
 \qquad
 \overline{\mathsf{MSE}}_1(\alpha)
 =\delta\bigl(\tau(\alpha)^2-\sigma^2\bigr).
\]
As $\alpha\downarrow\alpha_{\mathrm{int}}$, $R_\alpha\to0$ and, following the same argument as for Theorem~\ref{thm:r1-critical-infinite-slope}, we obtain that $\kappa_1^\star=\alpha_{\mathrm{int}}$.
Moreover,
\[
 \left.\frac{d}{d\alpha}R_\alpha
 \right|_{\alpha=\alpha_{\mathrm{int}}}
 =\alpha\tau+\frac\alpha{2\delta}
   \partial_\alpha H_1\,\tau'(\alpha_{\mathrm{int}})>0.
\]
This allows us to conclude that
\[
 S_1
 =\lim_{\alpha\downarrow\alpha_{\mathrm{int}}}
 \frac{2\delta\tau(\alpha)\tau'(\alpha)}
      {\dfrac{d}{d\alpha}R_\alpha}
 =\frac{2\delta\tau_{\mathrm{int}}
          \tau'(\alpha_{\mathrm{int}})}
 {\left.\dfrac{d}{d\alpha}R_\alpha
    \right|_{\alpha=\alpha_{\mathrm{int}}}}<0,
\]
which proves the claim.
\end{proof}

\subsection{Exact interpolation hurts generalization for $\ell_2$ regularization}
\label{app:proof-r2-negative-slope}

\begin{proposition}[$S_2 \le 0$ for $\ell_2$ regularization]
\label{prop:r2-negative-slope} Let $r=2$,
$\delta\in(0,1)$, and $\sigma\geq0$.
Let $B\sim P_B$ have second moment $m:=\E B^2$, where
$0<m+\sigma^2<\infty$.
Then the limit $S_2$ in \eqref{eq:slope-definition} exists and
\begin{equation}
 S_2=-\frac{2\delta^2\sigma^2}
 {(1-\delta)^2\tau_{\mathrm{int}}}\leq0,
 \qquad
 \tau_{\mathrm{int}}^2
 =\frac{\sigma^2}{1-\delta}+\frac{1-\delta}{\delta}m.
 \label{eq:r2-residual-slope}
\end{equation}
Moreover, the limit
\[
 \lim_{\kappa\downarrow\kappa_2^\star}
 \frac{d}{d\kappa}\overline{\mathsf{MSE}}_2(\kappa)
\]
exists and is strictly negative when $\sigma>0$ and zero
when $\sigma=0$.
\end{proposition}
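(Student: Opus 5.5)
For $r=2$ the proximal operator is linear, $\eta_{\alpha,2}(z)=z/(1+\alpha)$, so I would reduce everything to explicit scalar algebra in the variable $c:=1/(1+\alpha)\in(0,1)$ rather than invoke Lemma~\ref{prop:mse-slope-formula}, which is stated only for $1<r<2$. With $Z=B/\tau+G$ one has
\[
D_2=c,\qquad H_2(\tau,\alpha)=c^2+(1-c)^2\frac{m}{\tau^2},\qquad U_2=c^2\Big(1+\frac{m}{\tau^2}\Big).
\]
The first equation in \eqref{eq:se1} becomes
\[
\tau^2\Big(1-\frac{c^2}{\delta}\Big)=\sigma^2+\frac{(1-c)^2m}{\delta},
\]
and the second becomes $\kappa=\alpha U_2^{1/2}=(1-c)\sqrt{\tau^2+m}/\tau$. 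Setting $c=\delta$, which is the interpolation condition $D_2=\delta$ in \eqref{eq:int-state-mse}, gives $\tau^2(1-\delta)=\sigma^2+(1-\delta)^2m/\delta$. This is exactly the formula for $\tau_{\mathrm{int}}$ in \eqref{eq:r2-residual-slope}, and it is positive since $m+\sigma^2>0$. Lemma~\ref{thm:min-norm-interpolator} then applies at this point.

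Next I would parametrize the non-interpolating branch by $c\in(c_0,\delta)$ for some $c_0<\delta$. For $c<\delta$ the relation above gives a unique $\tau(c)^2=\bigl(\sigma^2+(1-c)^2m/\delta\bigr)/(1-c^2/\delta)>0$, and
\[
R=\tau(c)\,(1-c/\delta)>0 .
\]
Hence Lemma~\ref{thm:main-nonint} applies with $\overline{\mathsf{MSE}}_2=\delta(\tau^2-\sigma^2)$. To place this branch on the $\kappa$-axis I need $\kappa(c)=(1-c)\sqrt{1+m/\tau(c)^2}$ to be a $C^1$ bijection near $c=\delta$ with $d\kappa/dc<0$, so that $\kappa\downarrow\kappa(\delta)$ corresponds to $c\uparrow\delta$. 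I would check the sign of $d\kappa/dc$ directly from the explicit formula, using the value of $d\tau^2/dc$ computed below. Then I would identify $\kappa_2^\star=\kappa(\delta)$ as in the proof of Theorem~\ref{thm:r1-critical-infinite-slope}: for $\kappa>\kappa(\delta)$ the residual is positive by Lemma~\ref{thm:main-nonint}; for $\kappa\le\kappa(\delta)$ it vanishes, by Lemma~\ref{lem:residual-monotonicity} together with the exact-penalty argument used in the proof of Lemma~\ref{prop:branch-attachment}, which does not use $r<2$ at that step.

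The slope then follows from the chain rule in $c$; the common factor $dc/d\kappa$ cancels in \eqref{eq:slope-definition}. Differentiating the state equation at $c=\delta$ gives
\[
(1-\delta)\frac{d\tau^2}{dc}=2\tau^2-\frac{2(1-\delta)m}{\delta}=\frac{2\sigma^2}{1-\delta},
\qquad\text{so}\qquad
\frac{d\,\overline{\mathsf{MSE}}_2}{dc}=\frac{2\delta\sigma^2}{(1-\delta)^2}.
\]
Since $1-c/\delta$ vanishes at $c=\delta$, we get $dR/dc=-\tau_{\mathrm{int}}/\delta$. The ratio is therefore $-2\delta^2\sigma^2/\bigl((1-\delta)^2\tau_{\mathrm{int}}\bigr)$, as claimed. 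Moreover $d\overline{\mathsf{MSE}}_2/d\kappa=\bigl(d\overline{\mathsf{MSE}}_2/dc\bigr)\bigl(dc/d\kappa\bigr)$, which is strictly negative when $\sigma>0$ and zero when $\sigma=0$, provided $dc/d\kappa$ is finite and negative.

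The main obstacle is this last proviso: showing that $d\kappa/dc$ at $c=\delta$ is finite and strictly negative. This requires signing $-\sqrt{1+m/\tau^2}-(1-\delta)\,\tfrac{m}{2\tau^4}\sqrt{1+m/\tau^2}^{\,-1}\,\tfrac{d\tau^2}{dc}$, which is clearly negative because $d\tau^2/dc\ge0$. A second delicate point is the degenerate case $m=0$ with $\sigma>0$, where I would check that the formulas above still hold verbatim.
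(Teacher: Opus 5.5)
Your proposal is correct and follows essentially the paper's proof: you use the linear prox $\eta_{\alpha,2}(z)=z/(1+\alpha)$ to parametrize the non-interpolating branch explicitly, differentiate directly at the interpolation point instead of invoking Lemma~\ref{prop:mse-slope-formula}, and identify $\kappa_2^\star$ through residual monotonicity. The differences are cosmetic. You parametrize by $c=1/(1+\alpha)=D_2$ instead of $\alpha$; since $dc/d\alpha=-\delta^2$ at interpolation, your derivatives convert exactly into the paper's $\tau'(\alpha_{\mathrm{int}})$, $dR/d\alpha=\delta\tau_{\mathrm{int}}$ and $\kappa'(\alpha_{\mathrm{int}})>0$. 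For $\kappa\le\kappa_2^\star$ you invoke the exact-penalty transfer, which does hold for $r=2$, whereas the paper just uses Lemma~\ref{lem:residual-monotonicity} together with $R_\kappa\to0$ along the branch; either suffices.
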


\begin{proof}
Since $\eta_{\alpha,2}(z)=z/(1+\alpha)$, we have
$D_2(\tau,\alpha)=1/(1+\alpha)$, so the interpolation
condition $D_2=\delta$ gives
$\alpha_{\mathrm{int}}=(1-\delta)/\delta$.
Substituting $\eta_{\alpha,2}(z)=z/(1+\alpha)$ into
the definitions of $H_2$ and $U_2$ in \eqref{eq:Hdef},
and then into the fixed-point equations \eqref{eq:se1}, gives
\begin{equation}
 \tau(\alpha)^2
 =\frac{\delta\sigma^2(1+\alpha)^2+m\alpha^2}
        {\delta(1+\alpha)^2-1},
 \qquad
 \kappa(\alpha)=\frac{\alpha}{1+\alpha}
 \sqrt{1+\frac m{\tau(\alpha)^2}}.
 \label{eq:r2-branch-parametrization}
\end{equation}
At $\alpha=\alpha_{\mathrm{int}}$, the formula for $\tau$ gives
\[
 \tau(\alpha_{\mathrm{int}})^2
 =\tau_{\mathrm{int}}^2
 =\frac{\sigma^2}{1-\delta}+\frac{1-\delta}{\delta}m>0,
\]
ensuring that $\tau(\alpha)$ and $\kappa(\alpha)$ are smooth near
$\alpha_{\mathrm{int}}$.
For $\alpha>\alpha_{\mathrm{int}}$ and sufficiently close to it,
$(\tau(\alpha),\alpha)$ solves \eqref{eq:se1} at
$\kappa=\kappa(\alpha)$ and satisfies $D_2<\delta$.
By Lemma~\ref{thm:main-nonint}, we thus have
\[
 R_{\kappa(\alpha)}
 =\tau(\alpha)\left(1-\frac1{\delta(1+\alpha)}\right)>0,
 \qquad
 \overline{\mathsf{MSE}}_2(\kappa(\alpha))
 =\delta\bigl(\tau(\alpha)^2-\sigma^2\bigr).
\]
Differentiating at $\alpha_{\mathrm{int}}$ gives
\[
 \tau'(\alpha_{\mathrm{int}})
 =-\frac{\delta^2\sigma^2}{(1-\delta)^2\tau_{\mathrm{int}}},
 \qquad
 \left.\frac{d}{d\alpha}R_{\kappa(\alpha)}
 \right|_{\alpha=\alpha_{\mathrm{int}}}
 =\delta\tau_{\mathrm{int}}>0.
\]
Since $\tau'(\alpha_{\mathrm{int}})\leq0$, differentiating
the formula for $\kappa$ also gives
$\kappa'(\alpha_{\mathrm{int}})
\geq\delta^2\sqrt{1+m/\tau_{\mathrm{int}}^2}>0$.
Thus, $\kappa(\alpha)$ is locally invertible and increasing.
As $\alpha\downarrow\alpha_{\mathrm{int}}$, $R_{\kappa(\alpha)}\to 0$ and, following the same argument as for Theorem~\ref{thm:r1-critical-infinite-slope}, we obtain that
$\kappa_2^\star=\kappa(\alpha_{\mathrm{int}})$. Consequently, $\kappa\downarrow\kappa_2^\star$ corresponds
to $\alpha\downarrow\alpha_{\mathrm{int}}$ and we have
\[
 S_2
 =\lim_{\alpha\downarrow\alpha_{\mathrm{int}}}
 \frac{2\delta\tau(\alpha)\tau'(\alpha)}
      {\dfrac{d}{d\alpha}R_{\kappa(\alpha)}}
 =-\frac{2\delta^2\sigma^2}
 {(1-\delta)^2\tau_{\mathrm{int}}}.
\]
As $\kappa\downarrow\kappa_2^\star$, the MSE derivative tends to
$2\delta\tau_{\mathrm{int}}\tau'(\alpha_{\mathrm{int}})
/\kappa'(\alpha_{\mathrm{int}})$, which is strictly negative
when $\sigma>0$ and zero when $\sigma=0$, thus concluding the argument.
\end{proof}

\subsection{Proof of Theorem~\ref{prop:sparsity-monotonicity}}
\label{app:proof-sparsity-monotonicity}

\begin{proof}
Let $s=1/(r-1)>1$, define
\[
 F(x):=\E\left[(x+G)|x+G|^{s-1}\right],
\]
and set
\begin{equation}
 \mathcal S(\lambda)
 :=\frac{2}{\E|G|^{s+1}}
 \left(
  \sqrt\lambda\,\E\left[WF\left(\frac W{\sqrt\lambda}\right)\right]
  -\frac{\E|G|^{2s}}{\E|G|^{s+1}}
 \right).
 \label{eq:sparsity-limiting-slope}
\end{equation}
As in the proof of Theorem~\ref{thm:sparse-decreasing-slope},
we rescale the interpolation parameters and proximal function as
\[
 t=\sqrt\delta\,\tau,\qquad
 a=\delta^{r-1}\alpha,\qquad
 v_\delta(z)=\delta^{-1}\eta_{a\delta^{1-r},r}(z).
\]
Here $r$ is fixed, and the partial derivatives in the expansions
below are taken with respect to $(t,a,\lambda)$.
Differentiation in $\lambda$ has the same polynomial growth
as differentiation in $t$, and it introduces no logarithmic factors.
Thus, using \eqref{eq:sparse-uniform-prox-bound},
\eqref{eq:sparse-v-derivatives}, and
\eqref{eq:sparse-quadratic-cutoff}, the same argument gives
\eqref{eq:sparse-rescaled-system-limit} in $C^2(t,a,\lambda)$,
uniformly on compact positive parameter sets.
Indeed, after integration over $G$, the bounds for the linear
terms and their derivatives are $C(1+|W|^{s+1})$.
The cutoff \eqref{eq:sparse-quadratic-cutoff} similarly controls
$\delta v_\delta^2$ and its derivatives on $\{\xi_\rho=1\}$,
whose expectations over $W,G$ tend to zero by dominated convergence.
%Hence the assumption $\E|W|^{s+1}<\infty$ suffices.

The argument based on the implicit function theorem which leads to
\eqref{eq:sparse-boundary-limit} gives a positive solution
$(\tau_{\mathrm{int}},\alpha_{\mathrm{int}})$ of the
fixed point equations in \eqref{eq:int-state-mse}, namely
\[
 \tau_{\mathrm{int}}^2
 =\sigma^2+\frac{\tau_{\mathrm{int}}^2}{\delta}
 H_r(\tau_{\mathrm{int}},\alpha_{\mathrm{int}}),
 \qquad
 D_r(\tau_{\mathrm{int}},\alpha_{\mathrm{int}})=\delta,
\]
also when $\sigma=0$.
By Lemma~\ref{prop:branch-attachment}, this solution is unique
and $(\tau_\kappa,\alpha_\kappa)\to
(\tau_{\mathrm{int}},\alpha_{\mathrm{int}})$ as
$\kappa\downarrow\kappa_r^\star$.
Substituting into \eqref{eq:mse-slope-formula}, as in the
derivation of \eqref{eq:sparse-fixed-lambda-limit}, gives
\begin{equation}
 \delta^{-3/2}S_r
 =\mathcal S(\lambda)+o_{C^1}(1),
 \label{eq:sparsity-lambda-C1-limit}
\end{equation}
as $\delta\downarrow0$ with $\rho=\lambda\delta$ and $\sigma$
fixed. Here the remainder and its first derivative in $\lambda$
converge uniformly to zero on $[\lambda_-,\lambda_+]$.

We next show that $\mathcal S'(\lambda)<0$ for every $\lambda>0$.
By symmetry, $F$ is odd, so
\begin{equation}
 \sqrt\lambda\,\E\left[WF\left(\frac W{\sqrt\lambda}\right)\right]
 =\E\left[
   W^2\frac{F(|W|/\sqrt\lambda)}{|W|/\sqrt\lambda}
 \right],
 \label{eq:sparsity-size-biased-identity}
\end{equation}
where the ratio at zero is defined as $F'(0)$.
Moreover, $F'(x)=s\E|x+G|^{s-1}$ is strictly increasing for $x>0$:
for $x,u>0$,
\[
 \frac{\partial}{\partial x}\Pp(|x+G|>u)
 =\frac{e^{-(u-x)^2/2}-e^{-(u+x)^2/2}}{\sqrt{2\pi}}>0.
\]
Since $F(0)=0$, it follows that
\begin{equation}
 F'(x)-\frac{F(x)}x
 =F'(x)-\frac1x\int_0^xF'(u)\,du>0,
 \qquad x>0.
 \label{eq:sparsity-positive-bracket}
\end{equation}
The bound $|F'(x)|+|F(x)|/x\leq C(1+x^{s-1})$
and $\E|W|^{s+1}<\infty$ justify differentiation under the
expectation in \eqref{eq:sparsity-size-biased-identity}, yielding
\begin{equation}
 \mathcal S'(\lambda)
 =-\frac1{\lambda\E|G|^{s+1}}
 \E\left[
 W^2\left(
 F'\left(\frac{|W|}{\sqrt\lambda}\right)
 -\frac{F(|W|/\sqrt\lambda)}{|W|/\sqrt\lambda}
 \right)\right]<0,
 \label{eq:sparsity-limiting-derivative}
\end{equation}
where the term in brackets is defined as zero when $W=0$.
The strict inequality follows from \eqref{eq:sparsity-positive-bracket}
and $\E W^2=1$, and the continuity of $\mathcal S'$ follows by dominated convergence.

For fixed $\delta$, \eqref{eq:sparsity-lambda-C1-limit}
and $\partial_\rho=\delta^{-1}\partial_\lambda$ give
\[
 \frac{\partial S_r}{\partial\rho}
 =\delta^{1/2}\left\{\mathcal S'(\rho/\delta)+o(1)\right\},
 \qquad
 \rho\in[\lambda_-\delta,\lambda_+\delta].
\]
Since $\mathcal S'$ has a strictly negative maximum on
$[\lambda_-,\lambda_+]$, this derivative is negative throughout
the required interval for all sufficiently small $\delta$.
Finally, the same argument used for \eqref{eq:sparse-slope-asymptotic} gives
\[
 \lim_{\lambda\downarrow0}
 \lambda^{(s-1)/2}\mathcal S(\lambda)
 =\frac{2\E|W|^{s+1}}{\E|G|^{s+1}}>0.
\]
Let us choose $\lambda_c=\lambda_c(r,W)>0$ so that
$\mathcal S(\lambda)>0$ for $0<\lambda<\lambda_c$.
If $\lambda_+<\lambda_c$, then
$\mathcal S(\lambda)\geq\mathcal S(\lambda_+)>0$
on $[\lambda_-,\lambda_+]$.
The uniform convergence in \eqref{eq:sparsity-lambda-C1-limit}
therefore gives $S_r>0$ throughout the required interval, concluding the argument. 
%All choices of $\delta_0$ depend only on $r,W,\lambda_-,\lambda_+,\sigma$, and we choose $\delta_0<\min\{1,1/\lambda_+\}$ to ensure $0<\rho<1$.
\end{proof}

\subsection{Exact interpolation hurts generalization without structure}
\label{app:proof-gaussian-negative-slope}

\begin{proposition}[$S_r<0$ for Gaussian $B$]
\label{prop:gaussian-negative-slope}
Let 
$\delta\in(0,1)$ and $\sigma\geq0$.
Suppose that $B\sim N(0,m)$ for some $m>0$.
Then, for every $1\leq r<2$, the limits below exist,
are finite, and satisfy
\[
 \lim_{\kappa\downarrow\kappa_r^\star}
 \frac{d}{d\kappa}\overline{\mathsf{MSE}}_r(\kappa)<0,
 \qquad
 S_r<0,
\]
where $S_r$ is defined in \eqref{eq:slope-definition}.
\end{proposition}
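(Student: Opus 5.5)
The plan is to reduce everything to a scalar Gaussian computation. The key observation is that for $B\sim N(0,m)$ the effective observation $Z=B/\tau+G$ is itself centered Gaussian with variance $c^2:=1+m/\tau^2$, and
\[
\E[B/\tau\mid Z]=\theta Z,\qquad \theta:=\frac{m/\tau^2}{1+m/\tau^2}\in(0,1).
\]
Consequently every expectation of the form $\E[(B/\tau)\,\varphi(Z)]$ appearing in \eqref{eq:slope-H-alpha-direct}--\eqref{eq:slope-U-alpha-direct} equals $\theta\,\E[Z\varphi(Z)]$.

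\textbf{Step 1: endpoint exists and reduce to a sign.} For $1<r<2$ and $\sigma>0$, Lemma~\ref{prop:branch-attachment} gives a unique interpolation endpoint. For $\sigma=0$, the same intermediate-value argument applies because $\Pp(B\ne0)=1$, so $g(0^+)>0$ while $g(\infty)<0$; I would redo that argument to produce the endpoint. Lemma~\ref{prop:mse-slope-formula} then gives \eqref{eq:mse-slope-formula}. Its proof shows that the denominator is strictly positive (equation \eqref{eq:slope-residual-denominator-positive}), and that $d\tau/d\kappa$ is proportional to $\partial_\alpha H_r$ with a positive Jacobian, by \eqref{eq:slope-tau-kappa}. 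Hence both claims, the negativity of $\lim d\overline{\mathsf{MSE}}_r/d\kappa$ and $S_r<0$, reduce to showing
\[
\partial_\alpha H_r(\tau_{\mathrm{int}},\alpha_{\mathrm{int}})<0 .
\]

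\textbf{Step 2: rewrite $\partial_\alpha H_r$.} Decompose $H_r=\E(v-\theta Z)^2+\E\operatorname{Var}(B/\tau\mid Z)$; the second term does not depend on $\alpha$. Using $\partial_\alpha v=-\psi_r(v)\eta'_{\alpha,r}(Z)$, this gives
\[
\partial_\alpha H_r=-2\,\E\bigl[(v-\theta Z)\psi_r(v)\eta'_{\alpha,r}(Z)\bigr],
\]
so the target becomes $\E[(v-\theta Z)\psi_r(v)\eta'(Z)]>0$. Informally, at interpolation the $\ell_r$ estimator shrinks less, on average in the weighted sense, than the Bayes-linear rule $\theta Z$.

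\textbf{Step 3: the positivity inequality, which I expect to be the main obstacle.} Since $\eta_{\alpha,r}$ is odd, increasing, and for $r<2$ shrinks small $|Z|$ proportionally more than large ones, the function $x\mapsto v(x)/x$ is increasing on $x>0$. Therefore $(v-\theta Z)\psi_r(v)$ is negative on some interval $|Z|<z_0$ and positive beyond it; this is a single-crossing property. The interpolation identities supply the needed constraint. Conditioning the identity $D_r-H_r=\delta\sigma^2/\tau^2\ge0$ from the proof of Lemma~\ref{prop:mse-slope-formula} gives
\[
\theta\E[Zv]-\theta c^2+\alpha\E|v|^r\ge 0,
\]
and, combined with $D_r=\E[Gv]=(1-\theta)\E[Zv]=\delta$, this yields an unweighted inequality of the form $\E[(v-\theta Z)Z]\ge0$. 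The first thing I would try is to pass from this unweighted moment inequality to the weighted one. The weight $\psi_r(v)\eta'(Z)/Z$ is increasing in $|Z|$ (both factors are monotone for $1<r<2$), so a Chebyshev/single-crossing comparison should do it: writing $w(|Z|)$ for the weight, one has
\[
\E[(v-\theta Z)Z\,w]\ge w(z_0)\,\E[(v-\theta Z)Z]\ge0,
\]
with strictness because the law of $Z$ is nondegenerate. The delicate point is verifying the monotonicity of the weight and whether strictness survives when $\sigma=0$. If the monotonicity fails, the fallback is to compute both sides explicitly in the variable $Z=cG'$ and use Gaussian integration by parts.

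\textbf{Step 4: the case $r=1$.} For $\sigma>0$ this is Proposition~\ref{prop:r1-negative-slope}. For $\sigma=0$, I would follow that proof's computation: the formula for $\partial_\alpha H_1$ becomes $-\tfrac{2}{\alpha}\theta\,\E[Z^2\mathbf 1_{\{|Z|\le\alpha\}}]<0$. Since $B$ is Gaussian, $\theta$ is bounded away from zero, so this expression is strictly negative, and the remaining sign bookkeeping for $\tau'$ and $dR/d\alpha$ goes through unchanged, yielding both limits negative and finite.
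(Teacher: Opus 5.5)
Your Steps 1, 2 and 4 are fine. By Lemma~\ref{prop:mse-slope-formula}, the Jacobian is positive, $d\tau/d\kappa$ is proportional to $\partial_\alpha H_r$, and the residual slope is positive, so both claims reduce to $\partial_\alpha H_r<0$ at the endpoint. The Bayes decomposition correctly gives $\partial_\alpha H_r=-2\E[(v-\theta Z)\psi_r(v)\eta'_{\alpha,r}(Z)]$, and your $r=1$ formula $\partial_\alpha H_1=-\tfrac{2}{\alpha}\theta\E[Z^2\mathbf 1_{\{|Z|\le\alpha\}}]$ is correct.

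The gap is in Step 3, which carries the entire content of the proposition for $1<r<2$: the weight $w=\psi_r(v)\eta'_{\alpha,r}(Z)/Z$ is not increasing in $|Z|$. Put $y:=|v|^{2-r}$ and use $|Z|=|v|+\alpha|v|^{r-1}$. Then $\psi_r(v)/Z=1/(y+\alpha)$ is decreasing, while $\eta'_{\alpha,r}(Z)=y/(y+\alpha(r-1))$ is increasing. The two factors move in opposite directions, and
\[
 w=\frac{y}{(y+\alpha)\bigl(y+\alpha(r-1)\bigr)}
\]
tends to $0$ both as $|Z|\to0$ and as $|Z|\to\infty$, with its peak at $y=\alpha\sqrt{r-1}$. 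Hence the comparison $\E[(v-\theta Z)Zw]\ge w(z_0)\E[(v-\theta Z)Z]$ is unjustified. The stated fallback is not an argument, so the key inequality is left unproved.

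The repair is to pair the factors differently. Using $Z=v+\alpha\psi_r(v)$, $\E[Zv]=\E v^2+\alpha\E|v|^r$, $c^2=\E Z^2$ and $\E[(B/\tau)^2]=\theta c^2$, the identity you quote becomes exactly
\[
 D_r-H_r=\alpha\,\E\bigl[(v-\theta Z)\psi_r(v)\bigr]=\frac{\delta\sigma^2}{\tau^2}\ge0.
\]
So the right unweighted inequality is $\E[(v-\theta Z)\psi_r(v)]\ge0$. By contrast, $\E[(v-\theta Z)Z]=c^2(\delta-\theta)$ only records the weaker consequence $\theta\le\delta$.

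The remaining weight is then just $\eta'_{\alpha,r}(Z)=1/(1+\alpha(r-1)|v|^{r-2})$, which is strictly increasing in $|Z|$. Moreover, $(v-\theta Z)\psi_r(v)$ is even, with the single sign change at $|Z|=z_0$ that you identified (negative inside). Hence $(v-\theta Z)\psi_r(v)\bigl(\eta'_{\alpha,r}(Z)-\eta'_{\alpha,r}(z_0)\bigr)>0$ for $|Z|\notin\{0,z_0\}$. This gives $\E[(v-\theta Z)\psi_r(v)\eta'_{\alpha,r}(Z)]>\eta'_{\alpha,r}(z_0)\E[(v-\theta Z)\psi_r(v)]\ge0$, including when $\sigma=0$.

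With this repair your argument is a valid alternative to the paper's. The paper parametrizes the branch by the rescaled threshold $a$, using that $Z$ has the law of $cG$. It proves $\tau'(a_*)<0$ via positive covariance of $|v_{a_*}|^{2-r}$ and $\eta'_{a_*,r}(G)$ under the $|v_{a_*}|^{2r-2}$-weighted law, which after rescaling is the same comparison as above. It also gets existence of the endpoint at $\sigma=0$ directly from the explicit formula for $\tau(a_*)$. Your route instead plugs straight into the general slope formula of Lemma~\ref{prop:mse-slope-formula}.
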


\begin{proof}
Fix $1\leq r<2$ and write $v_a:=\eta_{a,r}(G)$ for $a>0$,
with $G\sim N(0,1)$. From the proximal characterization, we have  that
$a\mapsto\E[Gv_a]$ is continuously differentiable with strictly
negative derivative, decreasing from $1$ to $0$.
Thus, there is a unique $a_*>0$ such that
\[
 \E[Gv_{a_*}]=\E\eta'_{a_*,r}(G)=\delta,
 \qquad
 \delta-\E v_{a_*}^2=a_*\E|v_{a_*}|^r>0,
\]
where the first equality follows from Gaussian integration by parts
and the second identity  from the proximal equation.
For $a$ near $a_*$, define
\begin{align}
 \tau(a)^2
 &=\frac{m\E(G-v_a)^2+\delta\sigma^2}{\delta-\E v_a^2},
\qquad \alpha(a)=a\left(1+\frac m{\tau(a)^2}\right)^{(2-r)/2},\notag\\ \kappa(a)
 &=a\sqrt{1+\frac m{\tau(a)^2}}
       \bigl(\E|v_a|^r\bigr)^{(r-1)/r}.
 \label{eq:gaussian-slope-parametrization}
\end{align}
These functions are positive and continuously differentiable. %
%the denominator defining $\tau^2$ is positive at $a_*$, and
%$m\E(G-v_{a_*})^2>0$.
From the definition of $\alpha(a)$, we readily have
\[
 \eta_{\alpha(a),r}\left(\sqrt{1+\frac m{\tau(a)^2}}\,G\right)
 =\sqrt{1+\frac m{\tau(a)^2}}\,v_a.
\]
Since $B/\tau+G$ is Gaussian with variance $1+m/\tau^2$ and
$\E[B/\tau\mid B/\tau+G]=m(B/\tau+G)/(m+\tau^2)$, we have
\[
 \begin{aligned}
 D_r(\tau(a),\alpha(a))&=\E[Gv_a],\\
 H_r(\tau(a),\alpha(a))
 &=\E v_a^2+\frac m{\tau(a)^2}\E(G-v_a)^2,\\
 U_r(\tau(a),\alpha(a))
 &=\left(1+\frac m{\tau(a)^2}\right)^{r/2}\E|v_a|^r.
 \end{aligned}
\]
Consequently, $(\tau(a),\alpha(a))$ solves the fixed-point
equations \eqref{eq:se1} at $\kappa=\kappa(a)$.
For $a>a_*$ and sufficiently close to it, $D_r<\delta$, so
Lemma~\ref{thm:main-nonint} gives
\[
 R_{\kappa(a)}=\tau(a)\left(1-\frac{\E[Gv_a]}\delta\right)>0,
 \qquad
 \overline{\mathsf{MSE}}_r(\kappa(a))
 =\delta\bigl(\tau(a)^2-\sigma^2\bigr).
\]

We next prove that $\tau'(a_*)<0$.
Suppose first that $1<r<2$. At $a=a_*$, \eqref{eq:prox-eq} and
$\E[Gv_{a_*}]=\delta$ imply
\[
 \delta-\E v_{a_*}^2=a_*\E|v_{a_*}|^r,
 \qquad
 \E(G-v_{a_*})^2=a_*^2\E|v_{a_*}|^{2r-2}.
\]
Differentiating the formula for $\tau(a)$ in
\eqref{eq:gaussian-slope-parametrization}, then using these
identities and $\sigma^2\geq0$, gives
\begin{equation}
 \frac{\tau'(a_*)}{\tau(a_*)}
 \leq\frac1{a_*}\left(
 \frac{\E[|v_{a_*}|^{2r-2}\eta'_{a_*,r}(G)]}
      {\E|v_{a_*}|^{2r-2}}
 -
 \frac{\E[|v_{a_*}|^r\eta'_{a_*,r}(G)]}
      {\E|v_{a_*}|^r}
 \right)<0.
 \label{eq:gaussian-tau-derivative}
\end{equation}
To justify the strict inequality, observe that
\[
 \eta'_{a,r}(G)=\frac1{1+a(r-1)|v_a|^{r-2}}.
\]
Under the probability distribution obtained by weighting
the law of $|v_a|$ by $|v_a|^{2r-2}$, the functions
$x^{2-r}$ and $1/(1+a(r-1)x^{r-2})$ have strictly positive
covariance, since both are strictly increasing and the distribution
is nondegenerate. This is precisely the strict inequality
between the two weighted averages in
\eqref{eq:gaussian-tau-derivative}.

For $r=1$, we have
$v_a=\operatorname{sign}(G)(|G|-a)_+$ and
$\E[Gv_a]=\Pp(|G|>a)$.
Differentiating the formula for $\tau(a)$ at $a_*$ gives
\[
 \tau(a_*)\tau'(a_*)
 =-\frac{m\E[G^2\mathbf1_{\{|G|\leq a_*\}}]+\delta\sigma^2}
 {a_*^2\E(|G|-a_*)_+}<0,
\]
where the inequality is strict because $m>0$ and $a_*>0$.

For every $1\leq r<2$, by computing the derivative, we have
\[
 \frac{d}{da}\left[a\bigl(\E|v_a|^r\bigr)^{(r-1)/r}\right]
 =\bigl(\E|v_a|^r\bigr)^{-1/r}
   \E[|v_a|^r\eta'_{a,r}(G)]>0.
\]
Together with $\tau'(a_*)<0$, this implies
$\kappa'(a_*)>0$ in \eqref{eq:gaussian-slope-parametrization}.
Thus, $\kappa(a)$ is locally invertible and increasing.
As $a\downarrow a_*$, $R_{\kappa(a)}\to0$ and, following the same argument as for Theorem~\ref{thm:r1-critical-infinite-slope}, we obtain
$\kappa_r^\star=\kappa(a_*)$.
At $a_*$, the fixed point equations in \eqref{eq:int-state-mse}
also hold, so the limiting MSE of the MNI equals $\delta(\tau(a_*)^2-\sigma^2)$ by Lemma~\ref{thm:min-norm-interpolator}. Finally,
\[
 \left.\frac{d}{da}R_{\kappa(a)}\right|_{a=a_*}
 =-\frac{\tau(a_*)}\delta
   \left.\frac{d}{da}\E[Gv_a]\right|_{a=a_*}>0.
\]
Since $\kappa\downarrow\kappa_r^\star$ corresponds to
$a\downarrow a_*$, the chain rule gives
\[
 \lim_{\kappa\downarrow\kappa_r^\star}
 \frac{d}{d\kappa}\overline{\mathsf{MSE}}_r(\kappa)
 =\frac{2\delta\tau(a_*)\tau'(a_*)}{\kappa'(a_*)}<0,
 \qquad
 S_r=\frac{2\delta\tau(a_*)\tau'(a_*)}
 {\left.\dfrac{d}{da}R_{\kappa(a)}\right|_{a=a_*}}<0,
\]
which proves the desired result.
\end{proof}

\section{Additional numerical experiments and implementation details}
\label{app:Experiment}

\subsection{Early-stopped gradient descent as $Q_k$-regularized regression}

\citet{azulay2021implicit} show that, when gradient flow on a diagonal linear network converges to an interpolating solution $\hat\beta(\infty)$, this solution satisfies $\hat\beta(\infty)=\argmin_{X\beta=y}Q_k(\beta)$, where $Q_k$ is the initialization-dependent regularizer defined in Section~\ref{sec:Numerical_Experiments}.
Here, we show experimentally that this correspondence extends beyond interpolation: the early-stopped iterates $\hat\beta(t)$ closely track the solutions of
\[
 \min_{\beta\in\R^p}\ \|y-X\beta\|_2+\frac{\kappa}{\sqrt n}\,Q_k(\beta)
\]
along the regularization path, with larger stopping times corresponding to smaller $\kappa$.

\paragraph{Experimental Setup.}
We draw $\beta\in\R^p$ from the three-point prior of Section~\ref{sec:Numerical_Experiments} with $\rho=0.1$, i.e., $B$ follows \eqref{eq:sparse-prior-family} with $W$ a Rademacher random variable.
The design has i.i.d.\ entries $X_{ij}\sim\mathcal N(0,1/n)$, and $y=X\beta+\sigma z$ with $z\sim\mathcal N(0,I_n)$, where $n=2000$, $p=4000$, and $\sigma=0.03$.
We run paired simulations of the two methods described below using the same sampled prior and design matrix, and repeat this procedure 15 times.

\paragraph{Gradient descent.}
We train the two-layer diagonal linear networks $\beta = u_+\odot v_+ - u_-\odot v_-$ with the square-root loss $\|y-X\beta\|_2/\sqrt n$.
All four weight vectors are initialized at the same constant,
$u_+(0)=u_-(0)=v_+(0)=v_-(0)=\tfrac{k^{1/4}}{2}\mathbf 1$.
The induced regularizer is then $Q_k$, and we sweep $k\in\{0.01,0.1,1,10,100\}$.
We use a fixed step size $\eta=0.01/\|X\|_2^2$.
Time is measured in gradient-flow units, $t=\eta\times(\text{number of iterations})$.

\paragraph{Explicit regularization.}
For each $k$, we solve the $Q_k$-regularized program above with CVXPY at ten values of $\kappa$.
These are fixed multiples $\{0.5,0.95,1,1.02,1.05,1.1,1.2,1.4,1.7,2.2\}$ of the interpolation threshold $\kappa_*(k)$ predicted by state evolution; for $\kappa<\kappa_*$, the estimator interpolates the data.

\paragraph{Results.}
In \Cref{fig:azulay_early_stop}, we overlay the gradient-descent trajectories and the solutions to the $Q_k$-regularized regression problems, plotting parameter MSE as a function of training error.
Each solid line is a gradient-descent trajectory for one value of $k$, averaged over seeds, with a standard-deviation band.
Training starts at $\hat\beta(0)=0$ on the right and ends at the interpolating solution on the left, so time flows from right to left.
Each square represents one explicitly regularized solution.
The star at the left edge, labeled ``Azulay-norm SE plateau'', marks the interpolating branch prediction from Lemma \ref{thm:min-norm-interpolator}.

For every $k$, the gradient-descent curve passes through the squares: stopping gradient descent at a given training error yields a parameter MSE close to that obtained by solving the explicit program at the value of $\kappa$ producing the same training error.
To make the correspondence between stopping time and regularization strength explicit, we annotate each square on the $k=100$ curve with its $\kappa$ as a multiple of $\kappa_*$ and with the time $t$ at which gradient descent reaches the same training error.

\begin{figure}[t]
\centering
\includegraphics[width=0.8\linewidth]{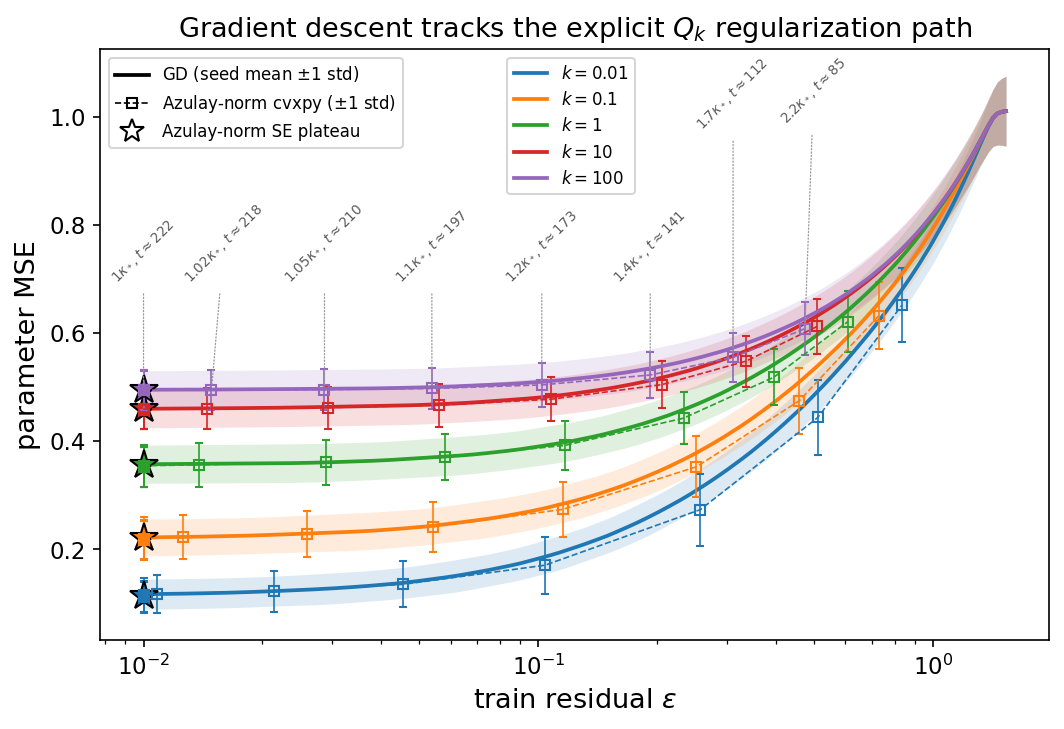}
\caption{Early-stopped gradient descent on diagonal linear networks (solid lines) versus $Q_k$-regularized regression solved with CVXPY (squares), for several initialization scales $k$. Both are plotted as parameter MSE against training error; time along the gradient-descent trajectories flows from right to left. Stars mark the state-evolution prediction for the minimum-$Q_k$ interpolator, the common endpoint of both methods. For $k=100$, each square is annotated with its $\kappa$ (as a multiple of the interpolation threshold $\kappa_*$) and the gradient-descent time $t$ at which it reaches the same training error.}
\label{fig:azulay_early_stop}
\end{figure}

\subsection{Transformer on a modular arithmetic task}

% Requires \usepackage{amsmath,amssymb}

\paragraph{Architecture.}
We use a causal decoder-only transformer following the setup of \cite{power2022grokking}, with one transformer block in place of two.
Each block applies pre-layer normalization followed by causal multi-head self-attention with four heads and a residual connection, then pre-layer normalization followed by an MLP with another residual
connection. The attention mask excludes future positions
(the strictly upper-triangular entries of the attention matrix).
The MLP has the form $\operatorname{Linear}(128 \to 512)
\;\to\;
\operatorname{GELU}
\;\to\;
\operatorname{Linear}(512 \to 128).$
Token embeddings and learned positional embeddings each have dimension \(128\). A final layer normalization precedes a bias-free linear output head over a vocabulary of \(99\) tokens: \(97\) residues, \texttt{<op>}, and \texttt{<=>}.
The model has \(224{,}512\) trainable parameters, of which
\(13{,}312\) (\(5.9\%\)) belong to the embeddings.

\paragraph{Task.}
We consider modular arithmetic over \(\mathbb{Z}/p\mathbb{Z}\)
with \(p=97\), and compute $c = xy \bmod p$. Each input is the token sequence $x \;\; \texttt{<op>} \;\; y \;\; \texttt{<=>}$, 
the model predicts the target \(c\) at the final position, and we evaluate cross-entropy loss and accuracy only at this position. We split the
\(p(p-1)=9{,}312\) pairs equally between training and validation, with \(4{,}656\) pairs in each split.

\paragraph{Optimizer and training.}
We use AdamW with cross-entropy loss, base learning rate \(10^{-3}\), weight decay \(1\),
\(\beta_1=0.9\), \(\beta_2=0.98\), and batch size \(512\).
The learning rate is warmed up linearly over the first \(10\) optimizer steps. Training runs for at most \(3\times10^5\) optimizer steps, with early stopping when validation accuracy exceeds \(0.99\).
Each configuration uses five seeds. Within each repetition, the same seed is used for the data split
and model initialization.

\paragraph{Output-scale parametrization.}
Following the rich/lazy output-scale axis of \cite{kunin2024get}, we multiply the logits by
\(\alpha\) immediately before computing the loss: $\widetilde{\mathbf{z}} = \alpha\mathbf{z}.$
The learning rate is adjusted according to
$\eta(\alpha) = \frac{10^{-3}}{\alpha^2}.$
We refer to this as the original recipe and we run it for $\alpha>1$. This recipe compensates the
learning rate but leaves the weight-decay coefficient unchanged.
Consequently, smaller \(\alpha\) is associated with both richer
training dynamics and stronger effective weight decay.

To control for this confounder, for $\alpha\le1$, we run with a compensated weight-decay coefficient $\lambda(\alpha)=\alpha^2$, while keeping the learning rate at $\eta(\alpha)=10^{-3}/\alpha^2$ as before. This keeps  $\eta(\alpha)\lambda(\alpha)=10^{-3}$ equal to its value at $\alpha=1$, allowing the output-scale effect to be examined at fixed effective decay.

\paragraph{Training and generalization error curves.} We use the validation cross-entropy loss at its final recorded epoch $T$ as an empirical proxy for $\mathcal{L}(\hat\beta(\infty))$, $\mathcal L$ being the cross-entropy loss. The residual loss is therefore zero at epoch $T$. Let $a^{\mathrm{train}}_{t}$ denote the training accuracy at epoch $t$. We define the training error as $\epsilon_{t}:=1-\max_{t'\le t}a^{\mathrm{train}}_{t'}$. When $\epsilon_{t}$ reaches the tolerance value of $0.01$ for some $t=t_1$, we set it to $0$ for all remaining epochs $t>t_1$.
\end{document}